\begin{document}
\pagestyle{headings}
\mainmatter
\def\ECCVSubNumber{2548}  

\title{How does Lipschitz Regularization \\ Influence GAN Training?} 

\titlerunning{How does Lipschitz Regularization Influence GAN Training?}
%
\author{Yipeng Qin\inst{1,3} \and
Niloy Mitra\inst{2} \and
Peter Wonka\inst{3}}
\authorrunning{Qin, Mitra, Wonka}
%
\institute{Cardiff University, \email{qiny16@cardiff.ac.uk}
\and
UCL/Adobe Research, \email{n.mitra@cs.ucl.ac.uk}
\and
KAUST,
\email{pwonka@gmail.com}}
\maketitle

\begin{abstract}

Despite the success of Lipschitz regularization in stabilizing GAN training, the exact reason of its effectiveness remains poorly understood.
The direct effect of $K$-Lipschitz regularization is to restrict the $L2$-norm of the neural network gradient to be smaller than a threshold $K$ (e.g., $K=1$) such that $\|\nabla f\| \leq K$.
In this work,~we uncover an even more important effect of Lipschitz regularization by examining its impact on the loss function:
\textit{It degenerates GAN loss functions to almost linear ones by restricting their domain and interval of attainable gradient values}.
Our analysis shows that loss functions are only successful if they are degenerated to almost linear ones. We also show that loss functions perform poorly if they are not degenerated and that a wide range of functions can be used as loss function as long as they are sufficiently degenerated by regularization.
Basically, Lipschitz regularization ensures that all loss functions \textit{effectively work in the same way.}
Empirically, we verify our proposition on the MNIST, CIFAR10 and CelebA datasets.

\keywords{Generative adversarial network (GAN) $\cdot$ Lipschitz regularization $\cdot$ loss functions}

\end{abstract}


\section{Introduction}

Generative Adversarial Networks (GANs) are a class of generative models successfully applied to various applications, e.g., pose-guided image generation \cite{ma2017pose}, image-to-image translation \cite{CycleGAN2017,park2019semantic}, texture synthesis \cite{anna2019tilegan}, high resolution image synthesis \cite{wang2018pix2pixHD}, 3D model generation \cite{3dgan}, urban modeling~\cite{KellyGuerreroEtAl_FrankenGAN_SigAsia2018}. 
Goodfellow et al.~\cite{goodfellow2014generative} proved the convergence of GAN training by assuming that the generator is always updated according to the temporarily optimal discriminator at each training step.
In practice, this assumption is too difficult to satisfy and GANs remain  difficult to train.
To stabilize the training of the GANs, various techniques have been proposed regarding the choices of architectures \cite{radford2015unsupervised,he2016deep}, loss functions \cite{pmlr-v70-arjovsky17a,Mao_2017_ICCV}, regularization and normalization \cite{pmlr-v70-arjovsky17a,gulrajani2017improved,DBLP:conf/icml/MeschederGN18,miyato2018spectral}.
We refer interested readers to \cite{lucic2017gans,kurach2018gan} for extensive empirical studies.

Among them, the Lipschitz regularization \cite{gulrajani2017improved,miyato2018spectral} has shown great success in stabilizing the training of various GANs.
For example, \cite{mao2017effectiveness} and \cite{fedus*2018many} observed that the gradient penalty Lipschitz regularizer helps to improve the training of the LS-GAN \cite{Mao_2017_ICCV} and the NS-GAN \cite{goodfellow2014generative},  respectively; \cite{miyato2018spectral} observed that the NS-GAN, with their spectral normalization Lipschitz regularizer, works better than the WGAN \cite{pmlr-v70-arjovsky17a} regularized by gradient penalty (WGAN-GP) \cite{gulrajani2017improved}.

\begin{figure}[t]
\begin{center}
\includegraphics[width=.99\linewidth]{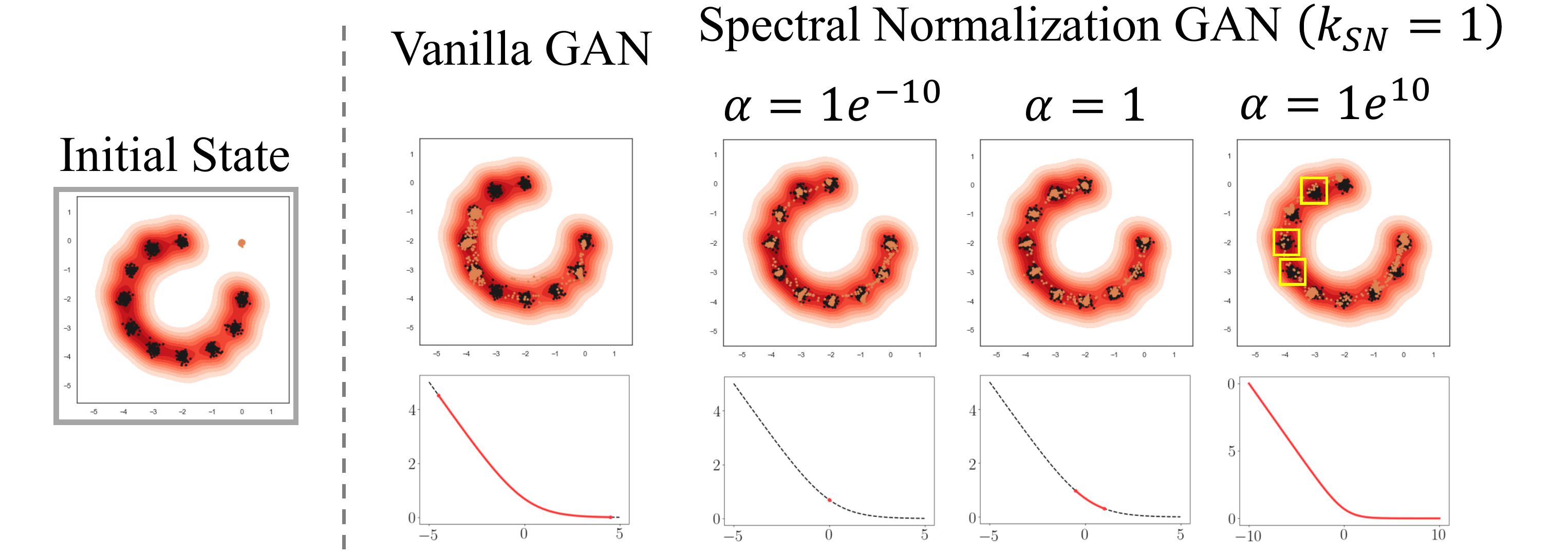}
\caption{An illustrative 2D example. First row: model distribution (orange point) vs. data distribution (black points). Second row: domains of the loss function (red curve). It can be observed that the performance of spectral normalized GANs \cite{miyato2018spectral} worsen when their domains are enlarged ($\alpha=1e^{10}$, yellow boxes). However, good performance can always be achieved when their domains are restricted to near-linear ones ($\alpha=1e^{-10}$ and $\alpha=1$). Please see Sections  \ref{sec:decoupling_degenerate_loss_functions} and \ref{sec:experiment_setup} for the definitions of $\alpha$ and $k_{SN}$,  respectively.}
\label{fig:toy_example}
\end{center}
\end{figure}

\begin{figure}[ht]
\begin{center}
\begin{subfigure}{0.9\textwidth}
    \includegraphics[width=.99\linewidth]{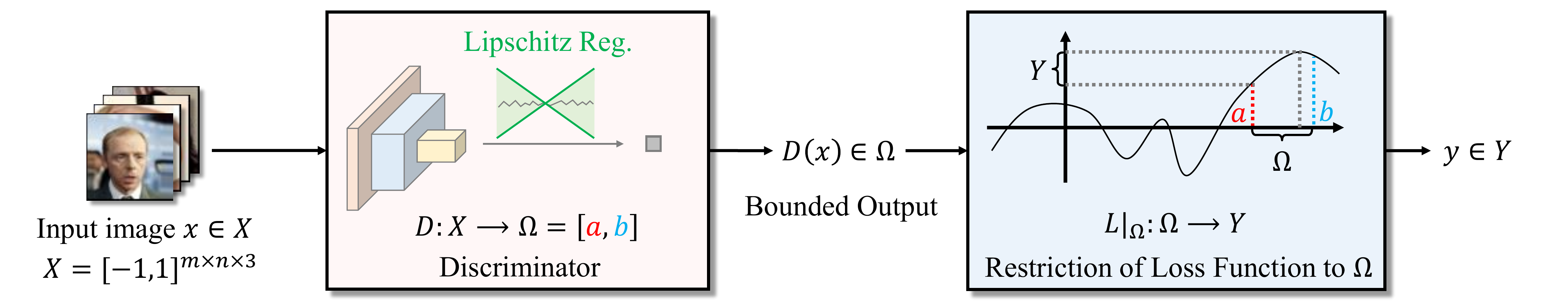}
\end{subfigure}
\begin{subfigure}{0.3\textwidth}
    \includegraphics[width=.99\linewidth]{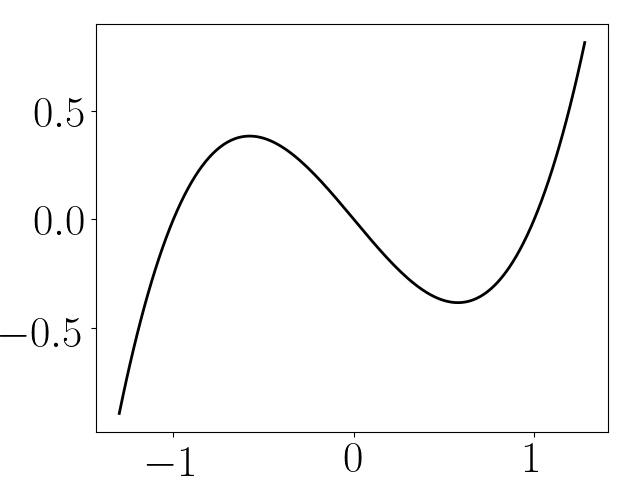}
    \subcaption{$f(x)=x^3-x$}
\end{subfigure}
\begin{subfigure}{0.3\textwidth}
    \includegraphics[width=.99\linewidth]{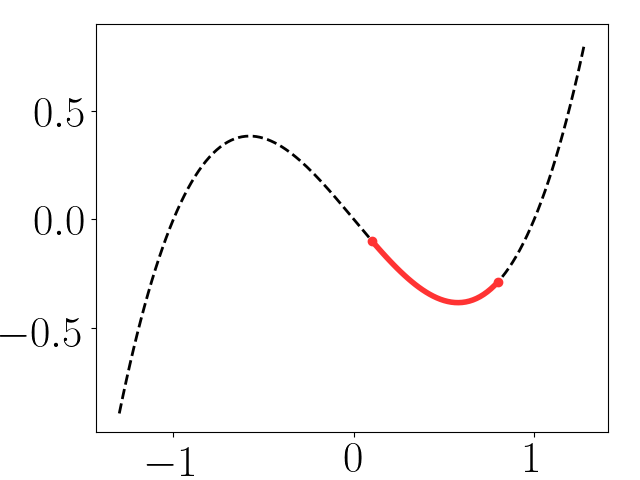}
    \subcaption{$[a,b] = [0.1,0.8]$}
\end{subfigure}
\begin{subfigure}{0.3\textwidth}
    \includegraphics[width=.99\linewidth]{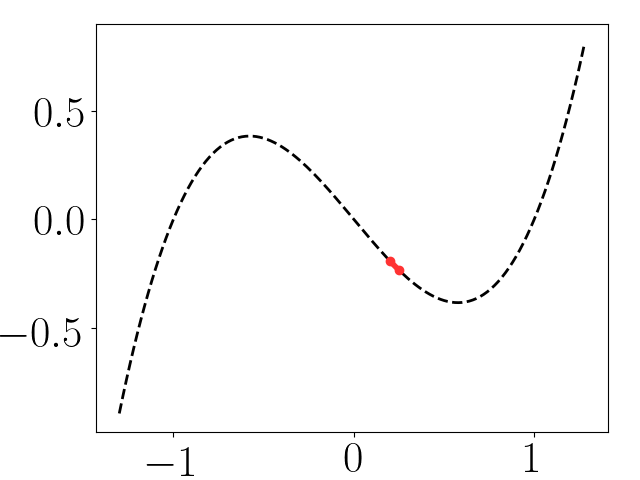}
    \subcaption{$[a,b] = [0.2,0.25]$}
\end{subfigure}
\caption{First row: Applying Lipschitz regularization restricts the domain of the loss function to an interval $\Omega = [a,b]$. Second row: Illustration of the domain restriction. We take a third-order polynomial loss function $f(x)=x^3-x$ as an example. Its restricted domain $[a,b]$ is shown in red. (a) without restriction $f(x)$ is non-convex. (b) Restricting the domain of $f(x)$ makes it convex. (c) $f(x)$ is almost linear when its domain is restricted to a very small interval.}
\label{fig:illustration}
\end{center}
\end{figure}

In this paper, we provide an analysis to better understand the \textit{coupling} of Lipschitz regularization and the choice of  loss function.
Our main insight is that the rule of thumb of using small Lipschitz constants (e.g., $K=1$) is degenerates the loss functions to almost linear ones by restricting their domain and interval of attainable gradient values (see Figure~\ref{fig:toy_example}). 
These degenerate losses improve GAN training.
Because of this, the exact shapes of the loss functions before degeneration do not seem to matter that much. 
We demonstrate this by two experiments. 
First, we show that when $K$ is sufficiently small, even GANs trained with non-standard loss functions (e.g., cosine) give comparable results to all other loss functions.
Second, we can directly degenerate loss functions by introducing domain scaling.
This enables successful GAN training for a wide range of $K$ for all loss functions, which only worked for the Wasserstein loss before. 
Our contributions include:
\begin{itemize}
    \item We discovered an important effect of Lipschitz regularization. It restricts the domain of the loss function (Figure \ref{fig:illustration}).
    \item Our analysis suggests that although the choice of loss functions matters, the successful ones currently being used are all near-linear within the chosen small domains and actually work in the same way.
\end{itemize}


\section{Related Work}

\subsection{GAN Loss Functions}

A variety of GAN loss functions have been proposed from the idea of understanding the GAN training as the minimization of statistical divergences.
Goodfellow et al. \cite{goodfellow2014generative} first proposed to minimize the Jensen-Shannon (JS) divergence between the model distribution and the target distribution.
In their method, the neural network output of the discriminator is first passed through a sigmoid function to be scaled into a probability in $[0,1]$. 
Then, the cross-entropy loss of the probability is measured.
Following \cite{fedus*2018many}, we refer to such loss as the \textit{minimax} (MM) loss since the GAN training is essentially a minimax game.
However, because of the saturation at both ends of the sigmoid function, the MM loss can lead to vanishing gradients and thus fails to update the generator.
To compensate for it, Goodfellow et al. \cite{goodfellow2014generative} proposed a variant of MM loss named the \textit{non-saturating} (NS) loss, which heuristically amplifies the gradients when updating the generator.

Observing that the JS divergence is a special case of the $f$-divergence, Nowozin et al. \cite{Nowozin2016nips} extended the idea of Goodfellow et al. \cite{goodfellow2014generative} and showed that any $f$-divergence can be used to train GANs.
Their work suggested a new direction of improving the performance of the GANs by employing ``better'' divergence measures.

Following this direction, Arjovsky et al. first pointed out the flaws of the JS divergence used in GANs \cite{arjovsky2017towards} and then proposed to use the Wasserstein distance instead (WGAN) \cite{pmlr-v70-arjovsky17a}.
In their implementation, the raw neural network output of the discriminator is directly used (i.e. the WGAN loss function is an identity function) instead of being passed through the sigmoid cross-entropy loss function.
However, to guarantee that their loss is a valid Wasserstein distance metric, the discriminator is required to be Lipschitz continuous.
Such requirement is usually fulfilled by applying an extra Lipschitz regularizer to the discriminator.
Meanwhile, Mao et al. \cite{Mao_2017_ICCV} proposed the Least-Square GAN (LS-GAN) to minimize the Pearson $\chi^2$ divergence between two distributions.
In their implementation, the sigmoid cross-entropy loss is replaced by a quadratic loss.

\subsection{Lipschitz Regularization}
The first practice of applying the Lipschitz regularization to the discriminator came together with the WGAN \cite{pmlr-v70-arjovsky17a}.
While at that time, it was not employed to improve the GAN training but just a requirement of the Kantorovich-Rubinstein duality applied.
In \cite{pmlr-v70-arjovsky17a}, the Lipschitz continuity of the discriminator is enforced by \textit{weight clipping}.
Its main idea is to clamp the weights of each neural network layer to a small fixed range $[-c,c]$, where $c$ is a small positive constant.
Although weight clipping guarantees the Lipschitz continuity of the discriminator, the choice of parameter $c$ is difficult and prone to invalid gradients.

To this end, Gulrajani et al. \cite{gulrajani2017improved} proposed the \textit{gradient penalty} (GP) Lipschitz regularizer to stabilize the WGAN training, i.e. WGAN-GP.
In their method, an extra regularization term of discriminator's gradient magnitude is weighted by parameter $\lambda$ and added into the loss function. 
In \cite{gulrajani2017improved}, the gradient penalty regularizer is one-centered, aiming at enforcing 1-Lipschitz continuity.
While Mescheder et al. \cite{DBLP:conf/icml/MeschederGN18} argued that the zero-centered one should be more reasonable because it makes the GAN training converge.
However, one major problem of gradient penalty is that it is computed with finite samples, which makes it intractable to be applied to the entire output space.
To sidestep this problem, the authors proposed to heuristically sample from the straight lines connecting model distribution and target distribution.
However, this makes their approach heavily dependent on the support of the model distribution \cite{miyato2018spectral}.

Addressing this issue, Miyato et al. \cite{miyato2018spectral} proposed the \textit{spectral normalization} (SN) Lipschitz regularizer which enforces the Lipschitz continuity of a neural network in the operator space.
Observing that the Lipschitz constant of the entire neural network is bounded by the product of those of its layers, they break down the problem to enforcing Lipschitz regularization on each neural network layer. These simplified sub-problems can then be solved by normalizing the weight matrix of each layer according to its largest singular value.


\section{Restrictions of GAN Loss Functions}

In this section, first we derive why a $K$-Lipschitz regularized discriminator \textit{restricts} the domain and interval of attainable gradient values of the loss function to intervals bounded by $K$ (Section \ref{sec:why_do_GAN_loss_functions_degenerate}).
Second, we propose a scaling method to restrict the domain of the loss function without changing $K$ (Section \ref{sec:decoupling_degenerate_loss_functions}).

\subsection{How does the Restriction Happen?}
\label{sec:why_do_GAN_loss_functions_degenerate}

Let us consider a simple discriminator $D(x) = L(f(x))$, where $x$ is the input, $f$ is a neural network with scalar output, $L$ is the loss function.
During training, the loss function $L$ works by backpropagating the gradient $\nabla L = \partial L(f(x)) / \partial f(x)$ to update the neural network weights:
\begin{equation}
    \frac{\partial D(x)}{\partial W^n} = \bm{\frac{\partial L(f(x))}{\partial f(x)}} \frac{\partial f(x)}{\partial W^n}
\end{equation}
where $W^n$ is the weight matrix of the $n$-th layer.
Let $X$ and $\Omega$ be the domain and the range of $f$ respectively (i.e., $f: X \to \Omega$), it can be easily derived that the attainable values of $\nabla L$ is determined by $\Omega$ (i.e., $\nabla L: \Omega \to \Psi$). 
Without loss of generality, we assume that $x \in X = [-1,1]^{m\times n\times 3}$ are normalized images and derive the bound of the size of $\Omega$ as follows:
\begin{theorem}
If the discriminator neural network $f$ satisfies the $k$-Lipschitz continuity condition, we have $f: X \to \Omega \subset \mathbb{R}$ satisfying $|\min(\Omega) - \max(\Omega)| \leq k\sqrt{12mn}$.
\label{theorem:degenerate_loss_function}
\end{theorem}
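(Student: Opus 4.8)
The plan is to reduce the statement to a computation of the Euclidean diameter of the input cube $X$. By definition, $k$-Lipschitz continuity of $f$ means that $|f(x) - f(y)| \le k\,\|x - y\|$ for all $x, y \in X$, where $\|\cdot\|$ is the $L2$-norm on the flattened input (the same norm used to measure $\|\nabla f\|$). So the first step is simply to observe that the oscillation of $f$ over $X$ is controlled by the diameter $\operatorname{diam}(X) = \sup_{x,y \in X} \|x-y\|$, giving $\sup_{x,y\in X} |f(x)-f(y)| \le k\operatorname{diam}(X)$.

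Next I would compute $\operatorname{diam}(X)$ explicitly. Since $X = [-1,1]^{m\times n\times 3}$ is a product of $3mn$ copies of the interval $[-1,1]$, for any two points $x,y\in X$ each of the $3mn$ coordinates satisfies $|x_i - y_i| \le 2$, hence $\|x-y\|^2 = \sum_i |x_i-y_i|^2 \le 3mn\cdot 4 = 12mn$, with equality attained at antipodal vertices (e.g. the all-ones and all-minus-ones tensors). Therefore $\operatorname{diam}(X) = \sqrt{12mn}$.

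Finally I would combine the two steps and translate back to the range $\Omega = f(X)$. Because $X$ is compact and $f$, being Lipschitz, is continuous, $\Omega$ is a compact subset of $\mathbb{R}$, so $\min(\Omega)$ and $\max(\Omega)$ exist and are attained at some $x^\star, y^\star \in X$. Then $|\min(\Omega) - \max(\Omega)| = |f(x^\star) - f(y^\star)| \le k\,\|x^\star - y^\star\| \le k\sqrt{12mn}$, which is the claimed bound. (Even without compactness the same inequality holds with $\sup$/$\inf$ in place of $\max$/$\min$.)

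I do not anticipate a genuine obstacle here: the argument is a one-line application of the Lipschitz inequality to the two extremal points, and the only thing to be careful about is making the norm conventions explicit and noting that the bound is in fact tight for the coordinate cube, so the constant $\sqrt{12mn}$ cannot be improved without further assumptions on $f$.
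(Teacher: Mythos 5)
Your argument is correct and follows essentially the same route as the paper's proof: bound the oscillation of $f$ by $k$ times the diameter of $X=[-1,1]^{m\times n\times 3}$, and compute that diameter as $\sqrt{12mn}$ (the paper realizes the extremal pair as the pure black and pure white images, which are exactly your antipodal vertices). Your added remarks on compactness and tightness of the constant are fine but not needed beyond what the paper does.
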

\begin{proof}
Given a $k$-Lipschitz continuous neural newtork $f$, for all $x_1,x_2 \in X$, we have:
\begin{equation}
    |f(x_1) - f(x_2)| \leq k \| x_1 - x_2\|.
\end{equation}
Let $x_b, x_w \in X$ be the pure black and pure white images that maximize the Euclidean distance:
\begin{equation}
  \|x_b - x_w\| = \sqrt{(-1 - 1)^2 \cdot m \cdot n \cdot 3}= \sqrt{12mn}.
\end{equation}
Thus, we have:
\begin{equation}
\begin{split}
    | f(x_1) - f(x_2) | &\leq k\|x_1 - x_2\| \\
                        &\leq k\|x_b - x_w\| = k\sqrt{12mn}.
\end{split}
    \label{eq:discriminator_range_bound}
\end{equation}
Thus, the range of $f$ is restricted to $\Omega$, which satisfies:
\begin{equation}
    |\min(\Omega) - \max(\Omega)| \leq k\sqrt{12mn}
\end{equation}
\end{proof}
Theorem \ref{theorem:degenerate_loss_function} shows that the size of $\Omega$ is bounded by $k$.
However, $k$ can be unbounded when Lipschitz regularization is not enforced during training, which results in an unbounded $\Omega$ and a large interval of attainable gradient values.
On the contrary, when $K$-Lipschitz regularization is applied (i.e., $k \leq K$), the loss function $L$ is restricted as follows:
\begin{corollary}[\textbf{Restriction of Loss Function}]
Assume that $f$ is a Lipschitz regularized neural network whose Lipschitz constant $k \leq K$, the loss function $L$ is $C^2$-continuous with $M$ as the maximum absolute value of its second derivatives in its domain. Let $\Psi$ be the interval of attainable gradient values that $\nabla L: \Omega \to \Psi$, we have
\begin{equation}
    \big\vert \min(\Omega) - \max(\Omega) \big\vert \leq K\sqrt{12mn}
    \label{eq:domain_bound}
\end{equation}
\begin{equation}
    \big\vert \min(\Psi) - \max(\Psi) \big\vert \leq M \cdot K\sqrt{12mn}
    \label{eq:gradient_range_bound}
\end{equation}
\label{corollary:gradient_range}
\end{corollary}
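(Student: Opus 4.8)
The plan is to prove the two inequalities separately: \eqref{eq:domain_bound} falls out of Theorem~\ref{theorem:degenerate_loss_function} with essentially no work, and \eqref{eq:gradient_range_bound} follows from a Lipschitz estimate on $\nabla L$.

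First I would dispatch \eqref{eq:domain_bound}. Since $f$ is a $K$-Lipschitz regularized network, its actual Lipschitz constant satisfies $k \le K$. The bound $|\min(\Omega)-\max(\Omega)| \le k\sqrt{12mn}$ of Theorem~\ref{theorem:degenerate_loss_function} is increasing in $k$, so replacing $k$ by its upper bound $K$ gives \eqref{eq:domain_bound} immediately. I would also record here that $\Omega = f(X)$ is a compact interval (indeed $X$ is compact and convex and $f$ continuous), so $\min(\Omega)$ and $\max(\Omega)$ are attained and $\Omega$ is connected; this is needed for the next step.

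For \eqref{eq:gradient_range_bound}, the key point is that the hypothesis on $L$ (namely $C^2$ with $M$ bounding $|L''|$ on its domain) says exactly that $\nabla L = L'$ is $M$-Lipschitz. Concretely, for any $s,t$ in the domain of $L$, the mean value theorem applied to $L'$ yields $|L'(s)-L'(t)| = |L''(\xi)|\,|s-t| \le M\,|s-t|$ for some $\xi$ between $s$ and $t$. Specializing to $s,t \in \Omega$ and using \eqref{eq:domain_bound} to bound $|s-t| \le |\max(\Omega)-\min(\Omega)| \le K\sqrt{12mn}$, I get
\[
  |L'(s) - L'(t)| \le M\,K\sqrt{12mn} \qquad \text{for all } s,t \in \Omega.
\]
Since $\Psi = \nabla L(\Omega)$ is again a compact interval (continuous image of the compact interval $\Omega$), taking the supremum of the left-hand side over $s,t\in\Omega$ produces $|\min(\Psi)-\max(\Psi)| \le M\,K\sqrt{12mn}$, which is \eqref{eq:gradient_range_bound}.

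The argument is short, so there is no real obstacle; the only thing to be careful about is the domain bookkeeping: $M$ must be the bound on $|L''|$ over a set containing $\Omega$ (the portion of the domain that $f$ can actually reach), and the mean value theorem step needs the segment from $s$ to $t$ to stay in the domain of $L$, which is guaranteed because $\Omega$ is an interval. Everything else is a direct substitution.
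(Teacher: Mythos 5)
Your proof is correct and is exactly the argument the paper intends (the paper states the corollary without a written proof): inequality \eqref{eq:domain_bound} is Theorem~\ref{theorem:degenerate_loss_function} with $k \leq K$, and \eqref{eq:gradient_range_bound} follows from the mean value theorem applied to $L'$ with $|L''| \leq M$ together with the domain bound. Your extra bookkeeping that $\Omega$ is an interval (so the mean value theorem applies between any two points of $\Omega$) is a sensible, harmless addition.
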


Corollary \ref{corollary:gradient_range} shows that under a mild condition ($C^2$-continuous), applying $K$-Lipschitz regularization restricts the domain $\Omega$ and thereby the interval of attainable gradient values $\Psi$ of the loss function $L$ to intervals bounded by $K$.
When $K$ is small, e.g., $K=1$ \cite{gulrajani2017improved,fedus*2018many,mao2017effectiveness,miyato2018spectral}, the interval of attainable gradient values of the loss function is considerably reduced, which prevents the backpropagation of vanishing or exploding gradients and thereby stabilizes the training.
Empirically, we will show that these restrictions are indeed significant in practice and strongly influence the training.

\setlength{\intextsep}{0pt}
\setlength{\columnsep}{0pt}
\begin{wrapfigure}{r}{0.5\textwidth}
\begin{center}
    \includegraphics[width=.8\linewidth]{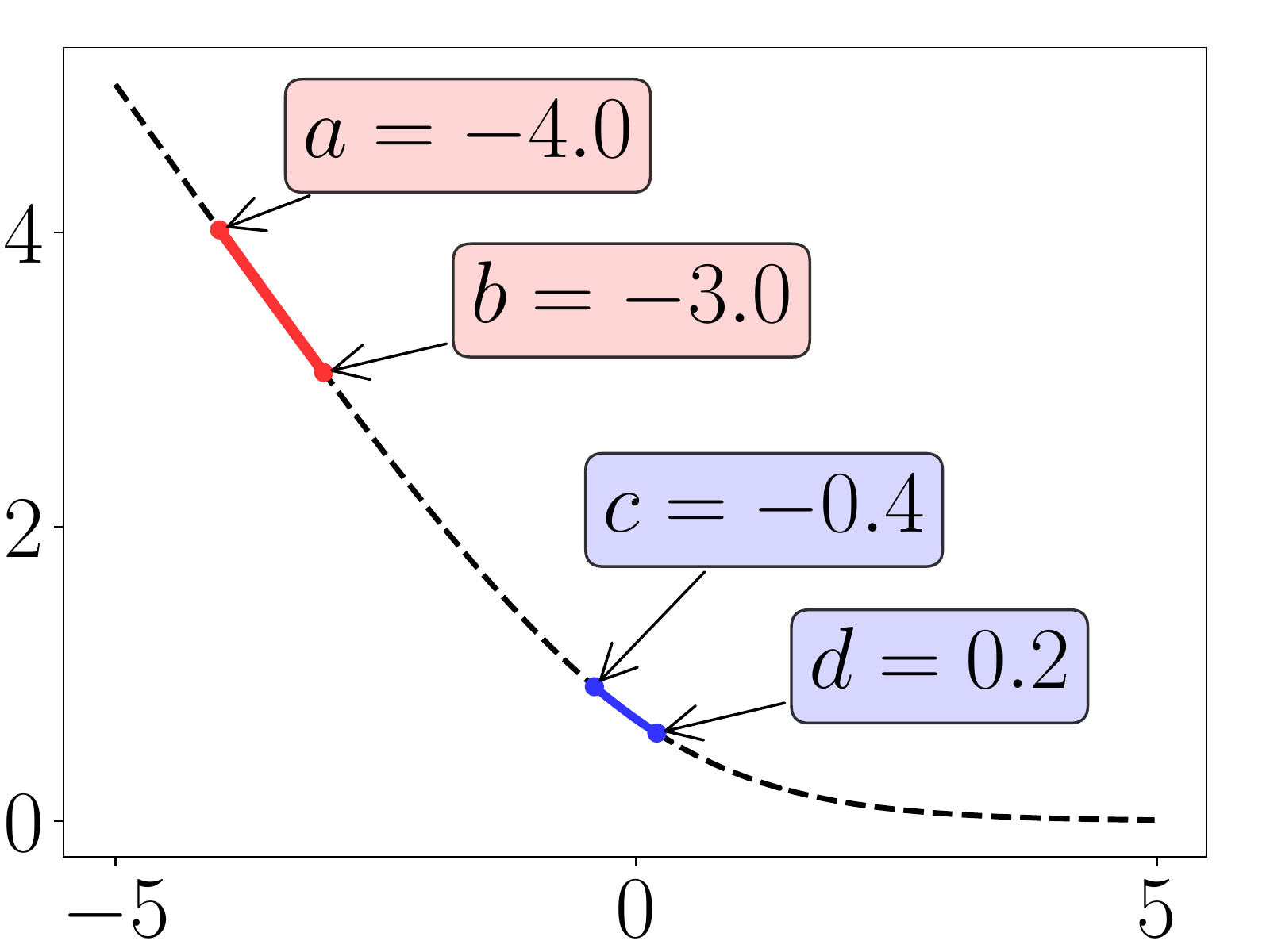}
\end{center}
  \caption{Domain $[a,b]$ shifts to $[c,d]$.}
\label{fig:shifting_domain}
\end{wrapfigure}
\noindent{\bf Change in $\Omega_i$ During Training.}
So far we analyzed the restriction of the loss function by a static discriminator. 
However, the discriminator neural network $f$ is dynamically updated during training and thus its range $\Omega^{\cup} = \cup_i \Omega_i$, where $\Omega_i$ is the discriminator range at each training step $i$. 
Therefore, we need to analyze two questions:\\
\indent (i) How does the size of $\Omega_i$ change during training?\\
\indent (ii) Does $\Omega_i$ shift during training (Figure \ref{fig:shifting_domain})?\\

For question (i), the size of $\Omega_i$ is always bounded by the Lipschitz constant $K$ throughout the training (Corollary \ref{corollary:gradient_range}).
For question (ii), the answer depends on the discriminator loss function:
\begin{itemize}
    \item The shifting of $\Omega_i$ is prevented if the loss function is strictly convex.
    For example, the discriminator loss function of NS-GAN \cite{goodfellow2014generative} (Table \ref{tb:GAN_losses}) is strictly convex and has a unique minimum when $f(x) = f(g(z)) = 0$ at convergence.
    Thus, minimizing it forces $\Omega_i$ to be positioned around $0$ and prevents it from shifting.
    The discriminator loss function of LS-GAN \cite{Mao_2017_ICCV} (Table \ref{tb:GAN_losses}) has a similar behavior.
    Its $\Omega_i$ is positioned around $0.5$, since its minimum is achieved when $f(x) = f(g(z)) = 0.5$ at convergence.
    In this scenario, the $\Omega_i$ is relatively fixed throughout the training. Thus, $\Omega^{\cup}$ is still roughly bounded by the Lipschitz constant $K$.
    \item When the discriminator loss functions is not strictly convex, $\Omega_i$ may be allowed to shift.
    For example, the WGAN \cite{pmlr-v70-arjovsky17a} discriminator loss function (Table \ref{tb:GAN_losses}) is linear and achieves its minimum when $f(x) = f(g(z))$ at convergence.
    Thus, it does not enforce the domain $\Omega_i$ to be fixed.
    However, the linear WGAN loss function has a constant gradient that is independent of $\Omega_i$.
    Thus, regarding to the interval of attainable gradient values (Eq.\ref{eq:gradient_range_bound}), we can view it as a degenerate loss function that still fits in our discussion.
    Interestingly, we empirically observed that the domain $\Omega_i$ of WGANs also get relatively fixed at late stages of the training (Figure \ref{fig:valid_interval_NS_SN}).
\end{itemize}

\begin{table*}[t]
    \captionof{table}{The GAN loss functions used in our experiments. $f(\cdot)$ is the output of the discriminator neural network; $g(\cdot)$ is the output of the generator; $x$ is a sample from the training dataset; $z$ is a sample from the noise distribution. LS-GAN$^{\#}$ is the zero-centered version of LS-GAN \cite{mao2017effectiveness}, and for the NS-GAN, $f^*(\cdot) = \mathrm{sigmoid}[f(\cdot)]$. The figure on the right shows the shape of the loss functions at different scales. The dashed lines show  non-standard loss functions: $\cos$ and $\exp$.}
    \label{tb:GAN_losses}

\begin{minipage}{.81\textwidth}
    \begin{center}
        \centering
            \renewcommand{\arraystretch}{1.25}
            \resizebox{\textwidth}{!}{
            \begin{tabular}{r  l  l}
            GAN types & Discriminator Loss  & Generator Loss \\ \toprule
            NS-GAN & $L_{D}=-\mathbb{E}[\log(f^*(x))] - \mathbb{E}[\log(1-f^*(g(z)))]$ & $L_{G}=-\mathbb{E}[\log(f^*(g(z)))]$ \\
            LS-GAN & $L_{D}=\mathbb{E}[(f(x) - 1)^2] + \mathbb{E}[f(g(z))^2]$ & $L_{G}=\mathbb{E}[(f(g(z)) - 1)^2]$ \\
            LS-GAN$^{\#}$ & $L_{D}=\mathbb{E}[(f(x) - 1)^2] + \mathbb{E}[(f(g(z))+1)^2]$ & $L_{G}=\mathbb{E}[(f(g(z)) - 1)^2]$ \\ 
            WGAN & $L_{D}=\mathbb{E}[f(x)] - \mathbb{E}[f(g(z))]$ & $L_{G}=\mathbb{E}[f(g(z))]$ \\ 
            COS-GAN & $L_{D}=-\mathbb{E}[\cos(f(x) - 1)] - \mathbb{E}[\cos(f(g(z)) + 1)]$ & $L_{G}=-\mathbb{E}[\cos(f(g(z)) - 1)]$ \\ 
            EXP-GAN & $L_{D}=\mathbb{E}[\exp(f(x))] + \mathbb{E}[\exp(-f(g(z)))]$ & $L_{G}=\mathbb{E}[\exp(f(g(z)))]$ \\
            \end{tabular}
            }
            \renewcommand{\arraystretch}{1}
    \end{center}
\end{minipage}
\begin{minipage}{.17\textwidth}
  \centering
  \includegraphics[width=.99\linewidth]{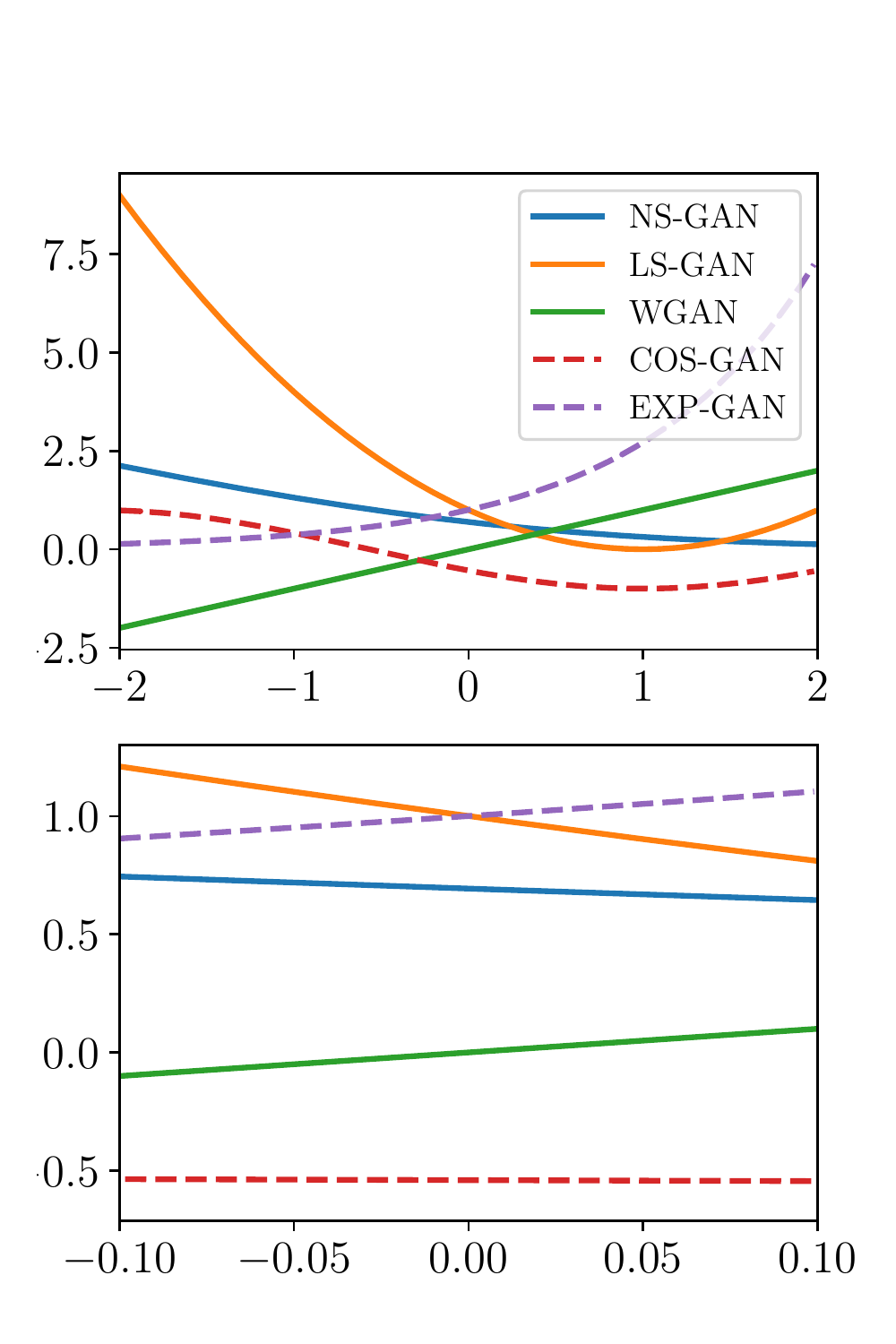}
  \label{fig:degenerated_loss_function}
\end{minipage}
\end{table*}

\subsection{Restricting Loss Functions by Domain Scaling}
\label{sec:decoupling_degenerate_loss_functions}
As discussed above, applying $K$-Lipschitz regularization not only restricts the gradients of the discriminator, but as a side effect also restricts the domain of the loss function to an interval $\Omega$.
However, we would like to investigate these two effects separately.
To this end, we propose to decouple the restriction of $\Omega$ from the Lipschitz regularization by scaling the domain of loss function $L$ by a positive constant $\alpha$ as follows, 
\begin{equation}
    L_{\alpha}(\Omega) = {L(\alpha \cdot \Omega)}/{\alpha}. 
    \label{eq:alpha_scaling}
\end{equation}
Note that the $\alpha$ in the denominator helps to preserve the gradient scale of the loss function.
With this scaling method, we can effectively restrict $L$ to an interval $\alpha \cdot \Omega$ without adjusting $K$.

\noindent {\bf Degenerate Loss Functions.}
To explain why this works, we observe  that any loss function degenerates as its domain $\Omega$ shrinks to a single value.
According to Taylor's expansion, let $\omega,\omega + \Delta \omega \in \Omega$, we have:
\begin{equation}
     L(\omega + \Delta \omega) = L(\omega) + \frac{L'(\omega)}{1!}\Delta \omega + \frac{L''(\omega)}{2!} (\Delta \omega)^2 + \cdots.
    \label{eq:taylor_expansion}
\end{equation}
As $|\max(\Omega) - \min(\Omega)|$ shrinks to zero, we have $L(\omega + \Delta \omega) \approx L(\omega) + L'(\omega)\Delta \omega$ showing that we can approximate any loss function by a linear function with constant gradient as its domain $\Omega$ shrinks to a single value.
Let $\omega \in \Omega$, we implement the degeneration of a loss function by scaling its domain $\Omega$ with an extremely small constant~$\alpha$:
\begin{equation}
    \lim_{\alpha \to 0} \frac{\partial L_{\alpha}(\omega)}{\partial \omega} = \frac{1}{\alpha} \cdot \frac{\partial L(\alpha \cdot \omega)}{\partial \omega} = \frac{\partial L(\alpha \cdot \omega)}{\partial (\alpha \cdot \omega)} = \nabla L(0).
    \label{eq:linear_baseline}
\end{equation}
In our work, we use $\alpha = 1e^{-25}$, smaller values are not used due to numerical errors ($NaN$).

\section{Experiments}

To support our proposition, first we empirically verify that applying $K$-Lipschitz regularization to the discriminator has the side-effect of restricting the domain and interval of attainable gradient values of the loss function.
Second, with the proposed scaling method (Section \ref{sec:decoupling_degenerate_loss_functions}), we investigate how the varying restrictions of loss functions influence the performance of GANs when the discriminator is regularized with a fixed Lipschitz constant.
Third, we show that restricting the domain of any loss function (using decreasing $\alpha$) converges to the same (or very similar) performance as WGAN-SN.

\subsection{Experiment Setup}
\label{sec:experiment_setup}

\noindent {\bf General Setup.}
In the following experiments, we use two variants of the standard CNN architecture \cite{radford2015unsupervised,pmlr-v70-arjovsky17a,miyato2018spectral} for the GANs to learn the distributions of the MNIST, CIFAR10 datasets at $32 \times 32$ resolution and the CelebA dataset \cite{liu2015faceattributes} at $64 \times 64$ resolution. 
Details of the architectures are shown in the supplementary material.
We use a batch size of $64$ to train the GANs.
Similar to \cite{pmlr-v70-arjovsky17a}, we observed that the training could be unstable with a momentum-based optimizer such as Adam, when the discriminator is regularized with a very small Lipschitz constant $K$.
Thus, we choose to use an RMSProp optimizer with learning rate $0.00005$.
To make a fair comparison, we fix the number of discriminator updates in each iteration $n_{dis}=1$ for all the GANs tested (i.e., we do not use multiple discriminator updates like \cite{arjovsky2017towards,pmlr-v70-arjovsky17a}). Unless specified, we stop the training after $10^5$ iterations.

\begin{figure*}[t]
\begin{center}
    \begin{subfigure}{0.32\textwidth}
        \centering
        \includegraphics[width=0.99\linewidth]{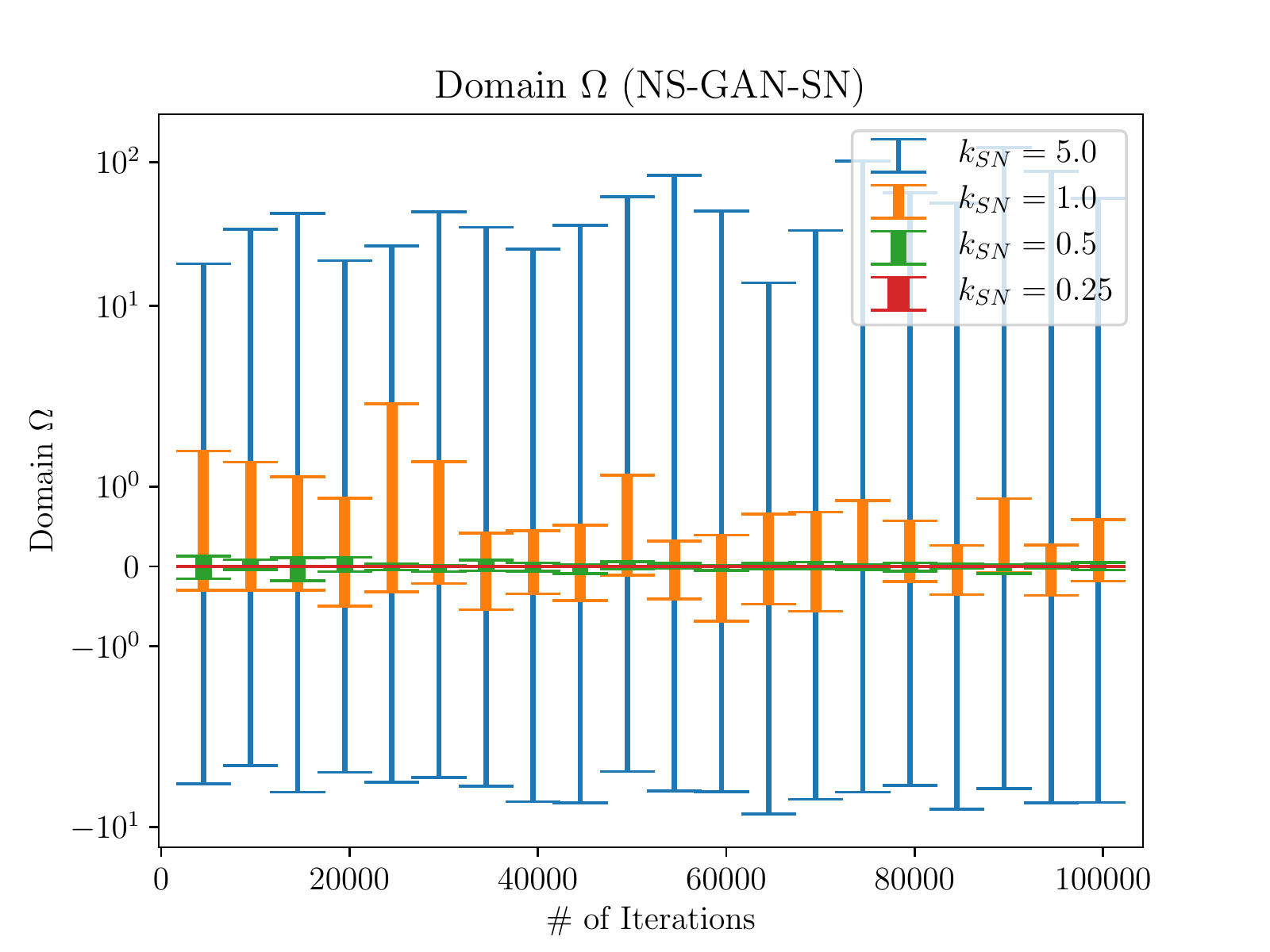}
        \subcaption{}
    \end{subfigure}
    \begin{subfigure}{0.32\textwidth}
        \centering
        \includegraphics[width=0.99\linewidth]{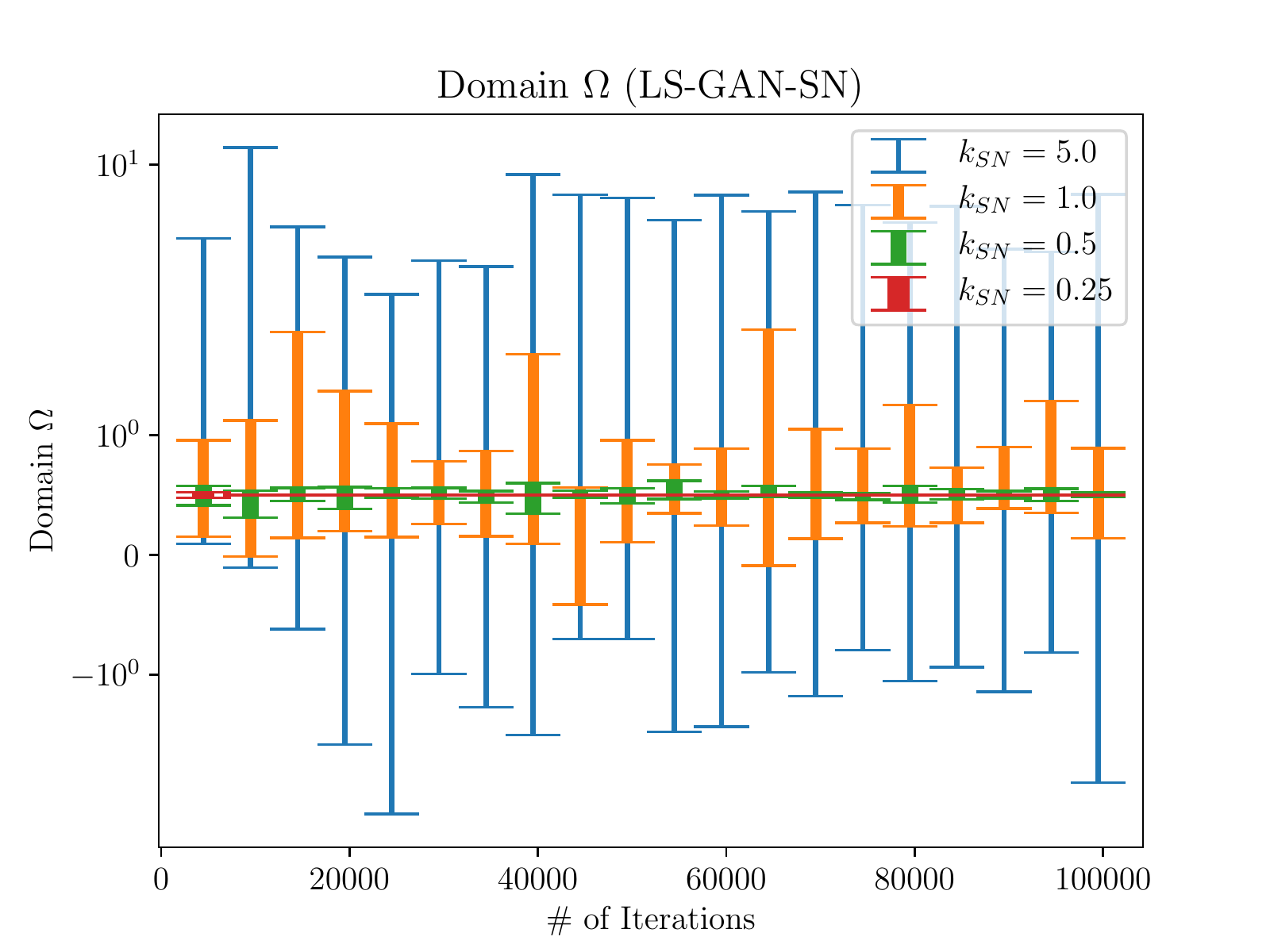}
        \subcaption{}
    \end{subfigure}
    \begin{subfigure}{0.32\textwidth}
        \centering
        \includegraphics[width=0.99\linewidth]{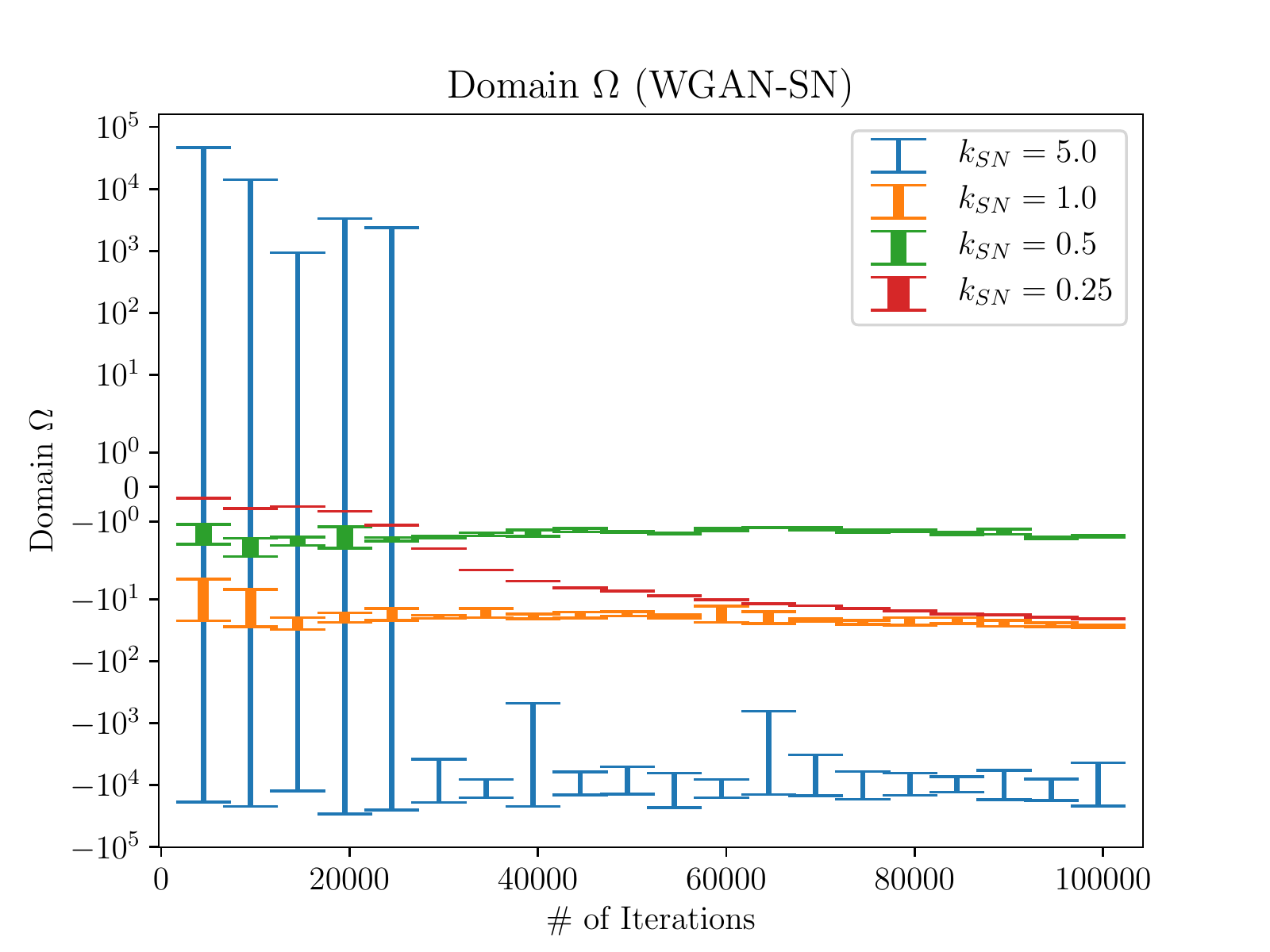}
        \subcaption{}
    \end{subfigure}
    \begin{subfigure}{0.32\textwidth}
        \centering
        \includegraphics[width=0.99\linewidth]{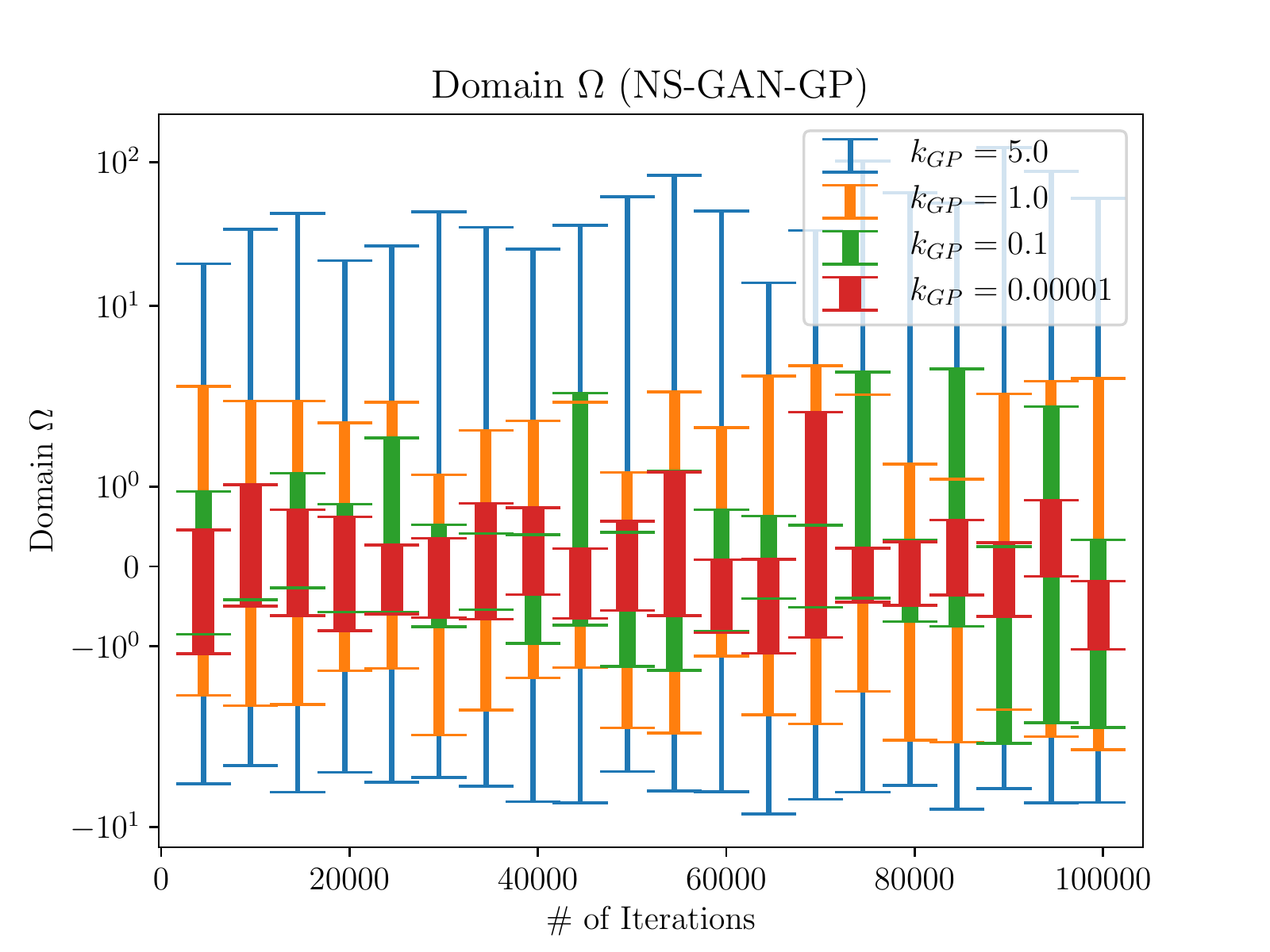}
        \subcaption{}
    \end{subfigure}
    \begin{subfigure}{0.32\textwidth}
        \centering
        \includegraphics[width=0.99\linewidth]{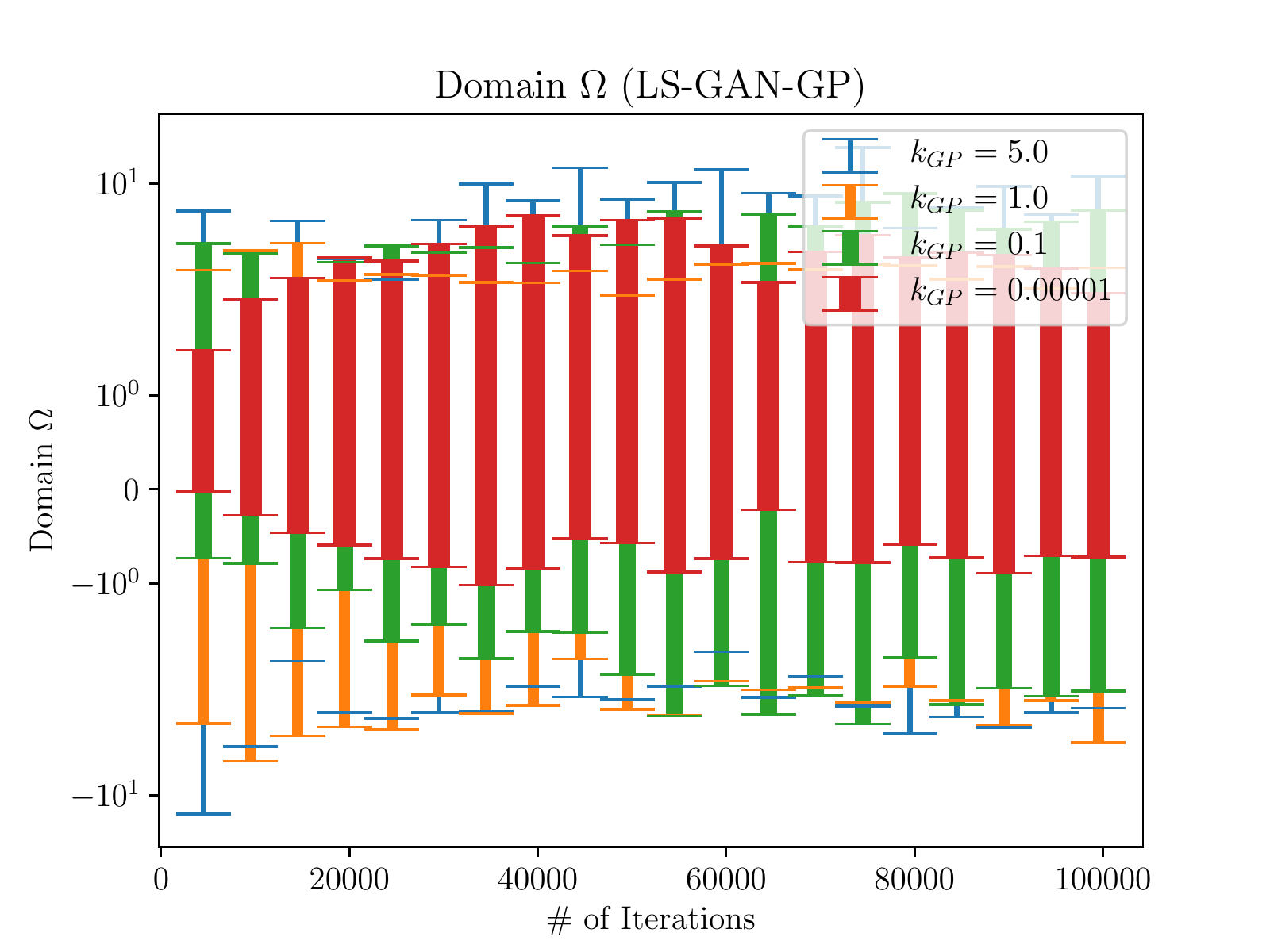}
        \subcaption{}
    \end{subfigure}
    \begin{subfigure}{0.32\textwidth}
        \centering
        \includegraphics[width=0.99\linewidth]{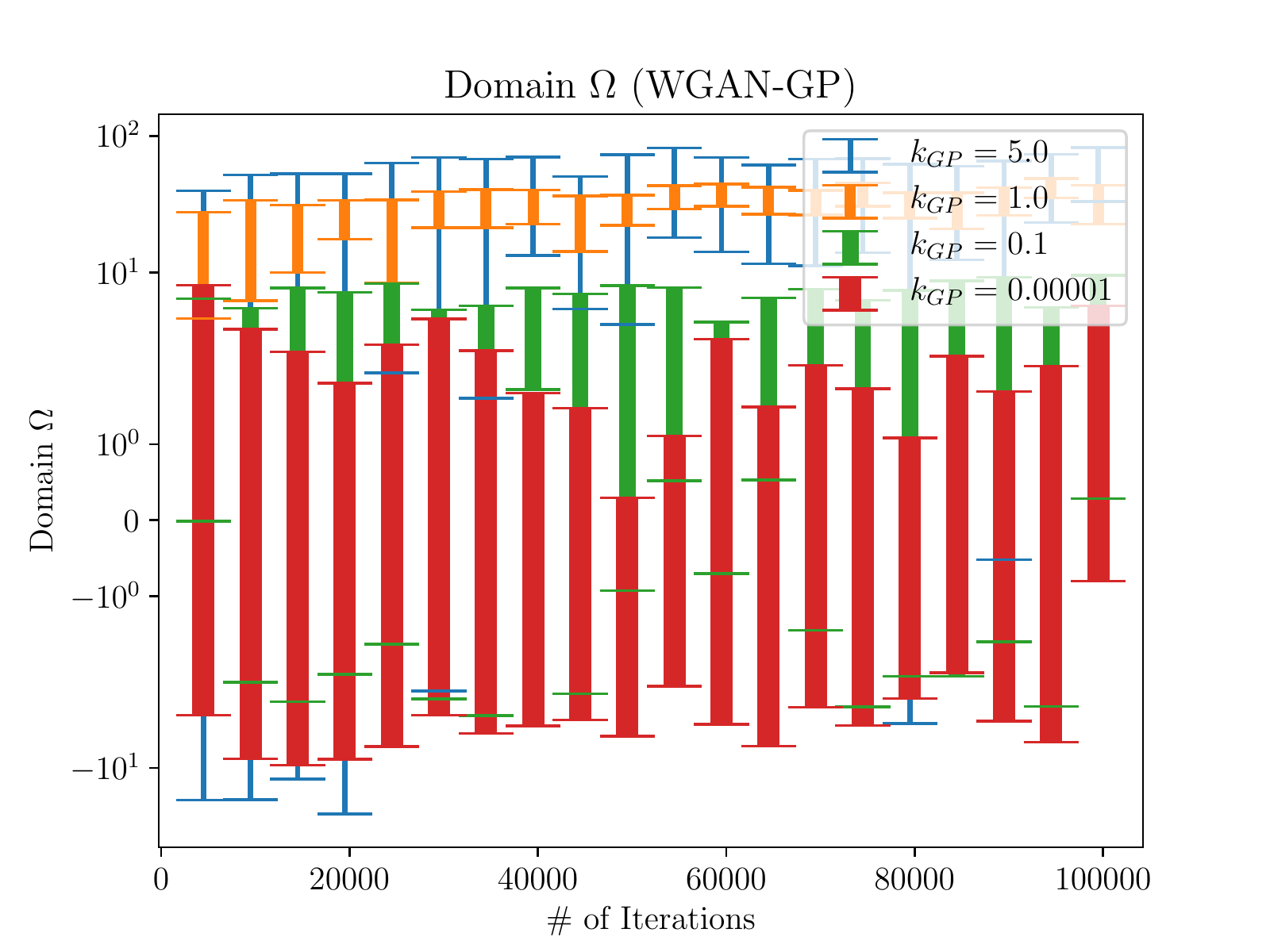}
        \subcaption{}
    \end{subfigure}
\end{center}
   \caption{Relationship between domain $\Omega$ and $k_{GP}$, $k_{SN}$ for different loss functions on CelebA dataset, where $k_{GP}$, $k_{SN}$ are the parameters controlling the strength of the Lipschitz regularizers.
   The domain $\Omega$ shrinks with decreasing $k_{GP}$ or $k_{SN}$.
   Each column shares the same loss function while each row shares the same Lipschitz regularizer. NS-GAN: Non-Saturating GAN \cite{goodfellow2014generative}; LS-GAN: Least-Square GAN \cite{Mao_2017_ICCV}; WGAN: Wasserstein GAN \cite{pmlr-v70-arjovsky17a}; GP: gradient penalty \cite{gulrajani2017improved}; SN: spectral normalization \cite{miyato2018spectral}. Note that the $y$-axis is in $\log$ scale.}
\label{fig:valid_interval_NS_SN}
\end{figure*}

\noindent {\bf Lipschitz Regularizers.}
In general, there are two state-of-the-art Lipschitz regularizers: the gradient penalty (GP) \cite{gulrajani2017improved} and the spectral normalization (SN) \cite{miyato2018spectral}.
In their original settings, both techniques applied only $1$-Lipschitz regularization to the discriminator.
However, our experiments require altering the Lipschitz constant $K$ of the discriminator. 
To this end, we propose to control $K$ for both techniques by adding parameters $k_{GP}$ and $k_{SN}$, respectively. 
\begin{itemize}
    \item For the gradient penalty, we control its strength by adjusting the target gradient norm $k_{GP}$, 
    \begin{equation}
        L = L_{GAN} + \lambda \mathop{\mathbb{E}}_{\hat{x}\in P_{\hat{x}}} [(\| \bigtriangledown_{\hat{x}} D(\hat{x}) \|- k_{GP} )^2], 
    \label{eq:gradient_penalty}
    \end{equation}
    where $L_{GAN}$ is the GAN loss function without gradient penalty, $\lambda$ is the weight of the gradient penalty term, $P_{\hat{x}}$ is the distribution of linearly interpolated samples between the target distribution and the model distribution \cite{gulrajani2017improved}. Similar to \cite{gulrajani2017improved,fedus*2018many}, we  use $\lambda=10$. 
    \item For the spectral normalization, we control its strength by adding a weight parameter $k_{SN}$ to the normalization of each neural network layer, 
    \begin{equation}
        \bar{W}_{SN}(W,k_{SN}) := k_{SN} \cdot W / \sigma(W), 
    \label{eq:spectral_normalization}
    \end{equation}
    where $W$ is the weight matrix of each layer, $\sigma(W)$ is its largest singular value.
\end{itemize}
The relationship between $k_{SN}$ and $K$ can be quantitatively approximated as $K \approx k_{SN}^n$ \cite{miyato2018spectral}, where $n$ is the number of neural network layers in the discriminator.
While for $k_{GP}$, we can only describe its relationship against $K$ qualitatively as: the smaller $k_{GP}$, the smaller $K$.
The challenge on finding a quantitative approximation resides in that the gradient penalty term $\lambda \mathop{\mathbb{E}}_{\hat{x}\in P_{\hat{x}}} [(\| \bigtriangledown_{\hat{x}} D(\hat{x}) \|- k_{GP} )^2]$ has no upper bound during training (Eq.\ref{eq:gradient_penalty}).
We also verified our claims using Stable Rank Normalization (SRN)+SN~\cite{Sanyal2020Stable} as the Lipschitz regularizer, whose results are shown in the supplementary material.

\noindent {\bf Loss Functions.}
In Table \ref{tb:GAN_losses} we compare the three most widely-used GAN loss functions: the Non-Saturating (NS) loss function \cite{goodfellow2014generative}, the Least-Squares (LS) loss function \cite{Mao_2017_ICCV} and the Wasserstein loss function \cite{pmlr-v70-arjovsky17a}.
In addition, we also test the performance of the GANs using some non-standard loss functions, $\cos(\cdot)$ and $\exp(\cdot)$, to support the observation that the restriction of the loss function is the dominating factor of Lipschitz regularization.
Note that both the $\cos(\cdot)$ and $\exp(\cdot)$ loss functions are (locally) convex at convergence, which helps to prevent shifting $\Omega_i$ (Section \ref{sec:why_do_GAN_loss_functions_degenerate}).

\noindent {\bf Quantitative Metrics.}
To quantitatively measure the performance of the GANs, we follow the best practice and employ the Fr\'{e}chet Inception Distance (FID) metric \cite{Heusel2017GANs} in our experiments.
The smaller the FID score, the better the performance of the GAN.
The results on other metrics, \textit{i.e.} Inception scores~\cite{barratt2018note} and Neural Divergence~\cite{gulrajani2018NND}, are shown in the supplementary material.

\subsection{Empirical Analysis of Lipschitz Regularization}
\label{sec:verification_degeneration}

In this section, we empirically analyze how varying the strength of the Lipschitz regularization impacts the domain, interval of attained gradient values, and performance (FID scores) of different loss functions (Section \ref{sec:why_do_GAN_loss_functions_degenerate}).

\noindent {\bf Domain vs. Lipschitz Regularization.}
In this experiment, we show how the Lipschitz regularization influences the domain of the loss function.
As Figure \ref{fig:valid_interval_NS_SN} shows, we plot the domain $\Omega$ as intervals for different iterations under different $k_{GP}$ and $k_{SN}$ for the gradient penalty and the spectral normalization regularizers respectively.
It can be observed that:
(i)~For both regularizers, the interval $\Omega$ shrinks as $k_{GP}$ and $k_{SN}$ decreases. 
However, $k_{SN}$ is much more impactful than $k_{GP}$ in restricting $\Omega$.
Thus, we use spectral normalization to alter the strength of the Lipschitz regularization in the following experiments.
(ii)~For NS-GANs and LS-GANs, the domains $\Omega_i$ are rather fixed during training. 
For WGANs, the domains $\Omega_i$ typically shift at the beginning of the training, but then get relatively fixed in later stages.

\begin{table}[t]
\caption{
Domain $\Omega$ and the interval of attained gradient values $\nabla L(\Omega)$ against $k_{SN}$ on the CelebA dataset.}
\begin{subtable}{0.49\linewidth}
    \begin{center}
    \begin{tabular}{l r r r r}
    \toprule
    $k_{SN}$ & \multicolumn{2}{c}{$\Omega$} & \multicolumn{2}{c}{$\nabla L(\Omega)$}  \\ \hline
    $5.0$  & $[-8.130,$ & $126.501]$ & $[-1.000,$ & $-0.000]$ \\ 
    $1.0$  & $[-0.683,$ & $2.091]$ & $[-0.664,$ & $-0.110]$ \\ 
    $0.5$  & $[-0.178,$ & $0.128]$ & $[-0.545,$ & $-0.468]$ \\ 
    $0.25$ & $[-0.006,$ & $0.006]$ & $[-0.502,$ & $-0.498]$ \\
    \bottomrule
    \end{tabular}
    \end{center}
    \subcaption{NS-GAN-SN, $L(\cdot)=-\log(\textrm{sigmoid}(\cdot))$}
\end{subtable}
\begin{subtable}{0.49\linewidth}
    \begin{center}
    \begin{tabular}{l r r r r}
    \toprule
    $k_{SN}$ & \multicolumn{2}{c}{$\Omega$} & \multicolumn{2}{c}{$\nabla L(\Omega)$}  \\ \hline
    $5.0$  & $[-2.460,$ & $12.020]$ & $[-4.921,$ & $24.041]$ \\ 
    $1.0$  & $[-0.414,$ & $1.881]$ & $[-0.828,$ & $3.762]$ \\ 
    $0.5$  & $[0.312,$ & $0.621]$  & $[0.623,$ & $1.242]$ \\ 
    $0.25$ & $[0.478,$ & $0.522]$  & $[0.956,$ & $1.045]$ \\
    \bottomrule
    \end{tabular}
    \end{center}
    \subcaption{LS-GAN-SN, $L(\cdot)=(\cdot)^2$}
\end{subtable}
\label{tab:gradient_numbers}
\end{table}

\noindent {\bf Interval of Attained Gradient Values vs. Lipschitz Regularization.}
Similar to the domain, the interval of attained gradient values of the loss function also shrinks with the increasing strength of Lipschitz regularization.
Table \ref{tab:gradient_numbers} shows the corresponding interval of attained gradient values of the NS-GAN-SN and LS-GAN-SN experiments in Figure \ref{fig:valid_interval_NS_SN}.
The interval of attained gradient values of WGAN-SN are not included as they are always zero.
It can be observed that the shrinking interval of attained gradient values avoids the saturating and exploding parts of the loss function.
For example when $k_{SN}=5.0$, the gradient of the NS-GAN loss function saturates to a value around $0$ while that of the LS-GAN loss function explodes to $24.041$.
However, such problems do not happen when $k_{SN} \leq 1.0$.
Note that we only compute the interval of attained gradient values on one of the two symmetric loss terms used in the discriminator loss function (Table \ref{tb:GAN_losses}). 
The interval of attained gradient values of the other loss term follows similar patterns.

\begin{table*}[t]
\caption{FID scores vs. $k_{SN}$ (typically fixed as $1$ \cite{miyato2018spectral}) on different datasets. 
When $k_{SN} \leq 1.0$, all GANs have similar performance except the WGANs (slightly worse).
For the line plots, $x$-axis shows $k_{SN}$ (in log scale) and $y$-axis shows the FID scores. From left to right, the seven points on each line have $k_{SN}=0.2$, $0.25$, $0.5$, $1.0$, $5.0$, $10.0$, $50.0$ respectively.
Lower FID scores are better.}

\centering
\begin{center}
    \centering
        \newcolumntype{R}{>{\raggedleft\arraybackslash}X}
        \begin{tabularx}{\linewidth}{l l R  R  R  R  R  R  R}
        \toprule
        \multirow{2}{*}{Dataset} & \multirow{2}{*}{GANs} & \multicolumn{7}{c}{FID Scores} \\
        & & $k_{SN}$~=~0.2 & 0.25 & 0.5 & 1.0 & 5.0 & 10.0 & 50.0 \\
        \hline
        \multirow{5}{*}{MNIST} & NS-GAN-SN & 5.41 & 3.99 & 4.20 & 3.90 & 144.28 & 156.60 & 155.41 \\
        & LS-GAN-SN & 5.14 & \textbf{3.96} & \textbf{3.90} & 4.42 & 36.26  & 59.04 & 309.35 \\
        & WGAN-SN & 6.35 & 6.06 & 4.44 & 4.70 & \textbf{3.58} & \textbf{3.50} & \textbf{3.71} \\
        & COS-GAN-SN & 5.41 & 4.83 & 4.05 & 3.86 & 291.44 & 426.62 & 287.23 \\
        & EXP-GAN-SN & \textbf{4.38} & 4.93 & 4.25 & \textbf{3.69} & 286.96 & 286.96 & 286.96 \\
        \hline
        \multirow{5}{*}{CIFAR10} & NS-GAN-SN & 29.23 & \textbf{24.37} & 23.29 & \textbf{15.81} & 41.04 & 49.67 & 48.03 \\
        & LS-GAN-SN & \textbf{28.04} & 26.85 & 23.14 & 17.30 & 33.53 & 39.90 & 349.35 \\
        & WGAN-SN & 29.20 & 25.07 & 26.61 & 21.75 & \textbf{21.63} & \textbf{21.45} & \textbf{23.36} \\
        & COS-GAN-SN & 29.45 & 25.31 & \textbf{20.73} & 15.88 & 309.96 & 327.20 & 370.13 \\
        & EXP-GAN-SN & 30.89 & 24.74 & 20.90 & 16.66 & 401.24 & 401.24 & 401.24 \\
        \hline
        \multirow{5}{*}{CelebA} & NS-GAN-SN & 18.59 & 12.71 & \textbf{8.04} & 6.11 & 18.95 & 17.04 & 184.06 \\
        & LS-GAN-SN & 20.34 & \textbf{12.14} & 8.85 & 5.69 & 12.40 & 13.14 & 399.39 \\
        & WGAN-SN & 23.26 & 17.93 & 8.48 & 9.41 & \textbf{9.03} & \textbf{7.37} & \textbf{7.82} \\
        & COS-GAN-SN & 20.59 & 13.93 & 8.88 & \textbf{5.20} & 356.70 & 265.53 & 256.44 \\
        & EXP-GAN-SN & \textbf{18.23} & 13.65 & 9.18 & 5.88 & 328.94 & 328.94 & 328.94 \\
        \bottomrule
        \end{tabularx}
\end{center}

\begin{subfigure}{0.32\textwidth}
    \includegraphics[width=.99\linewidth]{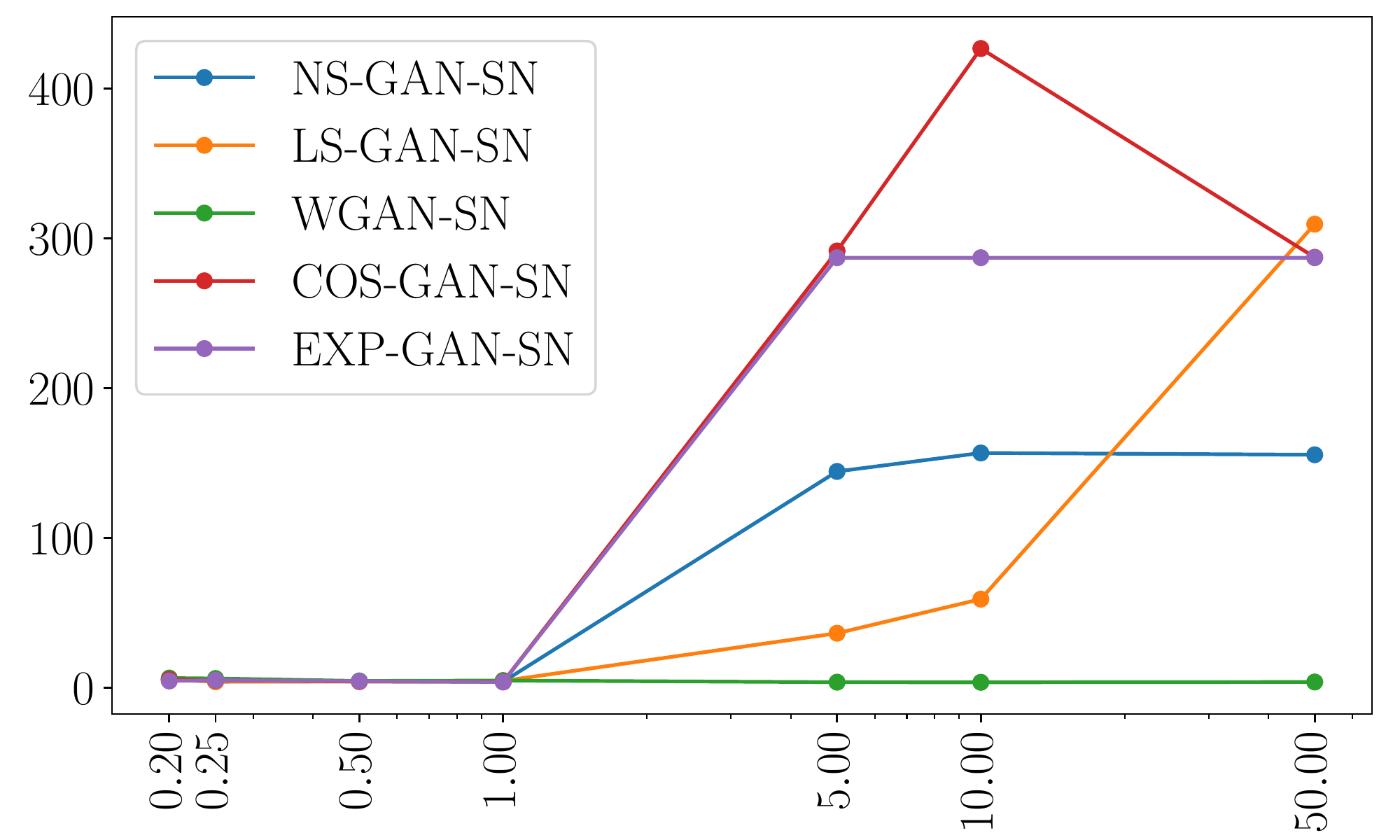}
    \subcaption{MNIST}
\end{subfigure}
\begin{subfigure}{0.32\textwidth}
    \includegraphics[width=.99\linewidth]{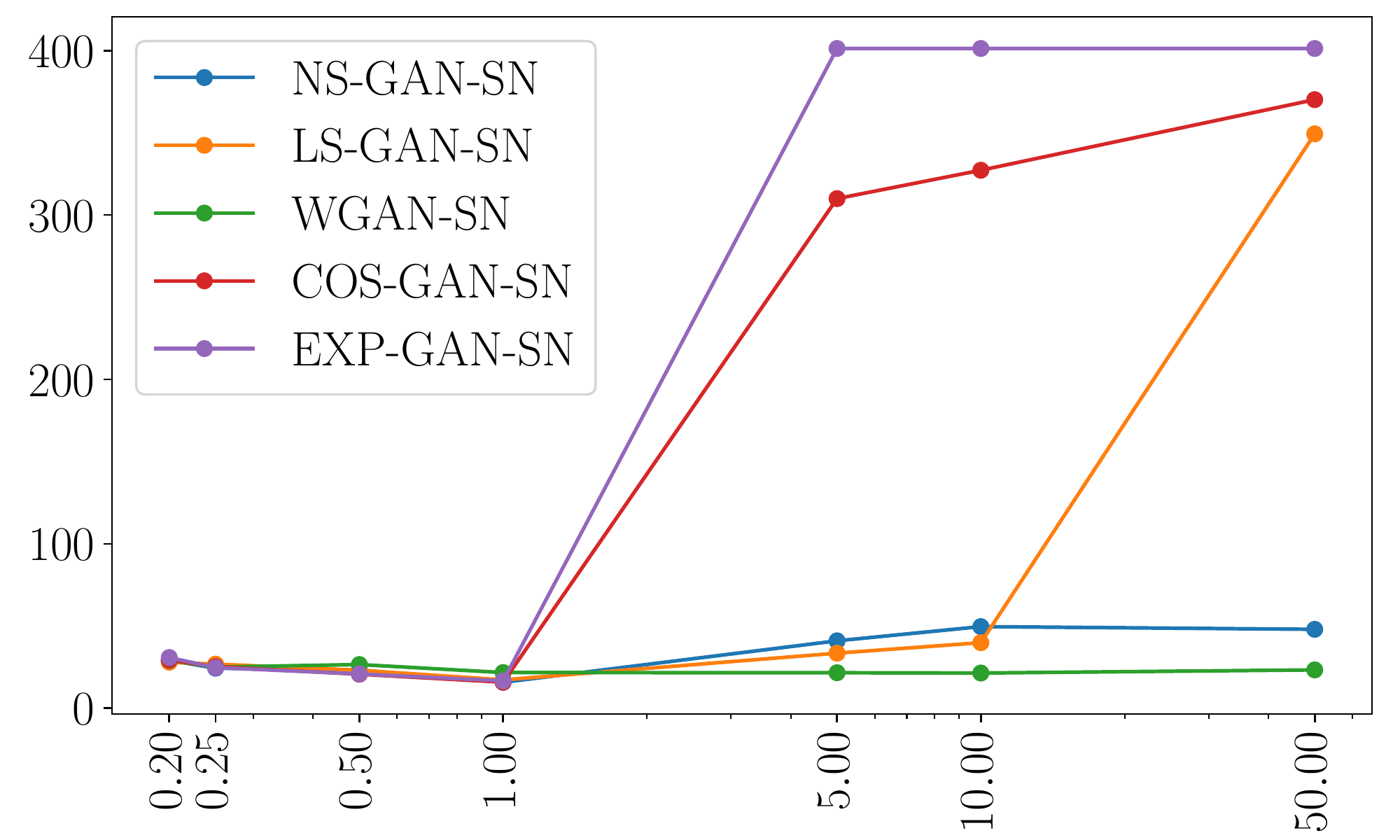}
    \subcaption{CIFAR10}
\end{subfigure}
\begin{subfigure}{0.32\textwidth}
    \includegraphics[width=.99\linewidth]{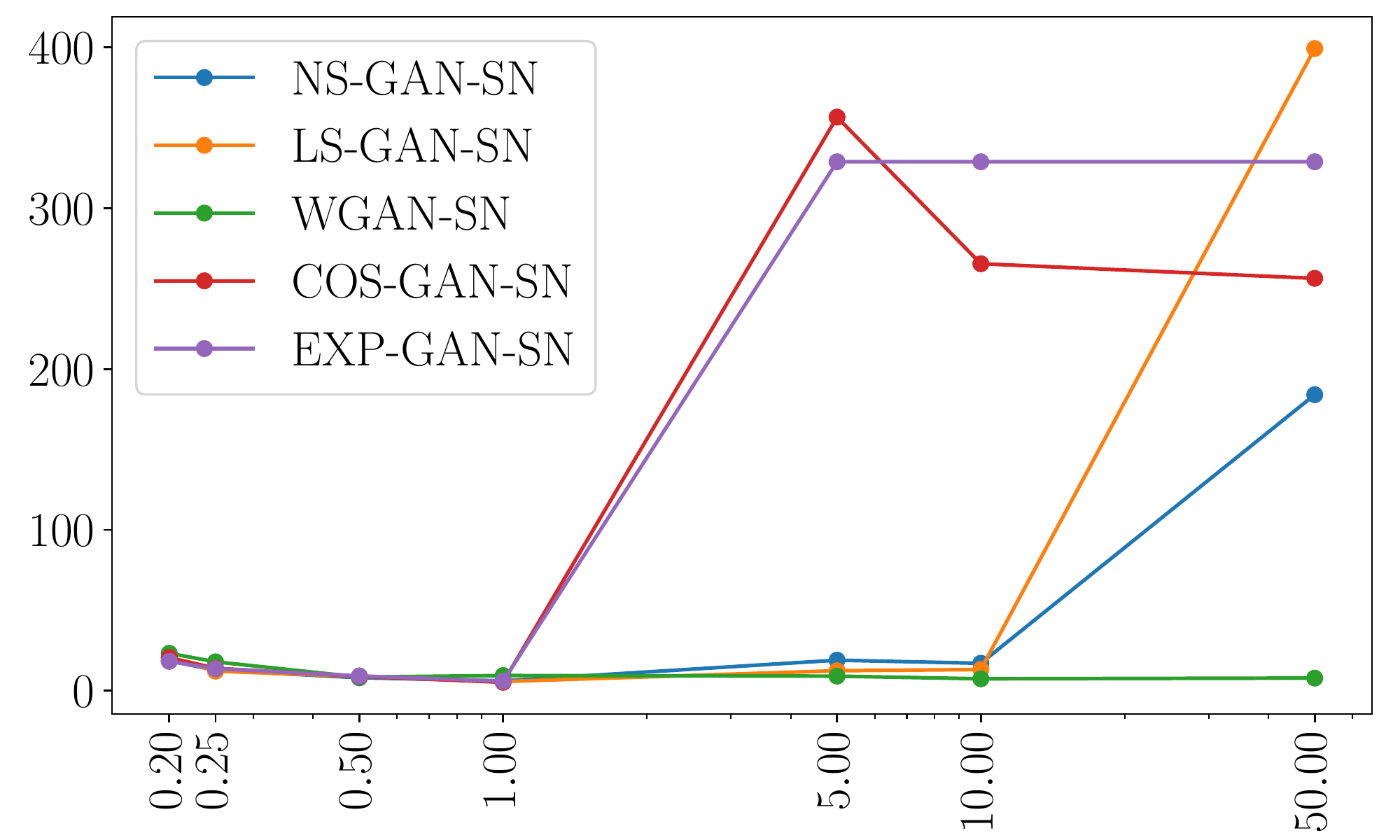}
    \subcaption{CelebA}
\end{subfigure}

\label{tab:FID_vs_ksn}
\end{table*}

\begin{table*}[t]
\caption{FID scores vs. $\alpha$. For the line plots, the $x$-axis shows $\alpha$ (in log scale) and the $y$-axis shows the FID scores. Results on other datasets are shown in the supplementary material. Lower FID scores are better.}
\begin{subtable}{0.99\textwidth}
    \begin{center}
        \centering
            \begin{tabular}{l l r r  r  r  r  r  c}
            \toprule
            \multirow{2}{*}{Dataset} & \multirow{2}{*}{GANs} & \multicolumn{6}{c}{FID Scores} & \multirow{2}{*}{Line Plot}\\
            & & $\alpha$ = $1e^{-11}$ & $1e^{-9}$ & $1e^{-7}$ & $1e^{-5}$ & $1e^{-3}$ & $1e^{-1}$ & \\
            \hline
            \multirow{5}{*}{CelebA} & NS-GAN-SN & 9.08 & 7.05 & 7.84 & 18.51 & 18.41 & 242.64 & \multirow{5}{*}{\raisebox{-.9\height}{\includegraphics[width=.21\linewidth]{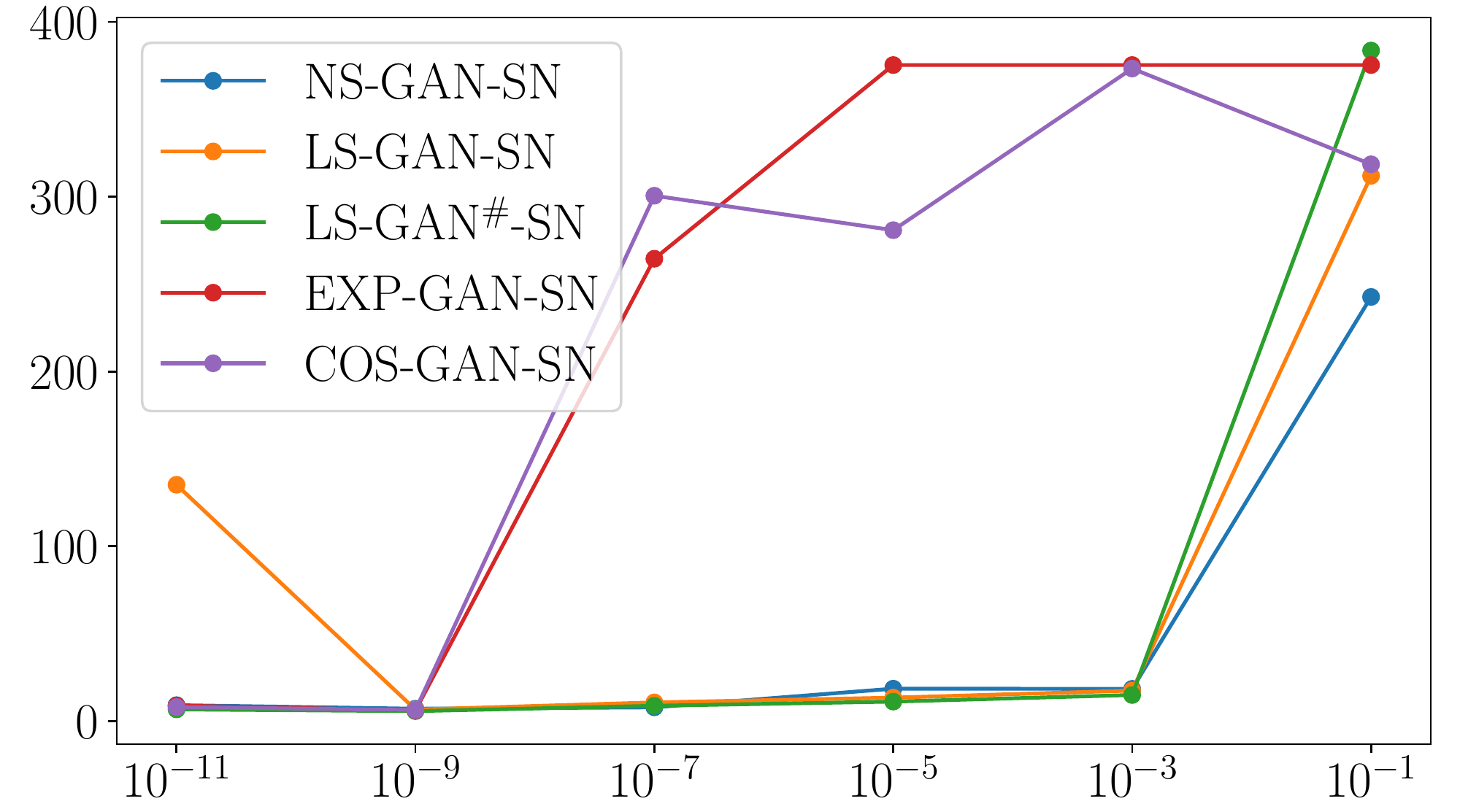}}}\\
            & LS-GAN-SN & 135.17 & 6.57 & 10.67 & 13.39 & 17.42 & 311.93 &\\
            & LS-GAN$^{\#}$-SN & 6.66 & 5.68 & 8.72 & 11.13 & 14.90 & 383.61 &\\
            & COS-GAN-SN & 8.00 & 6.31 & 300.55 & 280.84 & 373.31 & 318.53&\\
            & EXP-GAN-SN & 8.85 & 6.09 & 264.49 & 375.32 & 375.32 & 375.32 &\\
            \bottomrule
            \end{tabular}
    \end{center}
    \subcaption{$k_{SN}=50.0$}
\end{subtable}
\begin{subtable}{0.99\textwidth}
    \begin{center}
        \centering
            \begin{tabular}{l l r r  r  r  r  r  c}
            \toprule
            \multirow{2}{*}{Dataset} & \multirow{2}{*}{GANs} & \multicolumn{6}{c}{FID Scores} & \multirow{2}{*}{Line Plot}\\
            & & $\alpha$ = $1e^{1}$ & $1e^{3}$ & $1e^5$ & $1e^{7}$ & $1e^{9}$ & $1e^{11}$ & \\
            \hline
            \multirow{5}{*}{MNIST} & NS-GAN-SN & 6.55 & 148.97 & 134.44 & 133.82 & 130.21 & 131.87 & \multirow{5}{*}{\raisebox{-.9\height}{\includegraphics[width=.21\linewidth]{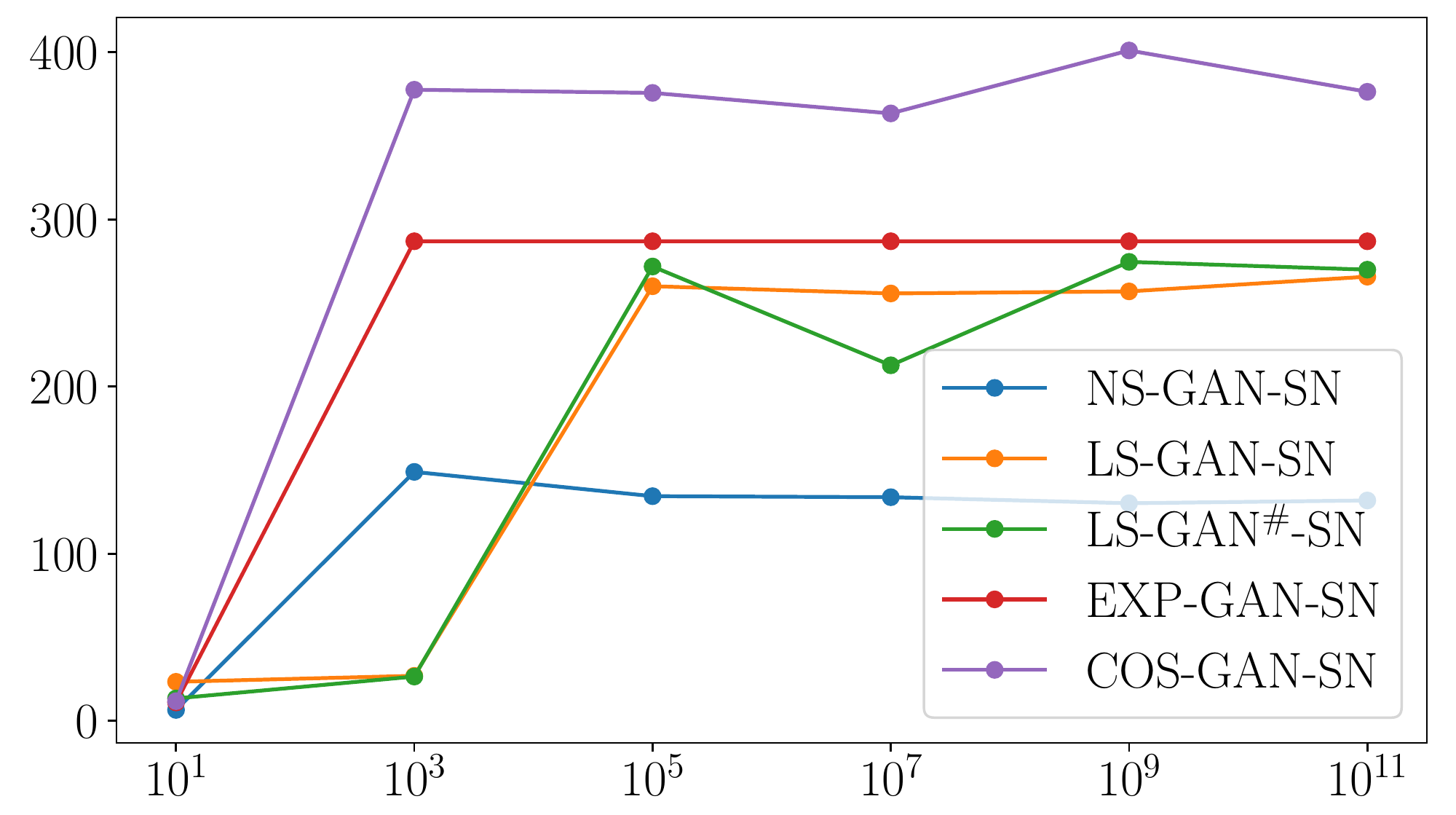}}}\\
            & LS-GAN-SN & 23.37 & 26.96 & 260.05 & 255.73 & 256.96 & 265.76 & \\
            & LS-GAN$^{\#}$-SN & 13.43 & 26.51 & 271.85 & 212.74 & 274.63 & 269.96 &\\
            & COS-GAN-SN & 11.79 & 377.62 & 375.72 & 363.45 & 401.12 & 376.39 &\\
            & EXP-GAN-SN & 11.02 & 286.96 & 286.96 & 286.96 & 286.96 & 286.96 &\\
            \bottomrule
            \end{tabular}
    \end{center}
    \subcaption{$k_{SN}=1.0$}
\end{subtable}
\label{tab:FID_vs_alpha}
\end{table*}

\noindent {\bf FID scores vs. Lipschitz Regularization.}
Table \ref{tab:FID_vs_ksn} shows the FID scores of different GAN loss functions with different $k_{SN}$.
It can be observed that:
\begin{itemize}
\item When $k_{SN} \leq 1.0$, all the loss functions (including the non-standard ones) can be used to train GANs stably.
However, the FID scores of all loss functions slightly worsen as $k_{SN}$ decreases.
We believe that the reason for such performance degradation comes from the trick used by modern optimizers to avoid divisions by zero.
For example in RMSProp~\cite{RMSProp2012}, the moving average of the squared gradients are kept for each weight.
In order to stabilize training, gradients are divided by the square roots of their moving averages in each update of the weights, where a small positive constant $\epsilon$ is included in the denominator to avoid dividing by zero.
When $k_{SN}$ is large, the gradients are also large and the effect of $\epsilon$ is negligible.
While when $k_{SN}$ is very small, the gradients are also small so that $\epsilon$ can significantly slow down the training and worsen the results.
\item When $k_{SN} \leq 1.0$, the performance of WGAN is slightly worse than almost all the other GANs.
Similar to the observation of \cite{ProgressiveGAN2018}, we ascribe this problem to the shifting domain of WGANs (Figure \ref{fig:valid_interval_NS_SN} (c)(f)). The reason for the domain shift is that the Wasserstein distance only depends on the difference between $\mathbb{E}[f(x)]$ and $\mathbb{E}[f(g(z)]$ (Table \ref{tb:GAN_losses}). For example, the Wasserstein distances $\mathbb{E}[f(x)] - \mathbb{E}[f(g(z)]$ are the same for i) $\mathbb{E}[f(x)]=0.5, \mathbb{E}[f(g(z))=-0.5$ and ii) $\mathbb{E}[f(x)] = 100.5, \mathbb{E}[f(g(z))=99.5$.
\item When $k_{SN} \geq 5.0$, the WGAN works normally while the performance of all the other GANs worsen and even break (very high FID scores, e.g. $\geq 100$).
The reasons for the stable performance of WGAN are two-fold: i) due to the KR duality, the Wasserstein distance is insensitive to the Lipschitz constant $K$. Let $W(\mathbb{P}_r, \mathbb{P}_g)$ be the Wasserstein distance between the data distribution $\mathbb{P}_r$ and the generator distribution $\mathbb{P}_g$. As discussed in~\cite{pmlr-v70-arjovsky17a}, applying $K$-Lipschitz regularization to WGAN is equivalent to estimating $K \cdot W(\mathbb{P}_r, \mathbb{P}_g)$, which shares the same solution as $1$-Lipschitz regularized WGAN.
ii) To fight the exploding and vanishing gradient problems, modern neural networks are intentionally designed to be scale-insensitive to the backpropagated gradients (e.g. ReLU~\cite{relu2011}, RMSProp~\cite{RMSProp2012}). This largely eliminates the scaling effect caused by $k_{SN}$.
This observation also supports our claim that the restriction of the loss function is the dominating factor of the Lipschitz regularization.
\item The best performance is obtained by GANs with strictly convex (e.g. NS-GAN) and properly restricted (e.g. $k_{SN}=1$) loss functions that address the shifting domain and exploding/vanishing gradient problems at the same time. However, there is no clear preference and even the non-standard ones (e.g.,  COS-GAN) can be the best. We believe that this is due to the subtle differences of the convexity among loss functions and propose to leave it to the fine-tuning of loss functions using the proposed domain scaling.
\end{itemize}
Qualitative results are in the supplementary material.

\subsection{Empirical Results on Domain Scaling}
\label{sec:FID_vs_scaled_loss_functions}

In this section, we empirically verify our claim that the restriction of the loss function is the dominating factor of the Lipschitz regularization.
To illustrate it, we decouple the restriction of the domain of the loss function from the Lipschitz regularization by the proposed domain scaling method (Section \ref{sec:decoupling_degenerate_loss_functions}).

\setlength{\intextsep}{5pt}
\setlength{\columnsep}{5pt}
\begin{wraptable}{r}{0.5\textwidth}
    \caption{FID scores of WGAN-SN and some extremely degenerate loss functions ($\alpha = 1e^{-25}$) on different datasets. We use $k_{SN} = 50$ for all our experiments.}
    \begin{center}
        \centering
        \resizebox{0.5\textwidth}{!}{
            \begin{tabular}{l l r  r  r  r}
            \toprule
            \multirow{2}{*}{GANs} & \multicolumn{3}{c}{FID Scores} \\
            & MNIST & CIFAR10 & CELEBA  \\
            \hline
            WGAN-SN & 3.71 & 23.36 & 7.82\\
            \hline
            NS-GAN-SN & 3.74 & 21.92 & 8.10\\
            LS-GAN$^{\#}$-SN & 3.81 & 21.47 & 8.51\\
            COS-GAN-SN & 3.96 & 23.65 & 8.30\\
            EXP-GAN-SN & 3.86 & 21.91 & 8.22\\
            \bottomrule
            \end{tabular}
            }
    \end{center}
    \label{tb:linear_baseline_experiment}
\end{wraptable}
Table \ref{tab:FID_vs_alpha} (a) shows that 
(i) the FID scores of different loss functions generally improve with decreasing $\alpha$. 
When $\alpha \leq 10^{-9}$, we can successfully train GANs with extremely large Lipschtiz constant ($K \approx k_{SN}^n = 50^4 = 6.25 \times 10^6$), whose FID scores are comparable to the best ones in Table \ref{tab:FID_vs_ksn}.
(ii) The FID scores when $\alpha \leq 10^{-11}$ are slightly worse than those when $\alpha \leq 10^{-9}$. 
The reason for this phenomenon is that restricting the domain of the loss function converges towards the performance of WGAN, which is slightly worse than the others due to its shifting domain.
To further illustrate this point, we scale the domain by $\alpha=1e^{-25}$ and show the FID scores of WGAN-SN and those of different loss functions in Table \ref{tb:linear_baseline_experiment}.
It can be observed that all loss functions have similar performance.
Since domain scaling does not restrict the neural network gradients, it does not suffer from the above-mentioned numerical problem of division by zero ($k_{SN} \leq 1.0$, Table \ref{tab:FID_vs_ksn}).
Thus, it is a better alternative to tuning $k_{SN}$.

Table \ref{tab:FID_vs_alpha} (b) shows that the FID scores of different loss functions generally worsen with less restricted domains. 
Note that when $\alpha \geq 10^{5}$, the common practice of $1$-Lipschitz regularization fails to stabilize the GAN training.
Note that the LS-GAN-SN has some abnormal behavior (e.g. $\alpha = 1e^{-11}$ in Table \ref{tab:FID_vs_alpha} (a) and $\alpha = 1e^{1}$ in Table \ref{tab:FID_vs_alpha} (b)) due to the conflict between its $0.5$-centered domain and our zero-centered domain scaling method (Eq.\ref{eq:alpha_scaling}).
This can be easily fixed by using the zero-centered LS-GAN$^\#$-SN (see Table \ref{tb:GAN_losses}).

\begin{wrapfigure}{r}{0.6\textwidth}
\begin{center}
    \begin{subfigure}{0.49\linewidth}
        \centering
        \includegraphics[width=0.8\linewidth]{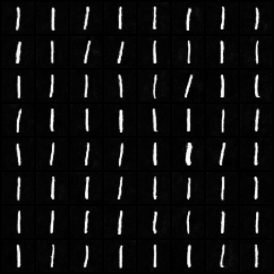}
        \subcaption{NS-GAN-SN}
    \end{subfigure}
    \begin{subfigure}{0.49\linewidth}
        \centering
        \includegraphics[width=0.8\linewidth]{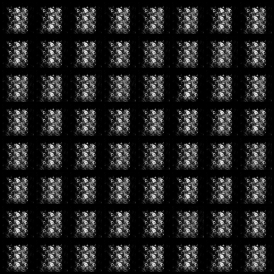}
        \subcaption{LS-GAN-SN}
    \end{subfigure}
\end{center}
  \caption{(a) Mode collapse and (b) crashed training on MNIST, $k_{SN}=50.0$, $\alpha=1e^{-1}$.}
\label{fig:mode_collapse}
\end{wrapfigure}
\noindent {\bf Bias over Input Samples.}
When weak Lipschitz regularization (large Lipschitz constant $K$) is applied, we observed mode collapse for NS-GAN and crashed training for LS-GAN, EXP-GAN and COS-GAN (Figure \ref{fig:mode_collapse}, more results in supplementary material).
We conjecture that this phenomenon is rooted in the inherent bias of neural networks over input samples: neural networks may ``prefer'' some input (class of) samples over the others by outputting higher/lower values, even though all of them are real samples from the training dataset.
When the above-mentioned loss functions are used, such different outputs result in different backpropagated gradients $\nabla L = \partial D(f(x)) / \partial f(x)$.
The use of weak Lipschitz regularization further enhances the degree of unbalance among backpropagated gradients and causes mode collapse or crashed training.
Note that mode collapse happens when $\nabla L$ is bounded (e.g. NS-GAN) and crashed training happens when $\nabla L$ is unbounded (e.g. LS-GAN, EXP-GAN) or ``random'' (e.g. COS-GAN).
However, when strong Lipschitz regularization is applied, all loss functions degenerate to almost linear ones and balance the backpropagated gradients, thereby improve the training.

\section{Conclusion}

In this paper, we studied the \textit{coupling} of Lipschitz regularization and the loss function.
Our key insight is that instead of keeping the neural network gradients small, the dominating factor of Lipschitz regularization is its restriction on the domain and interval of attainable gradient values of the loss function.
Such restrictions stabilize GAN training by avoiding the bias of the loss function over input samples, which is a new step in understanding the exact reason for Lipschitz regularization's effectiveness.
Furthermore, our finding suggests that although different loss functions can be used to train GANs successfully, they actually work in the same way because all of them degenerate to near-linear ones within the chosen small domains.

\noindent \textbf{Acknowledgement} This work was supported in part by the KAUST Office of Sponsored Research (OSR) under Award No. OSR-CRG2018-3730.

\clearpage
%
%
\bibliographystyle{splncs04}
\bibliography{mainbib}

\begin{thebibliography}{10}
\providecommand{\url}[1]{\texttt{#1}}
\providecommand{\urlprefix}{URL }
\providecommand{\doi}[1]{https://doi.org/#1}

\bibitem{arjovsky2017towards}
Arjovsky, M., Bottou, L.: Towards principled methods for training generative
  adversarial networks. In: International Conference on Learning
  Representations (2017)

\bibitem{pmlr-v70-arjovsky17a}
Arjovsky, M., Chintala, S., Bottou, L.: {W}asserstein generative adversarial
  networks. In: Precup, D., Teh, Y.W. (eds.) Proceedings of the 34th
  International Conference on Machine Learning. Proceedings of Machine Learning
  Research, vol.~70, pp. 214--223. PMLR, International Convention Centre,
  Sydney, Australia (06--11 Aug 2017),
  \url{http://proceedings.mlr.press/v70/arjovsky17a.html}

\bibitem{barratt2018note}
Barratt, S., Sharma, R.: A note on the inception score. arXiv preprint
  arXiv:1801.01973  (2018)

\bibitem{fedus*2018many}
Fedus*, W., Rosca*, M., Lakshminarayanan, B., Dai, A.M., Mohamed, S.,
  Goodfellow, I.: Many paths to equilibrium: {GAN}s do not need to decrease a
  divergence at every step. In: International Conference on Learning
  Representations (2018), \url{https://openreview.net/forum?id=ByQpn1ZA-}

\bibitem{anna2019tilegan}
Fr\"{u}hst\"{u}ck, A., Alhashim, I., Wonka, P.: Tilegan: Synthesis of
  large-scale non-homogeneous textures. ACM Trans. Graph.  \textbf{38}(4) (Jul
  2019). \doi{10.1145/3306346.3322993}

\bibitem{relu2011}
Glorot, X., Bordes, A., Bengio, Y.: Deep sparse rectifier neural networks. In:
  Gordon, G., Dunson, D., Dudík, M. (eds.) Proceedings of the Fourteenth
  International Conference on Artificial Intelligence and Statistics.
  Proceedings of Machine Learning Research, vol.~15, pp. 315--323. PMLR, Fort
  Lauderdale, FL, USA (11--13 Apr 2011),
  \url{http://proceedings.mlr.press/v15/glorot11a.html}

\bibitem{goodfellow2014generative}
Goodfellow, I., Pouget-Abadie, J., Mirza, M., Xu, B., Warde-Farley, D., Ozair,
  S., Courville, A., Bengio, Y.: Generative adversarial nets. In: Ghahramani,
  Z., Welling, M., Cortes, C., Lawrence, N.D., Weinberger, K.Q. (eds.) Advances
  in Neural Information Processing Systems 27, pp. 2672--2680. Curran
  Associates, Inc. (2014),
  \url{http://papers.nips.cc/paper/5423-generative-adversarial-nets.pdf}

\bibitem{gulrajani2017improved}
Gulrajani, I., Ahmed, F., Arjovsky, M., Dumoulin, V., Courville, A.C.: Improved
  training of wasserstein gans. In: Guyon, I., Luxburg, U.V., Bengio, S.,
  Wallach, H., Fergus, R., Vishwanathan, S., Garnett, R. (eds.) Advances in
  Neural Information Processing Systems 30, pp. 5767--5777. Curran Associates,
  Inc. (2017),
  \url{http://papers.nips.cc/paper/7159-improved-training-of-wasserstein-gans.pdf}

\bibitem{gulrajani2018NND}
Gulrajani, I., Raffel, C., Metz, L.: Towards {GAN} benchmarks which require
  generalization. In: International Conference on Learning Representations
  (2019), \url{https://openreview.net/forum?id=HkxKH2AcFm}

\bibitem{he2016deep}
He, K., Zhang, X., Ren, S., Sun, J.: Deep residual learning for image
  recognition. In: Proceedings of the IEEE Conference on Computer Vision and
  Pattern Recognition (CVPR) (June 2016)

\bibitem{Heusel2017GANs}
Heusel, M., Ramsauer, H., Unterthiner, T., Nessler, B., Hochreiter, S.: Gans
  trained by a two time-scale update rule converge to a local nash equilibrium.
  In: Guyon, I., Luxburg, U.V., Bengio, S., Wallach, H., Fergus, R.,
  Vishwanathan, S., Garnett, R. (eds.) Advances in Neural Information
  Processing Systems 30, pp. 6626--6637. Curran Associates, Inc. (2017),
  \url{http://papers.nips.cc/paper/7240-gans-trained-by-a-two-time-scale-update-rule-converge-to-a-local-nash-equilibrium.pdf}

\bibitem{ProgressiveGAN2018}
Karras, T., Aila, T., Laine, S., Lehtinen, J.: Progressive growing of {GAN}s
  for improved quality, stability, and variation. In: International Conference
  on Learning Representations (2018),
  \url{https://openreview.net/forum?id=Hk99zCeAb}

\bibitem{KellyGuerreroEtAl_FrankenGAN_SigAsia2018}
Kelly, T., Guerrero, P., Steed, A., Wonka, P., Mitra, N.J.: Frankengan: Guided
  detail synthesis for building mass models using style-synchonized gans. ACM
  Trans. Graph.  \textbf{37}(6) (Dec 2018). \doi{10.1145/3272127.3275065},
  \url{https://doi.org/10.1145/3272127.3275065}

\bibitem{kurach2018gan}
Kurach, K., Lucic, M., Zhai, X., Michalski, M., Gelly, S.: The {GAN} landscape:
  Losses, architectures, regularization, and normalization (2019),
  \url{https://openreview.net/forum?id=rkGG6s0qKQ}

\bibitem{liu2015faceattributes}
Liu, Z., Luo, P., Wang, X., Tang, X.: Deep learning face attributes in the
  wild. In: Proceedings of the IEEE International Conference on Computer Vision
  (ICCV) (December 2015)

\bibitem{lucic2017gans}
Lucic, M., Kurach, K., Michalski, M., Gelly, S., Bousquet, O.: Are gans created
  equal? a large-scale study. In: Bengio, S., Wallach, H., Larochelle, H.,
  Grauman, K., Cesa-Bianchi, N., Garnett, R. (eds.) Advances in Neural
  Information Processing Systems 31, pp. 700--709. Curran Associates, Inc.
  (2018),
  \url{http://papers.nips.cc/paper/7350-are-gans-created-equal-a-large-scale-study.pdf}

\bibitem{ma2017pose}
Ma, L., Jia, X., Sun, Q., Schiele, B., Tuytelaars, T., Van~Gool, L.: Pose
  guided person image generation. In: Guyon, I., Luxburg, U.V., Bengio, S.,
  Wallach, H., Fergus, R., Vishwanathan, S., Garnett, R. (eds.) Advances in
  Neural Information Processing Systems 30, pp. 406--416. Curran Associates,
  Inc. (2017),
  \url{http://papers.nips.cc/paper/6644-pose-guided-person-image-generation.pdf}

\bibitem{mao2017effectiveness}
{Mao}, X., {Li}, Q., {Xie}, H., {Lau}, R.Y.K., {Wang}, Z., {Smolley}, S.P.: On
  the effectiveness of least squares generative adversarial networks. IEEE
  Transactions on Pattern Analysis and Machine Intelligence  \textbf{41}(12),
  2947--2960 (2019)

\bibitem{Mao_2017_ICCV}
Mao, X., Li, Q., Xie, H., Lau, R.Y., Wang, Z., Paul~Smolley, S.: Least squares
  generative adversarial networks. In: Proceedings of the IEEE International
  Conference on Computer Vision (ICCV) (Oct 2017)

\bibitem{DBLP:conf/icml/MeschederGN18}
Mescheder, L., Geiger, A., Nowozin, S.: Which training methods for {GAN}s do
  actually converge? In: Dy, J., Krause, A. (eds.) Proceedings of the 35th
  International Conference on Machine Learning. Proceedings of Machine Learning
  Research, vol.~80, pp. 3481--3490. PMLR, Stockholmsmässan, Stockholm Sweden
  (10--15 Jul 2018), \url{http://proceedings.mlr.press/v80/mescheder18a.html}

\bibitem{miyato2018spectral}
Miyato, T., Kataoka, T., Koyama, M., Yoshida, Y.: Spectral normalization for
  generative adversarial networks. In: International Conference on Learning
  Representations (2018), \url{https://openreview.net/forum?id=B1QRgziT-}

\bibitem{Nowozin2016nips}
Nowozin, S., Cseke, B., Tomioka, R.: f-gan: Training generative neural samplers
  using variational divergence minimization. In: Lee, D.D., Sugiyama, M.,
  Luxburg, U.V., Guyon, I., Garnett, R. (eds.) Advances in Neural Information
  Processing Systems 29, pp. 271--279. Curran Associates, Inc. (2016),
  \url{http://papers.nips.cc/paper/6066-f-gan-training-generative-neural-samplers-using-variational-divergence-minimization.pdf}

\bibitem{park2019semantic}
Park, T., Liu, M.Y., Wang, T.C., Zhu, J.Y.: Semantic image synthesis with
  spatially-adaptive normalization. In: Proceedings of the IEEE/CVF Conference
  on Computer Vision and Pattern Recognition (CVPR) (June 2019)

\bibitem{radford2015unsupervised}
Radford, A., Metz, L., Chintala, S.: Unsupervised representation learning with
  deep convolutional generative adversarial networks. arXiv preprint
  arXiv:1511.06434  (2015)

\bibitem{Sanyal2020Stable}
Sanyal, A., Torr, P.H., Dokania, P.K.: Stable rank normalization for improved
  generalization in neural networks and gans. In: International Conference on
  Learning Representations (2020),
  \url{https://openreview.net/forum?id=H1enKkrFDB}

\bibitem{RMSProp2012}
Tieleman, T., Hinton, G.: Lecture 6.5-rmsprop: Divide the gradient by a running
  average of its recent magnitude. COURSERA: Neural networks for machine
  learning  \textbf{4}(2),  26--31 (2012)

\bibitem{wang2018pix2pixHD}
Wang, T.C., Liu, M.Y., Zhu, J.Y., Tao, A., Kautz, J., Catanzaro, B.:
  High-resolution image synthesis and semantic manipulation with conditional
  gans. In: Proceedings of the IEEE Conference on Computer Vision and Pattern
  Recognition (CVPR) (June 2018)

\bibitem{3dgan}
Wu, J., Zhang, C., Xue, T., Freeman, B., Tenenbaum, J.: Learning a
  probabilistic latent space of object shapes via 3d generative-adversarial
  modeling. In: Lee, D.D., Sugiyama, M., Luxburg, U.V., Guyon, I., Garnett, R.
  (eds.) Advances in Neural Information Processing Systems 29, pp. 82--90.
  Curran Associates, Inc. (2016),
  \url{http://papers.nips.cc/paper/6096-learning-a-probabilistic-latent-space-of-object-shapes-via-3d-generative-adversarial-modeling.pdf}

\bibitem{CycleGAN2017}
Zhu, J.Y., Park, T., Isola, P., Efros, A.A.: Unpaired image-to-image
  translation using cycle-consistent adversarial networks. In: Proceedings of
  the IEEE International Conference on Computer Vision (ICCV) (Oct 2017)

\end{thebibliography}


\title{How does Lipschitz Regularization Influence GAN Training? \\- Supplementary Material -} 

\titlerunning{Supplementary Material}
%
\author{Yipeng Qin\inst{1,3} \and
Niloy Mitra\inst{2} \and
Peter Wonka\inst{3}}
\authorrunning{Qin, Mitra, Wonka}
%
\institute{Cardiff University, \email{qiny16@cardiff.ac.uk}
\and
UCL/Adobe Research, \email{n.mitra@cs.ucl.ac.uk}
\and
KAUST,
\email{pwonka@gmail.com}}
\maketitle

\begin{abstract}
   As supplementary material to the main paper we provide the detailed network architectures (Sec. \ref{sec:detailed_network_architectures}),   additional quantitative results (Sec. \ref{sec:additional_quantitative_results}) and   additional qualitative results (Sec. \ref{sec:additional_qualitative_results}).
\end{abstract}

\section{Detailed Network Architectures}
\label{sec:detailed_network_architectures}

The detailed network architectures of the generators and the discriminators are shown in Table \ref{tb:CNN_architecture}.

\section{Additional Quantitative Results}
\label{sec:additional_quantitative_results}
Table \ref{tab:FID_vs_alpha_supplement} shows the FID scores against $\alpha$ on all the three datasets (MNIST, CIFAR10 and CelebA), which is complementary to Table 4 in the main paper.
The discussion in the main paper also applies to the additional results in Table~\ref{tab:FID_vs_alpha_supplement}.

We further verified the influence of domain scaling on the CIFAR10 dataset with different Lipschitz regularizers, \textit{i.e.} Spectral Normalization (SN)~\cite{miyato2018spectral} and Stable Rank Normalization (SRN)+SN~\cite{Sanyal2020Stable}, using additional metrics, \textit{i.e.} Inception Scores (IS)~\cite{barratt2018note} and Neural Divergence (ND)~\cite{gulrajani2018NND}.
Specifically, we set $k_{SN}=1$ for SN, $c=0.7$ for SRN and reported both IS and ND scores with different $\alpha$. As Table~\ref{tab:IS_vs_alpha} and Table~\ref{tab:ND_vs_alpha} show,
\begin{itemize}
    \item When $\alpha$ is small, all loss functions degenerate to linear ones like WGAN and have similar IS and ND scores.
    \item When $\alpha$ is large, different loss functions behave differently and have different IS, ND scores (mostly worse).
\end{itemize}
Thus, our conclusions still hold in these settings.

\section{Additional Qualitative Results}
\label{sec:additional_qualitative_results}

In this section, we show the qualitative results (sample images) corresponding to the quantitative experiments in the main paper. 
The FID scores and line plots are shown together with the samples.

\subsection{Samples of FID scores v.s. $k_{SN}$ Experiment}
This subsection corresponds to the FID scores v.s. $k_{SN}$ experiment (Table 3 in the main paper).

With varying $k_{SN}$ on the MNIST dataset,
\begin{itemize}
    \item Figure \ref{fig:NSGANSN_MNIST} shows sample images of the \textcolor[RGB]{51,120,181}{\textbf{NS-GAN-SN}}; 
    \item Figure \ref{fig:LSGANSN_MNIST} shows sample images of the \textcolor[RGB]{246,126,0}{\textbf{LS-GAN-SN}};
    \item Figure \ref{fig:WGANSN_MNIST} shows sample images of the \textcolor[RGB]{64,160,32}{\textbf{WGAN-SN}};
    \item Figure \ref{fig:COSGANSN_MNIST} shows sample images of the \textcolor[RGB]{205,35,33}{\textbf{COS-GAN-SN}};
    \item Figure \ref{fig:EXPGANSN_MNIST} shows sample images of the \textcolor[RGB]{145,102,191}{\textbf{EXP-GAN-SN}}.
\end{itemize}

With varying $k_{SN}$ on the CIFAR10 dataset,
\begin{itemize}
    \item Figure \ref{fig:NSGANSN_CIFAR10} shows sample images of the \textcolor[RGB]{51,120,181}{\textbf{NS-GAN-SN}}; 
    \item Figure \ref{fig:LSGANSN_CIFAR10} shows sample images of the \textcolor[RGB]{246,126,0}{\textbf{LS-GAN-SN}};
    \item Figure \ref{fig:WGANSN_CIFAR10} shows sample images of the \textcolor[RGB]{64,160,32}{\textbf{WGAN-SN}};
    \item Figure \ref{fig:COSGANSN_CIFAR10} shows sample images of the \textcolor[RGB]{205,35,33}{\textbf{COS-GAN-SN}};
    \item Figure \ref{fig:EXPGANSN_CIFAR10} shows sample images of the \textcolor[RGB]{145,102,191}{\textbf{EXP-GAN-SN}}.
\end{itemize}

With varying $k_{SN}$ on the CelebA dataset,
\begin{itemize}
    \item Figure \ref{fig:NSGANSN_CelebA} shows sample images of the \textcolor[RGB]{51,120,181}{\textbf{NS-GAN-SN}}; 
    \item Figure \ref{fig:LSGANSN_CelebA} shows sample images of the \textcolor[RGB]{246,126,0}{\textbf{LS-GAN-SN}};
    \item Figure \ref{fig:WGANSN_CelebA} shows sample images of the \textcolor[RGB]{64,160,32}{\textbf{WGAN-SN}};
    \item Figure \ref{fig:COSGANSN_CelebA} shows sample images of the \textcolor[RGB]{205,35,33}{\textbf{COS-GAN-SN}};
    \item Figure \ref{fig:EXPGANSN_CelebA} shows sample images of the \textcolor[RGB]{145,102,191}{\textbf{EXP-GAN-SN}}.
\end{itemize}

\subsection{Samples of the FID scores v.s. $\alpha$ Experiment}
This subsection corresponds to the FID scores v.s. $\alpha$ experiment (Table \ref{tab:FID_vs_alpha_supplement}).
The results show that instead of the restriction of the neural network gradients, the restriction of the loss function is the dominating factor of Lipschitz regularization.

\paragraph{Results for Table \ref{tab:FID_vs_alpha_supplement} (a)}
With varying $\alpha$, $k_{SN}=50.0$ and on MNIST dataset,
\begin{itemize}
    \item Figure \ref{fig:Scale_NSGANSN_MNIST_k_50} shows sample images of the \textcolor[RGB]{51,120,181}{\textbf{NS-GAN-SN}}; 
    \item Figure \ref{fig:Scale_LSGANSN_MNIST_k_50} shows sample images of the \textcolor[RGB]{246,126,0}{\textbf{LS-GAN-SN}};
    \item Figure \ref{fig:Scale_LSGANSN_zero_centered_MNIST_k_50} shows sample images of the \textcolor[RGB]{64,160,32}{\textbf{LS-GAN$^\#$-SN}};
    \item Figure \ref{fig:Scale_EXPGANSN_MNIST_k_50} shows sample images of the \textcolor[RGB]{205,35,33}{\textbf{EXP-GAN-SN}};
    \item Figure \ref{fig:Scale_COSGANSN_MNIST_k_50} shows sample images of the \textcolor[RGB]{145,102,191}{\textbf{COS-GAN-SN}}.
\end{itemize}

With varying $\alpha$, $k_{SN}=50.0$ and on CIFAR10 dataset,
\begin{itemize}
    \item Figure \ref{fig:Scale_NSGANSN_CIFAR10_k_50} shows sample images of the \textcolor[RGB]{51,120,181}{\textbf{NS-GAN-SN}}; 
    \item Figure \ref{fig:Scale_LSGANSN_CIFAR10_k_50} shows sample images of the \textcolor[RGB]{246,126,0}{\textbf{LS-GAN-SN}};
    \item Figure \ref{fig:Scale_LSGANSN_zero_centered_CIFAR10_k_50} shows sample images of the \textcolor[RGB]{64,160,32}{\textbf{LS-GAN$^\#$-SN}};
    \item Figure \ref{fig:Scale_EXPGANSN_CIFAR10_k_50} shows sample images of the \textcolor[RGB]{205,35,33}{\textbf{EXP-GAN-SN}};
    \item Figure \ref{fig:Scale_COSGANSN_CIFAR10_k_50} shows sample images of the \textcolor[RGB]{145,102,191}{\textbf{COS-GAN-SN}}.
\end{itemize}

With varying $\alpha$, $k_{SN}=50.0$ and on CelebA dataset,
\begin{itemize}
    \item Figure \ref{fig:Scale_NSGANSN_CelebA_k_50} shows sample images of the \textcolor[RGB]{51,120,181}{\textbf{NS-GAN-SN}}; 
    \item Figure \ref{fig:Scale_LSGANSN_CelebA_k_50} shows sample images of the \textcolor[RGB]{246,126,0}{\textbf{LS-GAN-SN}};
    \item Figure \ref{fig:Scale_LSGANSN_zero_centered_CelebA_k_50} shows sample images of the \textcolor[RGB]{64,160,32}{\textbf{LS-GAN$^\#$-SN}};
    \item Figure \ref{fig:Scale_EXPGANSN_CelebA_k_50} shows sample images of the \textcolor[RGB]{205,35,33}{\textbf{EXP-GAN-SN}};
    \item Figure \ref{fig:Scale_COSGANSN_CelebA_k_50} shows sample images of the \textcolor[RGB]{145,102,191}{\textbf{COS-GAN-SN}}.
\end{itemize}

\paragraph{Results for Table \ref{tab:FID_vs_alpha_supplement} (b)}
With varying $\alpha$, $k_{SN}=1.0$ and on MNIST dataset,
\begin{itemize}
    \item Figure \ref{fig:Scale_NSGANSN_MNIST_k_1} shows sample images of the \textcolor[RGB]{51,120,181}{\textbf{NS-GAN-SN}}; 
    \item Figure \ref{fig:Scale_LSGANSN_MNIST_k_1} shows sample images of the \textcolor[RGB]{246,126,0}{\textbf{LS-GAN-SN}};
    \item Figure \ref{fig:Scale_LSGANSN_zero_centered_MNIST_k_1} shows sample images of the \textcolor[RGB]{64,160,32}{\textbf{LS-GAN$^\#$-SN}};
    \item Figure \ref{fig:Scale_EXPGANSN_MNIST_k_1} shows sample images of the \textcolor[RGB]{205,35,33}{\textbf{EXP-GAN-SN}};
    \item Figure \ref{fig:Scale_COSGANSN_MNIST_k_1} shows sample images of the \textcolor[RGB]{145,102,191}{\textbf{COS-GAN-SN}}.
\end{itemize}

With varying $\alpha$, $k_{SN}=1.0$ and on CIFAR10 dataset,
\begin{itemize}
    \item Figure \ref{fig:Scale_NSGANSN_CIFAR10_k_1} shows sample images of the \textcolor[RGB]{51,120,181}{\textbf{NS-GAN-SN}}; 
    \item Figure \ref{fig:Scale_LSGANSN_CIFAR10_k_1} shows sample images of the \textcolor[RGB]{246,126,0}{\textbf{LS-GAN-SN}};
    \item Figure \ref{fig:Scale_LSGANSN_zero_centered_CIFAR10_k_1} shows sample images of the \textcolor[RGB]{64,160,32}{\textbf{LS-GAN$^\#$-SN}};
    \item Figure \ref{fig:Scale_EXPGANSN_CIFAR10_k_1} shows sample images of the \textcolor[RGB]{205,35,33}{\textbf{EXP-GAN-SN}};
    \item Figure \ref{fig:Scale_COSGANSN_CIFAR10_k_1} shows sample images of the \textcolor[RGB]{145,102,191}{\textbf{COS-GAN-SN}}.
\end{itemize}

With varying $\alpha$, $k_{SN}=1.0$ and on CelebA dataset,
\begin{itemize}
    \item Figure \ref{fig:Scale_NSGANSN_CelebA_k_1} shows sample images of the \textcolor[RGB]{51,120,181}{\textbf{NS-GAN-SN}}; 
    \item Figure \ref{fig:Scale_LSGANSN_CelebA_k_1} shows sample images of the \textcolor[RGB]{246,126,0}{\textbf{LS-GAN-SN}};
    \item Figure \ref{fig:Scale_LSGANSN_zero_centered_CelebA_k_1} shows sample images of the \textcolor[RGB]{64,160,32}{\textbf{LS-GAN$^\#$-SN}};
    \item Figure \ref{fig:Scale_EXPGANSN_CelebA_k_1} shows sample images of the \textcolor[RGB]{205,35,33}{\textbf{EXP-GAN-SN}};
    \item Figure \ref{fig:Scale_COSGANSN_CelebA_k_1} shows sample images of the \textcolor[RGB]{145,102,191}{\textbf{COS-GAN-SN}}.
\end{itemize}

\subsection{Samples of the Degenerate Loss Function}
Figure \ref{fig:degenerate_loss_functions} shows the sample images of WGAN-SN and some extremely degenerate loss functions, which corresponds to Table 5 in the main paper.
It can be observed that all loss functions have similar performance.


\begin{table*}[t!]
    \caption{Network architectures. BN: batch normalization; SN: spectral normalization; GP: gradient penalty.}
    \vspace{-4mm}
    \small
    \begin{center}
        \begin{subtable}{0.49\columnwidth}
        \centering
            \begin{tabular}{c}
            \toprule
            $z \in \mathbb{R}^{100} \sim \mathcal{N}(0, 1) $ \\
            \hline
            Reshape $\to 1 \times 1 \times 100$\\
            \hline
            $4 \times 4$, stride=1, deconv. BN 512 ReLU\\
            \hline
            $4 \times 4$, stride=2, deconv. BN 256 ReLU\\
            \hline
            $4 \times 4$, stride=2, deconv. BN 128 ReLU\\
            \hline
            $4 \times 4$, stride=2, deconv. BN 64 ReLU\\
            \hline
            $4 \times 4$, stride=2, deconv. 3 Tanh\\
            \bottomrule
            \end{tabular}
            \subcaption*{Generator}
        \end{subtable}
        \begin{subtable}{0.49\columnwidth}
        \centering
            \vspace*{2mm}
            \begin{tabular}{c}
            \toprule
            RGB image $x \in [-1,1]^{64 \times 64 \times 3}$\\
            \hline
            $4 \times 4$, stride=2, conv 64 BN lReLU(0.2)\\
            \hline
            $4 \times 4$, stride=2, conv 128 BN lReLU(0.2)\\
            \hline
            $4 \times 4$, stride=2, conv 256 BN lReLU(0.2)\\
            \hline
            $4 \times 4$, stride=2, conv 512 BN lReLU(0.2)\\
            \hline
            $4 \times 4$, stride=1, conv 1\\
            \bottomrule
            \end{tabular}
            \vspace*{2mm}
            \subcaption*{Discriminator}
        \end{subtable}
    \caption*{(a) Network architectures with gradient penalty (GP) regularizer ($64 \times 64$ resolution).}
        \begin{subtable}{0.49\columnwidth}
        \centering
            \begin{tabular}{c}
            \toprule
            $z \in \mathbb{R}^{100} \sim \mathcal{N}(0, 1) $ \\
            \hline
            Reshape $\to 1 \times 1 \times 100$\\
            \hline
            $4 \times 4$, stride=1, deconv. BN 512 ReLU\\
            \hline
            $4 \times 4$, stride=2, deconv. BN 256 ReLU\\
            \hline
            $4 \times 4$, stride=2, deconv. BN 128 ReLU\\
            \hline
            $4 \times 4$, stride=2, deconv. BN 64 ReLU\\
            \hline
            $4 \times 4$, stride=2, deconv. 3 Tanh\\
            \bottomrule
            \end{tabular}
            \subcaption*{Generator}
        \end{subtable}
        \begin{subtable}{0.49\columnwidth}
        \centering
            \vspace*{2mm}
            \begin{tabular}{c}
            \toprule
            RGB image $x \in [-1,1]^{64 \times 64 \times 3}$\\
            \hline
            $4 \times 4$, stride=2, conv 64 SN lReLU(0.2)\\
            \hline
            $4 \times 4$, stride=2, conv 128 SN lReLU(0.2)\\
            \hline
            $4 \times 4$, stride=2, conv 256 SN lReLU(0.2)\\
            \hline
            $4 \times 4$, stride=2, conv 512 SN lReLU(0.2)\\
            \hline
            $4 \times 4$, stride=1, conv 1\\
            \bottomrule
            \end{tabular}
            \vspace*{2mm}
            \subcaption*{Discriminator}
        \end{subtable} 
    \caption*{(b) Network architectures with spectral normalization (SN) regularizer ($64 \times 64$ resolution).}
        \begin{subtable}{0.49\columnwidth}
        \centering
            \begin{tabular}{c}
            \toprule
            $z \in \mathbb{R}^{100} \sim \mathcal{N}(0, 1) $ \\
            \hline
            Reshape $\to 1 \times 1 \times 100$\\
            \hline
            $4 \times 4$, stride=1, deconv. BN 256 ReLU\\
            \hline
            $4 \times 4$, stride=2, deconv. BN 128 ReLU\\
            \hline
            $4 \times 4$, stride=2, deconv. BN 64 ReLU\\
            \hline
            $4 \times 4$, stride=2, deconv. 3 Tanh\\
            \bottomrule
            \end{tabular}
            \subcaption*{Generator}
        \end{subtable}
        \begin{subtable}{0.49\columnwidth}
        \centering
            \vspace*{2mm}
            \begin{tabular}{c}
            \toprule
            Image $x \in  [-1,1]^{32 \times 32 \times c}$\\
            \hline
            $4 \times 4$, stride=2, conv 64 SN lReLU(0.2)\\
            \hline
            $4 \times 4$, stride=2, conv 128 SN lReLU(0.2)\\
            \hline
            $4 \times 4$, stride=2, conv 256 SN lReLU(0.2)\\
            \hline
            $4 \times 4$, stride=1, conv 1\\
            \bottomrule
            \end{tabular}
            \vspace*{2mm}
            \subcaption*{Discriminator}
        \end{subtable}
    \caption*{(c) Network architectures with spectral normalization (SN) regularizer ($32 \times 32$ resolution). For input $x\in  [-1,1]^{32 \times 32 \times c}$, $c=3$ for CIFAR10 dataset and $c=1$ for MNIST dataset.}
    \end{center}
    \label{tb:CNN_architecture}
\end{table*}

\begin{table*}[t]
\caption{FID scores v.s. $\alpha$. For the line plots, the $x$-axis shows $\alpha$ (in log scale) and the $y$-axis shows the FID scores.  Lower FID scores are better. The FID scores of the datasets in \textbf{bold} are not included in the main paper.}
\begin{subtable}{0.99\textwidth}
    \begin{center}
        \centering
        \resizebox{\textwidth}{!}{
            \begin{tabular}{l l r r  r  r  r  r  c}
            \toprule
            \multirow{2}{*}{Dataset} & \multirow{2}{*}{GANs} & \multicolumn{6}{c}{FID Scores} & \multirow{2}{*}{Line Plot}\\
            & & $\alpha$ = $1e^{-11}$ & $1e^{-9}$ & $1e^{-7}$ & $1e^{-5}$ & $1e^{-3}$ & $1e^{-1}$ & \\
            \hline
            \multirow{5}{*}{\textbf{MNIST}} & NS-GAN-SN & 3.67 & 3.61 & 3.81 & 33.48 &154.14 & 155.54 & \multirow{5}{*}{\raisebox{-.9\height}{\includegraphics[width=.21\linewidth]{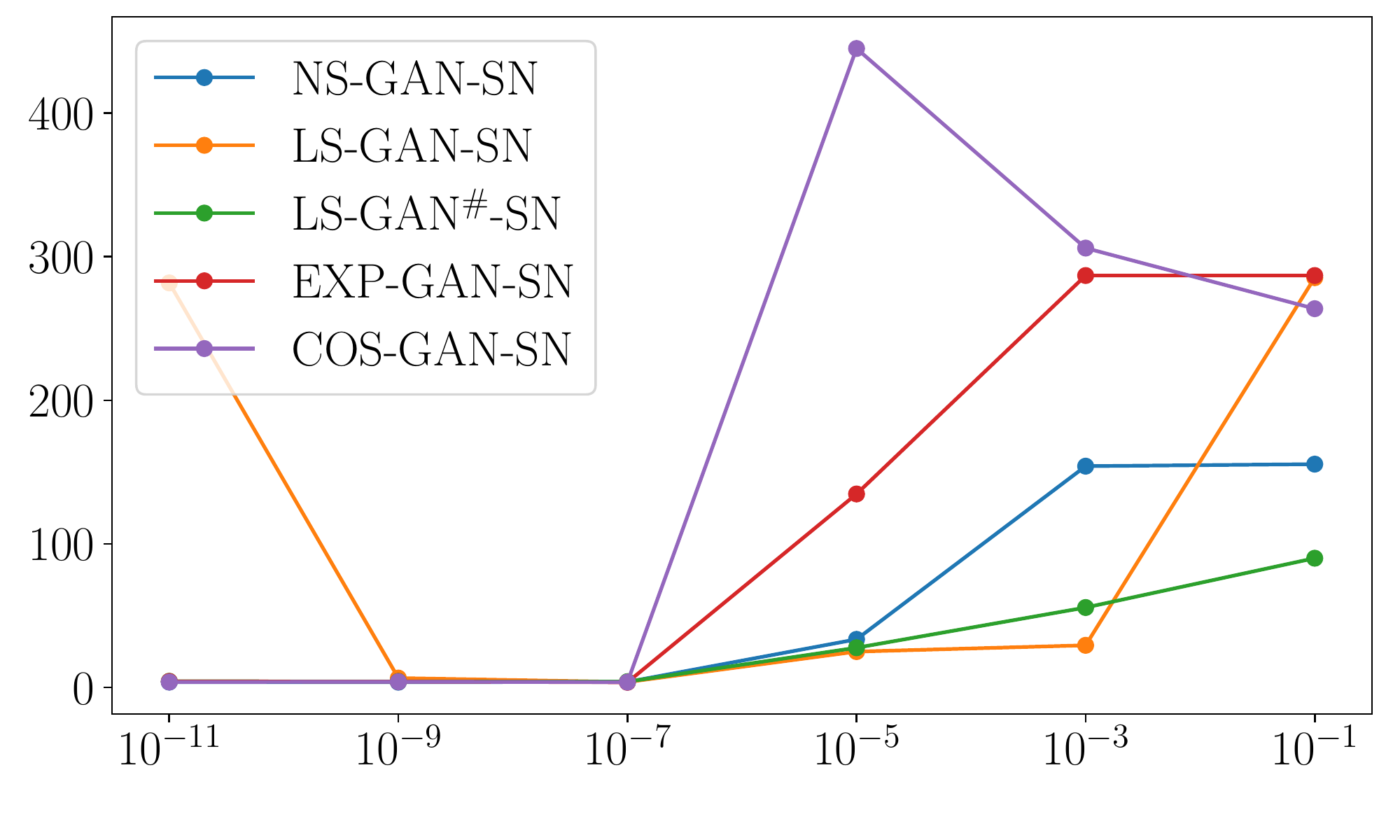}}}\\
            & LS-GAN-SN & 281.82 & 6.48 & 3.74 & 24.93 & 29.31 & 285.46 &\\
            & LS-GAN$^{\#}$-SN & 4.25 & 4.15 & 3.99 & 27.62 & 55.64 & 90.00 &\\
            & COS-GAN-SN & 3.74 & 3.93 & 3.66 & 445.15 & 306.09 & 263.85 &\\
            & EXP-GAN-SN & 4.14 & 4.01 & 3.54 & 134.76 & 286.96 & 286.96 &\\
            \hline
            \multirow{5}{*}{\textbf{CIFAR10}} & NS-GAN-SN & 19.58 & 22.46 & 18.73 & 24.57 & 49.56 & 43.42 &  \multirow{5}{*}{\raisebox{-.9\height}{\includegraphics[width=.21\linewidth]{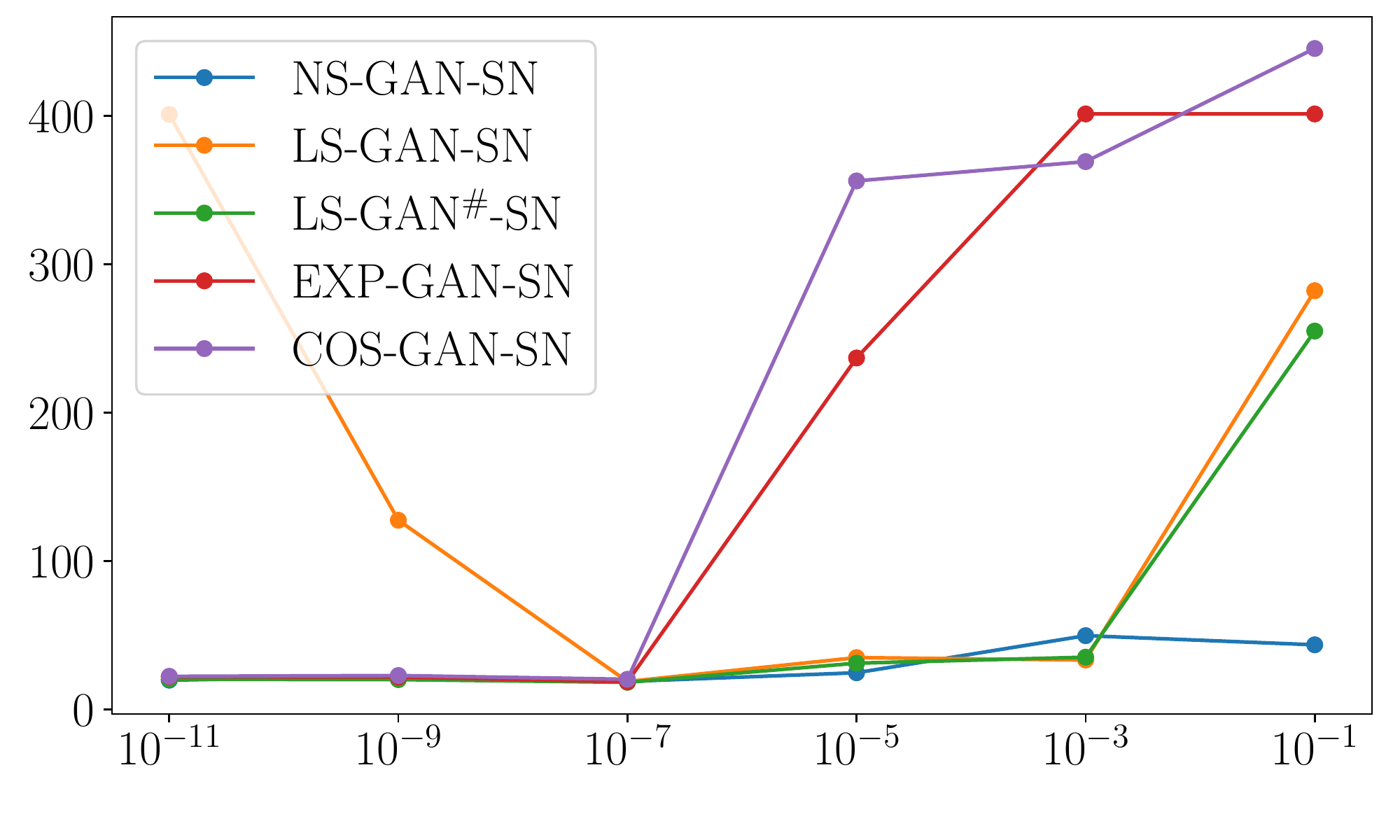}}}\\
            & LS-GAN-SN & 400.91 & 127.38 & 18.68 & 34.78 & 33.17 & 282.11 &\\
            & LS-GAN$^{\#}$-SN & 20.16 & 19.96 & 18.13 & 31.03 & 35.06 & 254.88 &\\
            & COS-GAN-SN & 22.16 & 22.69 & 20.19 & 356.10 & 369.11 & 445.37 &\\
            & EXP-GAN-SN & 21.93 & 21.70 & 18.50 & 236.77 & 401.24 & 401.24 &\\
            \hline
            \multirow{5}{*}{CelebA} & NS-GAN-SN & 9.08 & 7.05 & 7.84 & 18.51 & 18.41 & 242.64 & \multirow{5}{*}{\raisebox{-.9\height}{\includegraphics[width=.21\linewidth]{figures/FID_vs_alpha_ksn_50_celeba.pdf}}}\\
            & LS-GAN-SN & 135.17 & 6.57 & 10.67 & 13.39 & 17.42 & 311.93 &\\
            & LS-GAN$^{\#}$-SN & 6.66 & 5.68 & 8.72 & 11.13 & 14.90 & 383.61 &\\
            & COS-GAN-SN & 8.00 & 6.31 & 300.55 & 280.84 & 373.31 & 318.53&\\
            & EXP-GAN-SN & 8.85 & 6.09 & 264.49 & 375.32 & 375.32 & 375.32 &\\
            \bottomrule
            \end{tabular}
            }
    \end{center}
    \vspace*{-3mm}
    \subcaption{$k_{SN}=50.0$}
    \vspace*{1mm}
\end{subtable}
\begin{subtable}{0.99\textwidth}
    \begin{center}
        \centering
        \resizebox{\textwidth}{!}{
            \begin{tabular}{l l r r  r  r  r  r  c}
            \toprule
            \multirow{2}{*}{Dataset} & \multirow{2}{*}{GANs} & \multicolumn{6}{c}{FID Scores} & \multirow{2}{*}{Line Plot}\\
            & & $\alpha$ = $1e^{1}$ & $1e^{3}$ & $1e^5$ & $1e^{7}$ & $1e^{9}$ & $1e^{11}$ & \\
            \hline
            \multirow{5}{*}{MNIST} & NS-GAN-SN & 6.55 & 148.97 & 134.44 & 133.82 & 130.21 & 131.87 & \multirow{5}{*}{\raisebox{-.9\height}{\includegraphics[width=.21\linewidth]{figures/FID_vs_alpha_ksn_1_mnist.pdf}}}\\
            & LS-GAN-SN & 23.37 & 26.96 & 260.05 & 255.73 & 256.96 & 265.76 & \\
            & LS-GAN$^{\#}$-SN & 13.43 & 26.51 & 271.85 & 212.74 & 274.63 & 269.96 &\\
            & COS-GAN-SN & 11.79 & 377.62 & 375.72 & 363.45 & 401.12 & 376.39 &\\
            & EXP-GAN-SN & 11.02 & 286.96 & 286.96 & 286.96 & 286.96 & 286.96 &\\
            \hline
            \multirow{5}{*}{\textbf{CIFAR10}} & NS-GAN-SN & 17.63 & 47.31 & 46.85 & 45.44 & 45.67 & 39.90 &  \multirow{5}{*}{\raisebox{-.9\height}{\includegraphics[width=.21\linewidth]{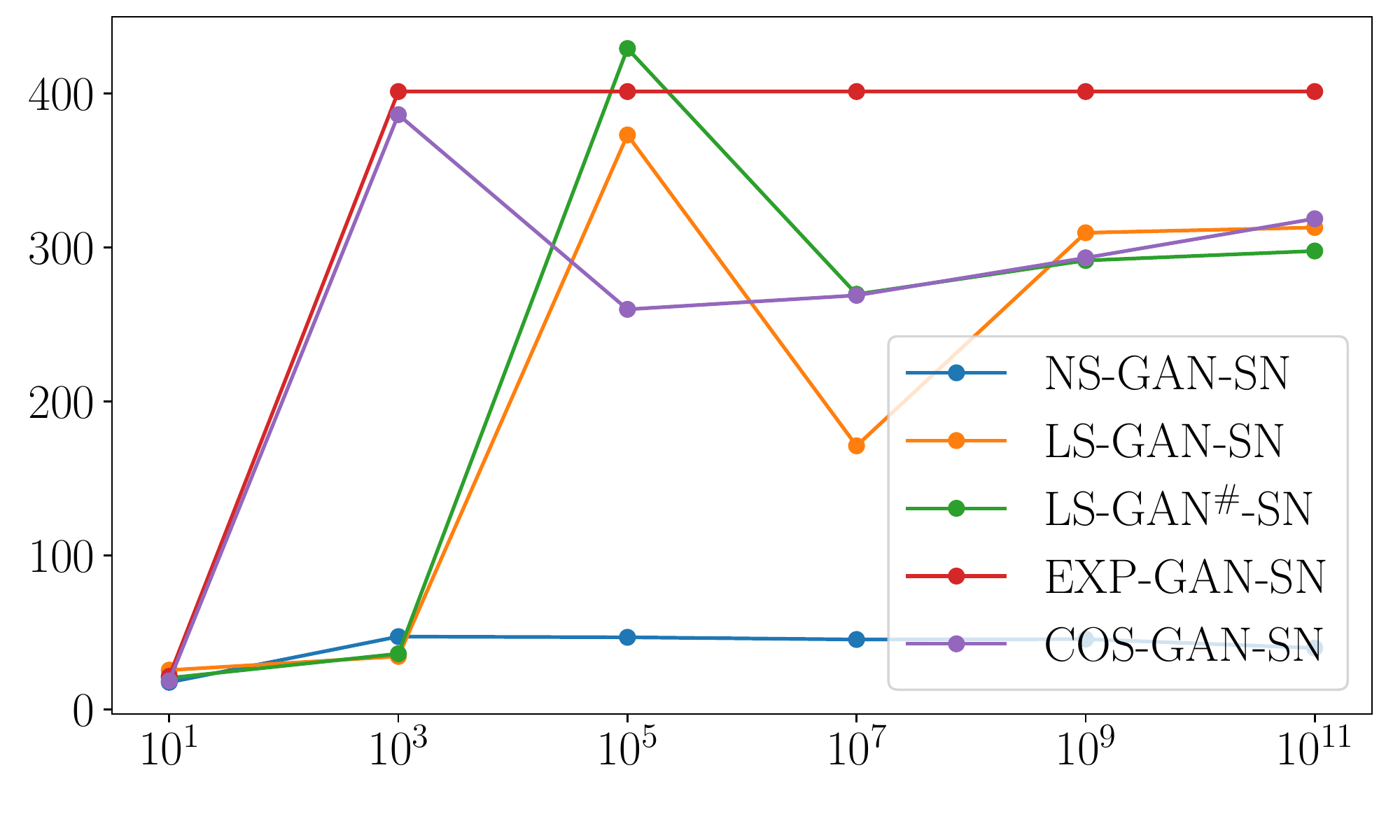}}}\\
            & LS-GAN-SN & 25.55 & 34.44 & 373.07 & 171.18 & 309.55 & 312.96 & \\
            & LS-GAN$^{\#}$-SN & 20.45 & 36.18 & 429.21 & 269.63 & 291.55 & 297.71 &\\
            & COS-GAN-SN & 18.59 & 386.24 & 259.83 & 268.89 & 293.29 & 318.65 &\\
            & EXP-GAN-SN & 21.56 & 401.24 & 401.24 & 401.24 & 401.24 & 401.24 &\\
            \hline
            \multirow{5}{*}{\textbf{CelebA}} & NS-GAN-SN & 5.88 & 16.14 & 17.75 & 17.67 & 16.87 & 18.81 &  \multirow{5}{*}{\raisebox{-.9\height}{\includegraphics[width=.21\linewidth]{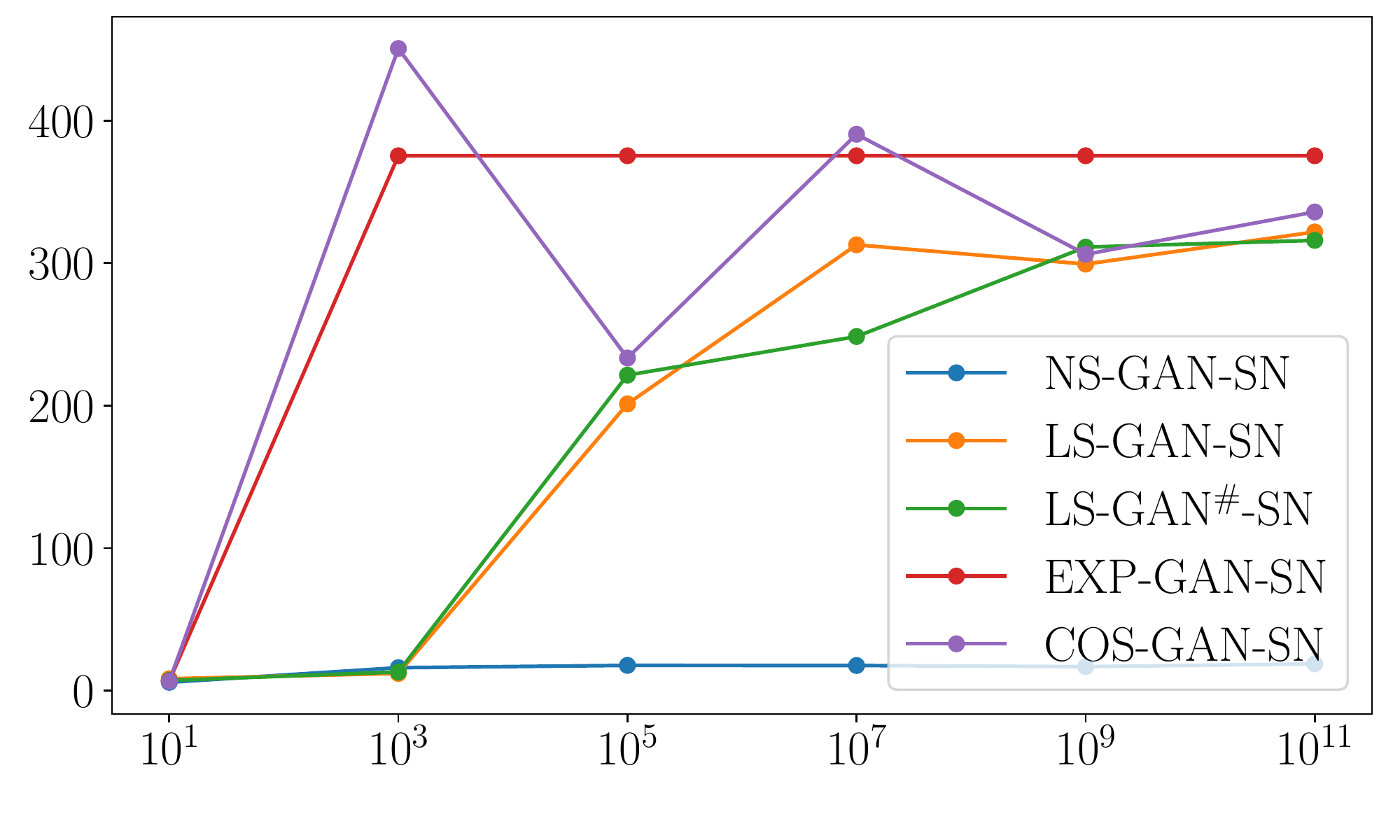}}}\\
            & LS-GAN-SN & 8.41 & 12.09 & 201.22 & 312.83 & 299.30 & 321.84 & \\
            & LS-GAN$^{\#}$-SN & 7.21 & 13.13 & 221.41 & 248.48 & 311.21 & 315.94 &\\
            & COS-GAN-SN & 6.62 & 450.57 & 233.42 & 390.40 & 306.17 & 335.87 &\\
            & EXP-GAN-SN & 6.91 & 375.32 & 375.32 & 375.32 & 375.32 & 375.32 &\\
            \bottomrule
            \end{tabular}
            }
    \end{center}
    \vspace*{-3mm}
    \subcaption{$k_{SN}=1.0$}
\end{subtable}
\label{tab:FID_vs_alpha_supplement}
\end{table*}

\begin{table*}[t]
\caption{Inception Scores (IS) v.s. $\alpha$ on CIFAR10. Higher IS are better.}
\begin{subtable}{0.99\textwidth}
    \begin{center}
        \centering
        \resizebox{\textwidth}{!}{
            \begin{tabular}{l r r r r r r r r r r r}
            \toprule
            \multirow{2}{*}{GANs} & \multicolumn{11}{c}{Inception Scores}\\
            & $\alpha$ = $1e^{-9}$ & $1e^{-7}$ & $1e^{-5}$ & $1e^{-3}$ & $1e^{-1}$ & $1e^{0}$ & $1e^{1}$ & $1e^{3}$ & $1e^{5}$ & $1e^{7}$ & $1e^{9}$\\
            \hline
            NS-GAN & 4.32 & 4.04 & 4.03 & 4.00 & 4.47 & 4.48 & 4.64 & 3.60 & 3.23 & 3.31 & 3.49 \\
            LS-GAN & 1.00 & 1.00 & 1.46 & 2.13 & 4.36 & 4.30 & 4.19 & 3.66 & 1.02 & 1.00 & 1.33 \\
            LS-GAN$^{\#}$ & 3.95 & 4.13 & 4.28 & 4.28 & 4.37 & 4.36 & 4.40 & 3.91 & 1.04 & 1.30 & 1.03 \\
            EXP-GAN & 4.20 & 4.01 & 3.97 & 4.24 & 4.34 & 4.31 & 4.30 & 1.00 & 1.00 & 1.00 & 1.00 \\
            COS-GAN & 4.20 & 4.10 & 4.24 & 4.13 & 4.32 & 4.35 & 4.40 & 2.00 & 1.00 & 1.05 & 1.35\\
            \bottomrule
            \end{tabular}
            }
    \end{center}
    \vspace*{-3mm}
    \subcaption{Lipschitz regularizer: SN}
    \vspace*{1mm}
\end{subtable}
\begin{subtable}{0.99\textwidth}
    \begin{center}
        \centering
        \resizebox{\textwidth}{!}{
            \begin{tabular}{l r r r r r r r r r r r}
            \toprule
            \multirow{2}{*}{GANs} & \multicolumn{11}{c}{Inception Scores}\\
            & $\alpha$ = $1e^{-9}$ & $1e^{-7}$ & $1e^{-5}$ & $1e^{-3}$ & $1e^{-1}$ & $1e^{0}$ & $1e^{1}$ & $1e^{3}$ & $1e^{5}$ & $1e^{7}$ & $1e^{9}$\\
            \hline
            NS-GAN & 3.80 & 4.03 & 4.00 & 4.29 & 4.37 & 4.29 & 4.48 & 3.36 & 3.51 & 3.55 & 3.52\\
            LS-GAN & 1.00 & 1.00 & 1.53 & 2.24 & 4.37 & 4.59 & 4.25 & 3.60 & 1.21 & 1.37 & 1.27\\
            LS-GAN$^{\#}$ & 4.05 & 3.88 & 3.91 & 4.02 & 4.31 & 4.54 & 4.44 & 3.92 & 1.64 & 1.32 & 1.00\\
            EXP-GAN & 3.99 & 3.94 & 3.98 & 4.22 & 4.42 & 4.34 & 4.43 & 1.00 & 1.00 & 1.00 & 1.00\\
            COS-GAN & 3.98 & 4.21 & 3.98 & 3.71 & 4.25 & 4.60 & 4.43 & 1.04 & 1.61 & 1.29 & 1.66\\
            \bottomrule
            \end{tabular}
            }
    \end{center}
    \vspace*{-3mm}
    \subcaption{Lipschitz regularizer: SRN+SN}
    \vspace*{1mm}
\end{subtable}
\label{tab:IS_vs_alpha}
\end{table*}

\begin{table*}[t]
\caption{Neural Divergence (ND) v.s. $\alpha$ on CIFAR10. Lower ND are better.}
\begin{subtable}{0.99\textwidth}
    \begin{center}
        \centering
        \resizebox{\textwidth}{!}{
            \begin{tabular}{l r r r r r r r r r r r}
            \toprule
            \multirow{2}{*}{GANs} & \multicolumn{11}{c}{Neural Divergence}\\
            & $\alpha$ = $1e^{-9}$ & $1e^{-7}$ & $1e^{-5}$ & $1e^{-3}$ & $1e^{-1}$ & $1e^{0}$ & $1e^{1}$ & $1e^{3}$ & $1e^{5}$ & $1e^{7}$ & $1e^{9}$\\
            \hline
            NS-GAN & 44.76 & 36.04 & 39.01 & 37.31 & 36.39 & 32.98 & 24.47 & 52.44 & 46.27 & 52.33 & 51.26\\
            LS-GAN & 157.69 & 222.71 & 82.76 & 118.53 & 37.16 & 25.94 & 42.15 & 38.47 & 72.27 & 155.82 & 286.46\\
            LS-GAN$^{\#}$ & 36.71 & 35.07 & 40.61 & 43.93 & 32.97 & 28.01 & 32.62 & 41.48 & 72.47 & 259.10 & 167.50\\
            EXP-GAN & 36.22 & 43.71 & 41.67 & 35.37 & 34.23 & 32.49 & 31.79 & nan & nan & nan & nan\\
            COS-GAN & 40.53 & 35.66 & 36.33 & 39.48 & 36.39 & 31.73 & 33.21 & 477.07 & 61.40 & 69.79 & 81.15\\
            \bottomrule
            \end{tabular}
            }
    \end{center}
    \vspace*{-3mm}
    \subcaption{Lipschitz regularizer: SN}
    \vspace*{1mm}
\end{subtable}
\begin{subtable}{0.99\textwidth}
    \begin{center}
        \centering
        \resizebox{\textwidth}{!}{
            \begin{tabular}{l r r r r r r r r r r r}
            \toprule
            \multirow{2}{*}{GANs} & \multicolumn{11}{c}{Neural Divergence}\\
            & $\alpha$ = $1e^{-9}$ & $1e^{-7}$ & $1e^{-5}$ & $1e^{-3}$ & $1e^{-1}$ & $1e^{0}$ & $1e^{1}$ & $1e^{3}$ & $1e^{5}$ & $1e^{7}$ & $1e^{9}$\\
            \hline
            NS-GAN & 39.28 & 37.12 & 31.60 & 37.82 & 33.56 & 23.33 & 26.57 & 34.30 & 49.68 & 45.96 & 52.72\\
            LS-GAN & 149.16 & 144.51 & 75.44 & 106.35 & 32.28 & 26.55 & 31.18 & 35.09 & 81.23 & 314.82 & 273.76\\
            LS-GAN$^{\#}$ & 45.73 & 36.29 & 41.87 & 39.78 & 22.94 & 28.94 & 28.52 & 42.42 & 60.87 & 240.42 & 142.73\\
            EXP-GAN & 38.78 & 39.59 & 31.07 & 33.98 & 27.61 & 29.34 & 24.56 & nan & nan & nan & nan\\
            COS-GAN & 35.85 & 40.54 & 31.28 & 41.60 & 30.50 & 39.72 & 25.62 & 104.24 & 107.67 & 263.35 & 158.75\\
            \bottomrule
            \end{tabular}
            }
    \end{center}
    \vspace*{-3mm}
    \subcaption{Lipschitz regularizer: SRN+SN}
    \vspace*{1mm}
\end{subtable}
\label{tab:ND_vs_alpha}
\end{table*}

\FloatBarrier
\clearpage
\newpage

\vspace*{\fill}
\begin{table*}[h!]
\centering
\captionsetup{justification=centering}
\caption*{{\LARGE FID scores v.s. $k_{SN}$ of Different Loss Functions\\ \vspace*{5mm}
- MNIST -}}
\end{table*}
\vspace*{\fill}

\FloatBarrier
\newpage

\begin{figure*}[h]
\begin{center}
    \begin{subfigure}{0.32\textwidth}
        \centering
        \includegraphics[width=0.99\linewidth]{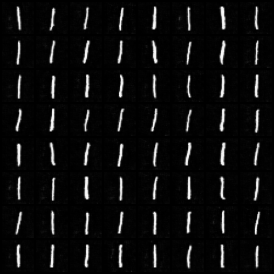}
        \subcaption{$k_{SN}$=50.0, $\mathrm{FID}$=155.41}
    \end{subfigure}
    \begin{subfigure}{0.32\textwidth}
        \centering
        \includegraphics[width=0.99\linewidth]{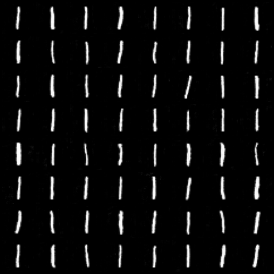}
        \subcaption{$k_{SN}$=10.0, $\mathrm{FID}$=156.60}
    \end{subfigure}
    \begin{subfigure}{0.32\textwidth}
        \centering
        \includegraphics[width=0.99\linewidth]{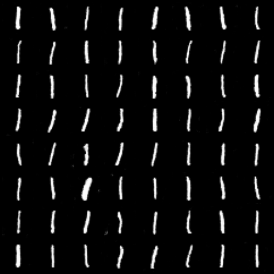}
        \subcaption{$k_{SN}$=5.0, $\mathrm{FID}$=144.28}
    \end{subfigure}
    \begin{subfigure}{0.32\textwidth}
        \centering
        \includegraphics[width=0.99\linewidth]{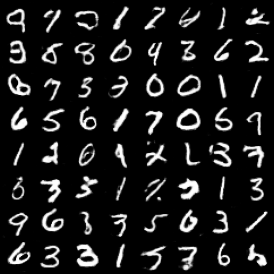}
        \subcaption{$k_{SN}$=1.0, $\mathrm{FID}$=3.90}
    \end{subfigure}
    \begin{subfigure}{0.32\textwidth}
        \centering
        \includegraphics[width=0.99\linewidth]{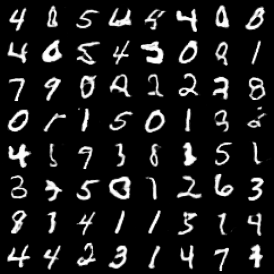}
        \subcaption{$k_{SN}$=0.5, $\mathrm{FID}$=4.20}
    \end{subfigure}
    \begin{subfigure}{0.32\textwidth}
        \centering
        \includegraphics[width=0.99\linewidth]{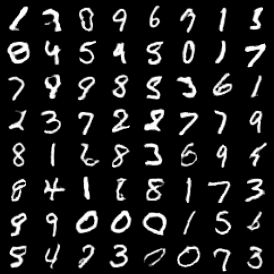}
        \subcaption{$k_{SN}$=0.25, $\mathrm{FID}$=3.99}
    \end{subfigure}
    \begin{subfigure}{0.32\textwidth}
        \centering
        \includegraphics[width=0.99\linewidth]{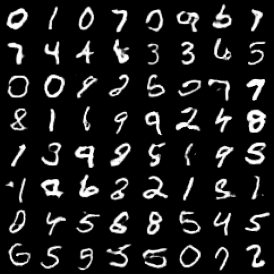}
        \subcaption{$k_{SN}$=0.2, $\mathrm{FID}$=5.41}
    \end{subfigure}
    \begin{subfigure}{0.6\textwidth}
        \includegraphics[width=.9\linewidth]{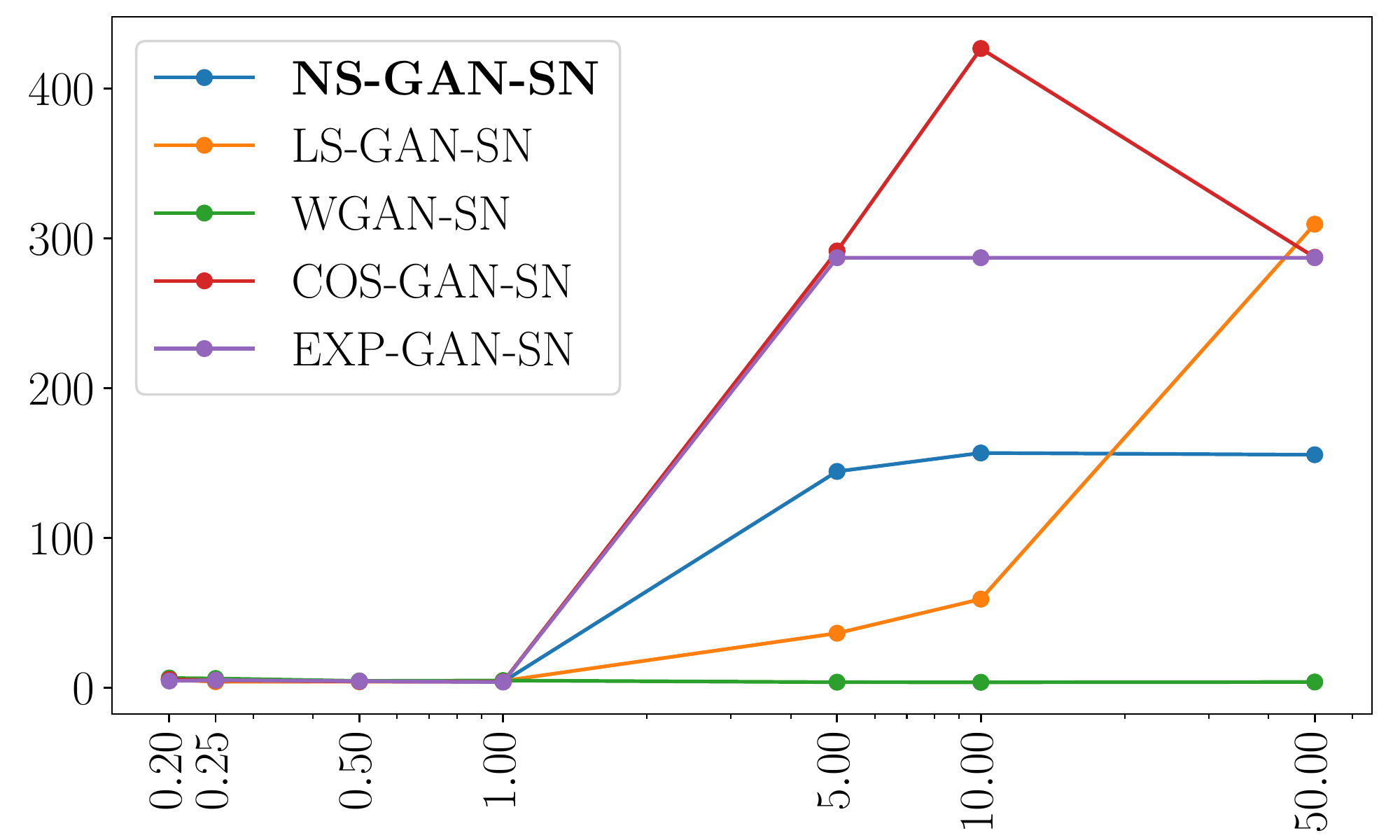}
    \end{subfigure}
\end{center}
  \caption{Samples of randomly generated images with NS-GAN-SN of varying $k_{SN}$ (MNIST). For the line plot, $x$-axis shows $k_{SN}$ (in log scale) and $y$-axis shows the FID scores. }
\label{fig:NSGANSN_MNIST}
\end{figure*}

\begin{figure*}[h]
\begin{center}
    \begin{subfigure}{0.32\textwidth}
        \centering
        \includegraphics[width=0.99\linewidth]{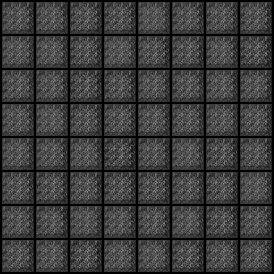}
        \subcaption{$k_{SN}$=50.0, $\mathrm{FID}$=309.35}
    \end{subfigure}
    \begin{subfigure}{0.32\textwidth}
        \centering
        \includegraphics[width=0.99\linewidth]{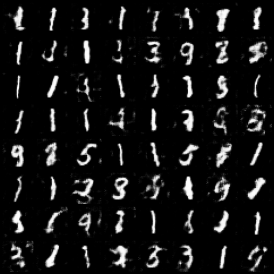}
        \subcaption{$k_{SN}$=10.0, $\mathrm{FID}$=59.04}
    \end{subfigure}
    \begin{subfigure}{0.32\textwidth}
        \centering
        \includegraphics[width=0.99\linewidth]{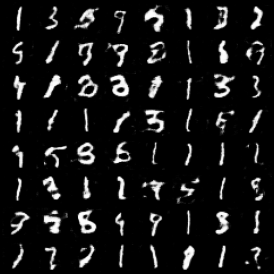}
        \subcaption{$k_{SN}$=5.0, $\mathrm{FID}$=36.26}
    \end{subfigure}
    \begin{subfigure}{0.32\textwidth}
        \centering
        \includegraphics[width=0.99\linewidth]{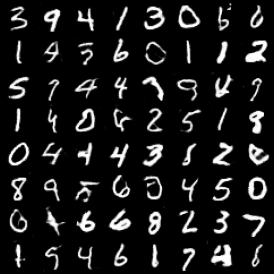}
        \subcaption{$k_{SN}$=1.0, $\mathrm{FID}$=4.42}
    \end{subfigure}
    \begin{subfigure}{0.32\textwidth}
        \centering
        \includegraphics[width=0.99\linewidth]{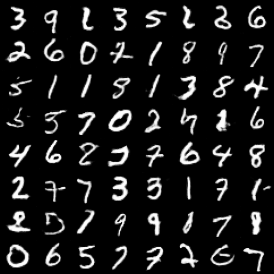}
        \subcaption{$k_{SN}$=0.5, $\mathrm{FID}$=3.90}
    \end{subfigure}
    \begin{subfigure}{0.32\textwidth}
        \centering
        \includegraphics[width=0.99\linewidth]{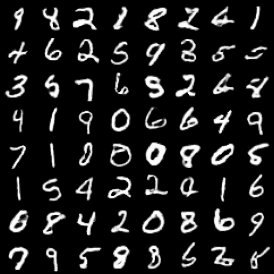}
        \subcaption{$k_{SN}$=0.25, $\mathrm{FID}$=3.96}
    \end{subfigure}
    \begin{subfigure}{0.32\textwidth}
        \centering
        \includegraphics[width=0.99\linewidth]{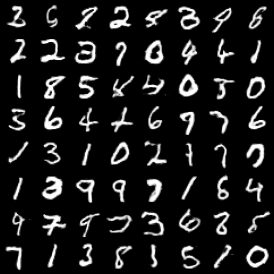}
        \subcaption{$k_{SN}$=0.2, $\mathrm{FID}$=5.14}
    \end{subfigure}
    \begin{subfigure}{0.6\textwidth}
        \includegraphics[width=.9\linewidth]{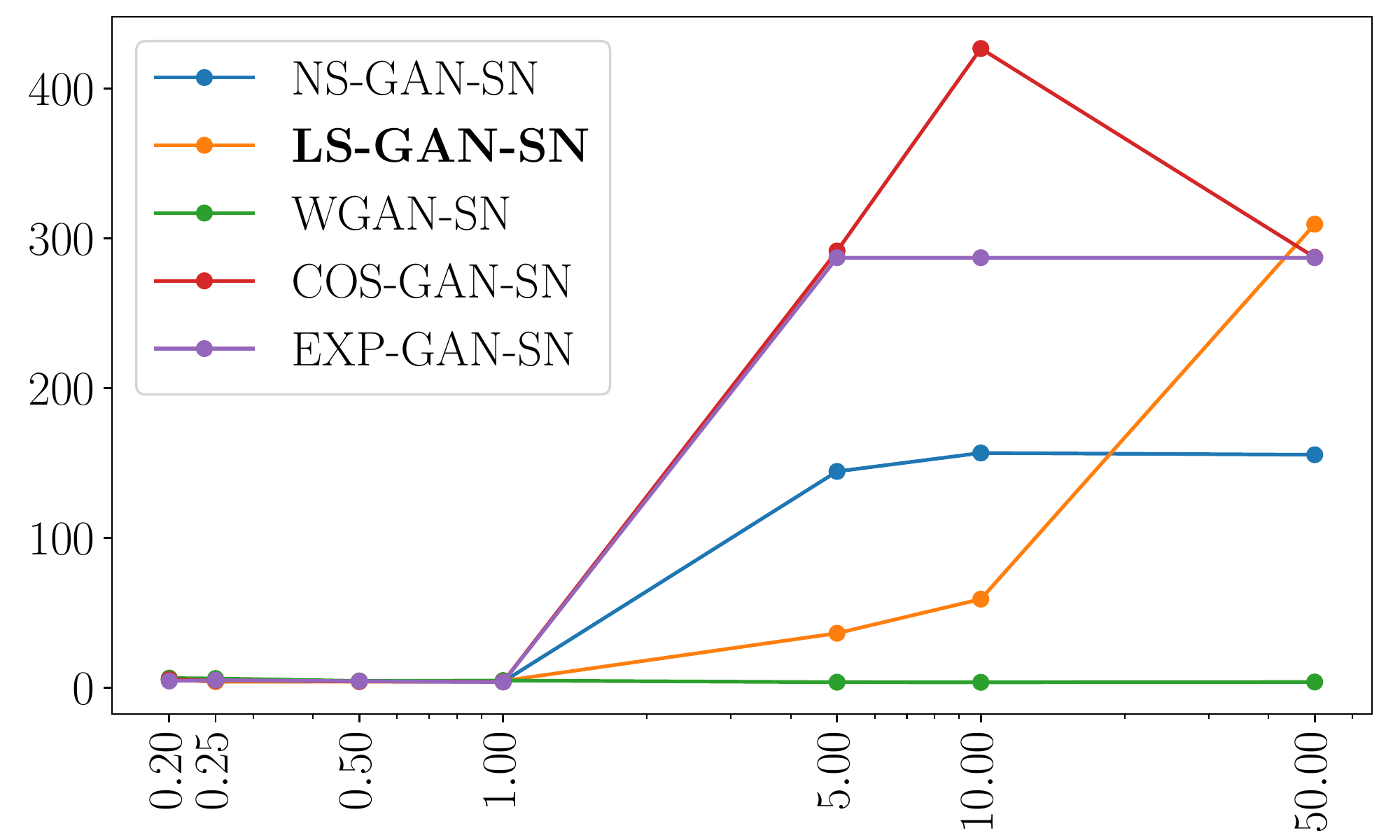}
    \end{subfigure}
\end{center}
  \caption{Samples of randomly generated images with LS-GAN-SN of varying $k_{SN}$ (MNIST). For the line plot, $x$-axis shows $k_{SN}$ (in log scale) and $y$-axis shows the FID scores.}
\label{fig:LSGANSN_MNIST}
\end{figure*}

\begin{figure*}[h]
\begin{center}
    \begin{subfigure}{0.32\textwidth}
        \centering
        \includegraphics[width=0.99\linewidth]{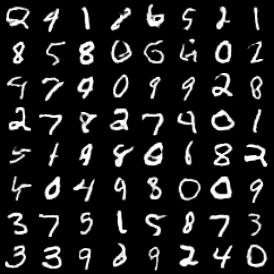}
        \subcaption{$k_{SN}=50.0$, $\mathrm{FID}=3.71$}
    \end{subfigure}
    \begin{subfigure}{0.32\textwidth}
        \centering
        \includegraphics[width=0.99\linewidth]{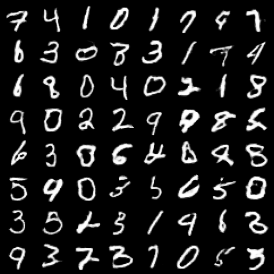}
        \subcaption{$k_{SN}=10.0$, $\mathrm{FID}=3.50$}
    \end{subfigure}
    \begin{subfigure}{0.32\textwidth}
        \centering
        \includegraphics[width=0.99\linewidth]{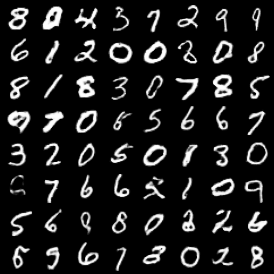}
        \subcaption{$k_{SN}=5.0$, $\mathrm{FID}=3.58$}
    \end{subfigure}
    \begin{subfigure}{0.32\textwidth}
        \centering
        \includegraphics[width=0.99\linewidth]{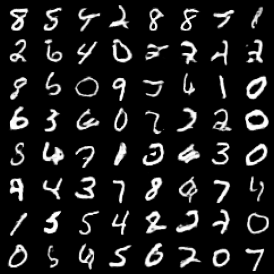}
        \subcaption{$k_{SN}=1.0$, $\mathrm{FID}=4.70$}
    \end{subfigure}
    \begin{subfigure}{0.32\textwidth}
        \centering
        \includegraphics[width=0.99\linewidth]{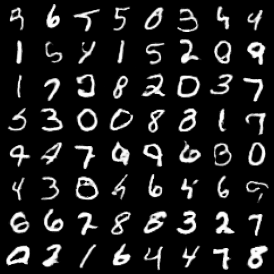}
        \subcaption{$k_{SN}=0.5$, $\mathrm{FID}=4.44$}
    \end{subfigure}
    \begin{subfigure}{0.32\textwidth}
        \centering
        \includegraphics[width=0.99\linewidth]{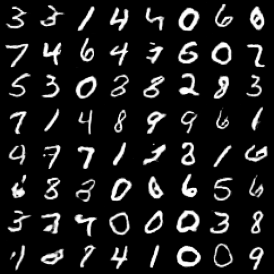}
        \subcaption{$k_{SN}=0.25$, $\mathrm{FID}=6.06$}
    \end{subfigure}
    \begin{subfigure}{0.32\textwidth}
        \centering
        \includegraphics[width=0.99\linewidth]{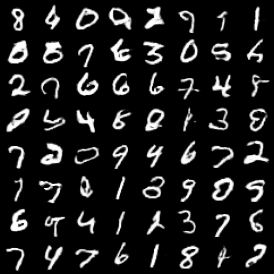}
        \subcaption{$k_{SN}=0.2$, $\mathrm{FID}=6.35$}
    \end{subfigure}
    \begin{subfigure}{0.6\textwidth}
        \includegraphics[width=.9\linewidth]{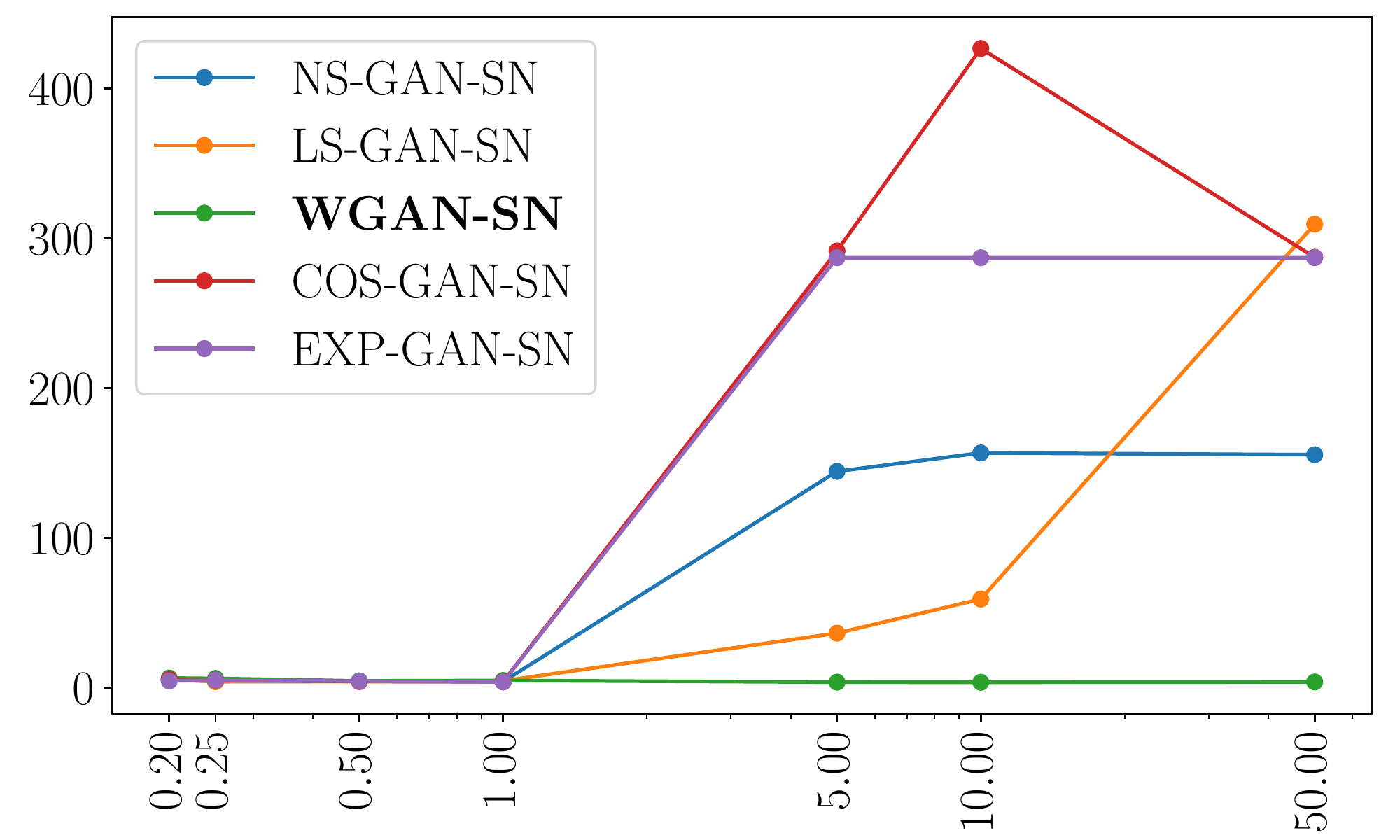}
    \end{subfigure}
\end{center}
  \caption{Samples of randomly generated images with WGAN-SN of varying $k_{SN}$ (MNIST). For the line plot, $x$-axis shows $k_{SN}$ (in log scale) and $y$-axis shows the FID scores.}
\label{fig:WGANSN_MNIST}
\end{figure*}

\begin{figure*}[h]
\begin{center}
    \begin{subfigure}{0.32\textwidth}
        \centering
        \includegraphics[width=0.99\linewidth]{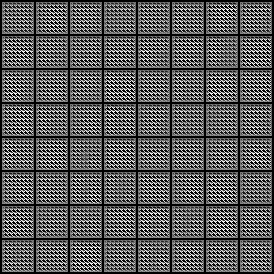}
        \subcaption{$k_{SN}$=50.0, $\mathrm{FID}$=287.23}
    \end{subfigure}
    \begin{subfigure}{0.32\textwidth}
        \centering
        \includegraphics[width=0.99\linewidth]{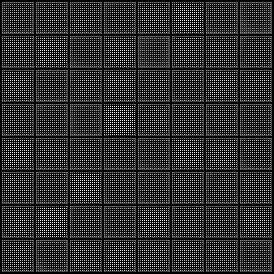}
        \subcaption{$k_{SN}$=10.0, $\mathrm{FID}$=426.62}
    \end{subfigure}
    \begin{subfigure}{0.32\textwidth}
        \centering
        \includegraphics[width=0.99\linewidth]{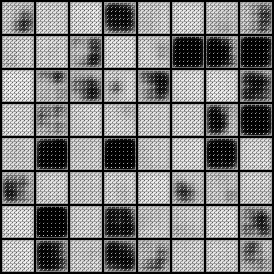}
        \subcaption{$k_{SN}$=5.0, $\mathrm{FID}$=291.44}
    \end{subfigure}
    \begin{subfigure}{0.32\textwidth}
        \centering
        \includegraphics[width=0.99\linewidth]{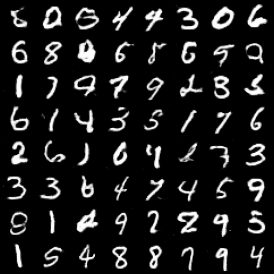}
        \subcaption{$k_{SN}$=1.0, $\mathrm{FID}$=3.86}
    \end{subfigure}
    \begin{subfigure}{0.32\textwidth}
        \centering
        \includegraphics[width=0.99\linewidth]{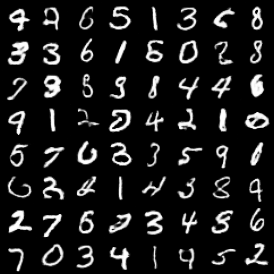}
        \subcaption{$k_{SN}$=0.5, $\mathrm{FID}$=4.05}
    \end{subfigure}
    \begin{subfigure}{0.32\textwidth}
        \centering
        \includegraphics[width=0.99\linewidth]{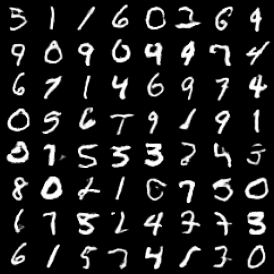}
        \subcaption{$k_{SN}$=0.25, $\mathrm{FID}$=4.83}
    \end{subfigure}
    \begin{subfigure}{0.32\textwidth}
        \centering
        \includegraphics[width=0.99\linewidth]{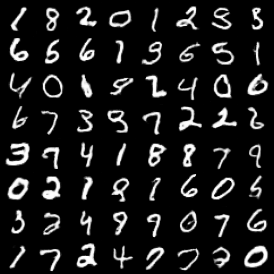}
        \subcaption{$k_{SN}$=0.2, $\mathrm{FID}$=5.41}
    \end{subfigure}
    \begin{subfigure}{0.6\textwidth}
        \includegraphics[width=.9\linewidth]{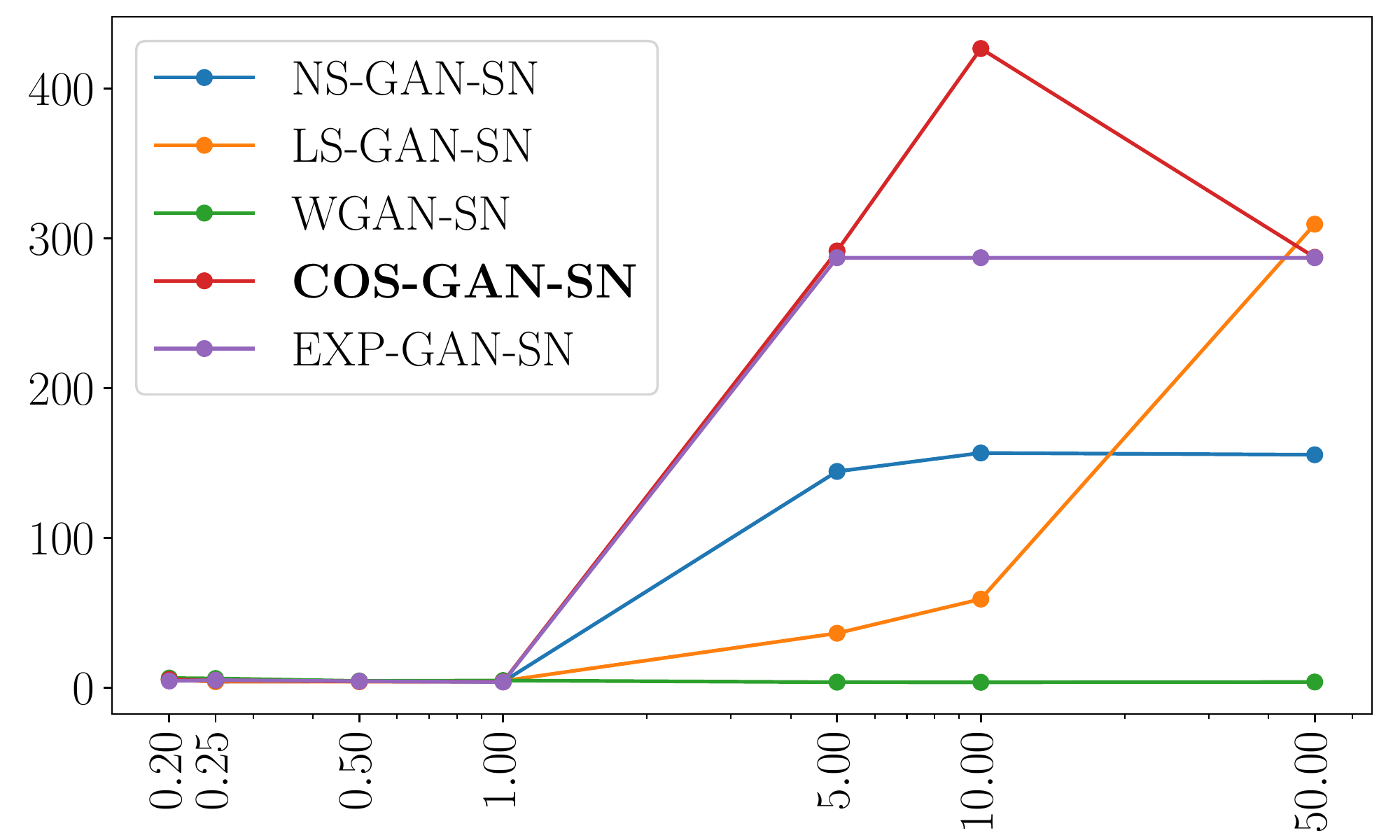}
    \end{subfigure}
\end{center}
  \caption{Samples of randomly generated images with COS-GAN-SN of varying $k_{SN}$ (MNIST). For the line plot, $x$-axis shows $k_{SN}$ (in log scale) and $y$-axis shows the FID scores.}
\label{fig:COSGANSN_MNIST}
\end{figure*}

\begin{figure*}[h]
\begin{center}
    \begin{subfigure}{0.32\textwidth}
        \centering
        \includegraphics[width=0.99\linewidth]{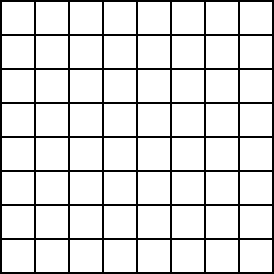}
        \subcaption{$k_{SN}$=50.0, $\mathrm{FID}$=286.96}
    \end{subfigure}
    \begin{subfigure}{0.32\textwidth}
        \centering
        \includegraphics[width=0.99\linewidth]{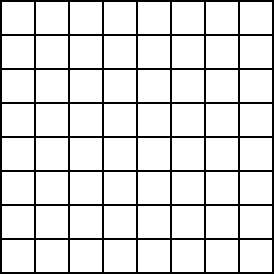}
        \subcaption{$k_{SN}$=10.0, $\mathrm{FID}$=286.96}
    \end{subfigure}
    \begin{subfigure}{0.32\textwidth}
        \centering
        \includegraphics[width=0.99\linewidth]{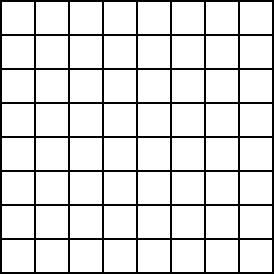}
        \subcaption{$k_{SN}$=5.0, $\mathrm{FID}$=286.96}
    \end{subfigure}
    \begin{subfigure}{0.32\textwidth}
        \centering
        \includegraphics[width=0.99\linewidth]{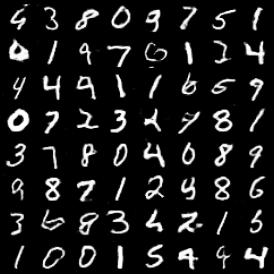}
        \subcaption{$k_{SN}$=1.0, $\mathrm{FID}$=3.69}
    \end{subfigure}
    \begin{subfigure}{0.32\textwidth}
        \centering
        \includegraphics[width=0.99\linewidth]{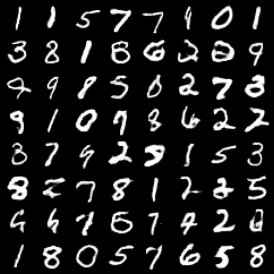}
        \subcaption{$k_{SN}$=0.5, $\mathrm{FID}$=4.25}
    \end{subfigure}
    \begin{subfigure}{0.32\textwidth}
        \centering
        \includegraphics[width=0.99\linewidth]{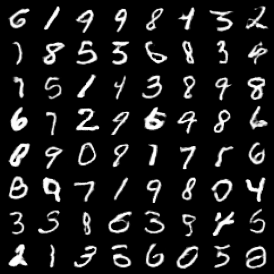}
        \subcaption{$k_{SN}$=0.25, $\mathrm{FID}$=4.93}
    \end{subfigure}
    \begin{subfigure}{0.32\textwidth}
        \centering
        \includegraphics[width=0.99\linewidth]{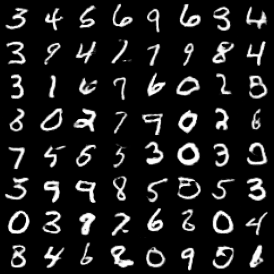}
        \subcaption{$k_{SN}$=0.2, $\mathrm{FID}$=4.38}
    \end{subfigure}
    \begin{subfigure}{0.6\textwidth}
        \includegraphics[width=.9\linewidth]{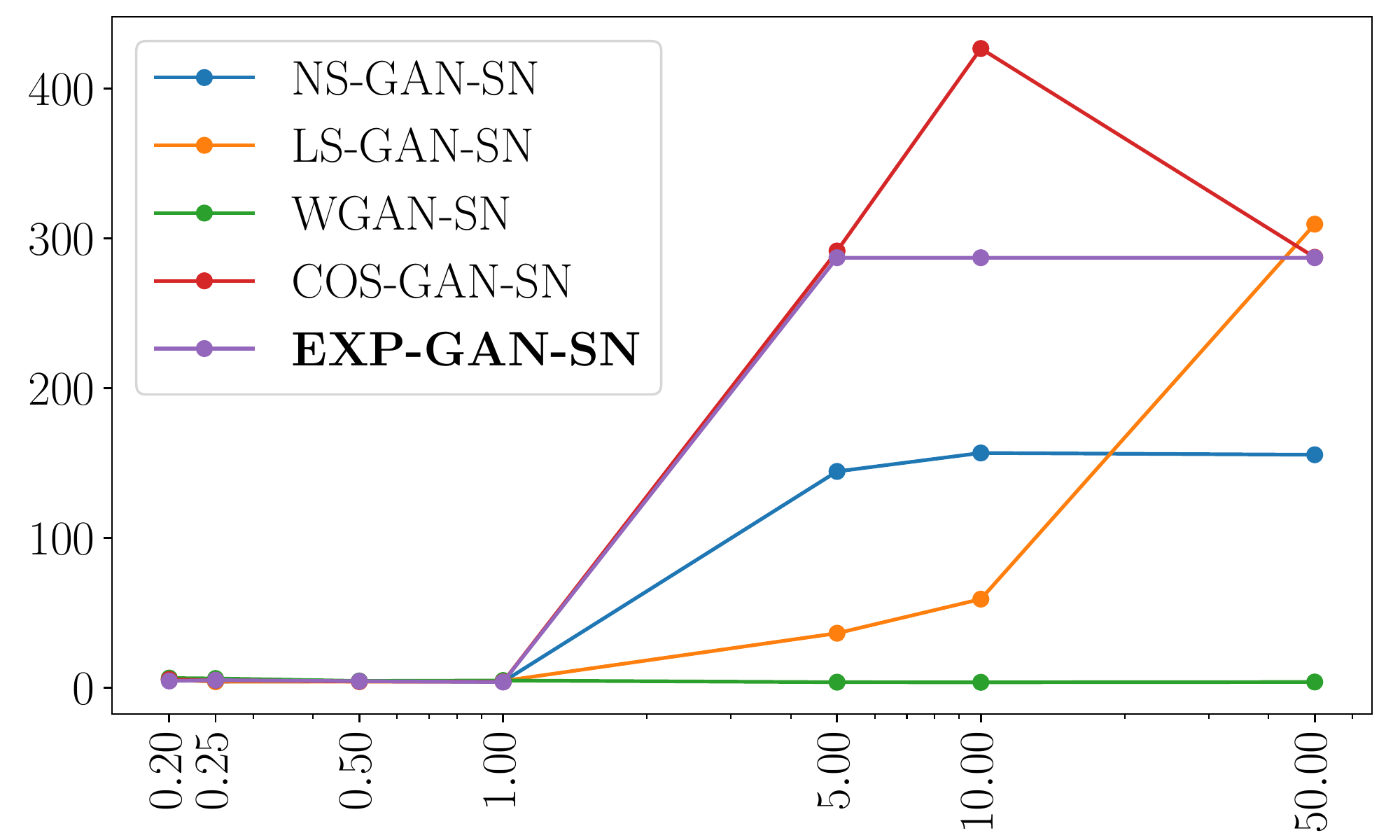}
    \end{subfigure}
\end{center}
  \caption{Samples of randomly generated images with EXP-GAN-SN of varying $k_{SN}$ (MNIST). For the line plot, $x$-axis shows $k_{SN}$ (in log scale) and $y$-axis shows the FID scores.}
\label{fig:EXPGANSN_MNIST}
\end{figure*}

\FloatBarrier
\newpage

\vspace*{\fill}
\begin{table*}[h!]
\centering
\captionsetup{justification=centering}
\caption*{{\LARGE FID scores v.s. $k_{SN}$ of Different Loss Functions\\ \vspace*{5mm}
- CIFAR10 -}}
\end{table*}
\vspace*{\fill}

\FloatBarrier
\newpage

\begin{figure*}[h]
\begin{center}
    \begin{subfigure}{0.32\textwidth}
        \centering
        \includegraphics[width=0.99\linewidth]{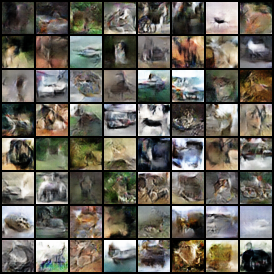}
        \subcaption{$k_{SN}$=50.0, $\mathrm{FID}$=48.03}
    \end{subfigure}
    \begin{subfigure}{0.32\textwidth}
        \centering
        \includegraphics[width=0.99\linewidth]{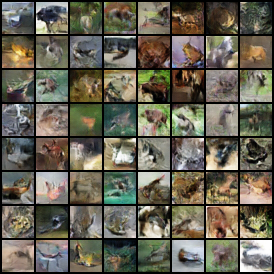}
        \subcaption{$k_{SN}$=10.0, $\mathrm{FID}$=49.67}
    \end{subfigure}
    \begin{subfigure}{0.32\textwidth}
        \centering
        \includegraphics[width=0.99\linewidth]{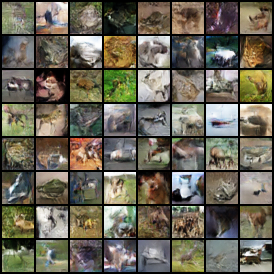}
        \subcaption{$k_{SN}$=5.0, $\mathrm{FID}$=41.04}
    \end{subfigure}
    \begin{subfigure}{0.32\textwidth}
        \centering
        \includegraphics[width=0.99\linewidth]{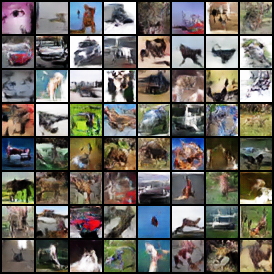}
        \subcaption{$k_{SN}$=1.0, $\mathrm{FID}$=15.81}
    \end{subfigure}
    \begin{subfigure}{0.32\textwidth}
        \centering
        \includegraphics[width=0.99\linewidth]{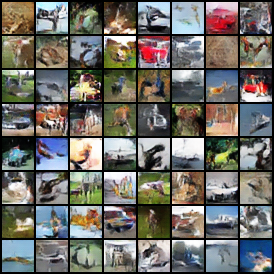}
        \subcaption{$k_{SN}$=0.5, $\mathrm{FID}$=23.29}
    \end{subfigure}
    \begin{subfigure}{0.32\textwidth}
        \centering
        \includegraphics[width=0.99\linewidth]{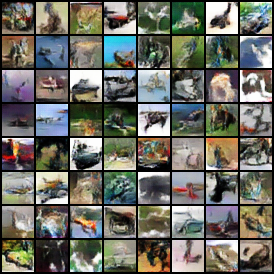}
        \subcaption{$k_{SN}$=0.25, $\mathrm{FID}$=24.37}
    \end{subfigure}
    \begin{subfigure}{0.32\textwidth}
        \centering
        \includegraphics[width=0.99\linewidth]{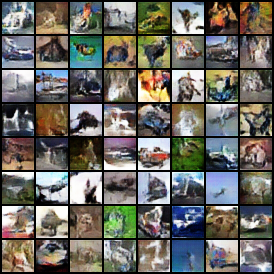}
        \subcaption{$k_{SN}$=0.2, $\mathrm{FID}$=29.23}
    \end{subfigure}
    \begin{subfigure}{0.6\textwidth}
        \includegraphics[width=.9\linewidth]{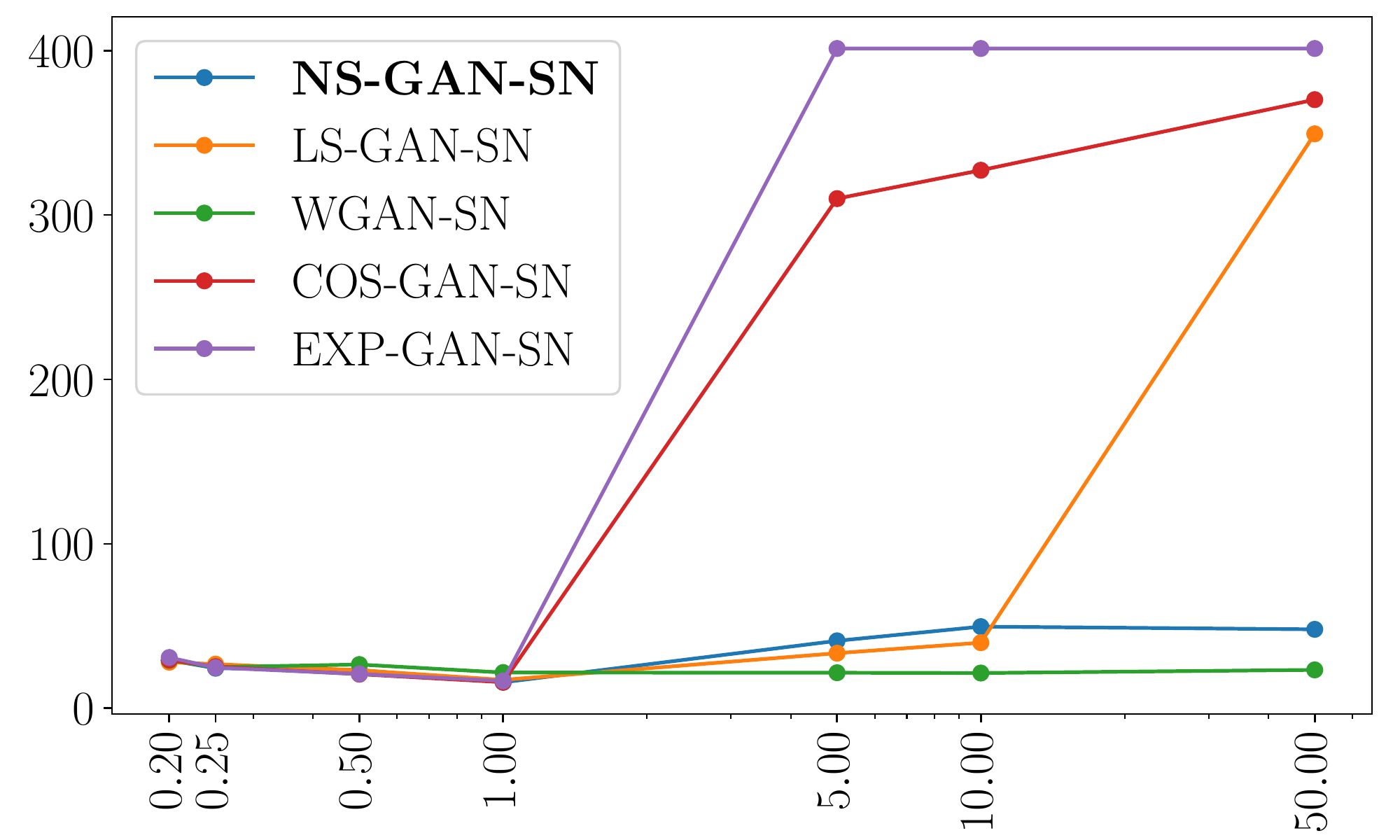}
    \end{subfigure}
\end{center}
  \caption{Samples of randomly generated images with NS-GAN-SN of varying $k_{SN}$ (CIFAR10). For the line plot, $x$-axis shows $k_{SN}$ (in log scale) and $y$-axis shows the FID scores. }
\label{fig:NSGANSN_CIFAR10}
\end{figure*}

\begin{figure*}[h]
\begin{center}
    \begin{subfigure}{0.32\textwidth}
        \centering
        \includegraphics[width=0.99\linewidth]{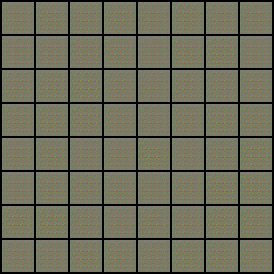}
        \subcaption{$k_{SN}$=50.0, $\mathrm{FID}$= 349.35}
    \end{subfigure}
    \begin{subfigure}{0.32\textwidth}
        \centering
        \includegraphics[width=0.99\linewidth]{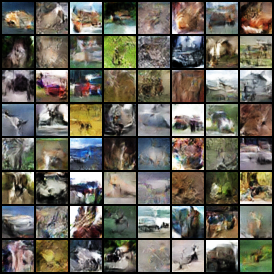}
        \subcaption{$k_{SN}$=10.0, $\mathrm{FID}$= 39.90}
    \end{subfigure}
    \begin{subfigure}{0.32\textwidth}
        \centering
        \includegraphics[width=0.99\linewidth]{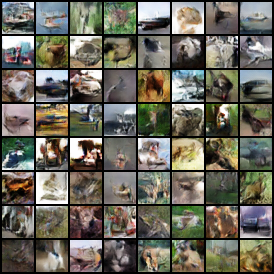}
        \subcaption{$k_{SN}$=5.0, $\mathrm{FID}$= 33.53}
    \end{subfigure}
    \begin{subfigure}{0.32\textwidth}
        \centering
        \includegraphics[width=0.99\linewidth]{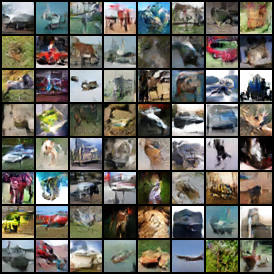}
        \subcaption{$k_{SN}$=1.0, $\mathrm{FID}$= 17.30}
    \end{subfigure}
    \begin{subfigure}{0.32\textwidth}
        \centering
        \includegraphics[width=0.99\linewidth]{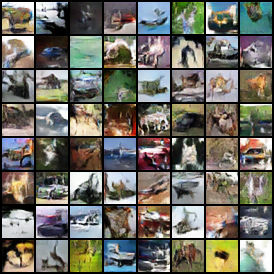}
        \subcaption{$k_{SN}$=0.5, $\mathrm{FID}$= 23.14}
    \end{subfigure}
    \begin{subfigure}{0.32\textwidth}
        \centering
        \includegraphics[width=0.99\linewidth]{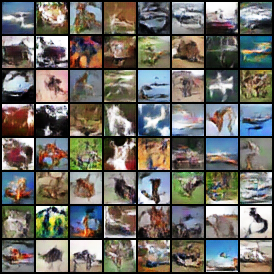}
        \subcaption{$k_{SN}$=0.25, $\mathrm{FID}$= 26.85}
    \end{subfigure}
    \begin{subfigure}{0.32\textwidth}
        \centering
        \includegraphics[width=0.99\linewidth]{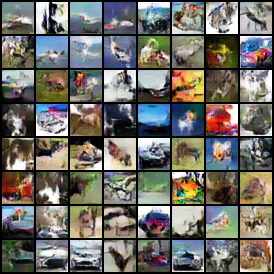}
        \subcaption{$k_{SN}$=0.2, $\mathrm{FID}$= 28.04}
    \end{subfigure}
    \begin{subfigure}{0.6\textwidth}
        \includegraphics[width=.9\linewidth]{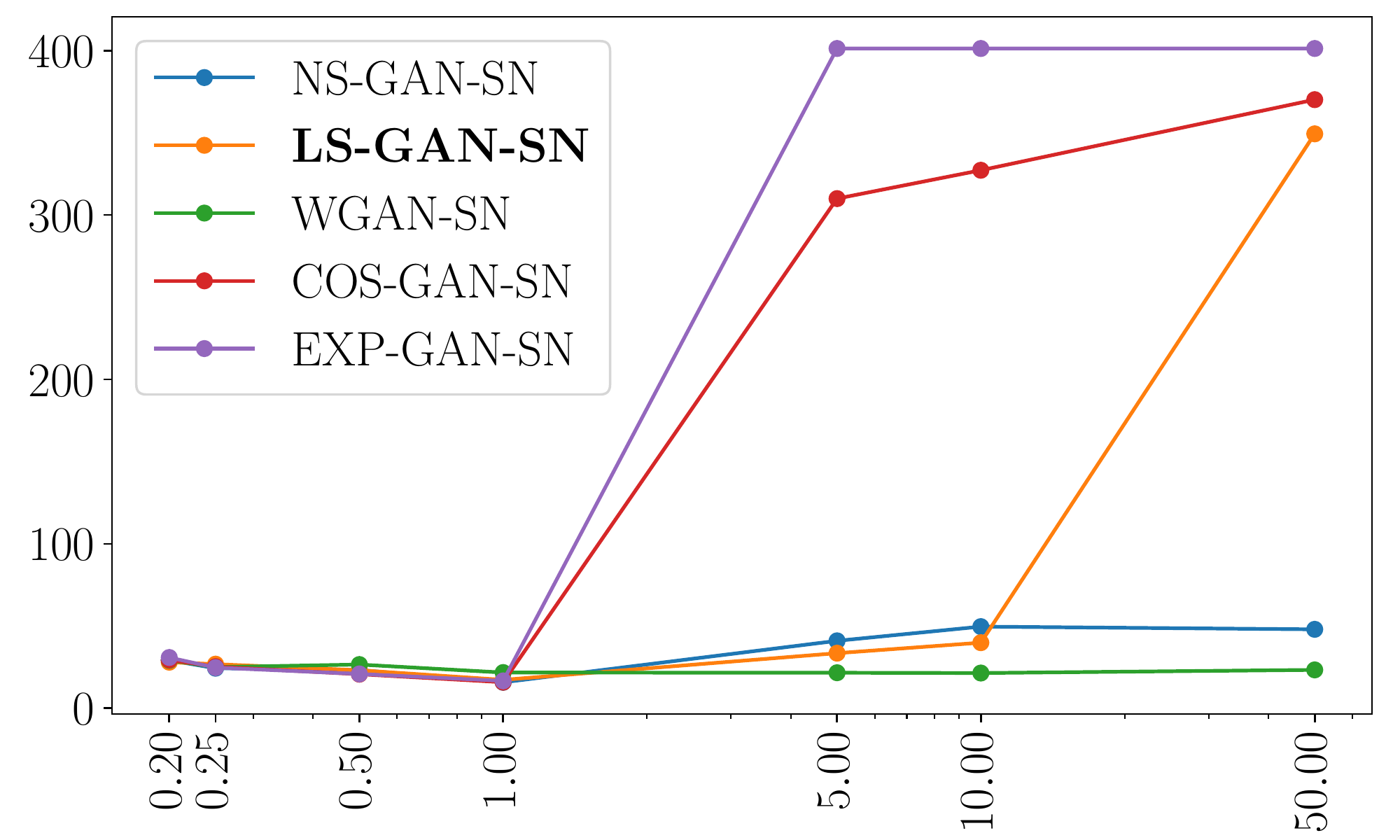}
    \end{subfigure}
\end{center}
  \caption{Samples of randomly generated images with LS-GAN-SN of varying $k_{SN}$ (CIFAR10). For the line plot, $x$-axis shows $k_{SN}$ (in log scale) and $y$-axis shows the FID scores. }
\label{fig:LSGANSN_CIFAR10}
\end{figure*}

\begin{figure*}[h]
\begin{center}
    \begin{subfigure}{0.32\textwidth}
        \centering
        \includegraphics[width=0.99\linewidth]{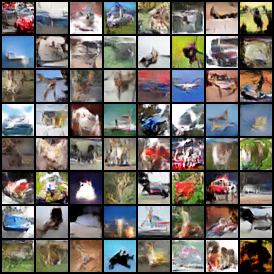}
        \subcaption{$k_{SN}$=50.0, $\mathrm{FID}$= 23.36}
    \end{subfigure}
    \begin{subfigure}{0.32\textwidth}
        \centering
        \includegraphics[width=0.99\linewidth]{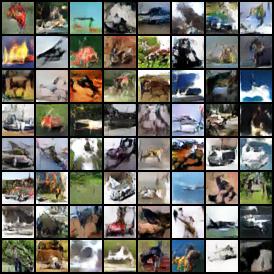}
        \subcaption{$k_{SN}$=10.0, $\mathrm{FID}$= 21.45}
    \end{subfigure}
    \begin{subfigure}{0.32\textwidth}
        \centering
        \includegraphics[width=0.99\linewidth]{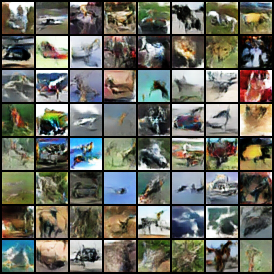}
        \subcaption{$k_{SN}$=5.0, $\mathrm{FID}$= 21.63}
    \end{subfigure}
    \begin{subfigure}{0.32\textwidth}
        \centering
        \includegraphics[width=0.99\linewidth]{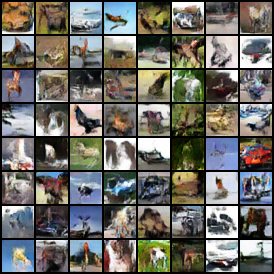}
        \subcaption{$k_{SN}$=1.0, $\mathrm{FID}$= 21.75}
    \end{subfigure}
    \begin{subfigure}{0.32\textwidth}
        \centering
        \includegraphics[width=0.99\linewidth]{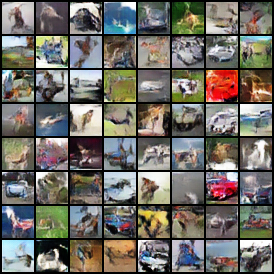}
        \subcaption{$k_{SN}$=0.5, $\mathrm{FID}$= 26.61}
    \end{subfigure}
    \begin{subfigure}{0.32\textwidth}
        \centering
        \includegraphics[width=0.99\linewidth]{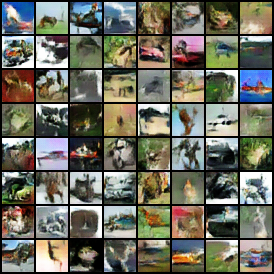}
        \subcaption{$k_{SN}$=0.25, $\mathrm{FID}$= 25.07}
    \end{subfigure}
    \begin{subfigure}{0.32\textwidth}
        \centering
        \includegraphics[width=0.99\linewidth]{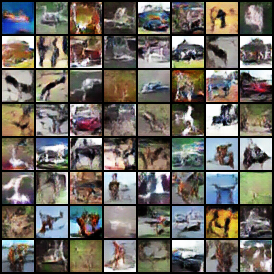}
        \subcaption{$k_{SN}$=0.2, $\mathrm{FID}$= 29.20}
    \end{subfigure}
    \begin{subfigure}{0.6\textwidth}
        \includegraphics[width=.9\linewidth]{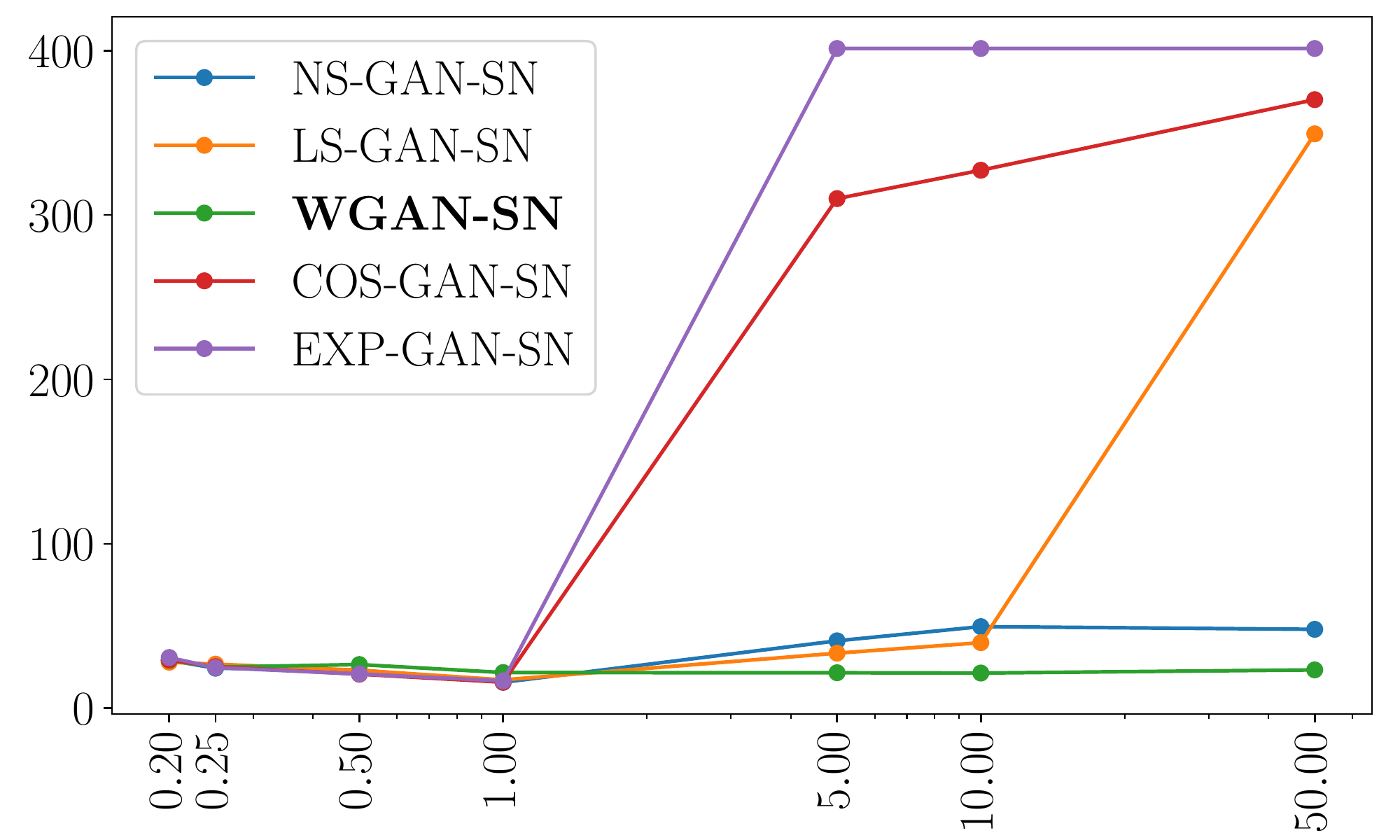}
    \end{subfigure}
\end{center}
  \caption{Samples of randomly generated images with WGAN-SN of varying $k_{SN}$ (CIFAR10). For the line plot, $x$-axis shows $k_{SN}$ (in log scale) and $y$-axis shows the FID scores. }
\label{fig:WGANSN_CIFAR10}
\end{figure*}

\begin{figure*}[h]
\begin{center}
    \begin{subfigure}{0.32\textwidth}
        \centering
        \includegraphics[width=0.99\linewidth]{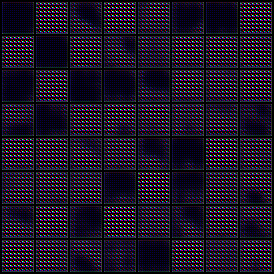}
        \subcaption{$k_{SN}$=50.0, $\mathrm{FID}$= 370.13}
    \end{subfigure}
    \begin{subfigure}{0.32\textwidth}
        \centering
        \includegraphics[width=0.99\linewidth]{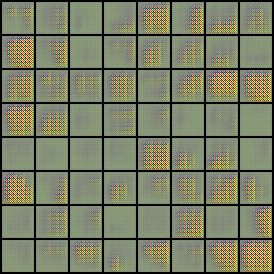}
        \subcaption{$k_{SN}$=10.0, $\mathrm{FID}$= 327.20}
    \end{subfigure}
    \begin{subfigure}{0.32\textwidth}
        \centering
        \includegraphics[width=0.99\linewidth]{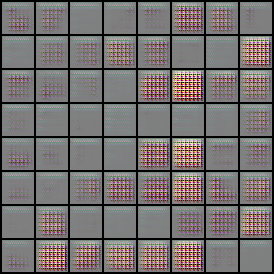}
        \subcaption{$k_{SN}$=5.0, $\mathrm{FID}$= 309.96}
    \end{subfigure}
    \begin{subfigure}{0.32\textwidth}
        \centering
        \includegraphics[width=0.99\linewidth]{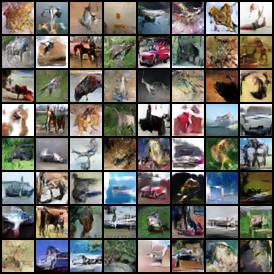}
        \subcaption{$k_{SN}$=1.0, $\mathrm{FID}$= 15.88}
    \end{subfigure}
    \begin{subfigure}{0.32\textwidth}
        \centering
        \includegraphics[width=0.99\linewidth]{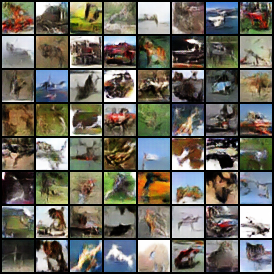}
        \subcaption{$k_{SN}$=0.5, $\mathrm{FID}$= 20.73}
    \end{subfigure}
    \begin{subfigure}{0.32\textwidth}
        \centering
        \includegraphics[width=0.99\linewidth]{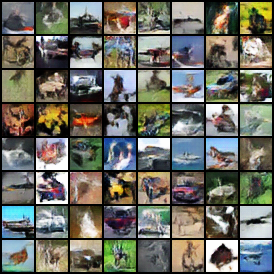}
        \subcaption{$k_{SN}$=0.25, $\mathrm{FID}$= 25.31}
    \end{subfigure}
    \begin{subfigure}{0.32\textwidth}
        \centering
        \includegraphics[width=0.99\linewidth]{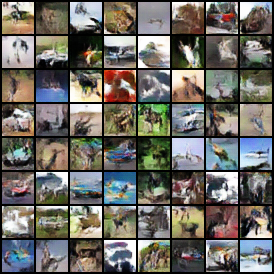}
        \subcaption{$k_{SN}$=0.2, $\mathrm{FID}$= 29.45}
    \end{subfigure}
    \begin{subfigure}{0.6\textwidth}
        \includegraphics[width=.9\linewidth]{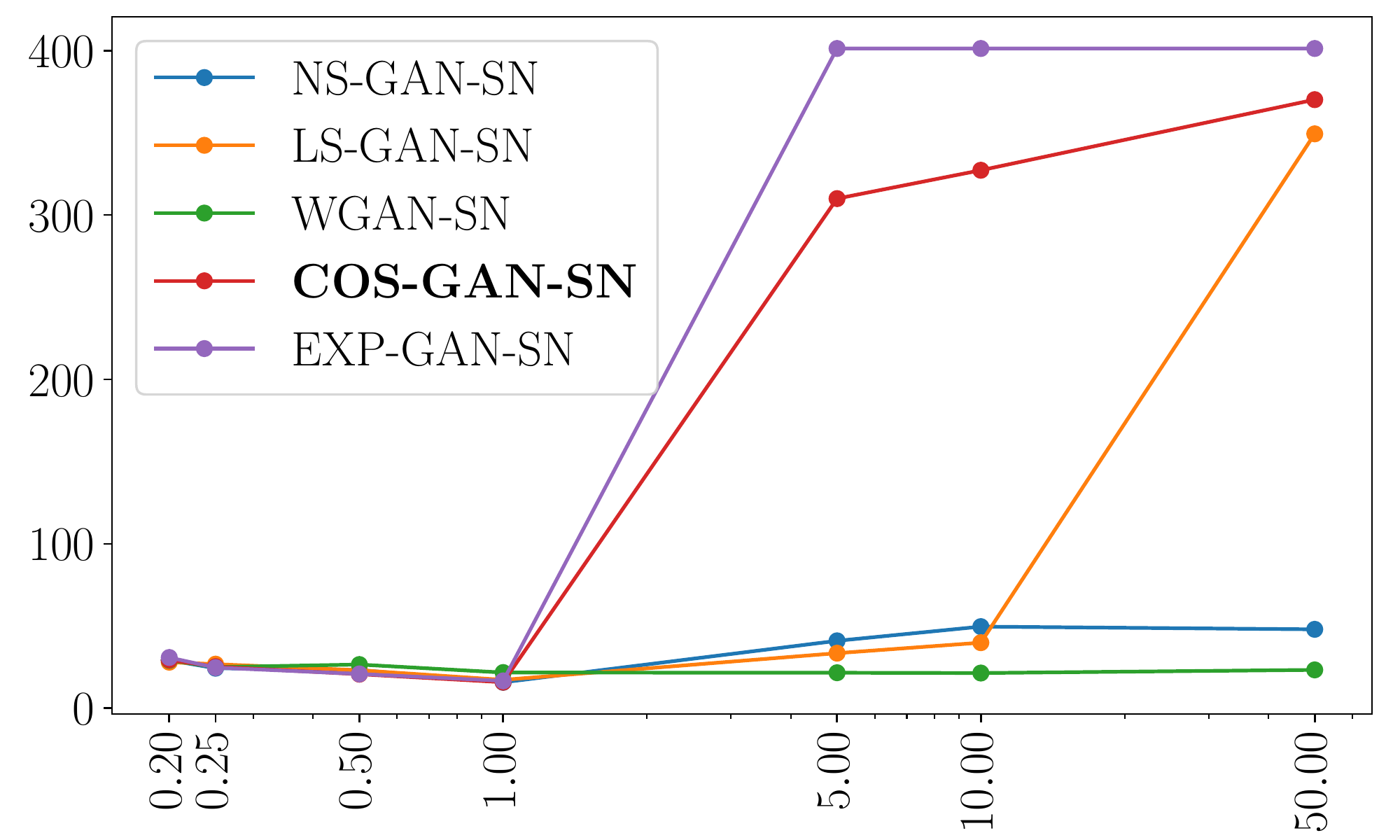}
    \end{subfigure}
\end{center}
  \caption{Samples of randomly generated images with COS-GAN-SN of varying $k_{SN}$ (CIFAR10). For the line plot, $x$-axis shows $k_{SN}$ (in log scale) and $y$-axis shows the FID scores. }
\label{fig:COSGANSN_CIFAR10}
\end{figure*}

\begin{figure*}[h]
\begin{center}
    \begin{subfigure}{0.32\textwidth}
        \centering
        \includegraphics[width=0.99\linewidth]{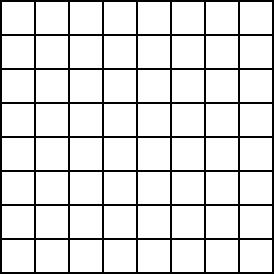}
        \subcaption{$k_{SN}$=50.0, $\mathrm{FID}$= 401.24}
    \end{subfigure}
    \begin{subfigure}{0.32\textwidth}
        \centering
        \includegraphics[width=0.99\linewidth]{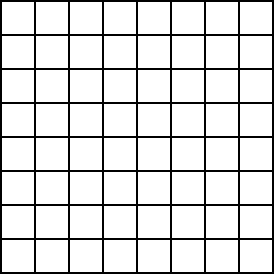}
        \subcaption{$k_{SN}$=10.0, $\mathrm{FID}$= 401.24}
    \end{subfigure}
    \begin{subfigure}{0.32\textwidth}
        \centering
        \includegraphics[width=0.99\linewidth]{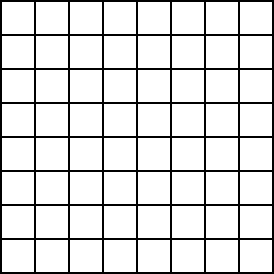}
        \subcaption{$k_{SN}$=5.0, $\mathrm{FID}$= 401.24}
    \end{subfigure}
    \begin{subfigure}{0.32\textwidth}
        \centering
        \includegraphics[width=0.99\linewidth]{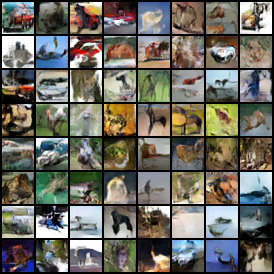}
        \subcaption{$k_{SN}$=1.0, $\mathrm{FID}$= 16.66}
    \end{subfigure}
    \begin{subfigure}{0.32\textwidth}
        \centering
        \includegraphics[width=0.99\linewidth]{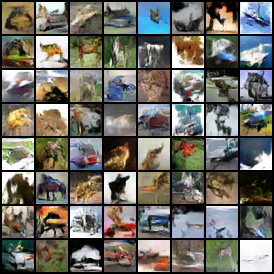}
        \subcaption{$k_{SN}$=0.5, $\mathrm{FID}$= 20.90}
    \end{subfigure}
    \begin{subfigure}{0.32\textwidth}
        \centering
        \includegraphics[width=0.99\linewidth]{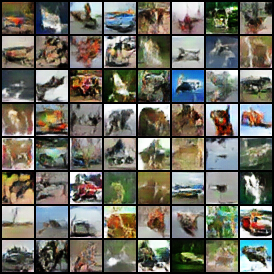}
        \subcaption{$k_{SN}$=0.25, $\mathrm{FID}$= 24.74}
    \end{subfigure}
    \begin{subfigure}{0.32\textwidth}
        \centering
        \includegraphics[width=0.99\linewidth]{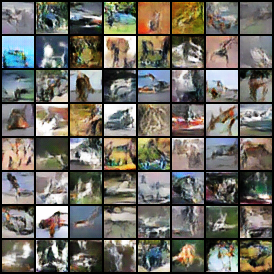}
        \subcaption{$k_{SN}$=0.2, $\mathrm{FID}$= 30.89}
    \end{subfigure}
    \begin{subfigure}{0.6\textwidth}
        \includegraphics[width=.9\linewidth]{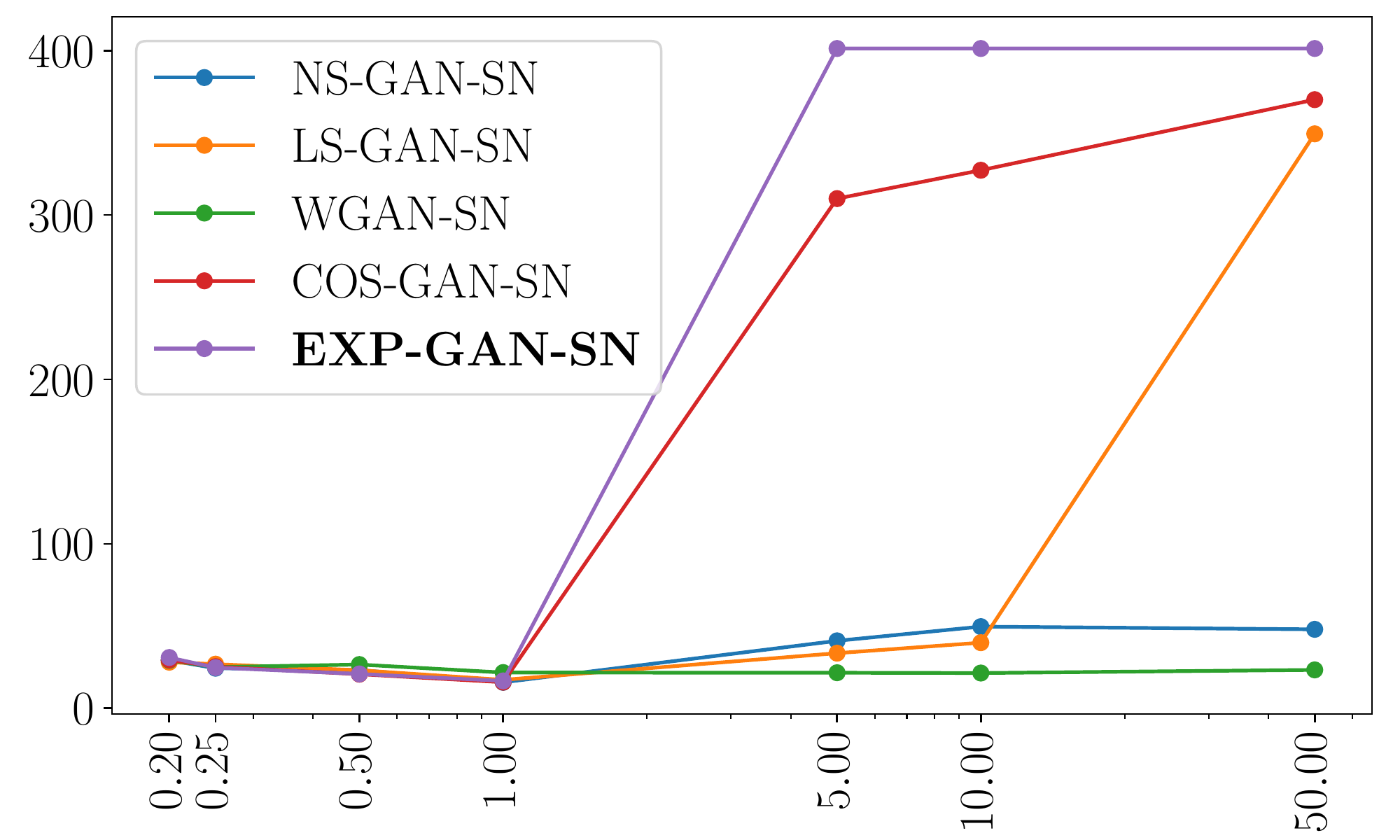}
    \end{subfigure}
\end{center}
  \caption{Samples of randomly generated images with EXP-GAN-SN of varying $k_{SN}$ (CIFAR10). For the line plot, $x$-axis shows $k_{SN}$ (in log scale) and $y$-axis shows the FID scores. }
\label{fig:EXPGANSN_CIFAR10}
\end{figure*}

\FloatBarrier
\newpage

\vspace*{\fill}
\begin{table*}[h!]
\centering
\captionsetup{justification=centering}
\caption*{{\LARGE FID scores v.s. $k_{SN}$ of Different Loss Functions\\ \vspace*{5mm}
- CelebA -}}
\end{table*}
\vspace*{\fill}

\FloatBarrier
\newpage

\begin{figure*}[h]
\begin{center}
    \begin{subfigure}{0.32\textwidth}
        \centering
        \includegraphics[width=0.99\linewidth]{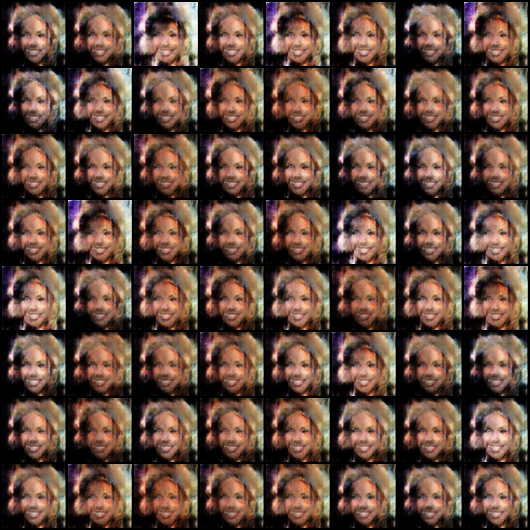}
        \subcaption{$k_{SN}$=50.0, $\mathrm{FID}$= 184.06}
    \end{subfigure}
    \begin{subfigure}{0.32\textwidth}
        \centering
        \includegraphics[width=0.99\linewidth]{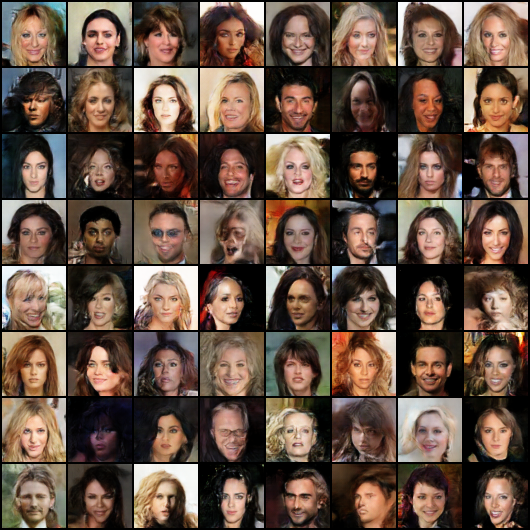}
        \subcaption{$k_{SN}$=10.0, $\mathrm{FID}$= 17.04}
    \end{subfigure}
    \begin{subfigure}{0.32\textwidth}
        \centering
        \includegraphics[width=0.99\linewidth]{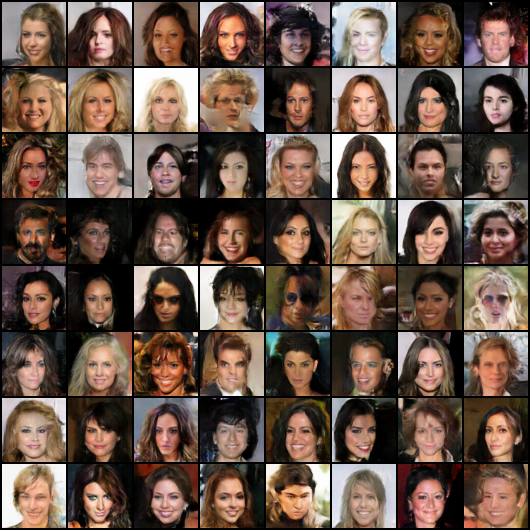}
        \subcaption{$k_{SN}$=5.0, $\mathrm{FID}$= 18.95}
    \end{subfigure}
    \begin{subfigure}{0.32\textwidth}
        \centering
        \includegraphics[width=0.99\linewidth]{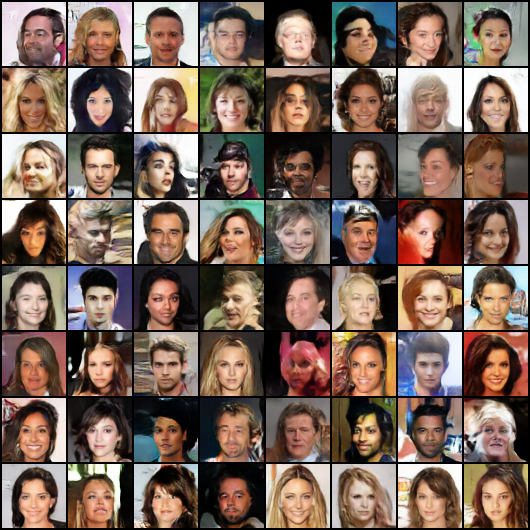}
        \subcaption{$k_{SN}$=1.0, $\mathrm{FID}$= 6.11}
    \end{subfigure}
    \begin{subfigure}{0.32\textwidth}
        \centering
        \includegraphics[width=0.99\linewidth]{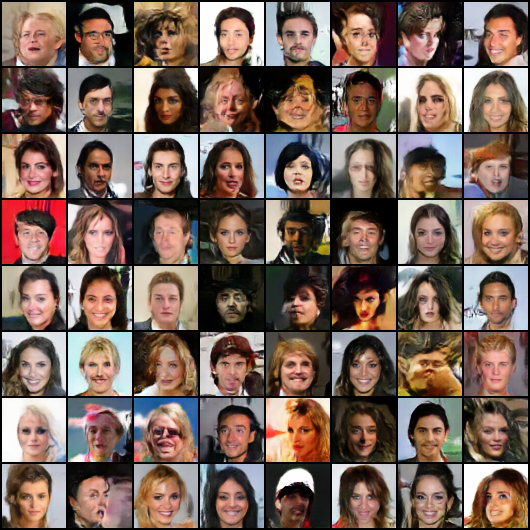}
        \subcaption{$k_{SN}$=0.5, $\mathrm{FID}$= 8.04}
    \end{subfigure}
    \begin{subfigure}{0.32\textwidth}
        \centering
        \includegraphics[width=0.99\linewidth]{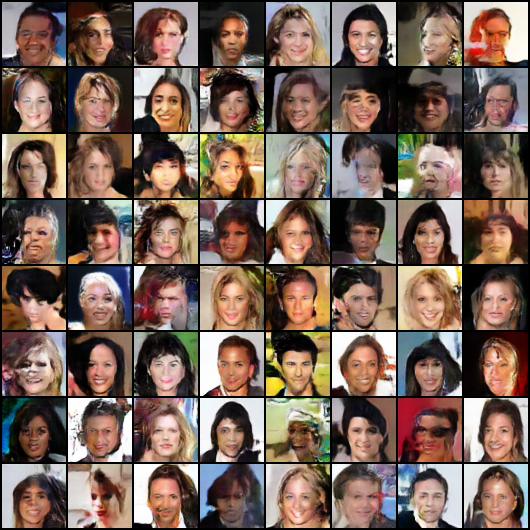}
        \subcaption{$k_{SN}$=0.25, $\mathrm{FID}$= 12.71}
    \end{subfigure}
    \begin{subfigure}{0.32\textwidth}
        \centering
        \includegraphics[width=0.99\linewidth]{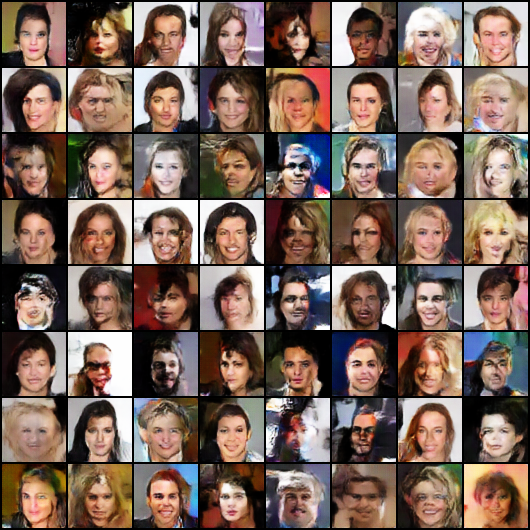}
        \subcaption{$k_{SN}$=0.2, $\mathrm{FID}$= 18.59}
    \end{subfigure}
    \begin{subfigure}{0.6\textwidth}
        \includegraphics[width=.9\linewidth]{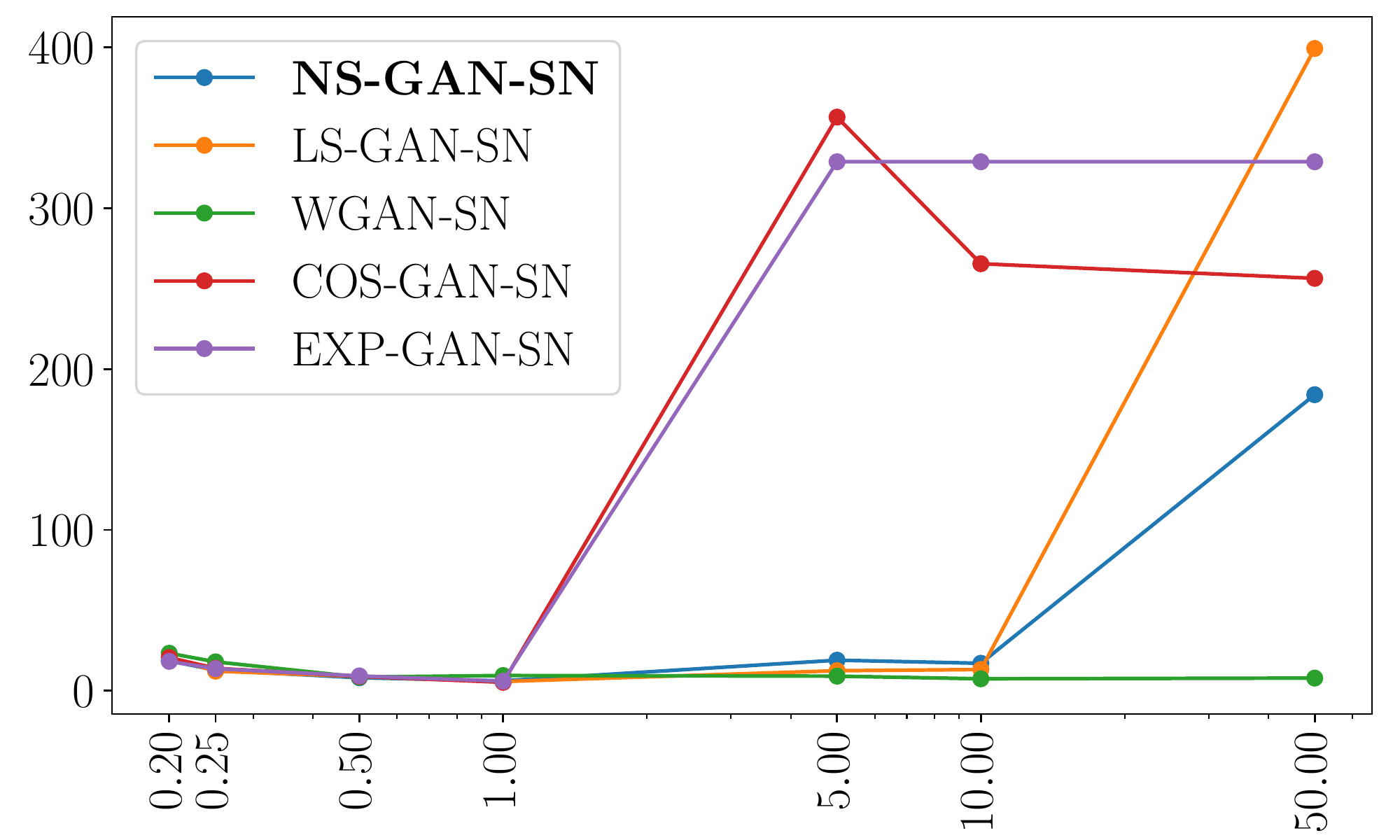}
    \end{subfigure}
\end{center}
  \caption{Samples of randomly generated images with NS-GAN-SN of varying $k_{SN}$ (CelebA). For the line plot, $x$-axis shows $k_{SN}$ (in log scale) and $y$-axis shows the FID scores. }
\label{fig:NSGANSN_CelebA}
\end{figure*}

\begin{figure*}[h]
\begin{center}
    \begin{subfigure}{0.32\textwidth}
        \centering
        \includegraphics[width=0.99\linewidth]{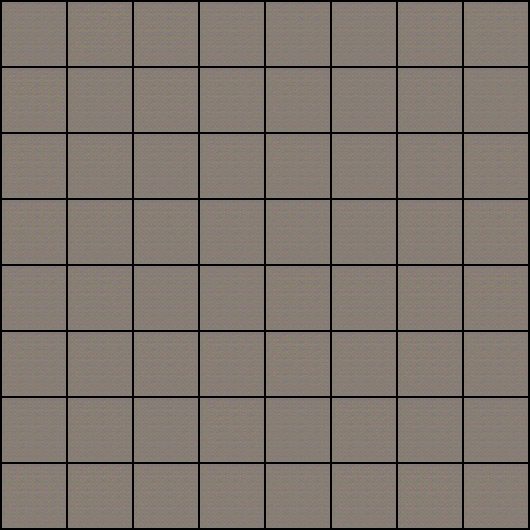}
        \subcaption{$k_{SN}$=50.0, $\mathrm{FID}$= 399.39}
    \end{subfigure}
    \begin{subfigure}{0.32\textwidth}
        \centering
        \includegraphics[width=0.99\linewidth]{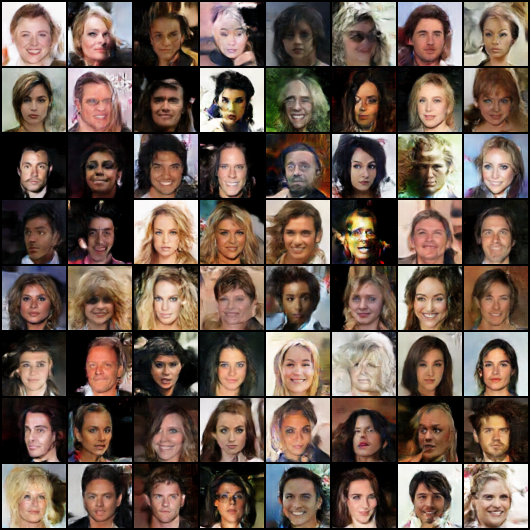}
        \subcaption{$k_{SN}$=10.0, $\mathrm{FID}$= 13.14}
    \end{subfigure}
    \begin{subfigure}{0.32\textwidth}
        \centering
        \includegraphics[width=0.99\linewidth]{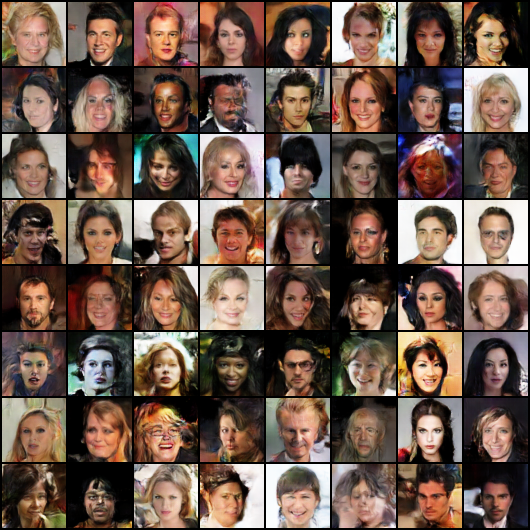}
        \subcaption{$k_{SN}$=5.0, $\mathrm{FID}$= 12.40}
    \end{subfigure}
    \begin{subfigure}{0.32\textwidth}
        \centering
        \includegraphics[width=0.99\linewidth]{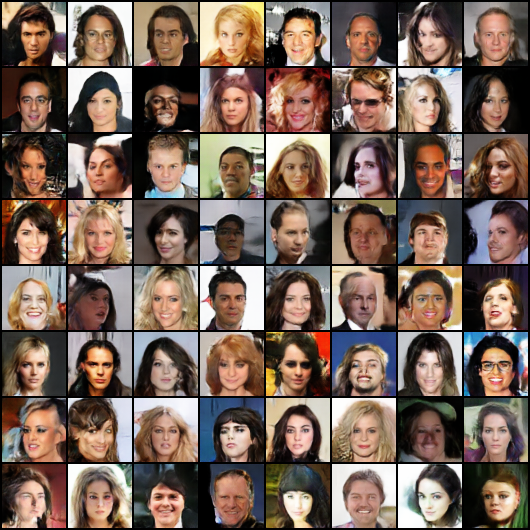}
        \subcaption{$k_{SN}$=1.0, $\mathrm{FID}$= 5.69}
    \end{subfigure}
    \begin{subfigure}{0.32\textwidth}
        \centering
        \includegraphics[width=0.99\linewidth]{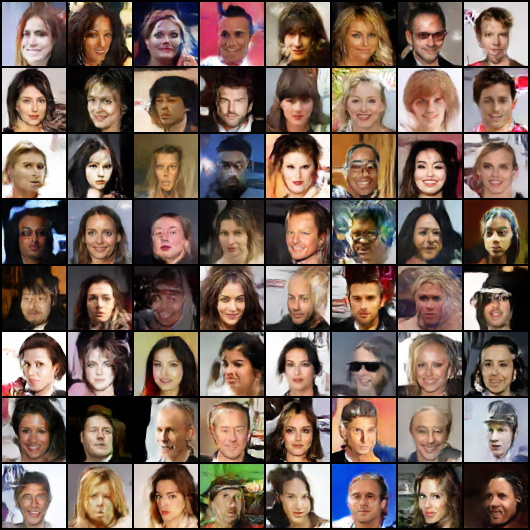}
        \subcaption{$k_{SN}$=0.5, $\mathrm{FID}$= 8.85}
    \end{subfigure}
    \begin{subfigure}{0.32\textwidth}
        \centering
        \includegraphics[width=0.99\linewidth]{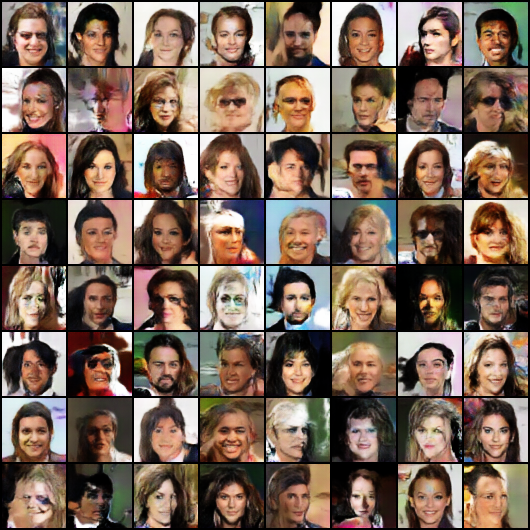}
        \subcaption{$k_{SN}$=0.25, $\mathrm{FID}$= 12.14}
    \end{subfigure}
    \begin{subfigure}{0.32\textwidth}
        \centering
        \includegraphics[width=0.99\linewidth]{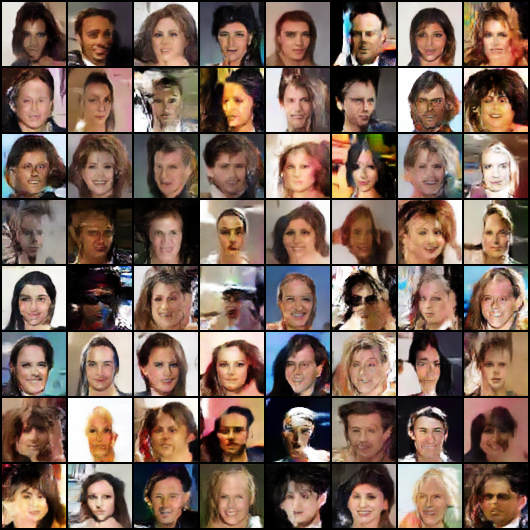}
        \subcaption{$k_{SN}$=0.2, $\mathrm{FID}$= 20.34}
    \end{subfigure}
    \begin{subfigure}{0.6\textwidth}
        \includegraphics[width=.9\linewidth]{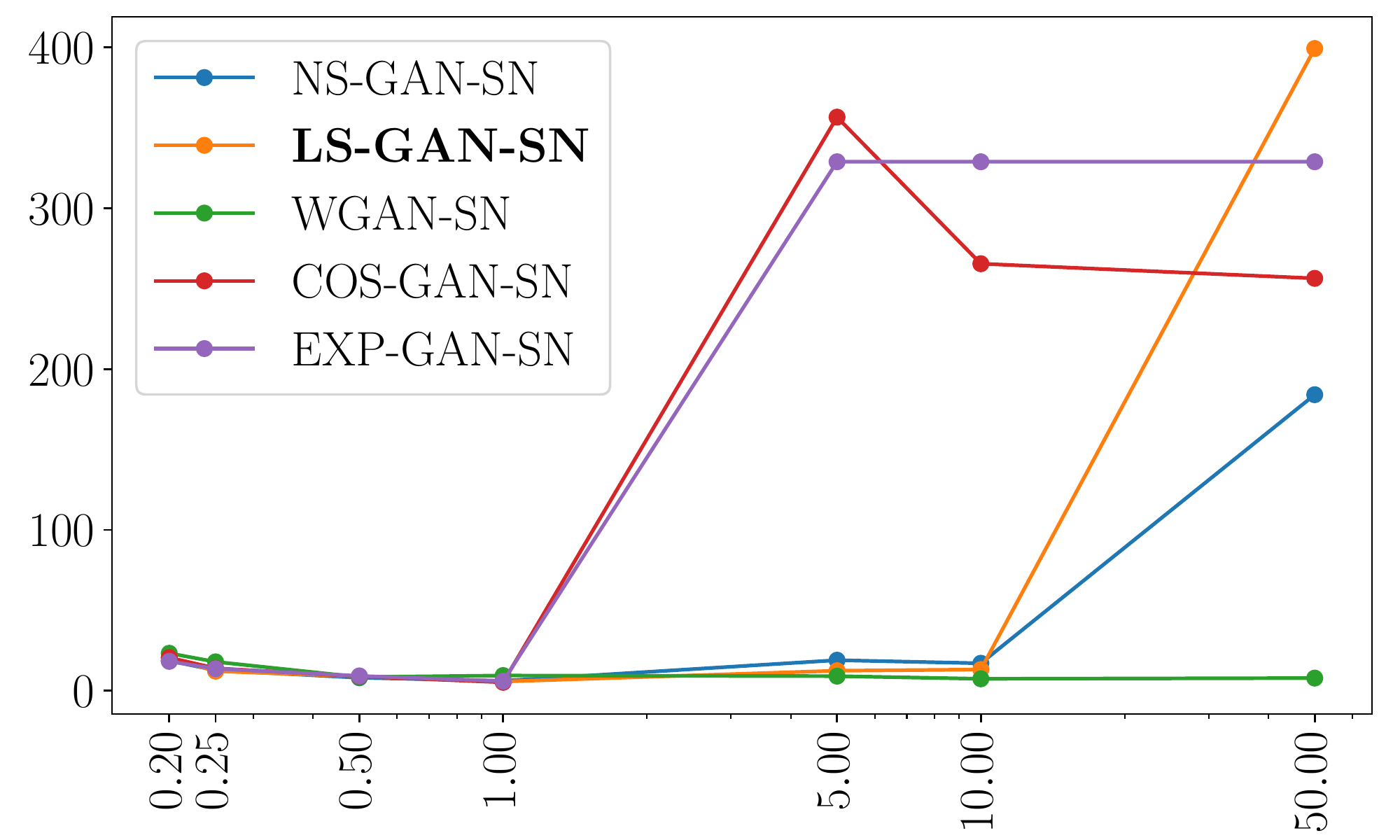}
    \end{subfigure}
\end{center}
  \caption{Samples of randomly generated images with LS-GAN-SN of varying $k_{SN}$ (CelebA). For the line plot, $x$-axis shows $k_{SN}$ (in log scale) and $y$-axis shows the FID scores. }
\label{fig:LSGANSN_CelebA}
\end{figure*}

\begin{figure*}[h]
\begin{center}
    \begin{subfigure}{0.32\textwidth}
        \centering
        \includegraphics[width=0.99\linewidth]{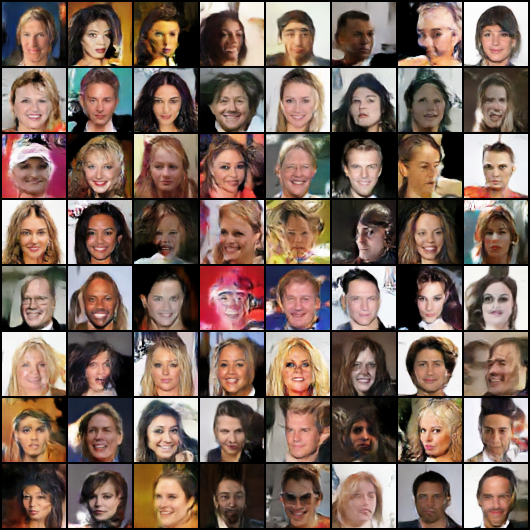}
        \subcaption{$k_{SN}$=50.0, $\mathrm{FID}$= 7.82}
    \end{subfigure}
    \begin{subfigure}{0.32\textwidth}
        \centering
        \includegraphics[width=0.99\linewidth]{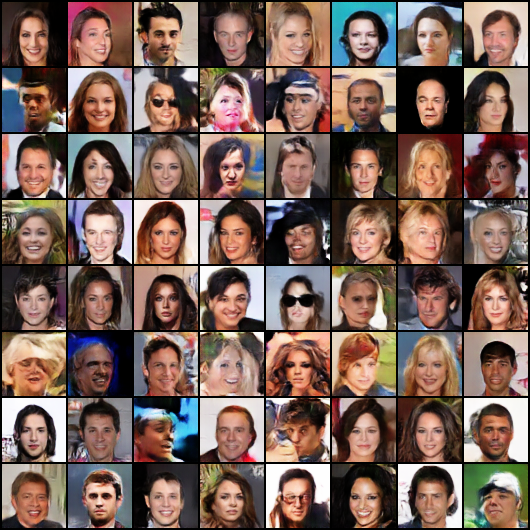}
        \subcaption{$k_{SN}$=10.0, $\mathrm{FID}$= 7.37}
    \end{subfigure}
    \begin{subfigure}{0.32\textwidth}
        \centering
        \includegraphics[width=0.99\linewidth]{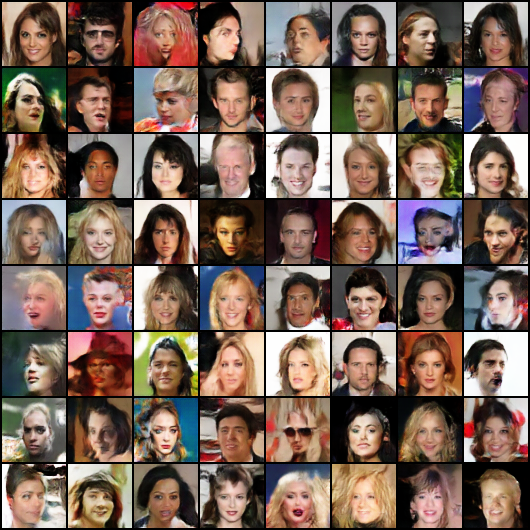}
        \subcaption{$k_{SN}$=5.0, $\mathrm{FID}$= 9.03}
    \end{subfigure}
    \begin{subfigure}{0.32\textwidth}
        \centering
        \includegraphics[width=0.99\linewidth]{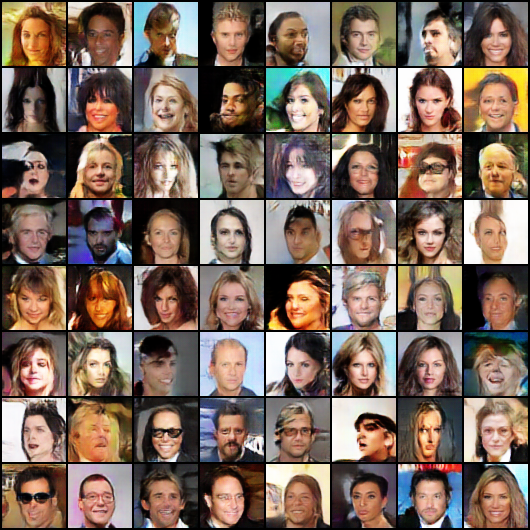}
        \subcaption{$k_{SN}$=1.0, $\mathrm{FID}$= 9.41}
    \end{subfigure}
    \begin{subfigure}{0.32\textwidth}
        \centering
        \includegraphics[width=0.99\linewidth]{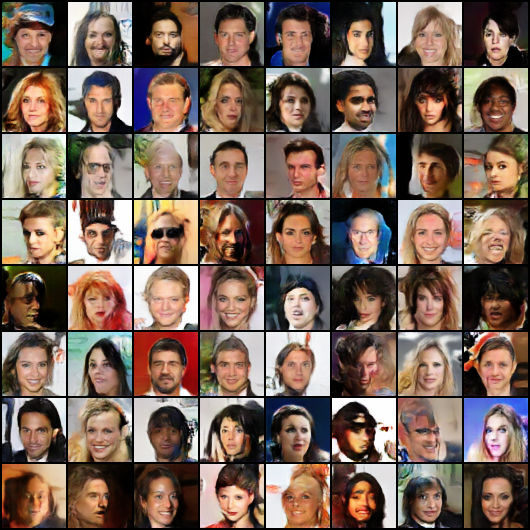}
        \subcaption{$k_{SN}$=0.5, $\mathrm{FID}$= 8.48}
    \end{subfigure}
    \begin{subfigure}{0.32\textwidth}
        \centering
        \includegraphics[width=0.99\linewidth]{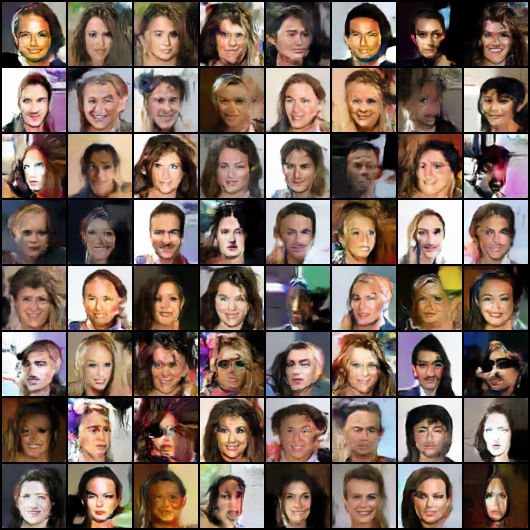}
        \subcaption{$k_{SN}$=0.25, $\mathrm{FID}$= 17.93}
    \end{subfigure}
    \begin{subfigure}{0.32\textwidth}
        \centering
        \includegraphics[width=0.99\linewidth]{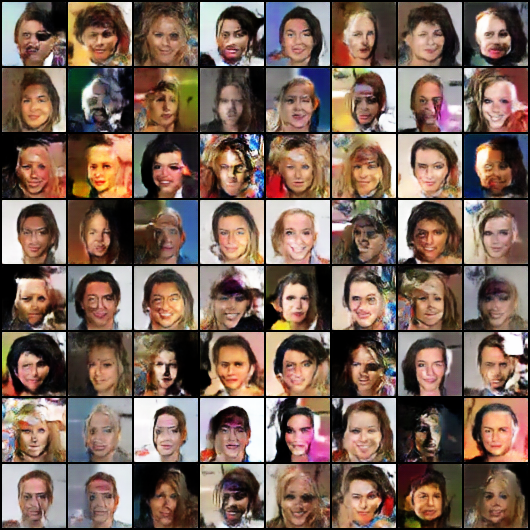}
        \subcaption{$k_{SN}$=0.2, $\mathrm{FID}$= 23.26}
    \end{subfigure}
    \begin{subfigure}{0.6\textwidth}
        \includegraphics[width=.9\linewidth]{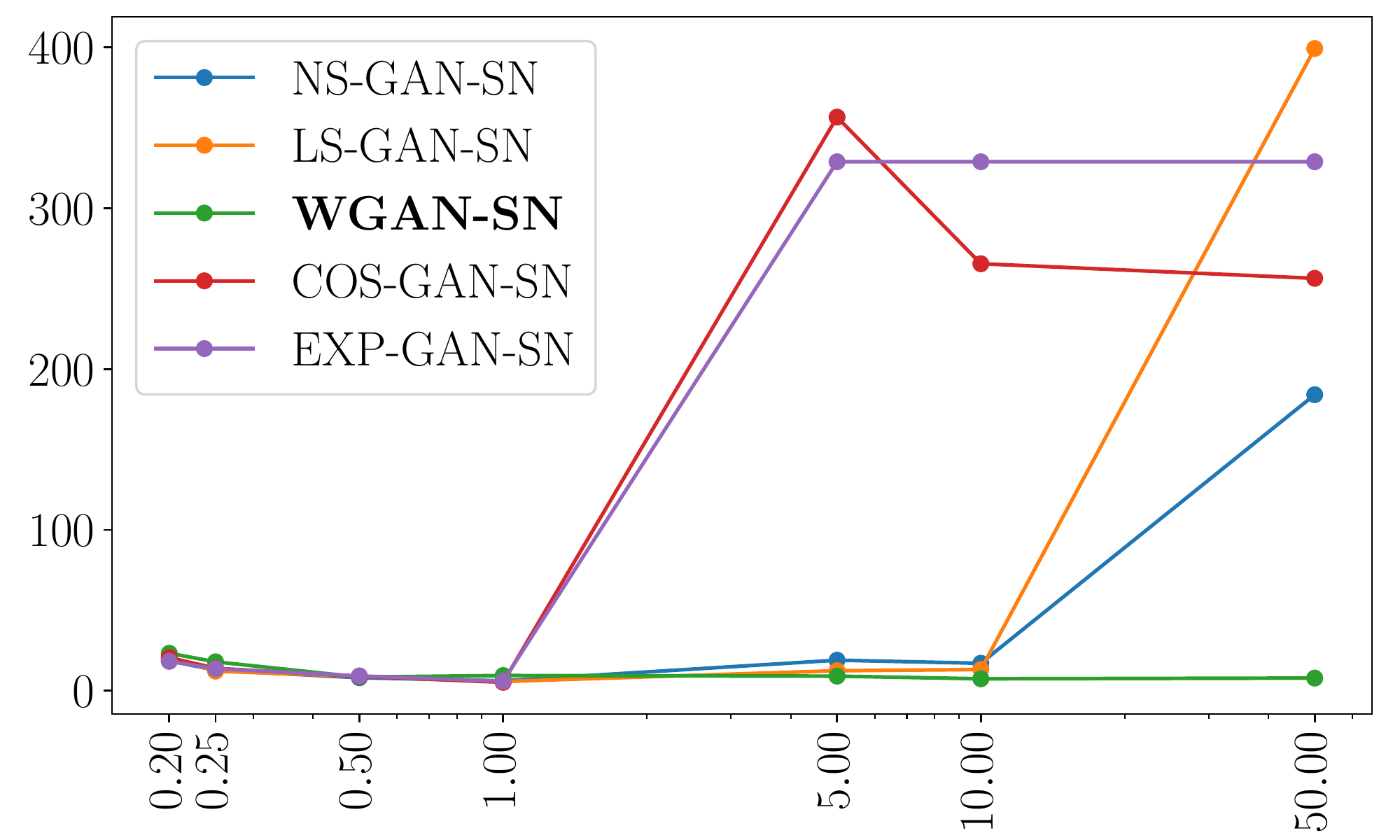}
    \end{subfigure}
\end{center}
  \caption{Samples of randomly generated images with WGAN-SN of varying $k_{SN}$ (CelebA). For the line plot, $x$-axis shows $k_{SN}$ (in log scale) and $y$-axis shows the FID scores. }
\label{fig:WGANSN_CelebA}
\end{figure*}

\begin{figure*}[h]
\begin{center}
    \begin{subfigure}{0.32\textwidth}
        \centering
        \includegraphics[width=0.99\linewidth]{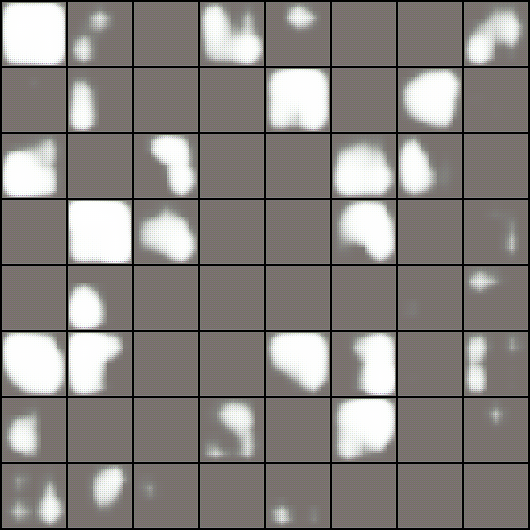}
        \subcaption{$k_{SN}$=50.0, $\mathrm{FID}$= 256.44}
    \end{subfigure}
    \begin{subfigure}{0.32\textwidth}
        \centering
        \includegraphics[width=0.99\linewidth]{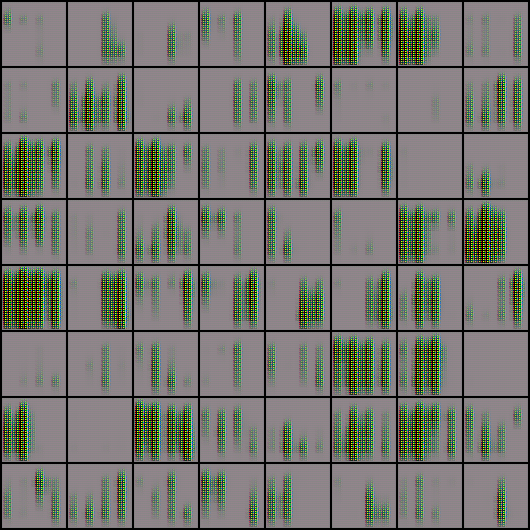}
        \subcaption{$k_{SN}$=10.0, $\mathrm{FID}$= 265.53}
    \end{subfigure}
    \begin{subfigure}{0.32\textwidth}
        \centering
        \includegraphics[width=0.99\linewidth]{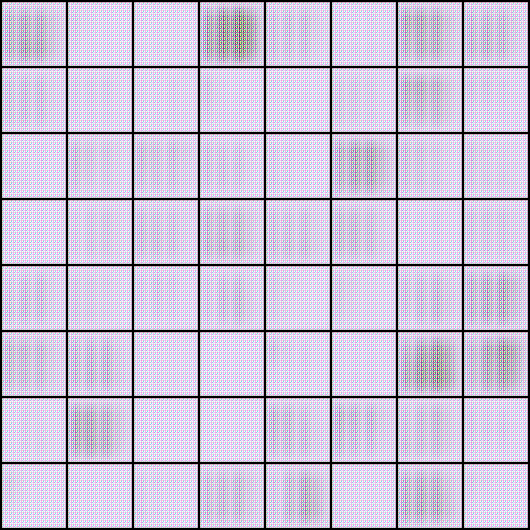}
        \subcaption{$k_{SN}$=5.0, $\mathrm{FID}$= 356.70}
    \end{subfigure}
    \begin{subfigure}{0.32\textwidth}
        \centering
        \includegraphics[width=0.99\linewidth]{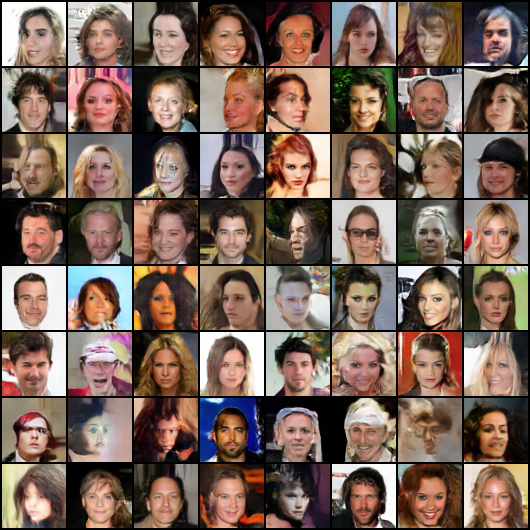}
        \subcaption{$k_{SN}$=1.0, $\mathrm{FID}$= 5.20}
    \end{subfigure}
    \begin{subfigure}{0.32\textwidth}
        \centering
        \includegraphics[width=0.99\linewidth]{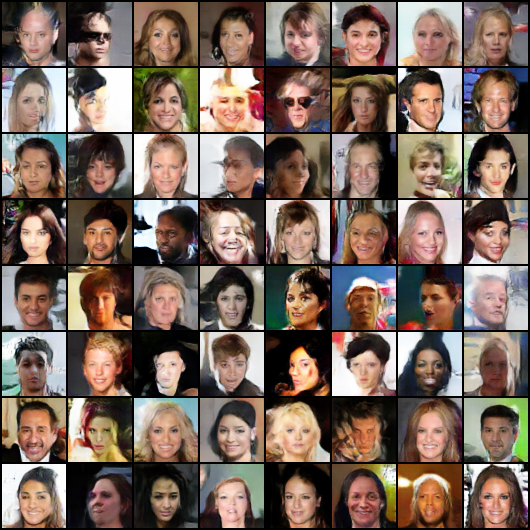}
        \subcaption{$k_{SN}$=0.5, $\mathrm{FID}$= 8.88}
    \end{subfigure}
    \begin{subfigure}{0.32\textwidth}
        \centering
        \includegraphics[width=0.99\linewidth]{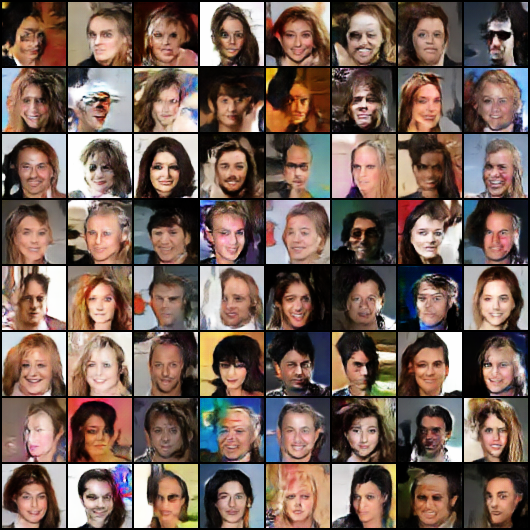}
        \subcaption{$k_{SN}$=0.25, $\mathrm{FID}$= 13.93}
    \end{subfigure}
    \begin{subfigure}{0.32\textwidth}
        \centering
        \includegraphics[width=0.99\linewidth]{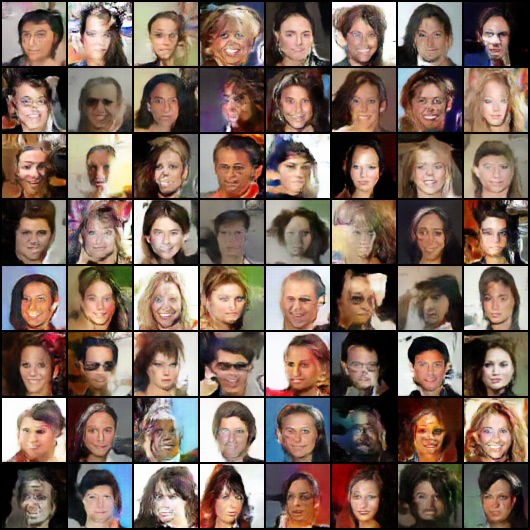}
        \subcaption{$k_{SN}$=0.2, $\mathrm{FID}$= 20.59}
    \end{subfigure}
    \begin{subfigure}{0.6\textwidth}
        \includegraphics[width=.9\linewidth]{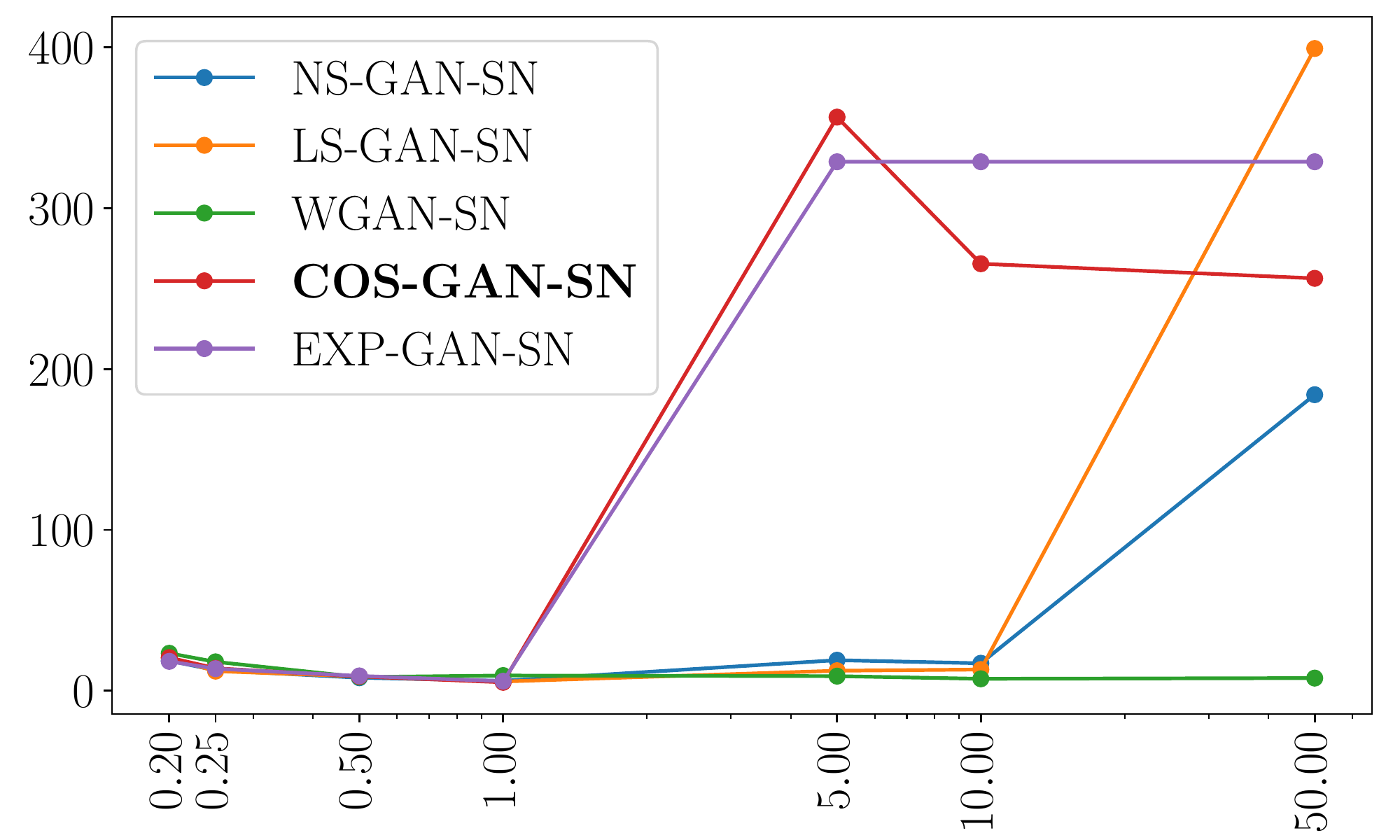}
    \end{subfigure}
\end{center}
  \caption{Samples of randomly generated images with COS-GAN-SN of varying $k_{SN}$ (CelebA). For the line plot, $x$-axis shows $k_{SN}$ (in log scale) and $y$-axis shows the FID scores. }
\label{fig:COSGANSN_CelebA}
\end{figure*}

\begin{figure*}[h]
\begin{center}
    \begin{subfigure}{0.32\textwidth}
        \centering
        \includegraphics[width=0.99\linewidth]{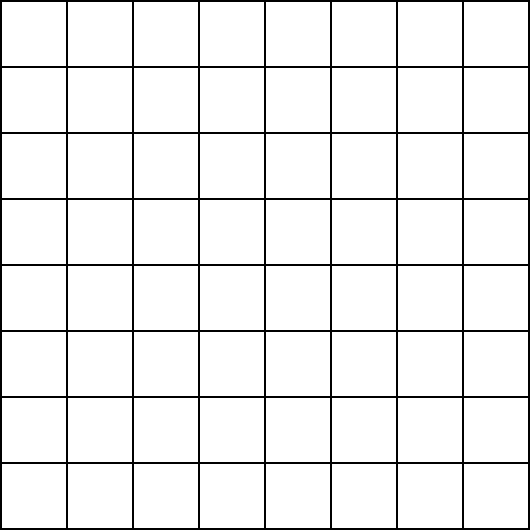}
        \subcaption{$k_{SN}$=50.0, $\mathrm{FID}$= 328.94}
    \end{subfigure}
    \begin{subfigure}{0.32\textwidth}
        \centering
        \includegraphics[width=0.99\linewidth]{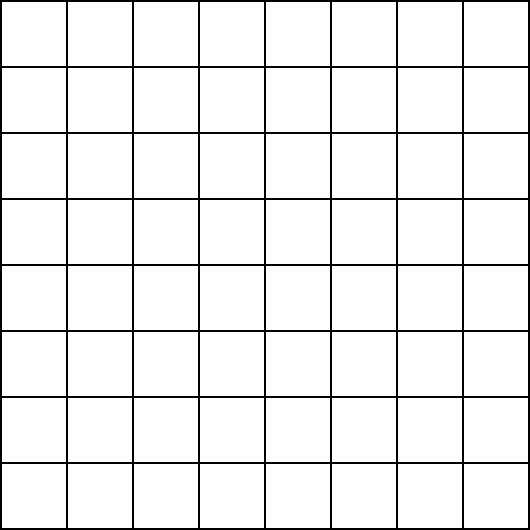}
        \subcaption{$k_{SN}$=10.0, $\mathrm{FID}$= 328.94}
    \end{subfigure}
    \begin{subfigure}{0.32\textwidth}
        \centering
        \includegraphics[width=0.99\linewidth]{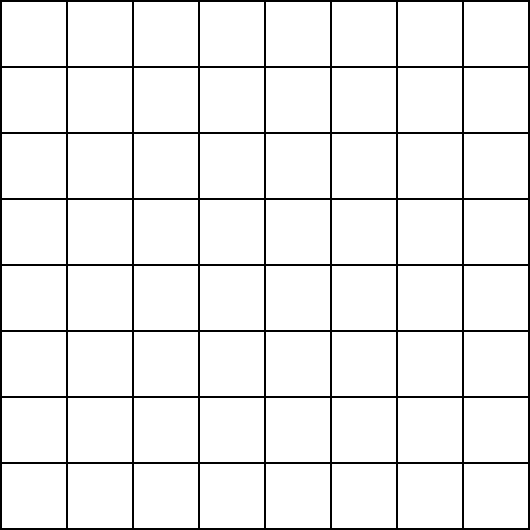}
        \subcaption{$k_{SN}$=5.0, $\mathrm{FID}$= 328.94}
    \end{subfigure}
    \begin{subfigure}{0.32\textwidth}
        \centering
        \includegraphics[width=0.99\linewidth]{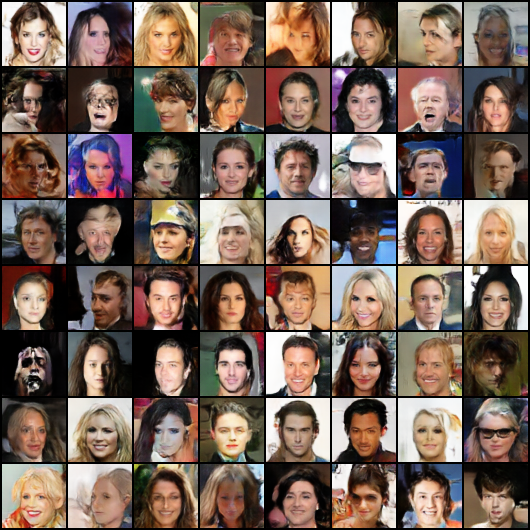}
        \subcaption{$k_{SN}$=1.0, $\mathrm{FID}$= 5.88}
    \end{subfigure}
    \begin{subfigure}{0.32\textwidth}
        \centering
        \includegraphics[width=0.99\linewidth]{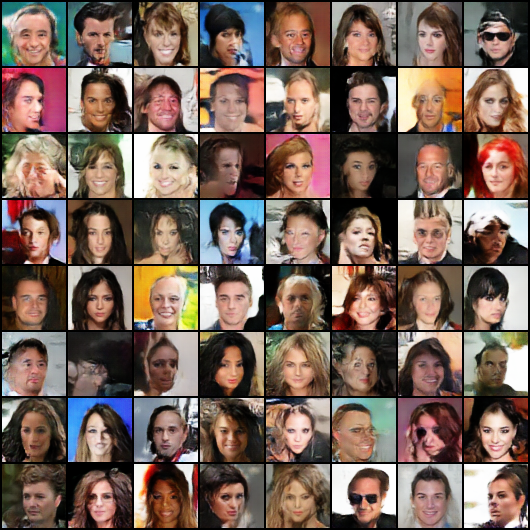}
        \subcaption{$k_{SN}$=0.5, $\mathrm{FID}$= 9.18}
    \end{subfigure}
    \begin{subfigure}{0.32\textwidth}
        \centering
        \includegraphics[width=0.99\linewidth]{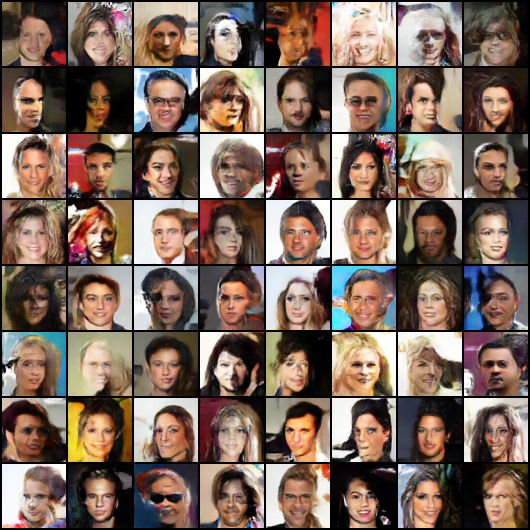}
        \subcaption{$k_{SN}$=0.25, $\mathrm{FID}$= 13.65}
    \end{subfigure}
    \begin{subfigure}{0.32\textwidth}
        \centering
        \includegraphics[width=0.99\linewidth]{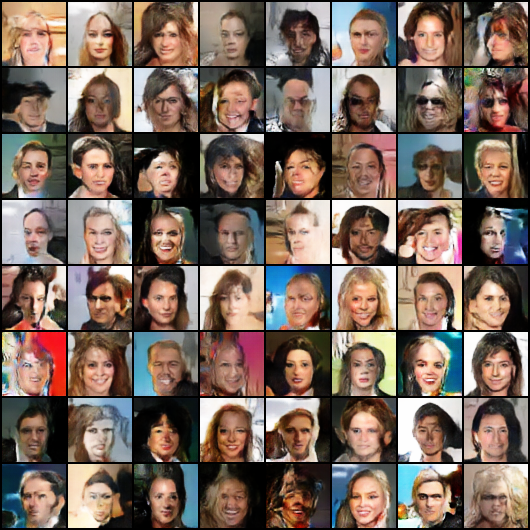}
        \subcaption{$k_{SN}$=0.2, $\mathrm{FID}$= 18.23}
    \end{subfigure}
    \begin{subfigure}{0.6\textwidth}
        \includegraphics[width=.9\linewidth]{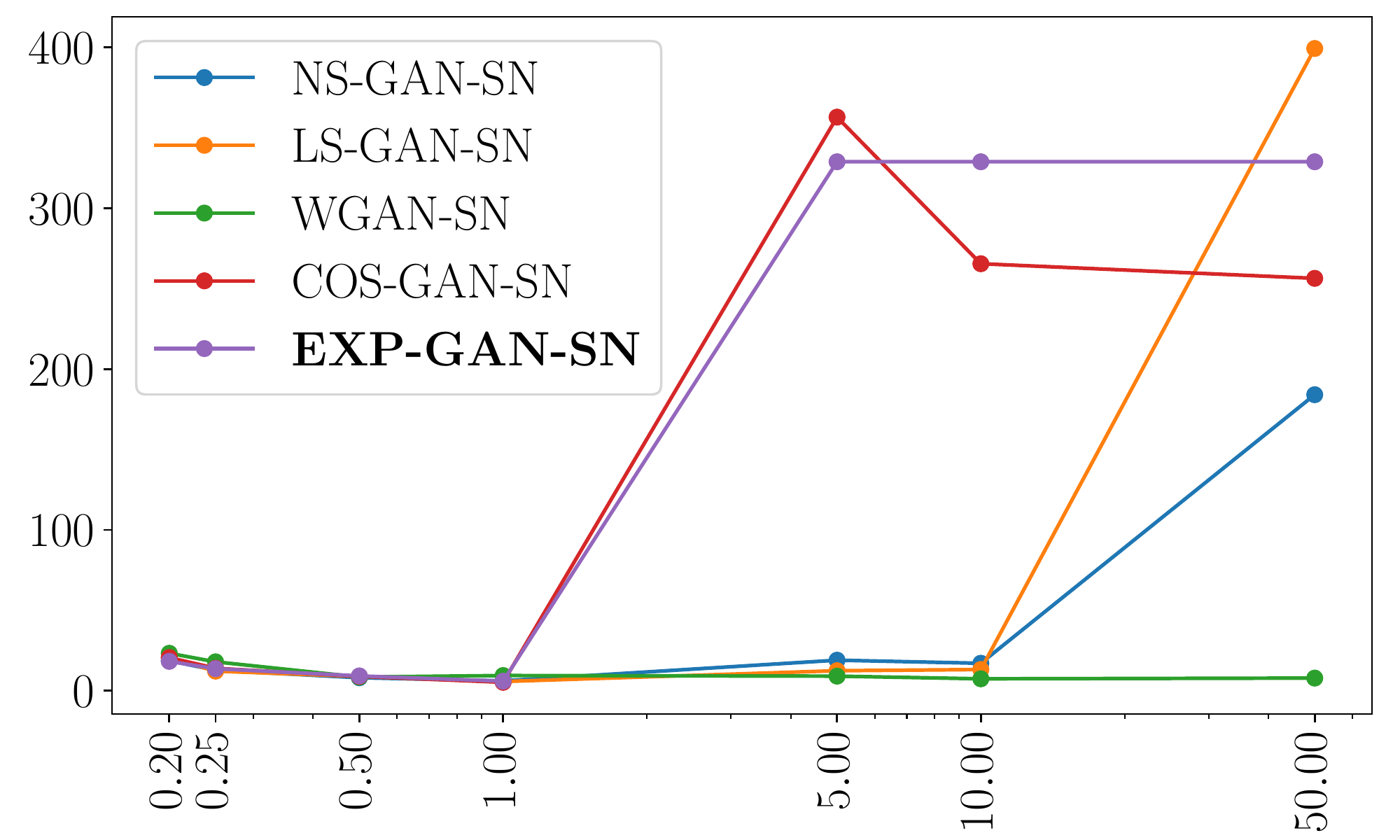}
    \end{subfigure}
\end{center}
  \caption{Samples of randomly generated images with EXP-GAN-SN of varying $k_{SN}$ (CelebA). For the line plot, $x$-axis shows $k_{SN}$ (in log scale) and $y$-axis shows the FID scores. }
\label{fig:EXPGANSN_CelebA}
\end{figure*}

\FloatBarrier
\newpage

\vspace*{\fill}
\begin{table*}[h!]
\centering
\captionsetup{justification=centering}
\caption*{{\LARGE FID scores v.s. $\alpha$ ($k_{SN}=50.0$) of Different Loss Functions\\ \vspace*{5mm}
- MNIST -}}
\end{table*}
\vspace*{\fill}

\FloatBarrier
\newpage

\begin{figure*}[h]
\begin{center}
    \begin{subfigure}{0.32\textwidth}
        \centering
        \includegraphics[width=0.99\linewidth]{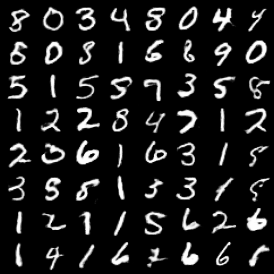}
        \subcaption{$\alpha=1e^{-11}$, $\mathrm{FID}=3.67$}
    \end{subfigure}
    \begin{subfigure}{0.32\textwidth}
        \centering
        \includegraphics[width=0.99\linewidth]{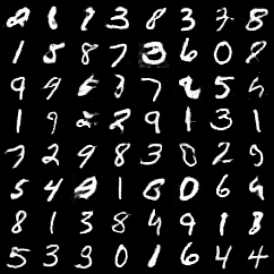}
        \subcaption{$\alpha=1e^{-9}$, $\mathrm{FID}=3.61$}
    \end{subfigure}
    \begin{subfigure}{0.32\textwidth}
        \centering
        \includegraphics[width=0.99\linewidth]{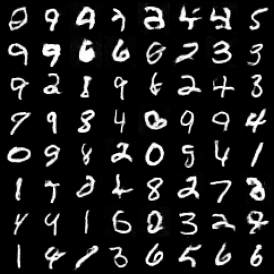}
        \subcaption{$\alpha=1e^{-7}$, $\mathrm{FID}=3.81$}
    \end{subfigure}
    \begin{subfigure}{0.32\textwidth}
        \centering
        \includegraphics[width=0.99\linewidth]{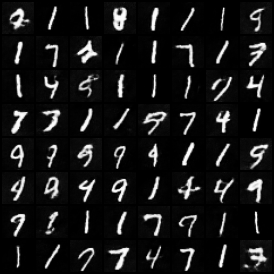}
        \subcaption{$\alpha=1e^{-5}$, $\mathrm{FID}=33.48$}
    \end{subfigure}
    \begin{subfigure}{0.32\textwidth}
        \centering
        \includegraphics[width=0.99\linewidth]{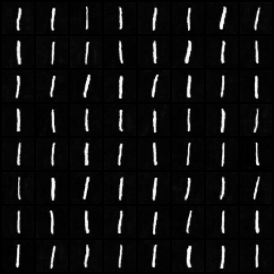}
        \subcaption{$\alpha=1e^{-3}$, $\mathrm{FID}=154.14$}
    \end{subfigure}
    \begin{subfigure}{0.32\textwidth}
        \centering
        \includegraphics[width=0.99\linewidth]{Scale_NSGANSN_MNIST_k_50/a_e1.png}
        \subcaption{$\alpha=1e^{-1}$, $\mathrm{FID}=155.54$}
    \end{subfigure}
    \begin{subfigure}{0.6\textwidth}
        \includegraphics[width=.9\linewidth]{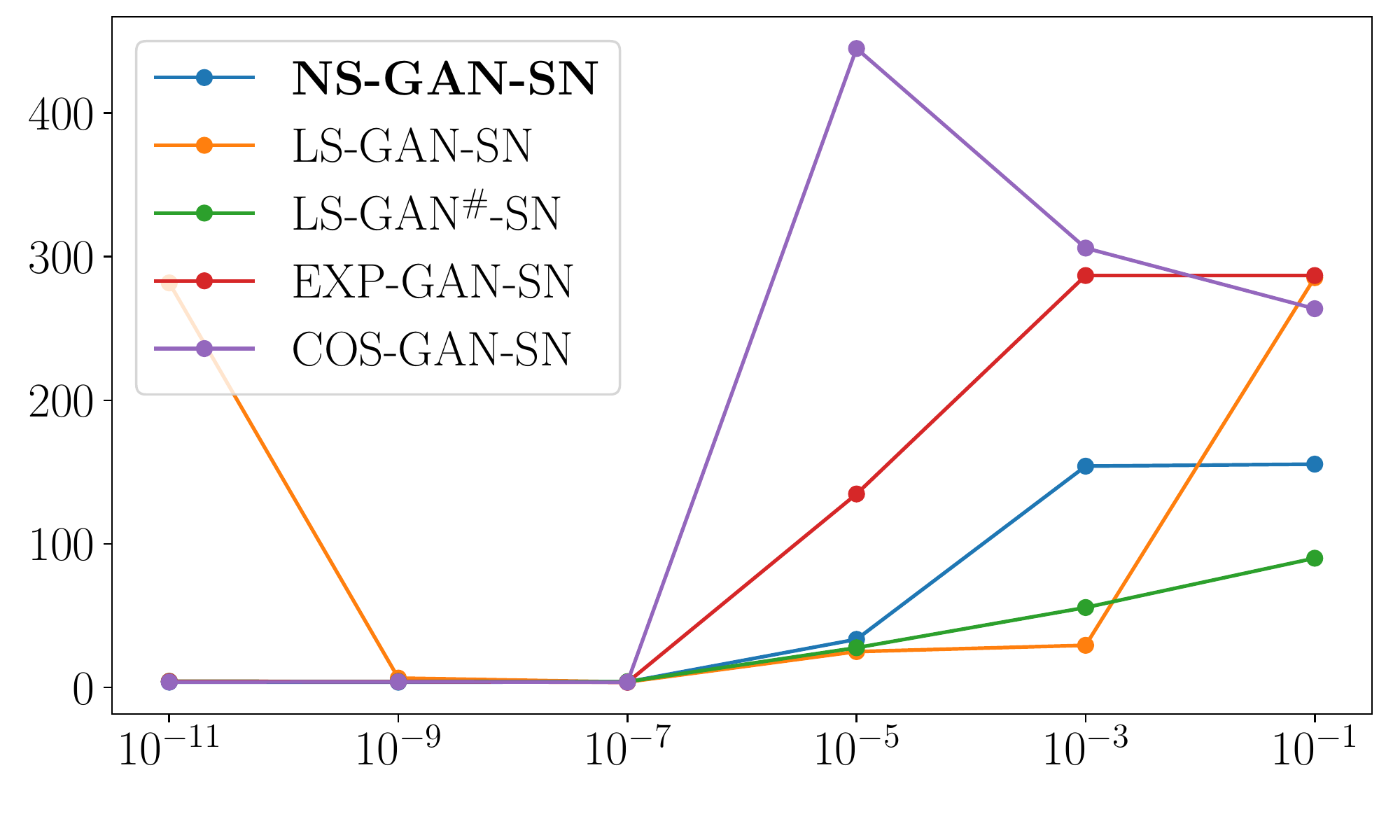}
    \end{subfigure}
\end{center}
  \caption{Samples of randomly generated images with NS-GAN-SN of varying $\alpha$ ($k_{SN}=50.0$, MNIST). For the line plot, $x$-axis shows $\alpha$ (in log scale) and $y$-axis shows the FID scores.}
\label{fig:Scale_NSGANSN_MNIST_k_50}
\end{figure*}

\begin{figure*}[h]
\begin{center}
    \begin{subfigure}{0.32\textwidth}
        \centering
        \includegraphics[width=0.99\linewidth]{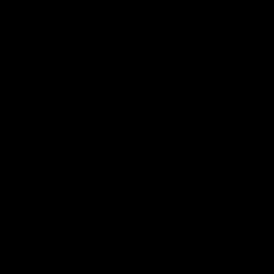}
        \subcaption{$\alpha=1e^{-11}$, $\mathrm{FID}$=281.82}
    \end{subfigure}
    \begin{subfigure}{0.32\textwidth}
        \centering
        \includegraphics[width=0.99\linewidth]{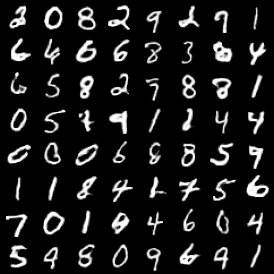}
        \subcaption{$\alpha=1e^{-9}$, $\mathrm{FID}=6.48$}
    \end{subfigure}
    \begin{subfigure}{0.32\textwidth}
        \centering
        \includegraphics[width=0.99\linewidth]{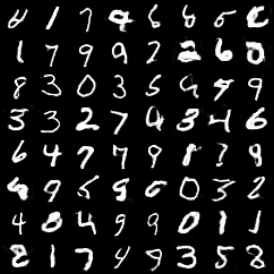}
        \subcaption{$\alpha=1e^{-7}$, $\mathrm{FID}=3.74$}
    \end{subfigure}
    \begin{subfigure}{0.32\textwidth}
        \centering
        \includegraphics[width=0.99\linewidth]{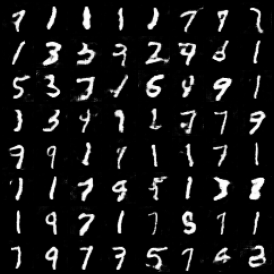}
        \subcaption{$\alpha=1e^{-5}$, $\mathrm{FID}=24.93$}
    \end{subfigure}
    \begin{subfigure}{0.32\textwidth}
        \centering
        \includegraphics[width=0.99\linewidth]{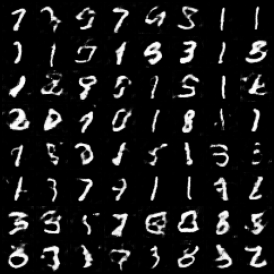}
        \subcaption{$\alpha=1e^{-3}$, $\mathrm{FID}=29.31$}
    \end{subfigure}
    \begin{subfigure}{0.32\textwidth}
        \centering
        \includegraphics[width=0.99\linewidth]{Scale_LSGANSN_MNIST_k_50/a_e1.png}
        \subcaption{$\alpha=1e^{-1}$, $\mathrm{FID}=285.46$}
    \end{subfigure}
    \begin{subfigure}{0.6\textwidth}
        \includegraphics[width=.9\linewidth]{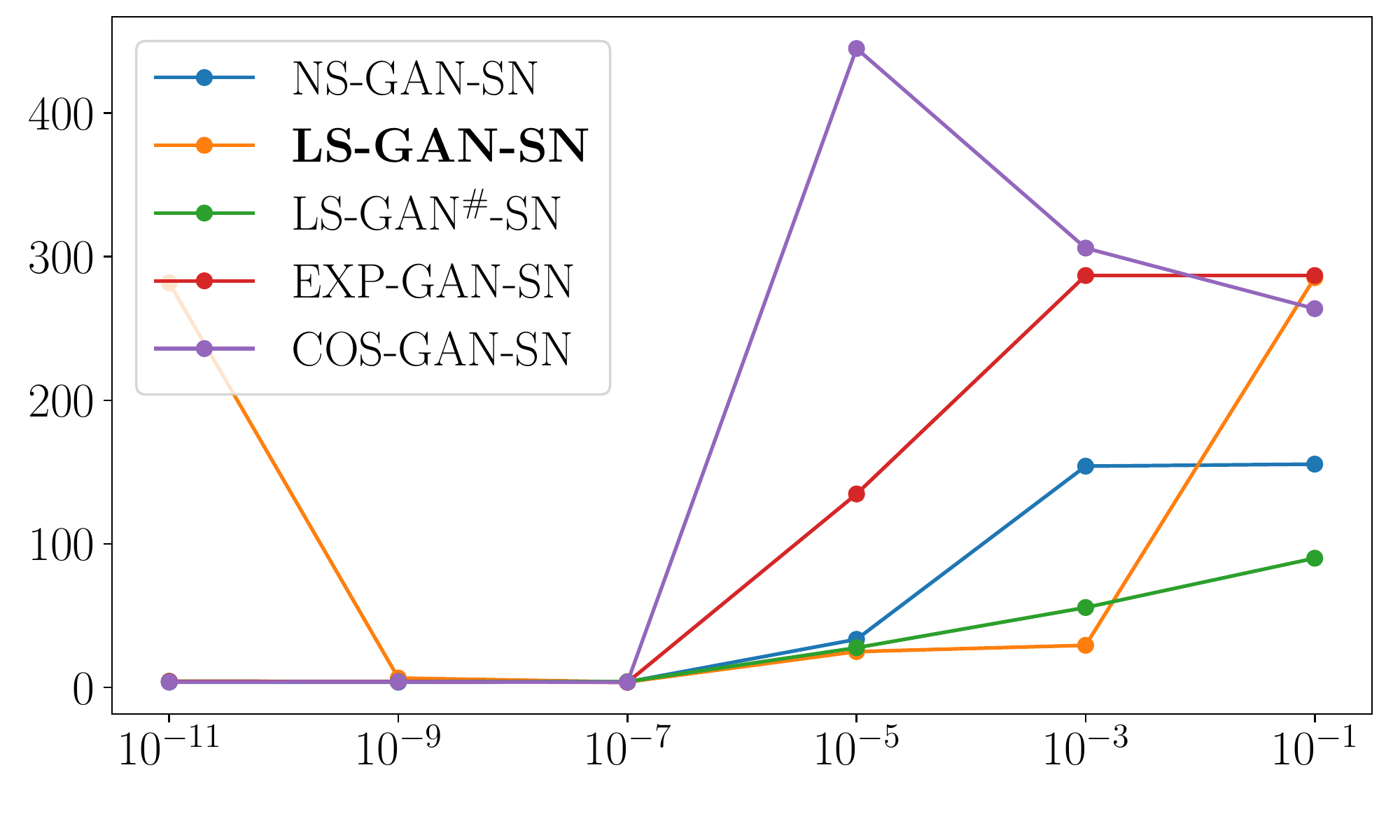}
    \end{subfigure}
\end{center}
  \caption{Samples of randomly generated images with LS-GAN-SN of varying $\alpha$ ($k_{SN}=50.0$, MNIST). For the line plot, $x$-axis shows $\alpha$ (in log scale) and $y$-axis shows the FID scores.}
\label{fig:Scale_LSGANSN_MNIST_k_50}
\end{figure*}

\begin{figure*}[h]
\begin{center}
    \begin{subfigure}{0.32\textwidth}
        \centering
        \includegraphics[width=0.99\linewidth]{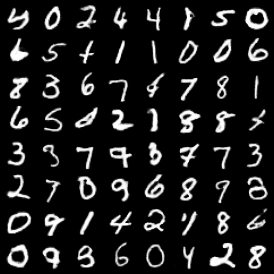}
        \subcaption{$\alpha=1e^{-11}$, $\mathrm{FID}=4.25$}
    \end{subfigure}
    \begin{subfigure}{0.32\textwidth}
        \centering
        \includegraphics[width=0.99\linewidth]{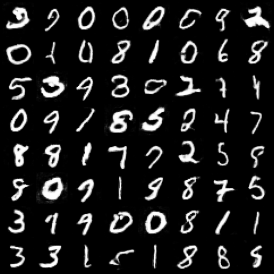}
        \subcaption{$\alpha=1e^{-9}$, $\mathrm{FID}=4.15$}
    \end{subfigure}
    \begin{subfigure}{0.32\textwidth}
        \centering
        \includegraphics[width=0.99\linewidth]{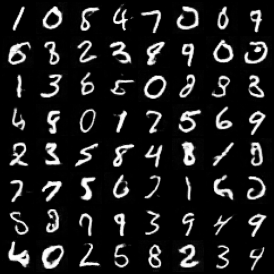}
        \subcaption{$\alpha=1e^{-7}$, $\mathrm{FID}=3.99$}
    \end{subfigure}
    \begin{subfigure}{0.32\textwidth}
        \centering
        \includegraphics[width=0.99\linewidth]{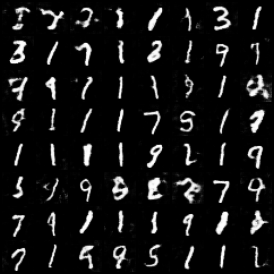}
        \subcaption{$\alpha=1e^{-5}$, $\mathrm{FID}=27.62$}
    \end{subfigure}
    \begin{subfigure}{0.32\textwidth}
        \centering
        \includegraphics[width=0.99\linewidth]{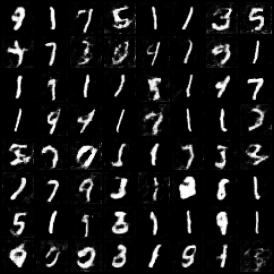}
        \subcaption{$\alpha=1e^{-3}$, $\mathrm{FID}=55.64$}
    \end{subfigure}
    \begin{subfigure}{0.32\textwidth}
        \centering
        \includegraphics[width=0.99\linewidth]{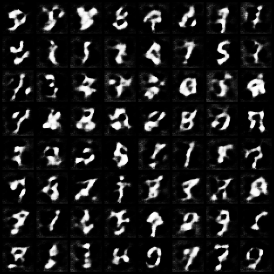}
        \subcaption{$\alpha=1e^{-1}$, $\mathrm{FID}=90.00$}
    \end{subfigure}
    \begin{subfigure}{0.6\textwidth}
        \includegraphics[width=.9\linewidth]{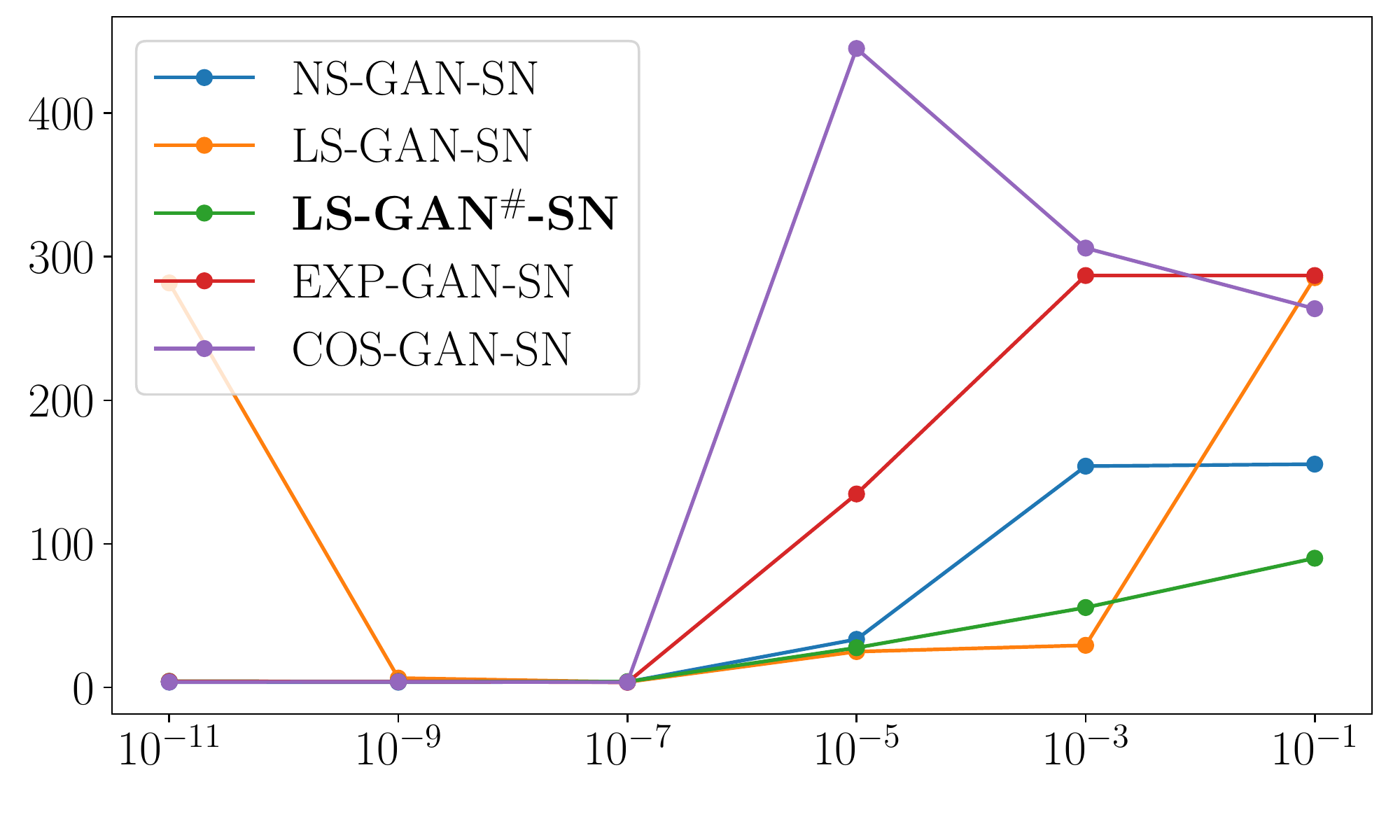}
    \end{subfigure}
\end{center}
  \caption{Samples of randomly generated images with LS-GAN$^\#$-SN of varying $\alpha$ ($k_{SN}=50.0$, MNIST). For the line plot, $x$-axis shows $\alpha$ (in log scale) and $y$-axis shows the FID scores.}
\label{fig:Scale_LSGANSN_zero_centered_MNIST_k_50}
\end{figure*}

\begin{figure*}[h]
\begin{center}
    \begin{subfigure}{0.32\textwidth}
        \centering
        \includegraphics[width=0.99\linewidth]{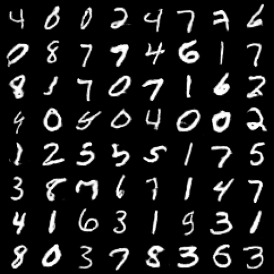}
        \subcaption{$\alpha=1e^{-11}$, $\mathrm{FID}=4.14$}
    \end{subfigure}
    \begin{subfigure}{0.32\textwidth}
        \centering
        \includegraphics[width=0.99\linewidth]{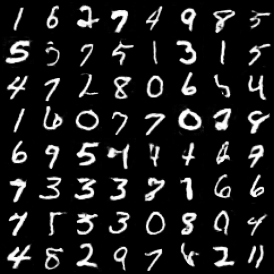}
        \subcaption{$\alpha=1e^{-9}$, $\mathrm{FID}=4.01$}
    \end{subfigure}
    \begin{subfigure}{0.32\textwidth}
        \centering
        \includegraphics[width=0.99\linewidth]{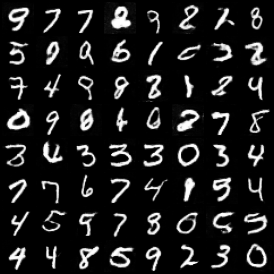}
        \subcaption{$\alpha=1e^{-7}$, $\mathrm{FID}=3.54$}
    \end{subfigure}
    \begin{subfigure}{0.32\textwidth}
        \centering
        \includegraphics[width=0.99\linewidth]{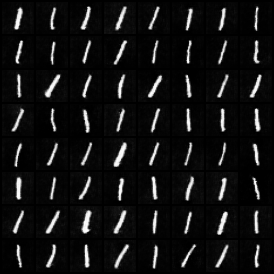}
        \subcaption{$\alpha=1e^{-5}$, $\mathrm{FID}=134.76$}
    \end{subfigure}
    \begin{subfigure}{0.32\textwidth}
        \centering
        \includegraphics[width=0.99\linewidth]{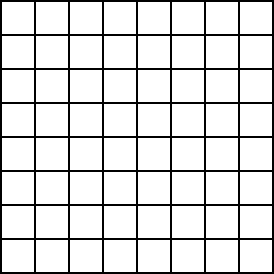}
        \subcaption{$\alpha=1e^{-3}$, $\mathrm{FID}=286.96$}
    \end{subfigure}
    \begin{subfigure}{0.32\textwidth}
        \centering
        \includegraphics[width=0.99\linewidth]{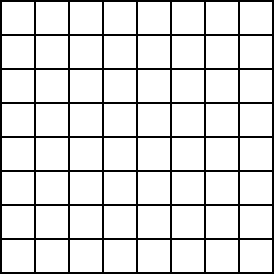}
        \subcaption{$\alpha=1e^{-1}$, $\mathrm{FID}=286.96$}
    \end{subfigure}
    \begin{subfigure}{0.6\textwidth}
        \includegraphics[width=.9\linewidth]{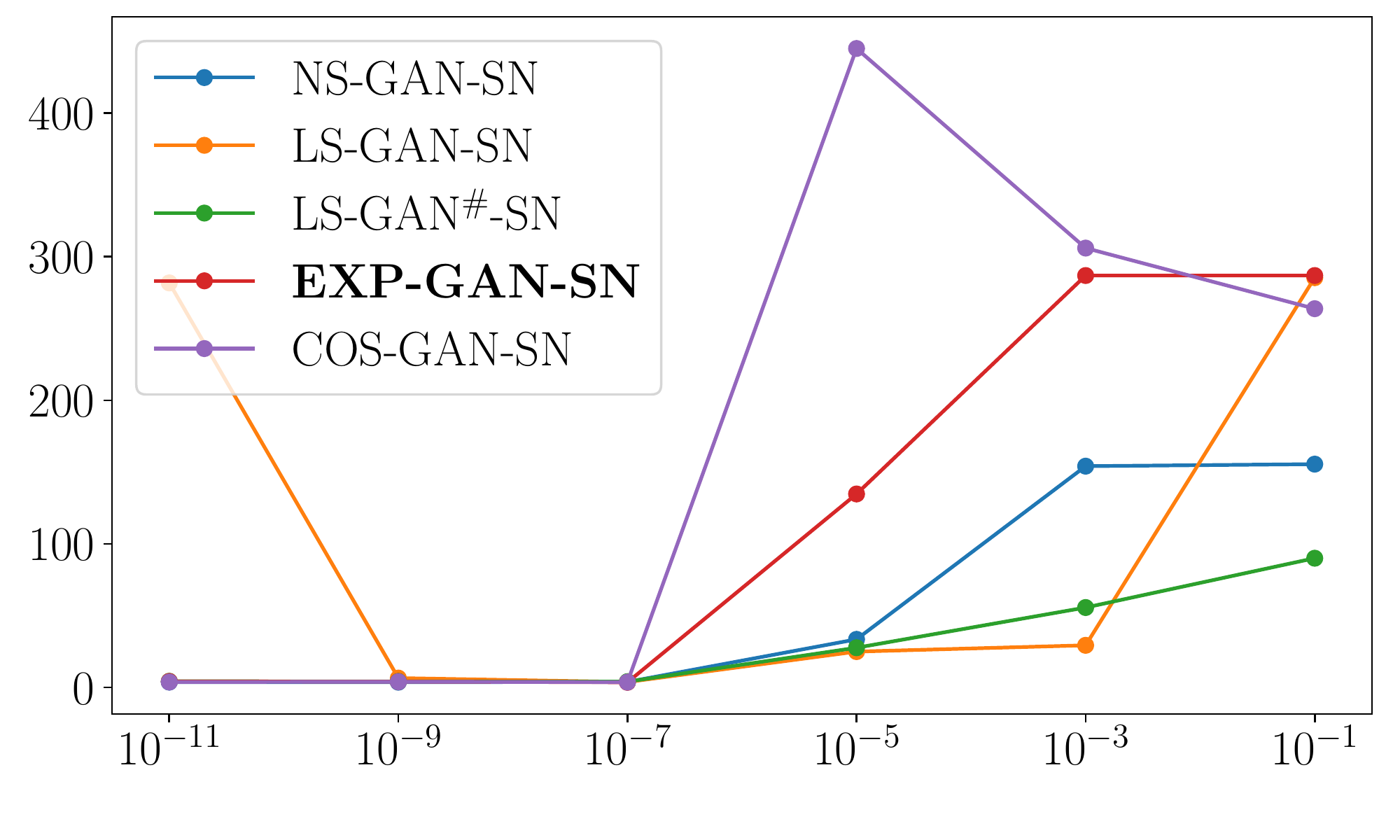}
    \end{subfigure}
\end{center}
  \caption{Samples of randomly generated images with EXP-GAN-SN of varying $\alpha$ ($k_{SN}=50.0$, MNIST). For the line plot, $x$-axis shows $\alpha$ (in log scale) and $y$-axis shows the FID scores.}
\label{fig:Scale_EXPGANSN_MNIST_k_50}
\end{figure*}

\begin{figure*}[h]
\begin{center}
    \begin{subfigure}{0.32\textwidth}
        \centering
        \includegraphics[width=0.99\linewidth]{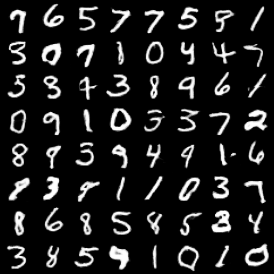}
        \subcaption{$\alpha=1e^{-11}$, $\mathrm{FID}=3.74$}
    \end{subfigure}
    \begin{subfigure}{0.32\textwidth}
        \centering
        \includegraphics[width=0.99\linewidth]{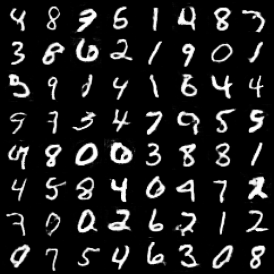}
        \subcaption{$\alpha=1e^{-9}$, $\mathrm{FID}=3.93$}
    \end{subfigure}
    \begin{subfigure}{0.32\textwidth}
        \centering
        \includegraphics[width=0.99\linewidth]{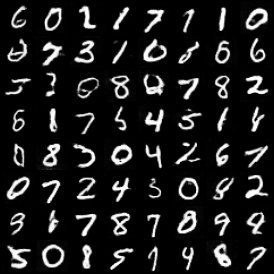}
        \subcaption{$\alpha=1e^{-7}$, $\mathrm{FID}=3.66$}
    \end{subfigure}
    \begin{subfigure}{0.32\textwidth}
        \centering
        \includegraphics[width=0.99\linewidth]{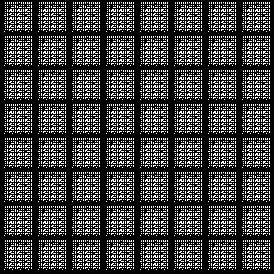}
        \subcaption{$\alpha=1e^{-5}$, $\mathrm{FID}=445.15$}
    \end{subfigure}
    \begin{subfigure}{0.32\textwidth}
        \centering
        \includegraphics[width=0.99\linewidth]{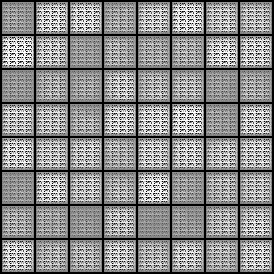}
        \subcaption{$\alpha=1e^{-3}$, $\mathrm{FID}=306.09$}
    \end{subfigure}
    \begin{subfigure}{0.32\textwidth}
        \centering
        \includegraphics[width=0.99\linewidth]{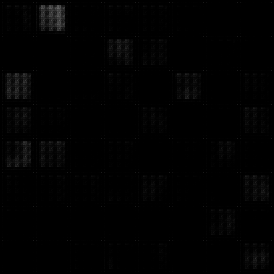}
        \subcaption{$\alpha=1e^{-1}$, $\mathrm{FID}=263.85$}
    \end{subfigure}
    \begin{subfigure}{0.6\textwidth}
        \includegraphics[width=.9\linewidth]{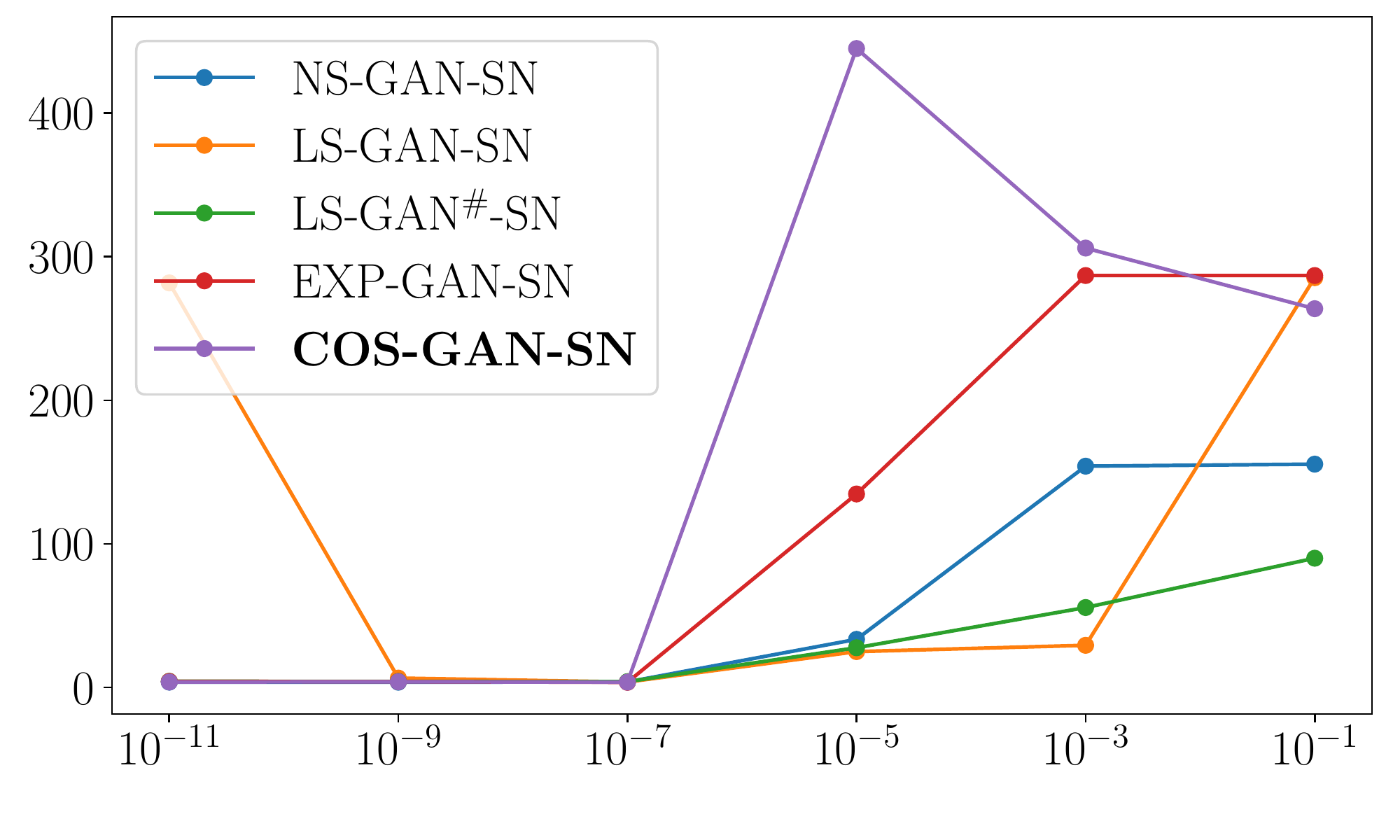}
    \end{subfigure}
\end{center}
  \caption{Samples of randomly generated images with COS-GAN-SN of varying $\alpha$ ($k_{SN}=50.0$, MNIST). For the line plot, $x$-axis shows $\alpha$ (in log scale) and $y$-axis shows the FID scores.}
\label{fig:Scale_COSGANSN_MNIST_k_50}
\end{figure*}

\FloatBarrier
\newpage

\vspace*{\fill}
\begin{table*}[h!]
\centering
\captionsetup{justification=centering}
\caption*{{\LARGE FID scores v.s. $\alpha$ ($k_{SN}=50.0$) of Different Loss Functions\\ \vspace*{5mm}
- CIFAR10 -}}
\end{table*}
\vspace*{\fill}

\FloatBarrier
\newpage

\begin{figure*}[h]
\begin{center}
    \begin{subfigure}{0.32\textwidth}
        \centering
        \includegraphics[width=0.99\linewidth]{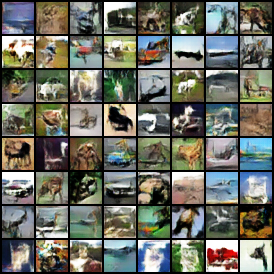}
        \subcaption{$\alpha=1e^{-11}$, $\mathrm{FID}=19.58$}
    \end{subfigure}
    \begin{subfigure}{0.32\textwidth}
        \centering
        \includegraphics[width=0.99\linewidth]{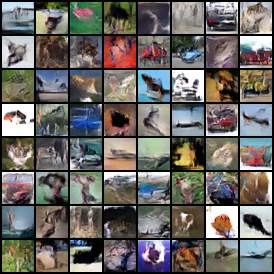}
        \subcaption{$\alpha=1e^{-9}$, $\mathrm{FID}=22.46$}
    \end{subfigure}
    \begin{subfigure}{0.32\textwidth}
        \centering
        \includegraphics[width=0.99\linewidth]{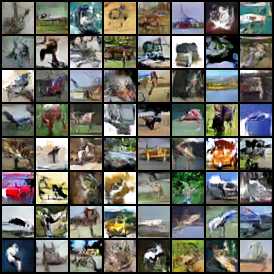}
        \subcaption{$\alpha=1e^{-7}$, $\mathrm{FID}=18.73$}
    \end{subfigure}
    \begin{subfigure}{0.32\textwidth}
        \centering
        \includegraphics[width=0.99\linewidth]{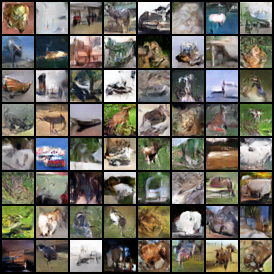}
        \subcaption{$\alpha=1e^{-5}$, $\mathrm{FID}=24.57$}
    \end{subfigure}
    \begin{subfigure}{0.32\textwidth}
        \centering
        \includegraphics[width=0.99\linewidth]{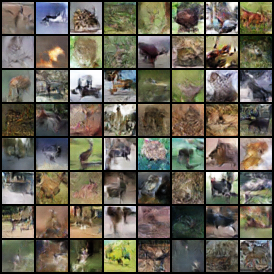}
        \subcaption{$\alpha=1e^{-3}$, $\mathrm{FID}=49.56$}
    \end{subfigure}
    \begin{subfigure}{0.32\textwidth}
        \centering
        \includegraphics[width=0.99\linewidth]{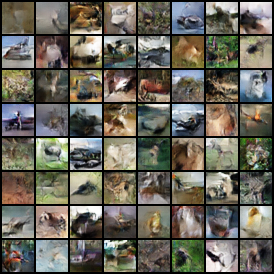}
        \subcaption{$\alpha=1e^{-1}$, $\mathrm{FID}=43.42$}
    \end{subfigure}
    \begin{subfigure}{0.6\textwidth}
        \includegraphics[width=.9\linewidth]{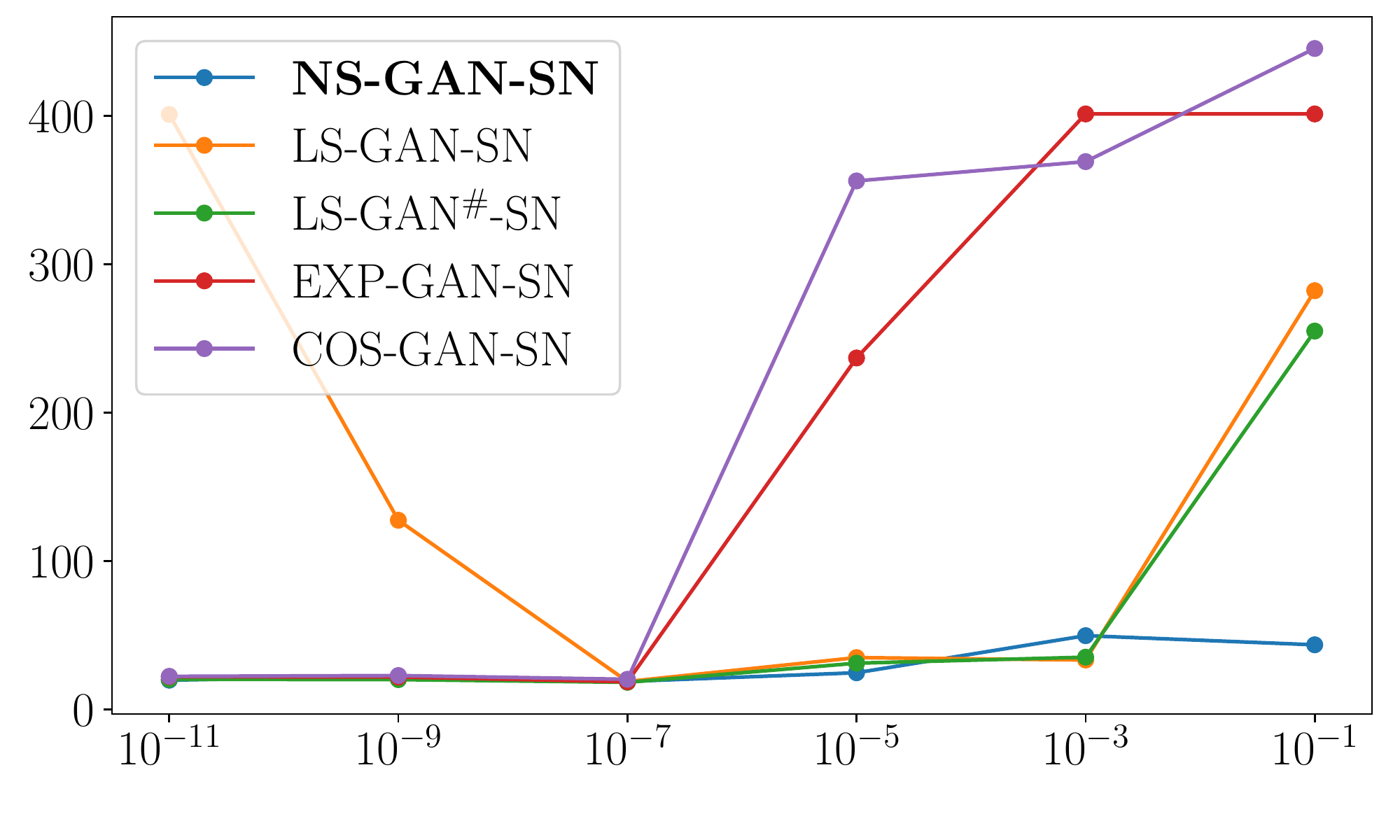}
    \end{subfigure}
\end{center}
  \caption{Samples of randomly generated images with NS-GAN-SN of varying $\alpha$ ($k_{SN}=50.0$, CIFAR10). For the line plot, $x$-axis shows $\alpha$ (in log scale) and $y$-axis shows the FID scores.}
\label{fig:Scale_NSGANSN_CIFAR10_k_50}
\end{figure*}

\begin{figure*}[h]
\begin{center}
    \begin{subfigure}{0.32\textwidth}
        \centering
        \includegraphics[width=0.99\linewidth]{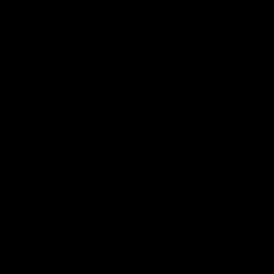}
        \subcaption{$\alpha=1e^{-11}$, $\mathrm{FID}$=400.91}
    \end{subfigure}
    \begin{subfigure}{0.32\textwidth}
        \centering
        \includegraphics[width=0.99\linewidth]{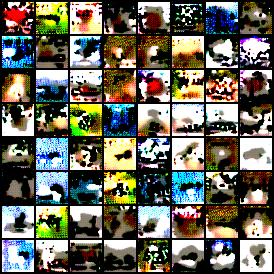}
        \subcaption{$\alpha=1e^{-9}$, $\mathrm{FID}=127.38$}
    \end{subfigure}
    \begin{subfigure}{0.32\textwidth}
        \centering
        \includegraphics[width=0.99\linewidth]{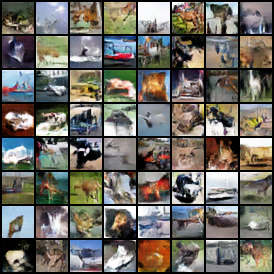}
        \subcaption{$\alpha=1e^{-7}$, $\mathrm{FID}=18.68$}
    \end{subfigure}
    \begin{subfigure}{0.32\textwidth}
        \centering
        \includegraphics[width=0.99\linewidth]{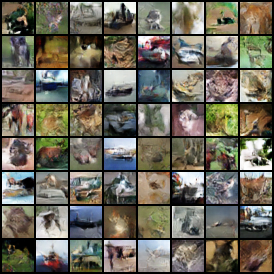}
        \subcaption{$\alpha=1e^{-5}$, $\mathrm{FID}=34.78$}
    \end{subfigure}
    \begin{subfigure}{0.32\textwidth}
        \centering
        \includegraphics[width=0.99\linewidth]{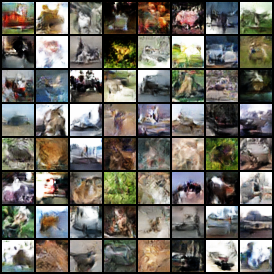}
        \subcaption{$\alpha=1e^{-3}$, $\mathrm{FID}=33.17$}
    \end{subfigure}
    \begin{subfigure}{0.32\textwidth}
        \centering
        \includegraphics[width=0.99\linewidth]{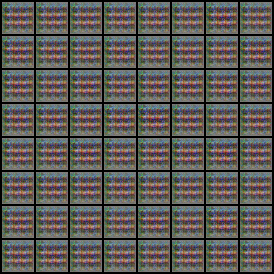}
        \subcaption{$\alpha=1e^{-1}$, $\mathrm{FID}=282.11$}
    \end{subfigure}
    \begin{subfigure}{0.6\textwidth}
        \includegraphics[width=.9\linewidth]{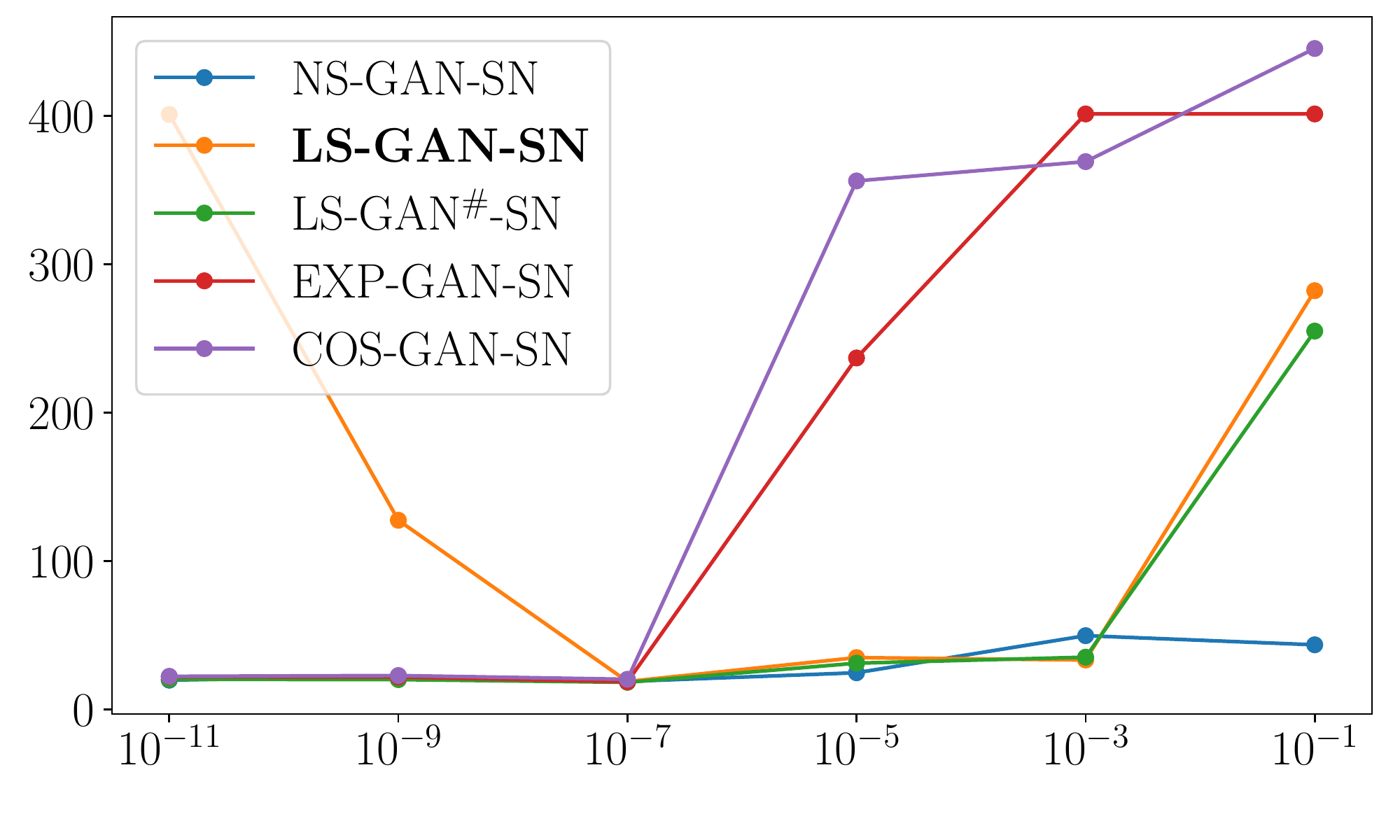}
    \end{subfigure}
\end{center}
  \caption{Samples of randomly generated images with LS-GAN-SN of varying $\alpha$ ($k_{SN}=50.0$, CIFAR10). For the line plot, $x$-axis shows $\alpha$ (in log scale) and $y$-axis shows the FID scores.}
\label{fig:Scale_LSGANSN_CIFAR10_k_50}
\end{figure*}

\begin{figure*}[h]
\begin{center}
    \begin{subfigure}{0.32\textwidth}
        \centering
        \includegraphics[width=0.99\linewidth]{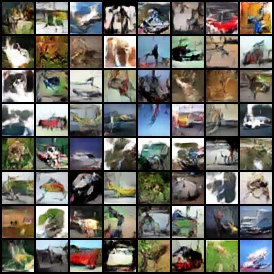}
        \subcaption{$\alpha=1e^{-11}$, $\mathrm{FID}=20.16$}
    \end{subfigure}
    \begin{subfigure}{0.32\textwidth}
        \centering
        \includegraphics[width=0.99\linewidth]{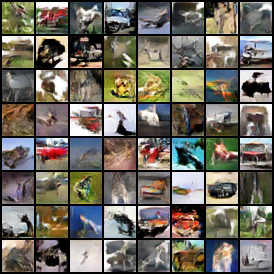}
        \subcaption{$\alpha=1e^{-9}$, $\mathrm{FID}=19.96$}
    \end{subfigure}
    \begin{subfigure}{0.32\textwidth}
        \centering
        \includegraphics[width=0.99\linewidth]{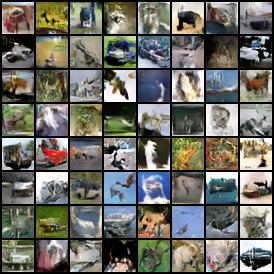}
        \subcaption{$\alpha=1e^{-7}$, $\mathrm{FID}=18.13$}
    \end{subfigure}
    \begin{subfigure}{0.32\textwidth}
        \centering
        \includegraphics[width=0.99\linewidth]{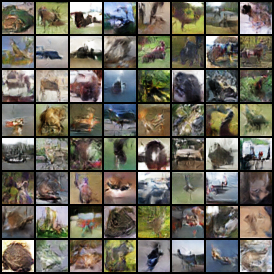}
        \subcaption{$\alpha=1e^{-5}$, $\mathrm{FID}=31.03$}
    \end{subfigure}
    \begin{subfigure}{0.32\textwidth}
        \centering
        \includegraphics[width=0.99\linewidth]{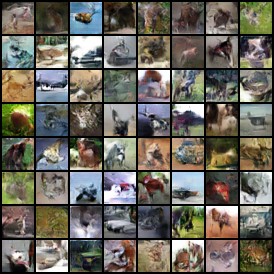}
        \subcaption{$\alpha=1e^{-3}$, $\mathrm{FID}=35.06$}
    \end{subfigure}
    \begin{subfigure}{0.32\textwidth}
        \centering
        \includegraphics[width=0.99\linewidth]{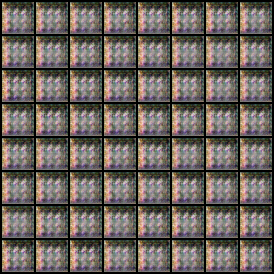}
        \subcaption{$\alpha=1e^{-1}$, $\mathrm{FID}=254.88$}
    \end{subfigure}
    \begin{subfigure}{0.6\textwidth}
        \includegraphics[width=.9\linewidth]{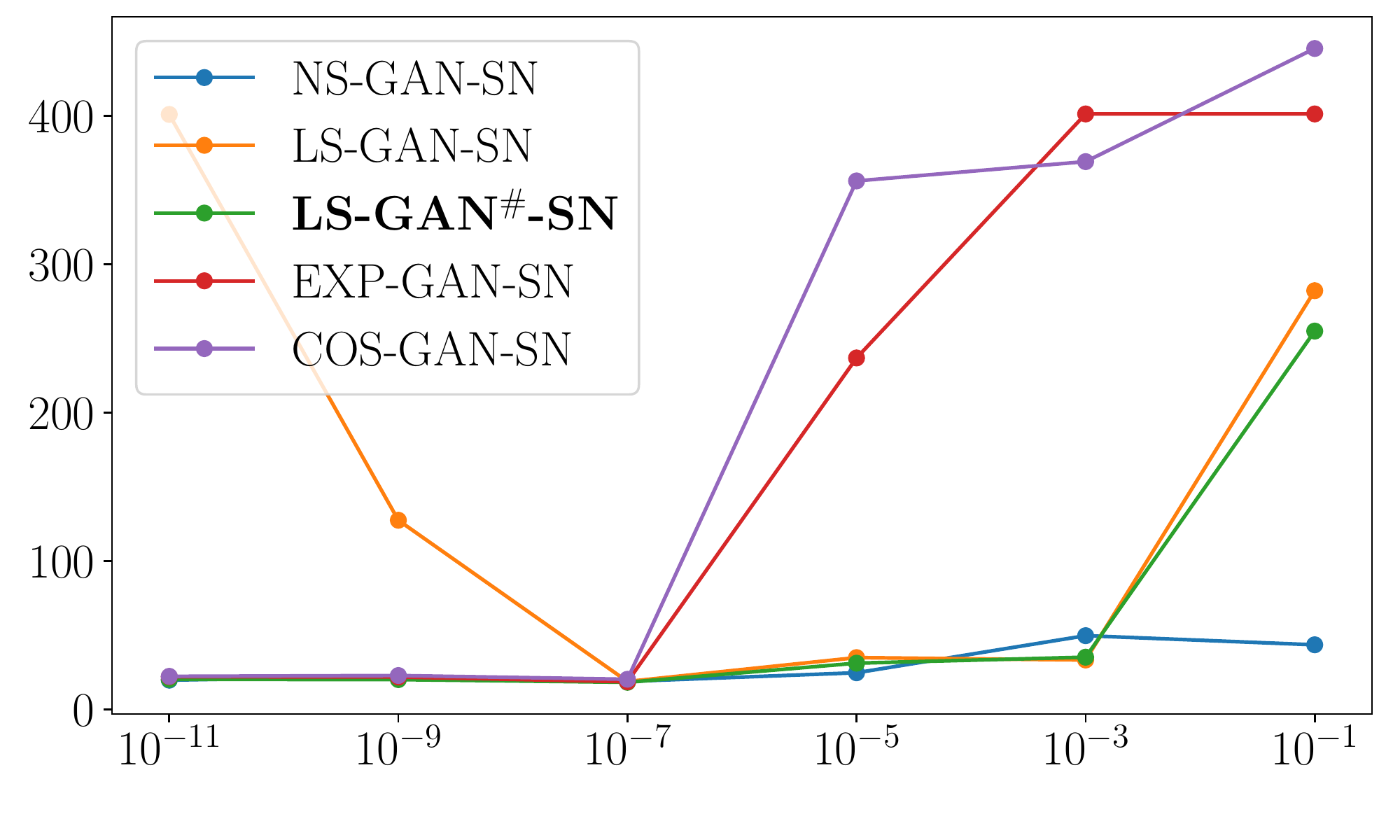}
    \end{subfigure}
\end{center}
  \caption{Samples of randomly generated images with LS-GAN$^\#$-SN of varying $\alpha$ ($k_{SN}=50.0$, CIFAR10). For the line plot, $x$-axis shows $\alpha$ (in log scale) and $y$-axis shows the FID scores.}
\label{fig:Scale_LSGANSN_zero_centered_CIFAR10_k_50}
\end{figure*}

\begin{figure*}[h]
\begin{center}
    \begin{subfigure}{0.32\textwidth}
        \centering
        \includegraphics[width=0.99\linewidth]{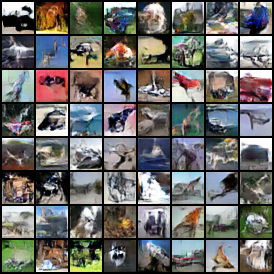}
        \subcaption{$\alpha=1e^{-11}$, $\mathrm{FID}=21.93$}
    \end{subfigure}
    \begin{subfigure}{0.32\textwidth}
        \centering
        \includegraphics[width=0.99\linewidth]{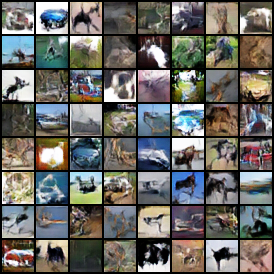}
        \subcaption{$\alpha=1e^{-9}$, $\mathrm{FID}=21.70$}
    \end{subfigure}
    \begin{subfigure}{0.32\textwidth}
        \centering
        \includegraphics[width=0.99\linewidth]{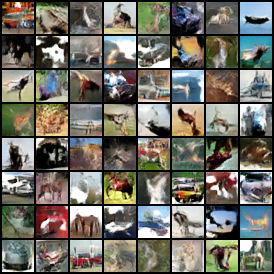}
        \subcaption{$\alpha=1e^{-7}$, $\mathrm{FID}=18.50$}
    \end{subfigure}
    \begin{subfigure}{0.32\textwidth}
        \centering
        \includegraphics[width=0.99\linewidth]{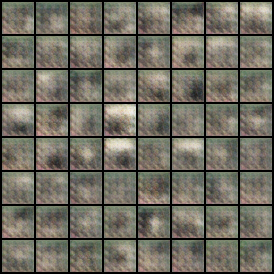}
        \subcaption{$\alpha=1e^{-5}$, $\mathrm{FID}=236.77$}
    \end{subfigure}
    \begin{subfigure}{0.32\textwidth}
        \centering
        \includegraphics[width=0.99\linewidth]{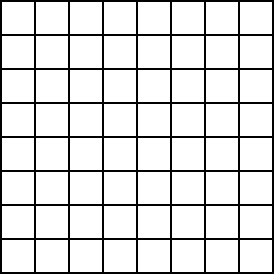}
        \subcaption{$\alpha=1e^{-3}$, $\mathrm{FID}=401.24$}
    \end{subfigure}
    \begin{subfigure}{0.32\textwidth}
        \centering
        \includegraphics[width=0.99\linewidth]{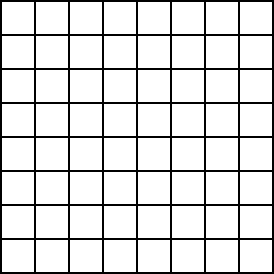}
        \subcaption{$\alpha=1e^{-1}$, $\mathrm{FID}=401.24$}
    \end{subfigure}
    \begin{subfigure}{0.6\textwidth}
        \includegraphics[width=.9\linewidth]{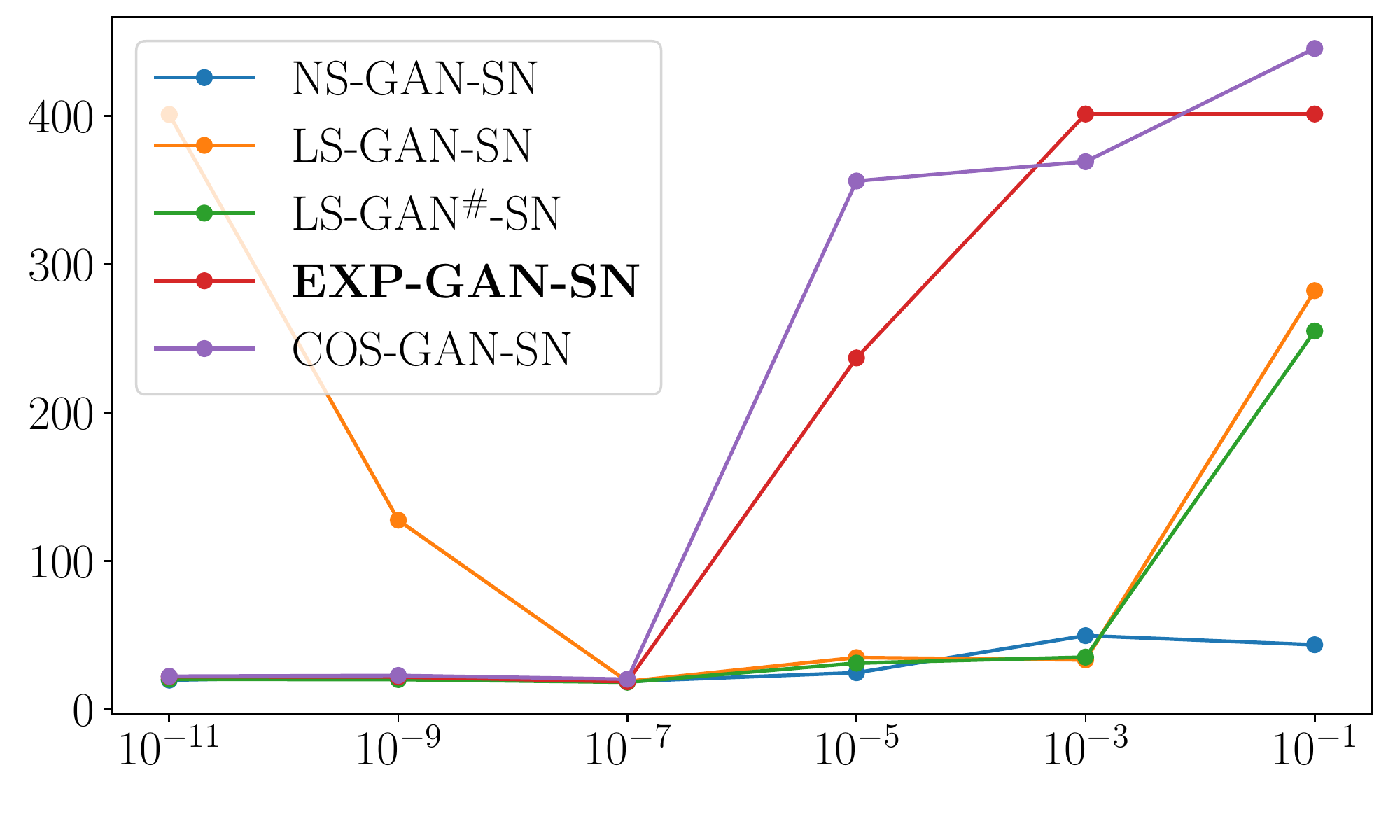}
    \end{subfigure}
\end{center}
  \caption{Samples of randomly generated images with EXP-GAN-SN of varying $\alpha$ ($k_{SN}=50.0$, CIFAR10). For the line plot, $x$-axis shows $\alpha$ (in log scale) and $y$-axis shows the FID scores.}
\label{fig:Scale_EXPGANSN_CIFAR10_k_50}
\end{figure*}

\begin{figure*}[h]
\begin{center}
    \begin{subfigure}{0.32\textwidth}
        \centering
        \includegraphics[width=0.99\linewidth]{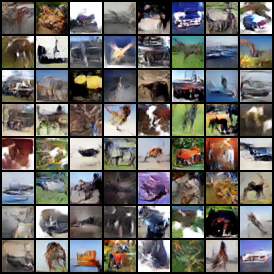}
        \subcaption{$\alpha=1e^{-11}$, $\mathrm{FID}=22.16$}
    \end{subfigure}
    \begin{subfigure}{0.32\textwidth}
        \centering
        \includegraphics[width=0.99\linewidth]{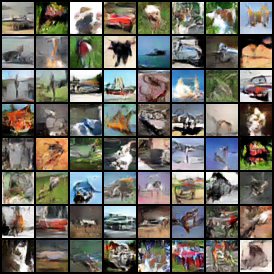}
        \subcaption{$\alpha=1e^{-9}$, $\mathrm{FID}=22.69$}
    \end{subfigure}
    \begin{subfigure}{0.32\textwidth}
        \centering
        \includegraphics[width=0.99\linewidth]{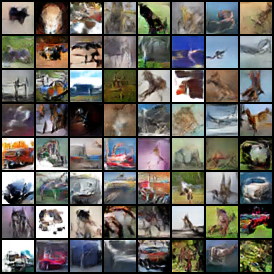}
        \subcaption{$\alpha=1e^{-7}$, $\mathrm{FID}=20.19$}
    \end{subfigure}
    \begin{subfigure}{0.32\textwidth}
        \centering
        \includegraphics[width=0.99\linewidth]{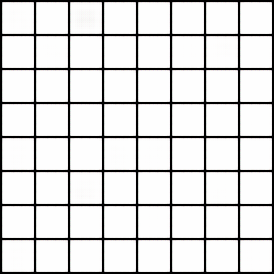}
        \subcaption{$\alpha=1e^{-5}$, $\mathrm{FID}=356.10$}
    \end{subfigure}
    \begin{subfigure}{0.32\textwidth}
        \centering
        \includegraphics[width=0.99\linewidth]{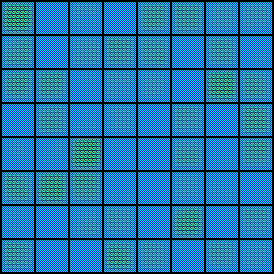}
        \subcaption{$\alpha=1e^{-3}$, $\mathrm{FID}=369.11$}
    \end{subfigure}
    \begin{subfigure}{0.32\textwidth}
        \centering
        \includegraphics[width=0.99\linewidth]{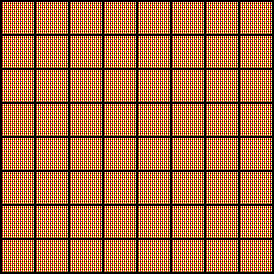}
        \subcaption{$\alpha=1e^{-1}$, $\mathrm{FID}=445.37$}
    \end{subfigure}
    \begin{subfigure}{0.6\textwidth}
        \includegraphics[width=.9\linewidth]{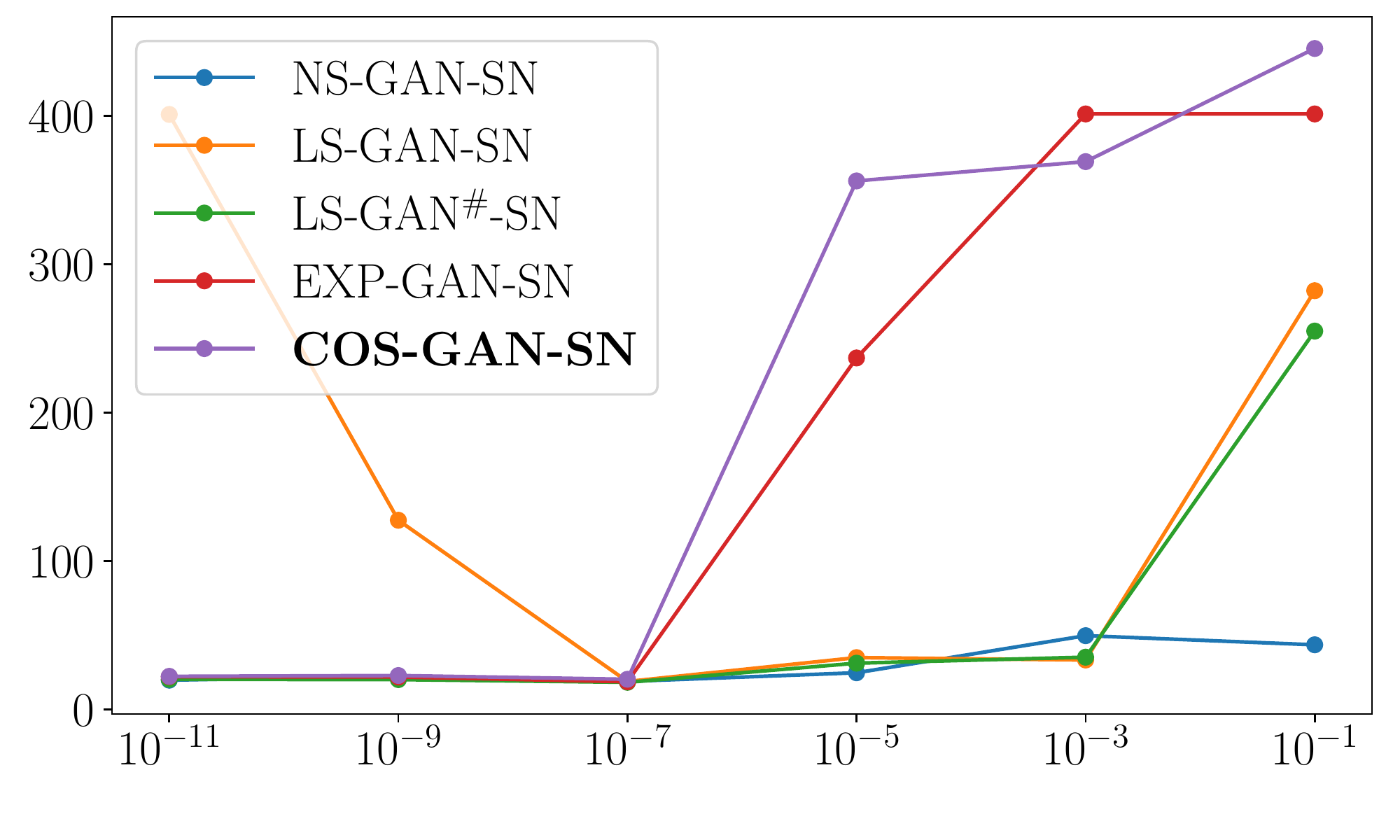}
    \end{subfigure}
\end{center}
  \caption{Samples of randomly generated images with COS-GAN-SN of varying $\alpha$ ($k_{SN}=50.0$, CIFAR10). For the line plot, $x$-axis shows $\alpha$ (in log scale) and $y$-axis shows the FID scores.}
\label{fig:Scale_COSGANSN_CIFAR10_k_50}
\end{figure*}

\FloatBarrier
\newpage

\vspace*{\fill}
\begin{table*}[h!]
\centering
\captionsetup{justification=centering}
\caption*{{\LARGE FID scores v.s. $\alpha$ ($k_{SN}=50.0$) of Different Loss Functions\\ \vspace*{5mm}
- CelebA -}}
\end{table*}
\vspace*{\fill}

\FloatBarrier
\newpage

\begin{figure*}[h]
\begin{center}
    \begin{subfigure}{0.32\textwidth}
        \centering
        \includegraphics[width=0.99\linewidth]{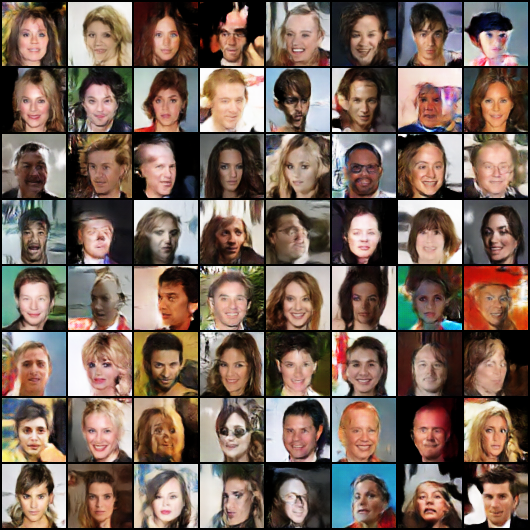}
        \subcaption{$\alpha=1e^{-11}$, $\mathrm{FID}=9.08$}
    \end{subfigure}
    \begin{subfigure}{0.32\textwidth}
        \centering
        \includegraphics[width=0.99\linewidth]{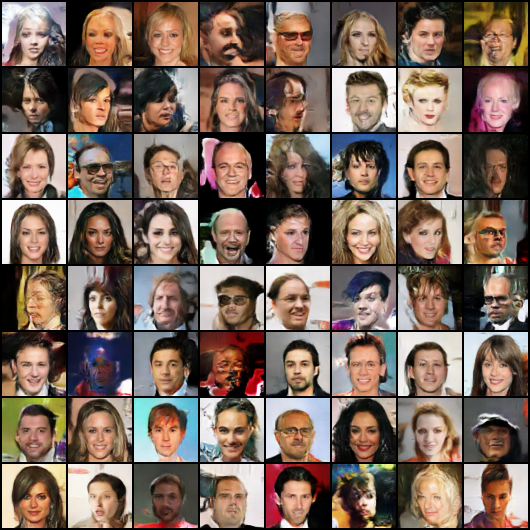}
        \subcaption{$\alpha=1e^{-9}$, $\mathrm{FID}=7.05$}
    \end{subfigure}
    \begin{subfigure}{0.32\textwidth}
        \centering
        \includegraphics[width=0.99\linewidth]{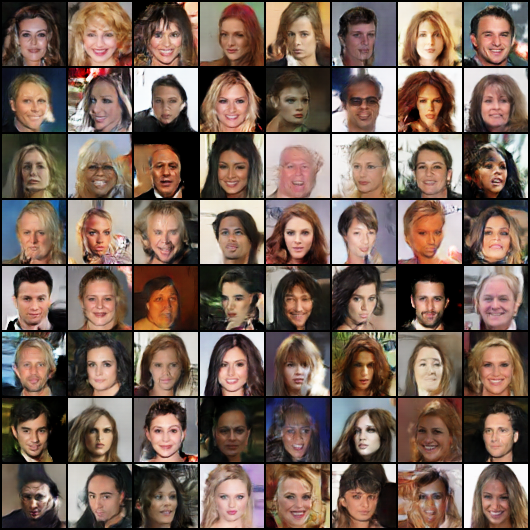}
        \subcaption{$\alpha=1e^{-7}$, $\mathrm{FID}=7.84$}
    \end{subfigure}
    \begin{subfigure}{0.32\textwidth}
        \centering
        \includegraphics[width=0.99\linewidth]{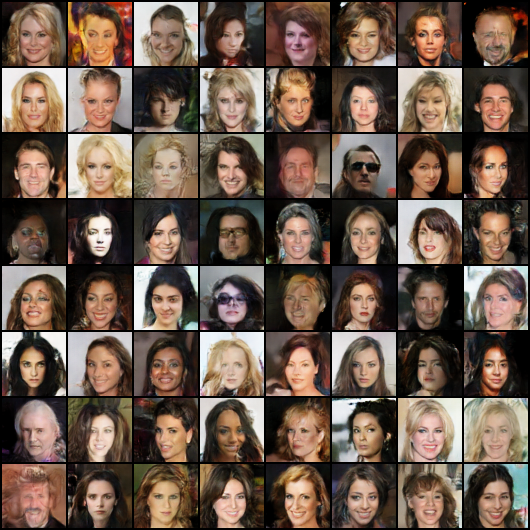}
        \subcaption{$\alpha=1e^{-5}$, $\mathrm{FID}=18.51$}
    \end{subfigure}
    \begin{subfigure}{0.32\textwidth}
        \centering
        \includegraphics[width=0.99\linewidth]{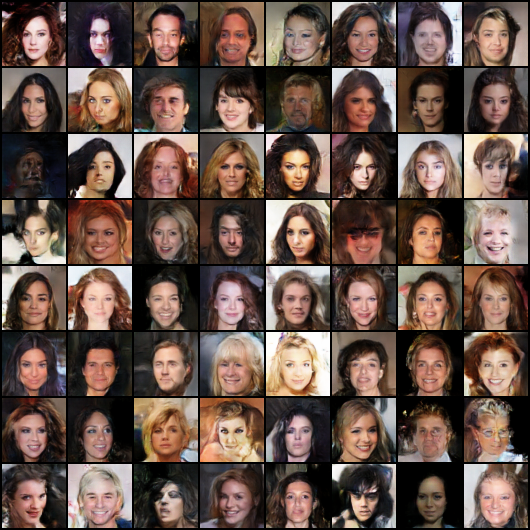}
        \subcaption{$\alpha=1e^{-3}$, $\mathrm{FID}=18.41$}
    \end{subfigure}
    \begin{subfigure}{0.32\textwidth}
        \centering
        \includegraphics[width=0.99\linewidth]{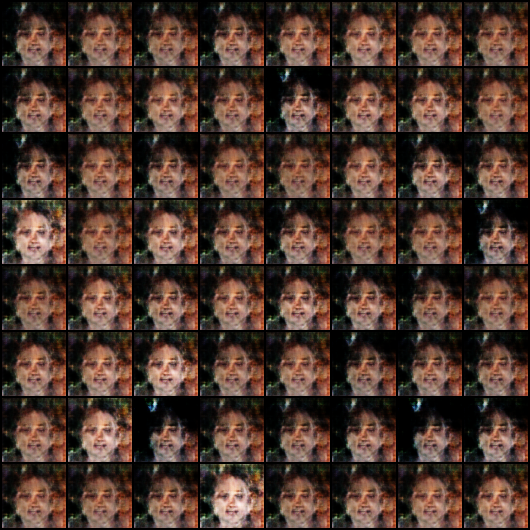}
        \subcaption{$\alpha=1e^{-1}$, $\mathrm{FID}=242.64$}
    \end{subfigure}
    \begin{subfigure}{0.6\textwidth}
        \includegraphics[width=.9\linewidth]{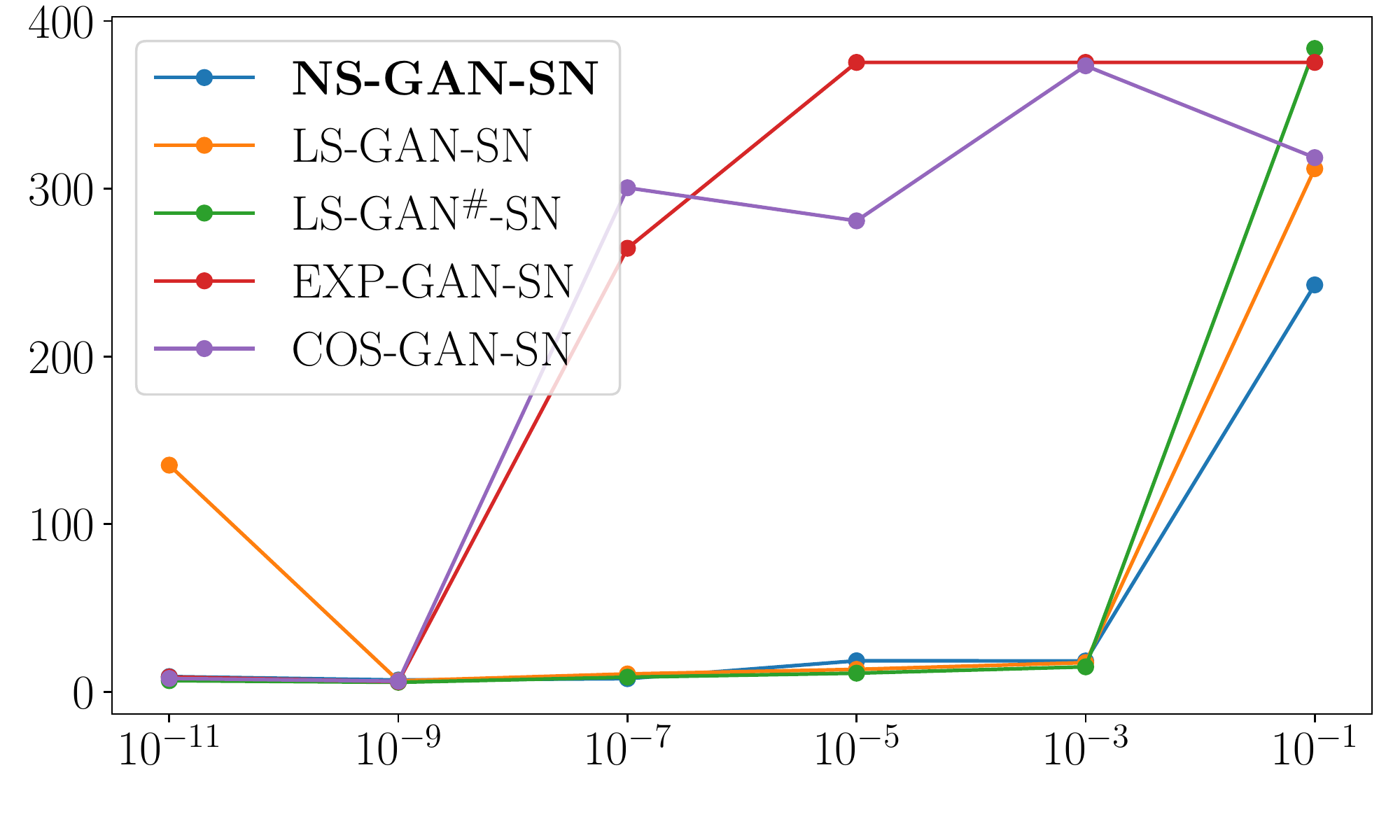}
    \end{subfigure}
\end{center}
  \caption{Samples of randomly generated images with NS-GAN-SN of varying $\alpha$ ($k_{SN}=50.0$, CelebA). For the line plot, $x$-axis shows $\alpha$ (in log scale) and $y$-axis shows the FID scores.}
\label{fig:Scale_NSGANSN_CelebA_k_50}
\end{figure*}

\begin{figure*}[h]
\begin{center}
    \begin{subfigure}{0.32\textwidth}
        \centering
        \includegraphics[width=0.99\linewidth]{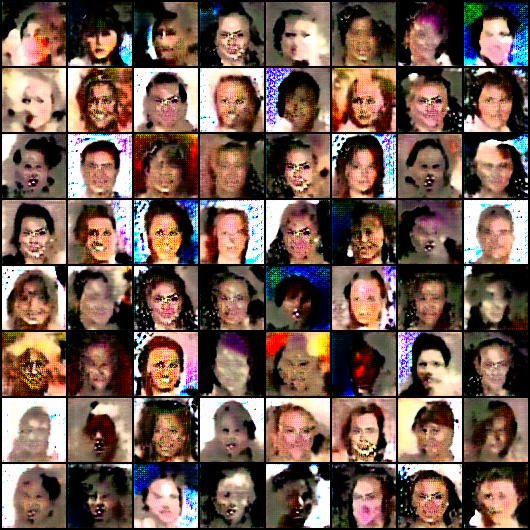}
        \subcaption{$\alpha=1e^{-11}$, $\mathrm{FID}$=135.17}
    \end{subfigure}
    \begin{subfigure}{0.32\textwidth}
        \centering
        \includegraphics[width=0.99\linewidth]{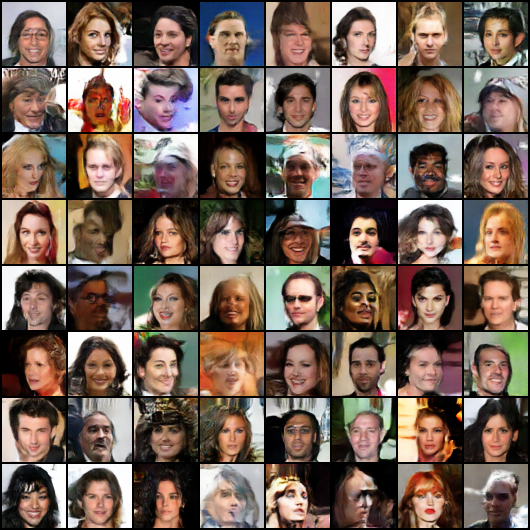}
        \subcaption{$\alpha=1e^{-9}$, $\mathrm{FID}=6.57$}
    \end{subfigure}
    \begin{subfigure}{0.32\textwidth}
        \centering
        \includegraphics[width=0.99\linewidth]{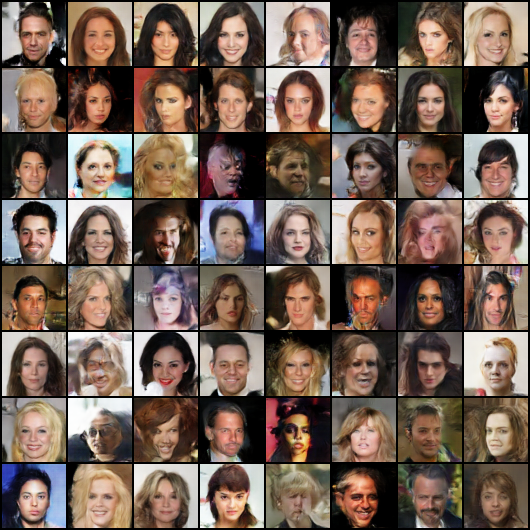}
        \subcaption{$\alpha=1e^{-7}$, $\mathrm{FID}=10.67$}
    \end{subfigure}
    \begin{subfigure}{0.32\textwidth}
        \centering
        \includegraphics[width=0.99\linewidth]{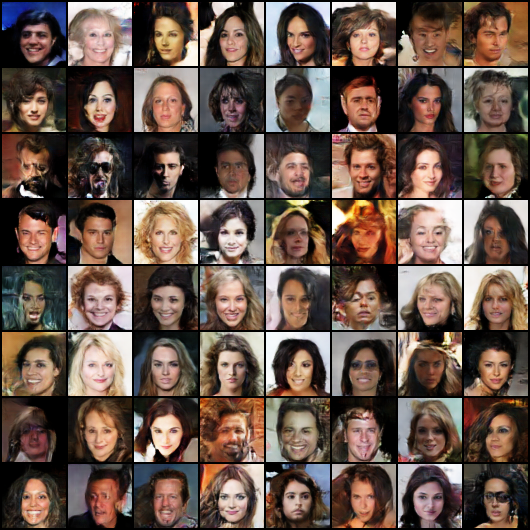}
        \subcaption{$\alpha=1e^{-5}$, $\mathrm{FID}=13.39$}
    \end{subfigure}
    \begin{subfigure}{0.32\textwidth}
        \centering
        \includegraphics[width=0.99\linewidth]{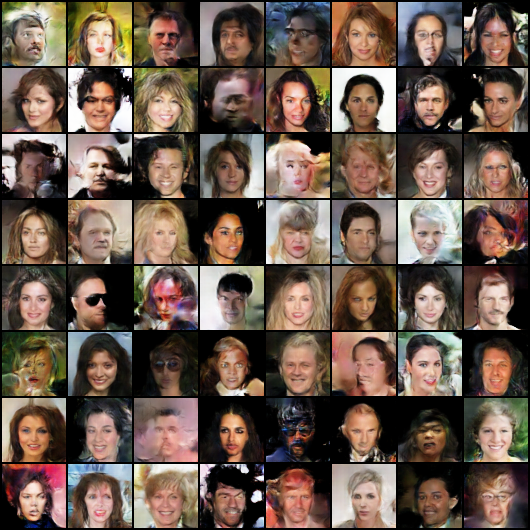}
        \subcaption{$\alpha=1e^{-3}$, $\mathrm{FID}=17.42$}
    \end{subfigure}
    \begin{subfigure}{0.32\textwidth}
        \centering
        \includegraphics[width=0.99\linewidth]{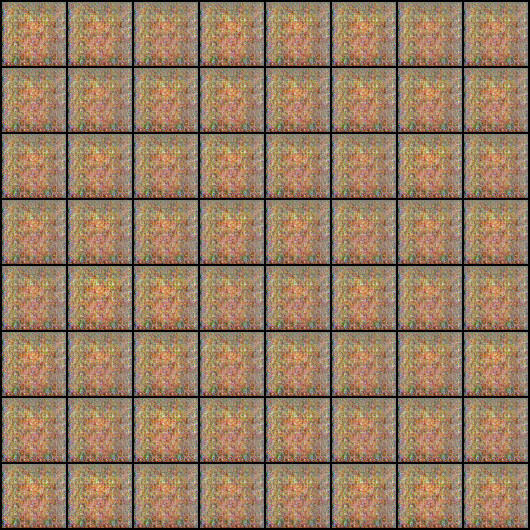}
        \subcaption{$\alpha=1e^{-1}$, $\mathrm{FID}=311.93$}
    \end{subfigure}
    \begin{subfigure}{0.6\textwidth}
        \includegraphics[width=.9\linewidth]{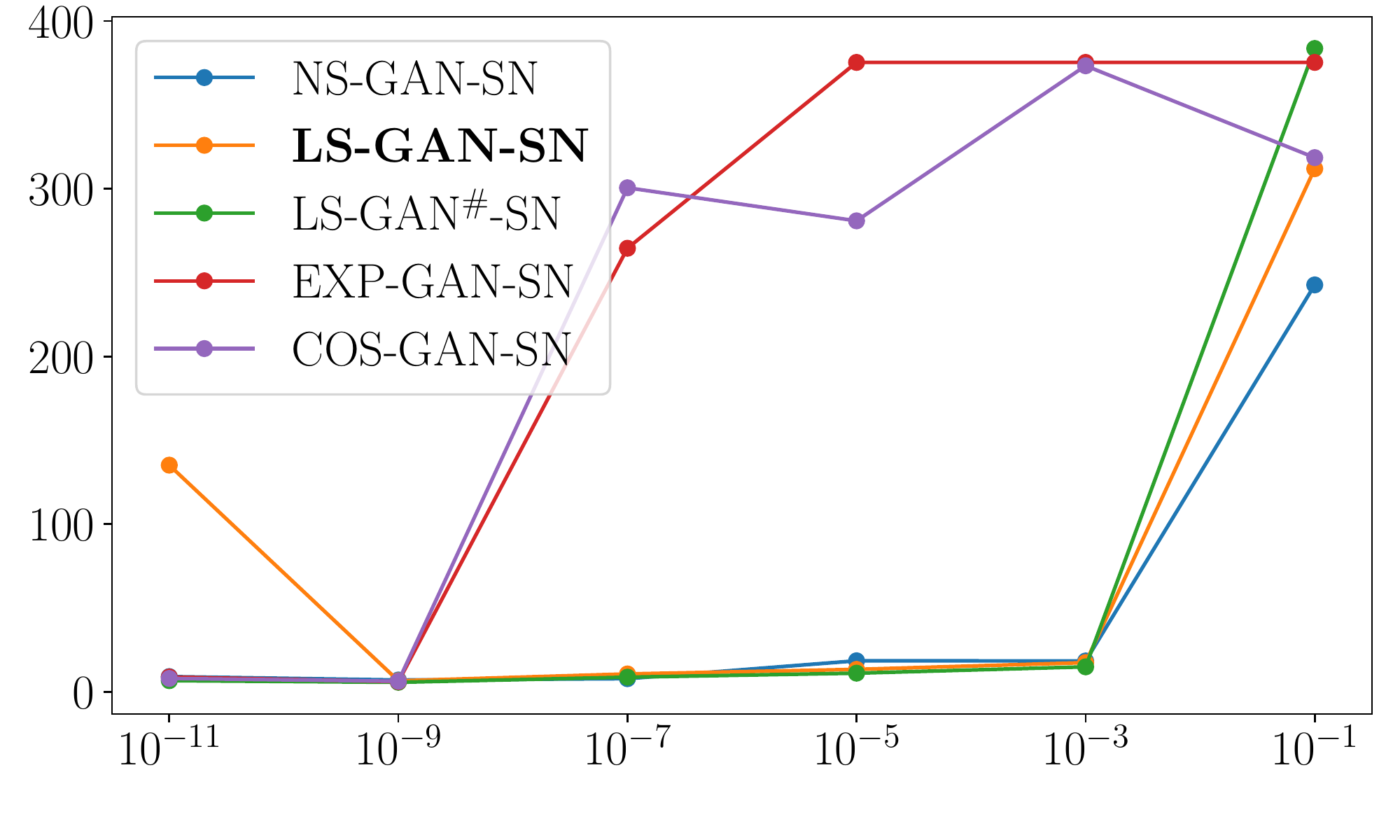}
    \end{subfigure}
\end{center}
  \caption{Samples of randomly generated images with LS-GAN-SN of varying $\alpha$ ($k_{SN}=50.0$, CelebA). For the line plot, $x$-axis shows $\alpha$ (in log scale) and $y$-axis shows the FID scores.}
\label{fig:Scale_LSGANSN_CelebA_k_50}
\end{figure*}

\begin{figure*}[h]
\begin{center}
    \begin{subfigure}{0.32\textwidth}
        \centering
        \includegraphics[width=0.99\linewidth]{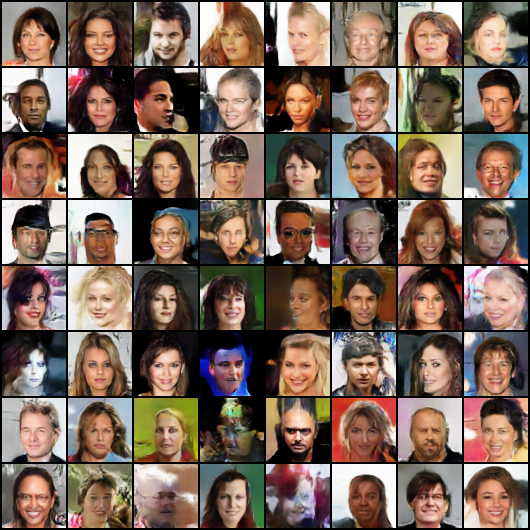}
        \subcaption{$\alpha=1e^{-11}$, $\mathrm{FID}=6.66$}
    \end{subfigure}
    \begin{subfigure}{0.32\textwidth}
        \centering
        \includegraphics[width=0.99\linewidth]{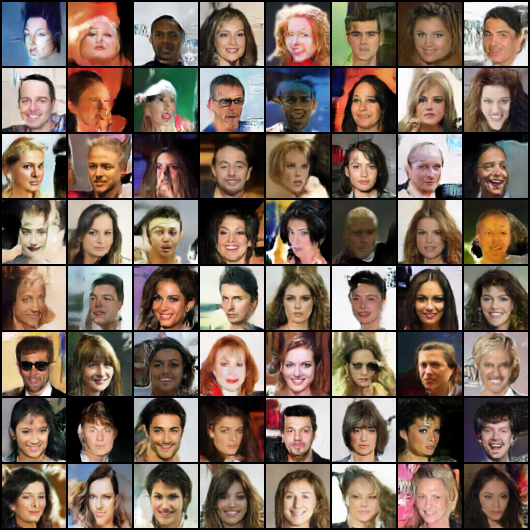}
        \subcaption{$\alpha=1e^{-9}$, $\mathrm{FID}=5.68$}
    \end{subfigure}
    \begin{subfigure}{0.32\textwidth}
        \centering
        \includegraphics[width=0.99\linewidth]{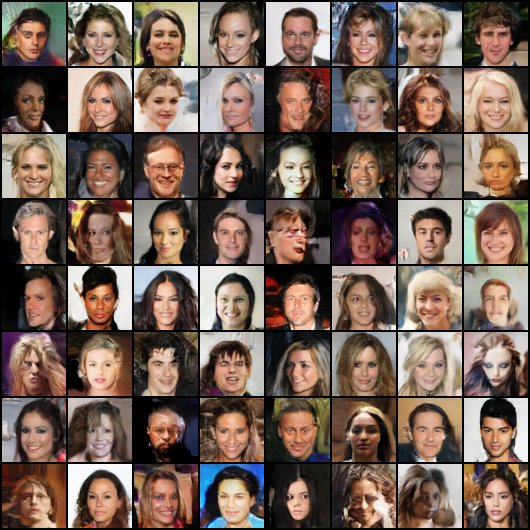}
        \subcaption{$\alpha=1e^{-7}$, $\mathrm{FID}=8.72$}
    \end{subfigure}
    \begin{subfigure}{0.32\textwidth}
        \centering
        \includegraphics[width=0.99\linewidth]{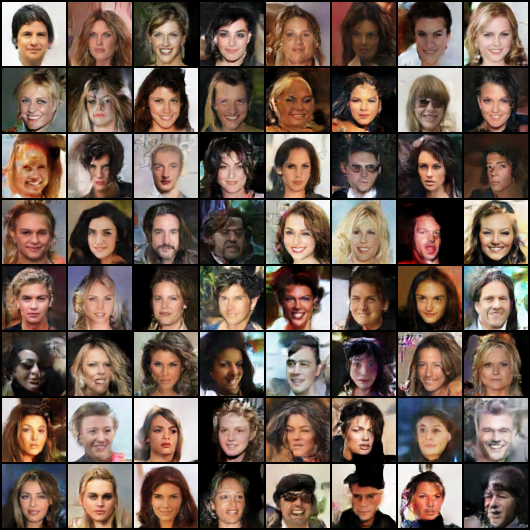}
        \subcaption{$\alpha=1e^{-5}$, $\mathrm{FID}=11.13$}
    \end{subfigure}
    \begin{subfigure}{0.32\textwidth}
        \centering
        \includegraphics[width=0.99\linewidth]{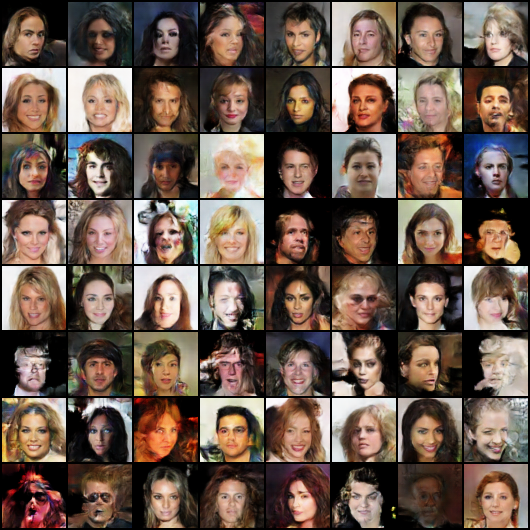}
        \subcaption{$\alpha=1e^{-3}$, $\mathrm{FID}=14.90$}
    \end{subfigure}
    \begin{subfigure}{0.32\textwidth}
        \centering
        \includegraphics[width=0.99\linewidth]{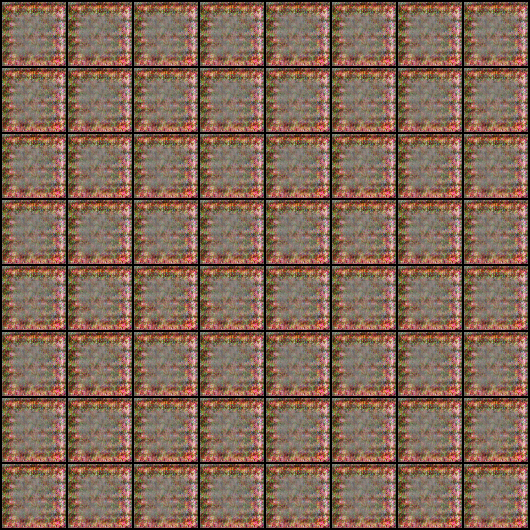}
        \subcaption{$\alpha=1e^{-1}$, $\mathrm{FID}=383.61$}
    \end{subfigure}
    \begin{subfigure}{0.6\textwidth}
        \includegraphics[width=.9\linewidth]{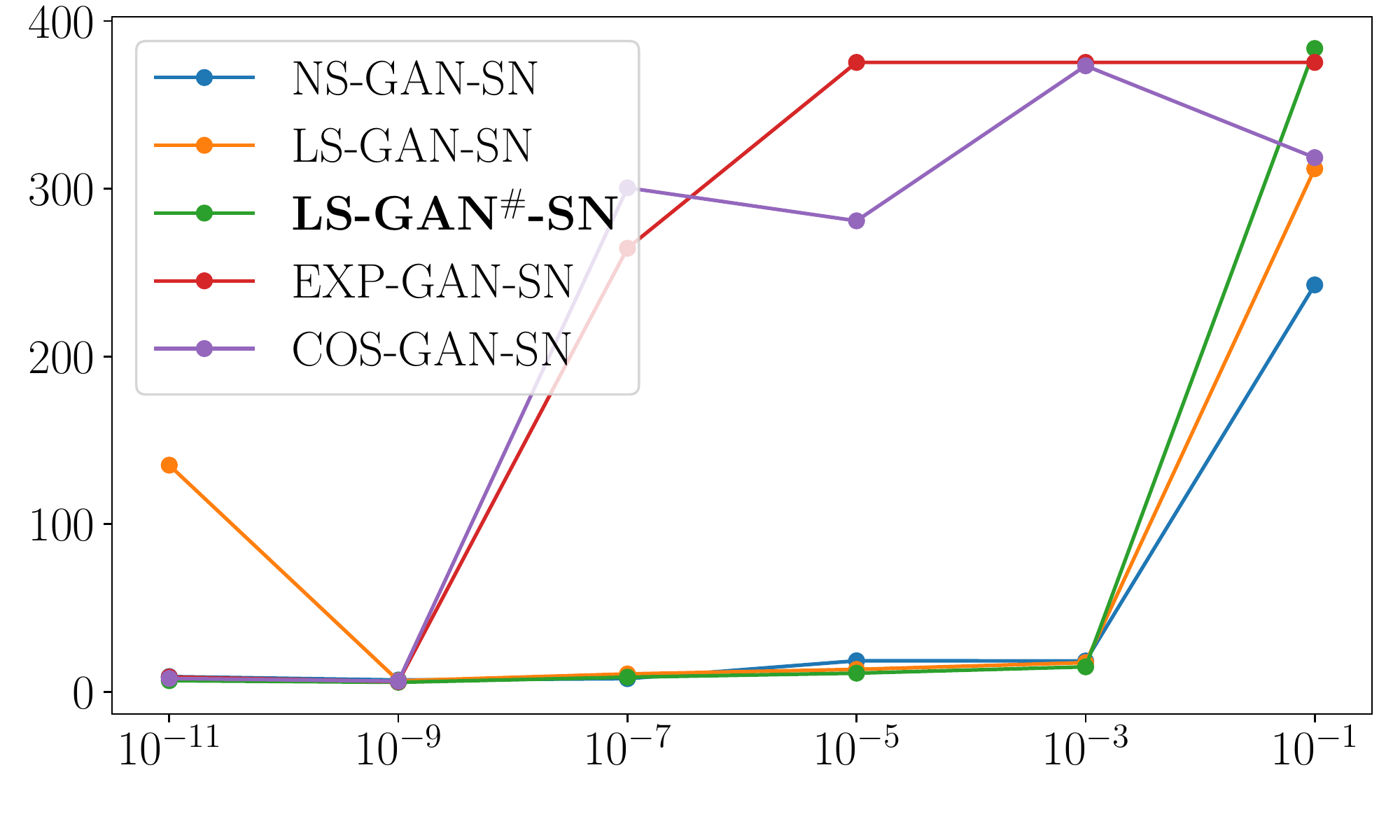}
    \end{subfigure}
\end{center}
  \caption{Samples of randomly generated images with LS-GAN$^\#$-SN of varying $\alpha$ ($k_{SN}=50.0$, CelebA). For the line plot, $x$-axis shows $\alpha$ (in log scale) and $y$-axis shows the FID scores.}
\label{fig:Scale_LSGANSN_zero_centered_CelebA_k_50}
\end{figure*}

\begin{figure*}[h]
\begin{center}
    \begin{subfigure}{0.32\textwidth}
        \centering
        \includegraphics[width=0.99\linewidth]{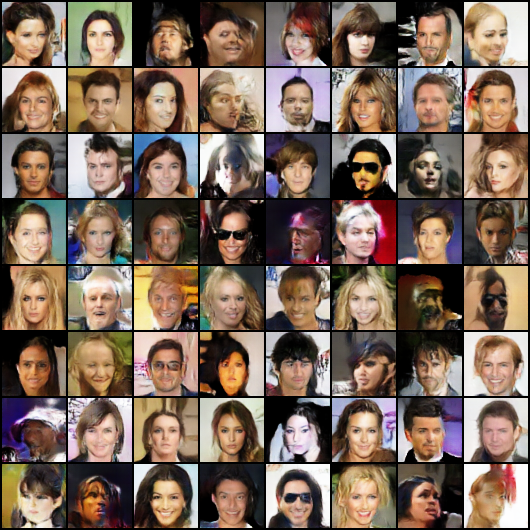}
        \subcaption{$\alpha=1e^{-11}$, $\mathrm{FID}=8.85$}
    \end{subfigure}
    \begin{subfigure}{0.32\textwidth}
        \centering
        \includegraphics[width=0.99\linewidth]{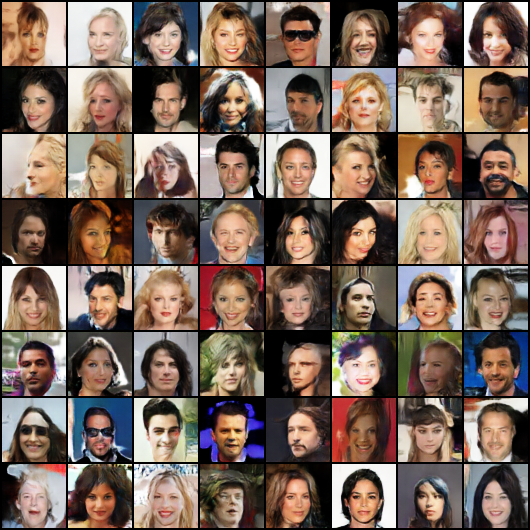}
        \subcaption{$\alpha=1e^{-9}$, $\mathrm{FID}=6.09$}
    \end{subfigure}
    \begin{subfigure}{0.32\textwidth}
        \centering
        \includegraphics[width=0.99\linewidth]{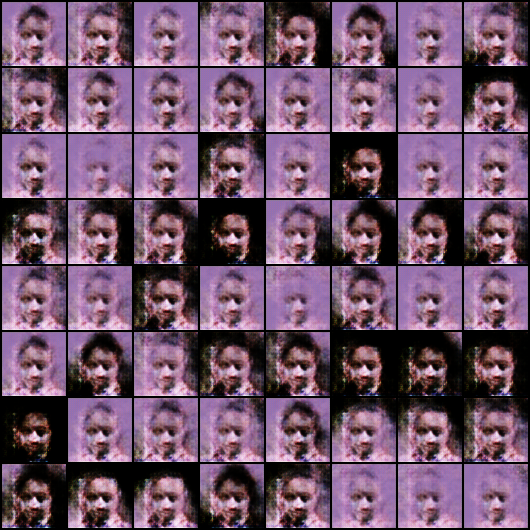}
        \subcaption{$\alpha=1e^{-7}$, $\mathrm{FID}=264.49$}
    \end{subfigure}
    \begin{subfigure}{0.32\textwidth}
        \centering
        \includegraphics[width=0.99\linewidth]{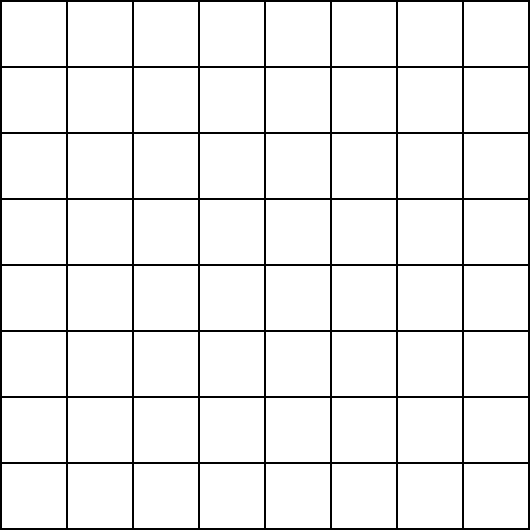}
        \subcaption{$\alpha=1e^{-5}$, $\mathrm{FID}=375.32$}
    \end{subfigure}
    \begin{subfigure}{0.32\textwidth}
        \centering
        \includegraphics[width=0.99\linewidth]{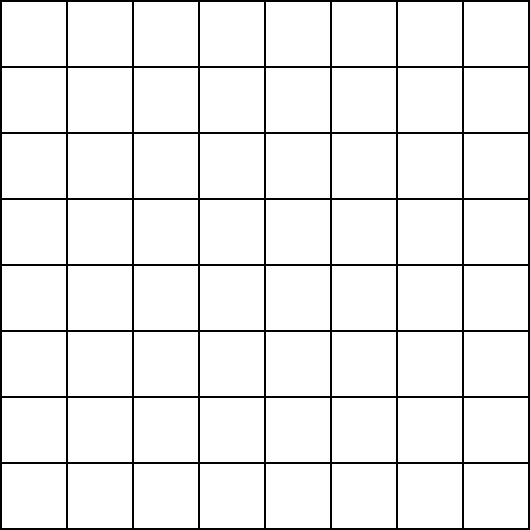}
        \subcaption{$\alpha=1e^{-3}$, $\mathrm{FID}=375.32$}
    \end{subfigure}
    \begin{subfigure}{0.32\textwidth}
        \centering
        \includegraphics[width=0.99\linewidth]{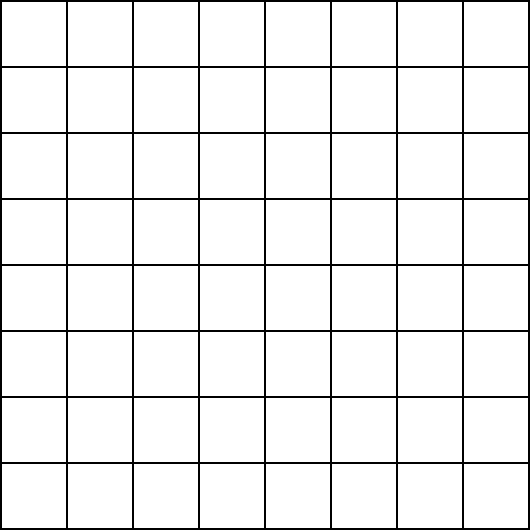}
        \subcaption{$\alpha=1e^{-1}$, $\mathrm{FID}=375.32$}
    \end{subfigure}
    \begin{subfigure}{0.6\textwidth}
        \includegraphics[width=.9\linewidth]{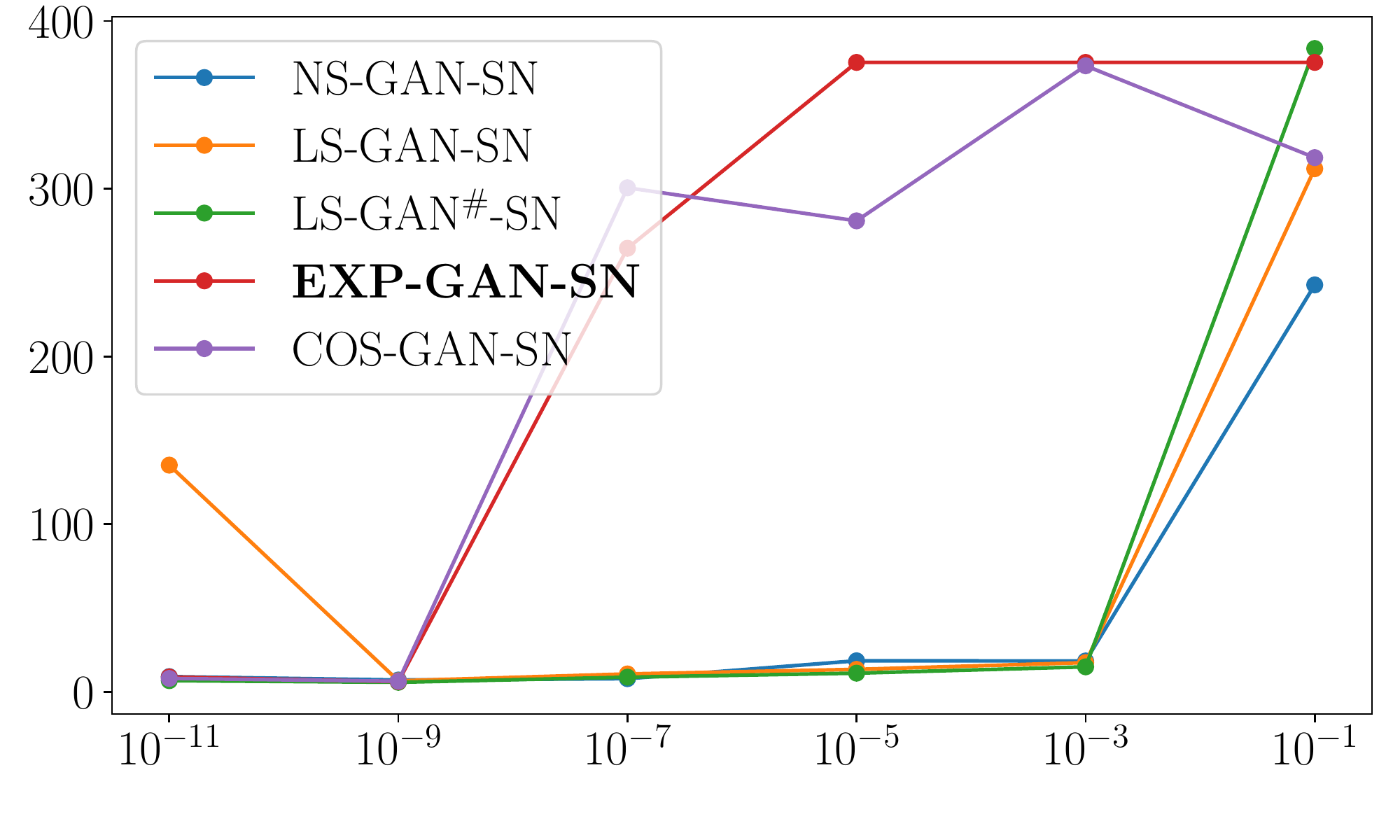}
    \end{subfigure}
\end{center}
  \caption{Samples of randomly generated images with EXP-GAN-SN of varying $\alpha$ ($k_{SN}=50.0$, CelebA). For the line plot, $x$-axis shows $\alpha$ (in log scale) and $y$-axis shows the FID scores.}
\label{fig:Scale_EXPGANSN_CelebA_k_50}
\end{figure*}

\begin{figure*}[h]
\begin{center}
    \begin{subfigure}{0.32\textwidth}
        \centering
        \includegraphics[width=0.99\linewidth]{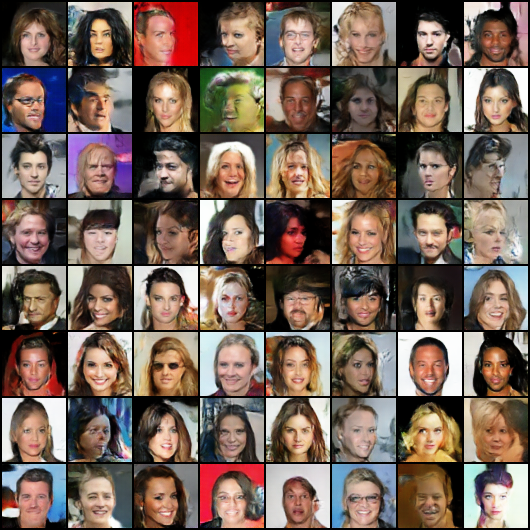}
        \subcaption{$\alpha=1e^{-11}$, $\mathrm{FID}=8.00$}
    \end{subfigure}
    \begin{subfigure}{0.32\textwidth}
        \centering
        \includegraphics[width=0.99\linewidth]{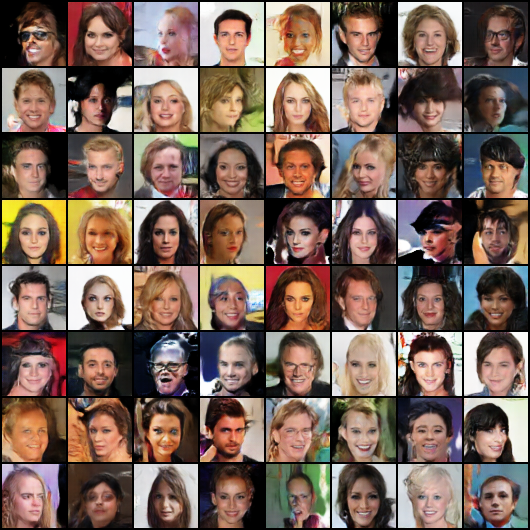}
        \subcaption{$\alpha=1e^{-9}$, $\mathrm{FID}=6.31$}
    \end{subfigure}
    \begin{subfigure}{0.32\textwidth}
        \centering
        \includegraphics[width=0.99\linewidth]{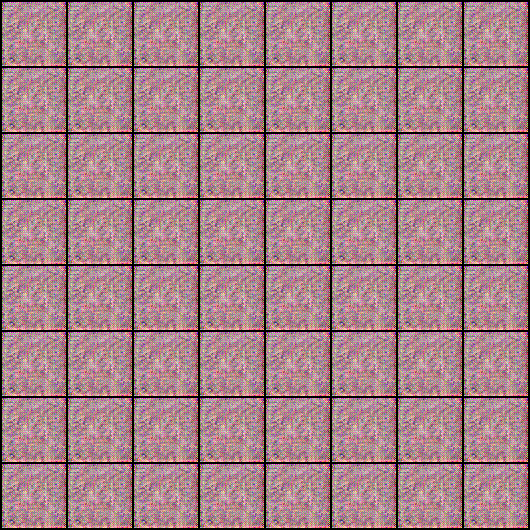}
        \subcaption{$\alpha=1e^{-7}$, $\mathrm{FID}=300.55$}
    \end{subfigure}
    \begin{subfigure}{0.32\textwidth}
        \centering
        \includegraphics[width=0.99\linewidth]{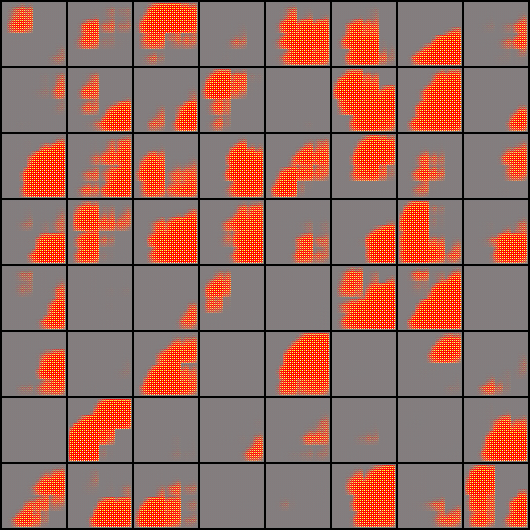}
        \subcaption{$\alpha=1e^{-5}$, $\mathrm{FID}=280.84$}
    \end{subfigure}
    \begin{subfigure}{0.32\textwidth}
        \centering
        \includegraphics[width=0.99\linewidth]{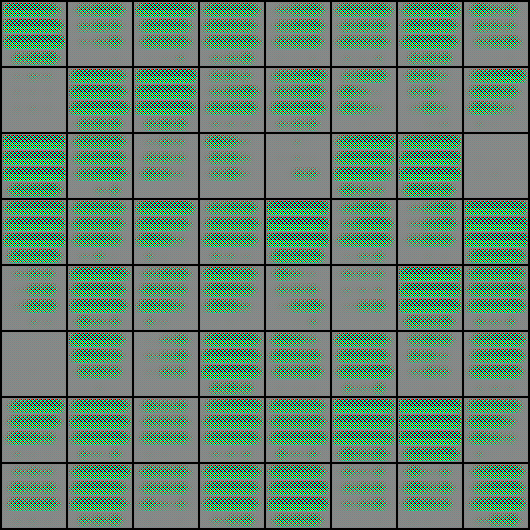}
        \subcaption{$\alpha=1e^{-3}$, $\mathrm{FID}=373.31$}
    \end{subfigure}
    \begin{subfigure}{0.32\textwidth}
        \centering
        \includegraphics[width=0.99\linewidth]{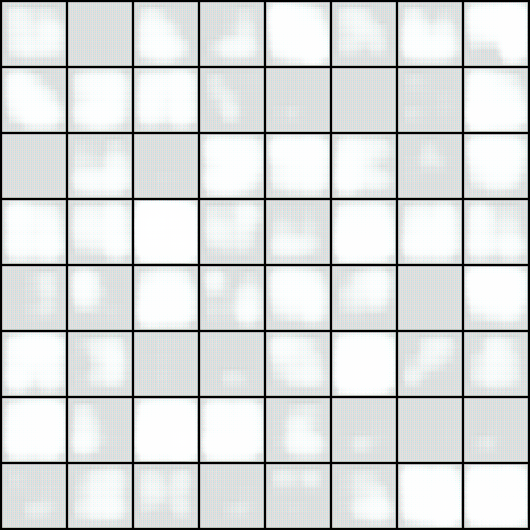}
        \subcaption{$\alpha=1e^{-1}$, $\mathrm{FID}=318.53$}
    \end{subfigure}
    \begin{subfigure}{0.6\textwidth}
        \includegraphics[width=.9\linewidth]{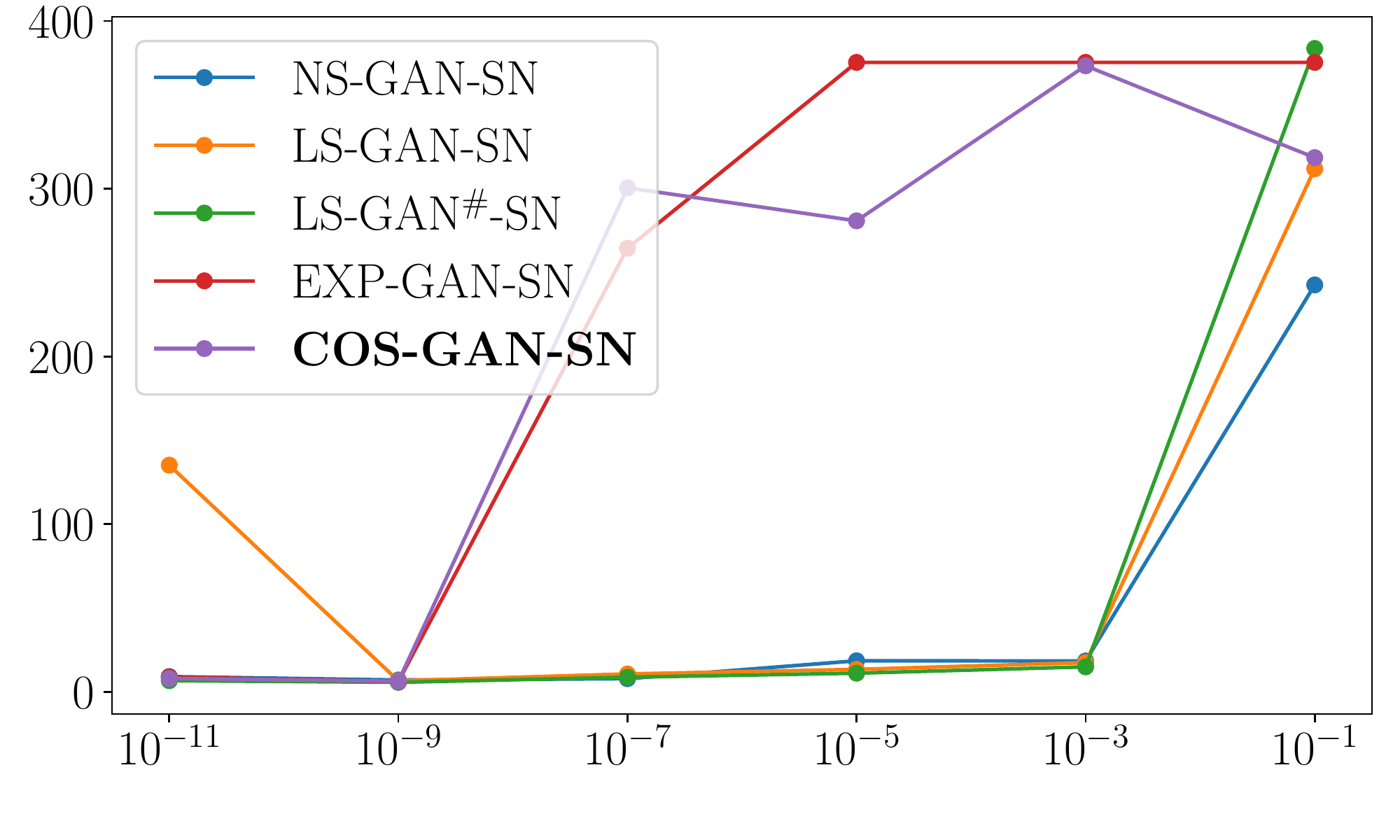}
    \end{subfigure}
\end{center}
  \caption{Samples of randomly generated images with COS-GAN-SN of varying $\alpha$ ($k_{SN}=50.0$, CelebA). For the line plot, $x$-axis shows $\alpha$ (in log scale) and $y$-axis shows the FID scores.}
\label{fig:Scale_COSGANSN_CelebA_k_50}
\end{figure*}

\FloatBarrier
\newpage

\vspace*{\fill}
\begin{table*}[h!]
\centering
\captionsetup{justification=centering}
\caption*{{\LARGE FID scores v.s. $\alpha$ ($k_{SN}=1.0$) of Different Loss Functions\\ \vspace*{5mm}
- MNIST -}}
\end{table*}
\vspace*{\fill}

\FloatBarrier
\newpage

\begin{figure*}[h]
\begin{center}
    \begin{subfigure}{0.32\textwidth}
        \centering
        \includegraphics[width=0.99\linewidth]{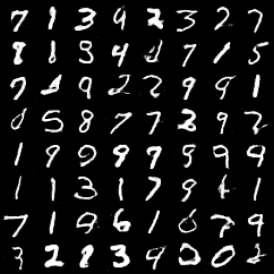}
        \subcaption{$\alpha=1e^{1}$, $\mathrm{FID}=6.55$}
    \end{subfigure}
    \begin{subfigure}{0.32\textwidth}
        \centering
        \includegraphics[width=0.99\linewidth]{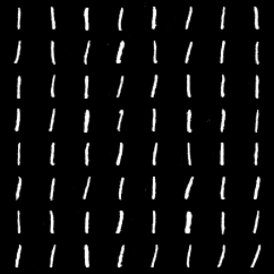}
        \subcaption{$\alpha=1e^{3}$, $\mathrm{FID}=148.97$}
    \end{subfigure}
    \begin{subfigure}{0.32\textwidth}
        \centering
        \includegraphics[width=0.99\linewidth]{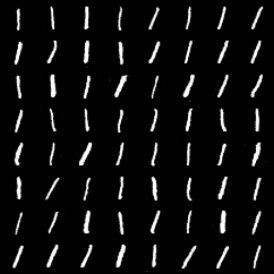}
        \subcaption{$\alpha=1e^{5}$, $\mathrm{FID}=134.44$}
    \end{subfigure}
    \begin{subfigure}{0.32\textwidth}
        \centering
        \includegraphics[width=0.99\linewidth]{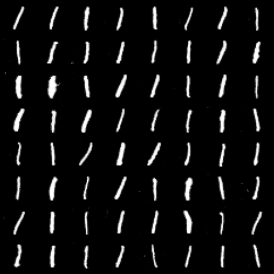}
        \subcaption{$\alpha=1e^{7}$, $\mathrm{FID}=133.82$}
    \end{subfigure}
    \begin{subfigure}{0.32\textwidth}
        \centering
        \includegraphics[width=0.99\linewidth]{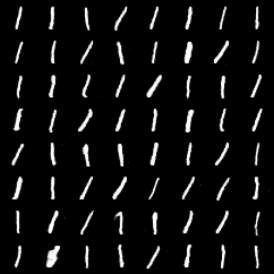}
        \subcaption{$\alpha=1e^{9}$, $\mathrm{FID}=130.21$}
    \end{subfigure}
    \begin{subfigure}{0.32\textwidth}
        \centering
        \includegraphics[width=0.99\linewidth]{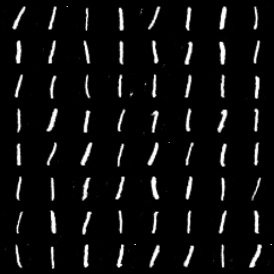}
        \subcaption{$\alpha=1e^{11}$, $\mathrm{FID}=131.87$}
    \end{subfigure}
    \begin{subfigure}{0.6\textwidth}
        \includegraphics[width=.9\linewidth]{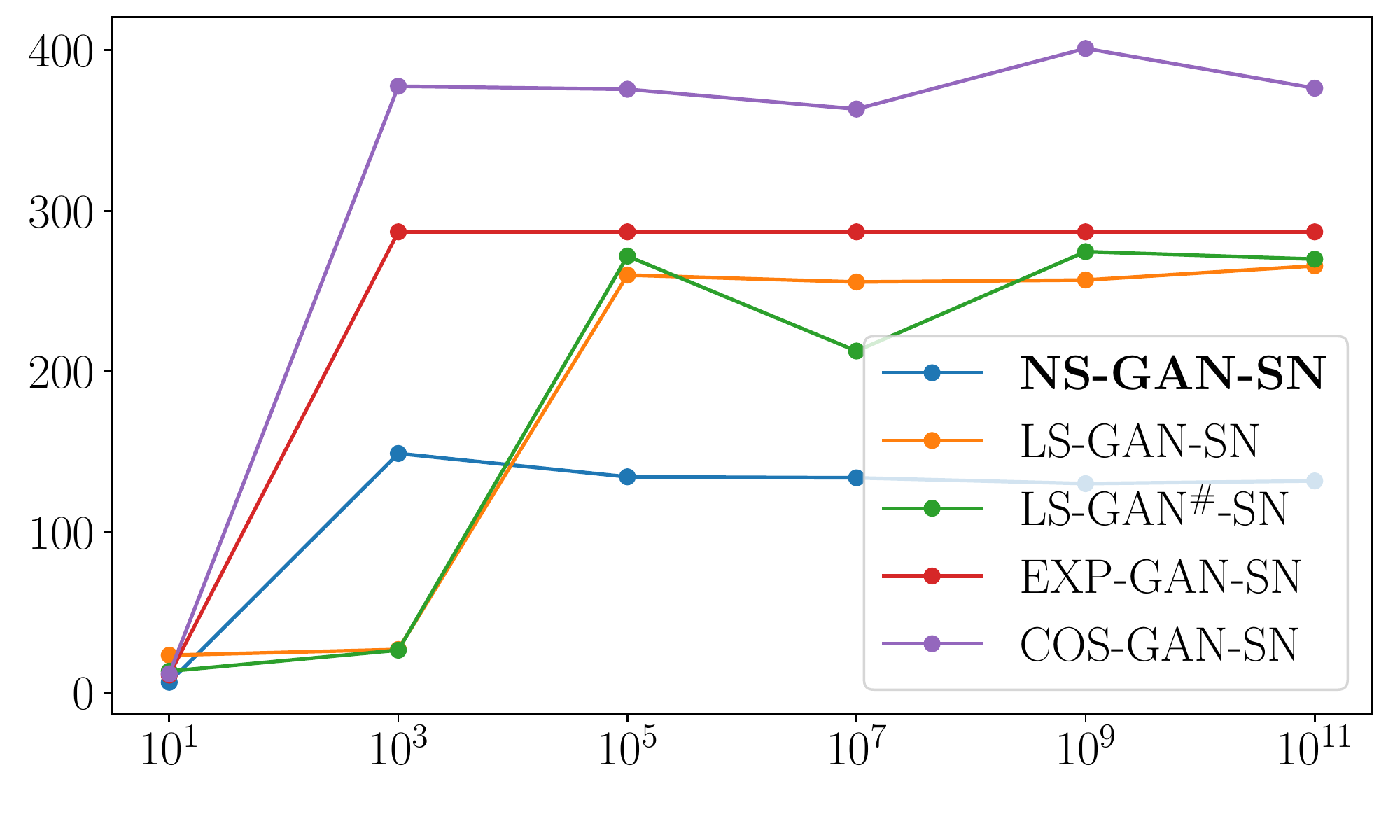}
    \end{subfigure}
\end{center}
  \caption{Samples of randomly generated images with NS-GAN-SN of varying $\alpha$ ($k_{SN}=1.0$, MNIST). For the line plot, $x$-axis shows $\alpha$ (in log scale) and $y$-axis shows the FID scores.}
\label{fig:Scale_NSGANSN_MNIST_k_1}
\end{figure*}

\begin{figure*}[h]
\begin{center}
    \begin{subfigure}{0.32\textwidth}
        \centering
        \includegraphics[width=0.99\linewidth]{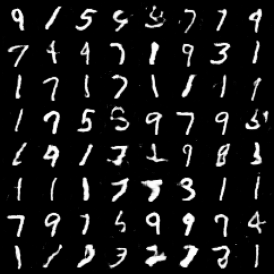}
        \subcaption{$\alpha=1e^{1}$, $\mathrm{FID}=23.37$}
    \end{subfigure}
    \begin{subfigure}{0.32\textwidth}
        \centering
        \includegraphics[width=0.99\linewidth]{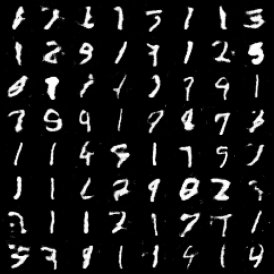}
        \subcaption{$\alpha=1e^{3}$, $\mathrm{FID}=26.96$}
    \end{subfigure}
    \begin{subfigure}{0.32\textwidth}
        \centering
        \includegraphics[width=0.99\linewidth]{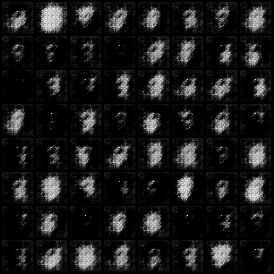}
        \subcaption{$\alpha=1e^{5}$, $\mathrm{FID}=260.05$}
    \end{subfigure}
    \begin{subfigure}{0.32\textwidth}
        \centering
        \includegraphics[width=0.99\linewidth]{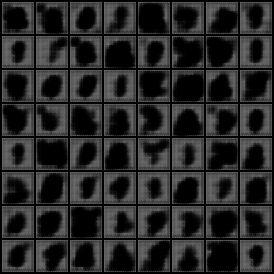}
        \subcaption{$\alpha=1e^{7}$, $\mathrm{FID}=255.73$}
    \end{subfigure}
    \begin{subfigure}{0.32\textwidth}
        \centering
        \includegraphics[width=0.99\linewidth]{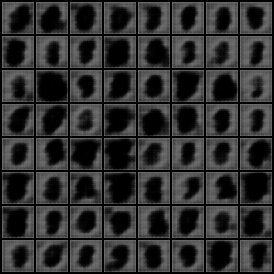}
        \subcaption{$\alpha=1e^{9}$, $\mathrm{FID}=256.96$}
    \end{subfigure}
    \begin{subfigure}{0.32\textwidth}
        \centering
        \includegraphics[width=0.99\linewidth]{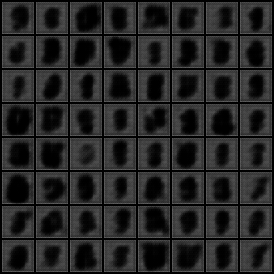}
        \subcaption{$\alpha=1e^{11}$, $\mathrm{FID}=265.76$}
    \end{subfigure}
    \begin{subfigure}{0.6\textwidth}
        \includegraphics[width=.9\linewidth]{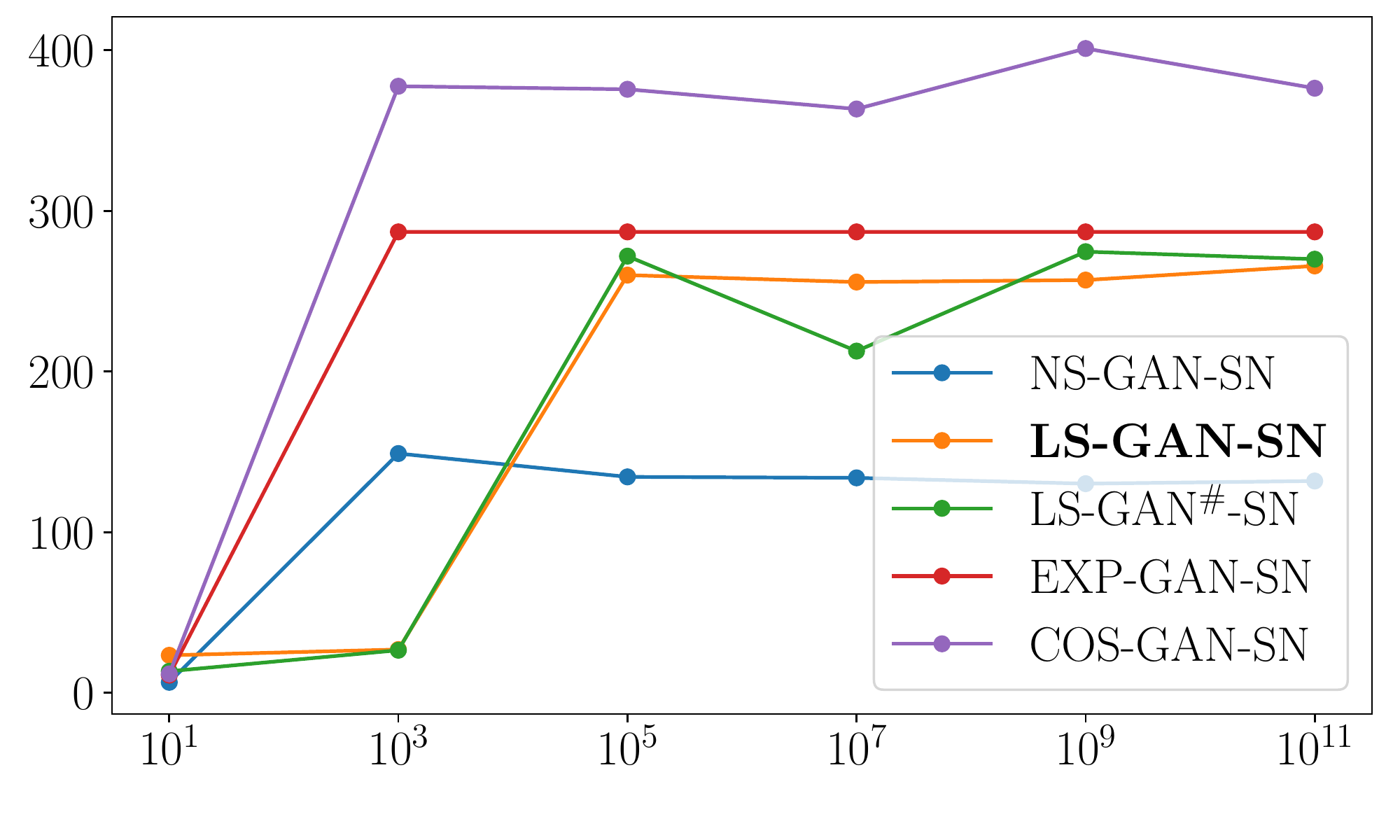}
    \end{subfigure}
\end{center}
  \caption{Samples of randomly generated images with LS-GAN-SN of varying $\alpha$ ($k_{SN}=1.0$, MNIST). For the line plot, $x$-axis shows $\alpha$ (in log scale) and $y$-axis shows the FID scores.}
\label{fig:Scale_LSGANSN_MNIST_k_1}
\end{figure*}

\begin{figure*}[h]
\begin{center}
    \begin{subfigure}{0.32\textwidth}
        \centering
        \includegraphics[width=0.99\linewidth]{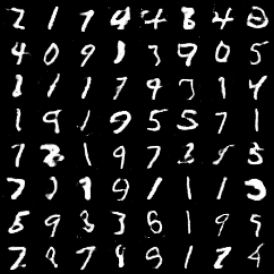}
        \subcaption{$\alpha=1e^{1}$, $\mathrm{FID}=13.43$}
    \end{subfigure}
    \begin{subfigure}{0.32\textwidth}
        \centering
        \includegraphics[width=0.99\linewidth]{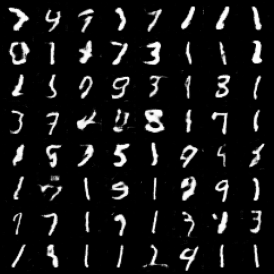}
        \subcaption{$\alpha=1e^{3}$, $\mathrm{FID}=26.51$}
    \end{subfigure}
    \begin{subfigure}{0.32\textwidth}
        \centering
        \includegraphics[width=0.99\linewidth]{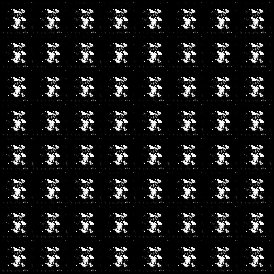}
        \subcaption{$\alpha=1e^{5}$, $\mathrm{FID}=271.85$}
    \end{subfigure}
    \begin{subfigure}{0.32\textwidth}
        \centering
        \includegraphics[width=0.99\linewidth]{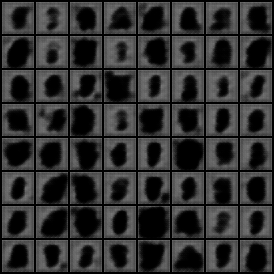}
        \subcaption{$\alpha=1e^{7}$, $\mathrm{FID}=212.74$}
    \end{subfigure}
    \begin{subfigure}{0.32\textwidth}
        \centering
        \includegraphics[width=0.99\linewidth]{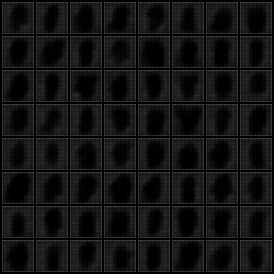}
        \subcaption{$\alpha=1e^{9}$, $\mathrm{FID}=274.63$}
    \end{subfigure}
    \begin{subfigure}{0.32\textwidth}
        \centering
        \includegraphics[width=0.99\linewidth]{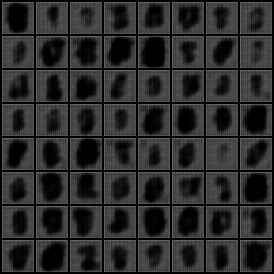}
        \subcaption{$\alpha=1e^{11}$, $\mathrm{FID}=269.96$}
    \end{subfigure}
    \begin{subfigure}{0.6\textwidth}
        \includegraphics[width=.9\linewidth]{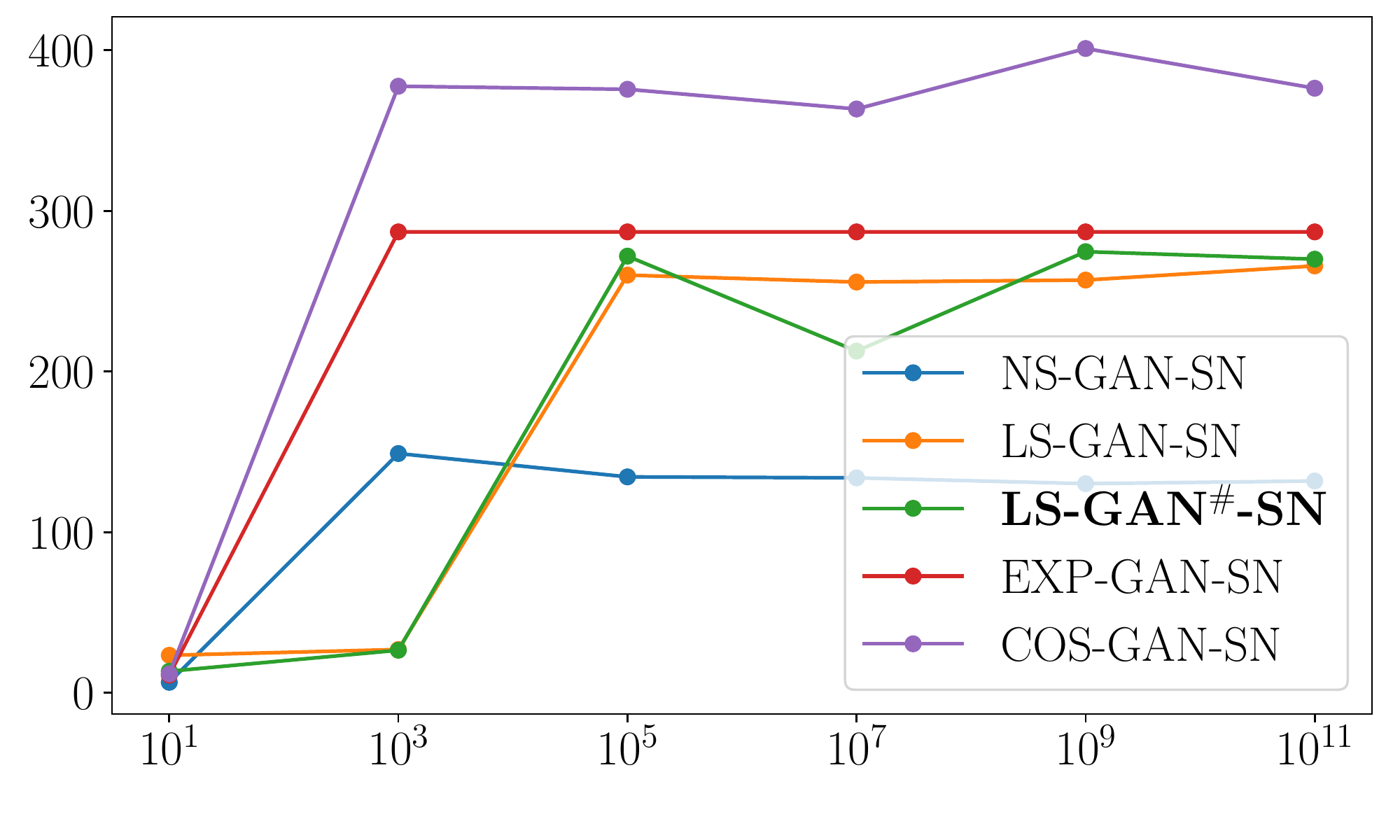}
    \end{subfigure}
\end{center}
  \caption{Samples of randomly generated images with LS-GAN$^\#$-SN of varying $\alpha$ ($k_{SN}=1.0$, MNIST). For the line plot, $x$-axis shows $\alpha$ (in log scale) and $y$-axis shows the FID scores.}
\label{fig:Scale_LSGANSN_zero_centered_MNIST_k_1}
\end{figure*}

\begin{figure*}[h]
\begin{center}
    \begin{subfigure}{0.32\textwidth}
        \centering
        \includegraphics[width=0.99\linewidth]{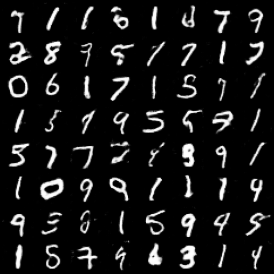}
        \subcaption{$\alpha=1e^{1}$, $\mathrm{FID}=11.02$}
    \end{subfigure}
    \begin{subfigure}{0.32\textwidth}
        \centering
        \includegraphics[width=0.99\linewidth]{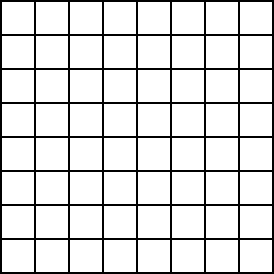}
        \subcaption{$\alpha=1e^{3}$, $\mathrm{FID}=286.96$}
    \end{subfigure}
    \begin{subfigure}{0.32\textwidth}
        \centering
        \includegraphics[width=0.99\linewidth]{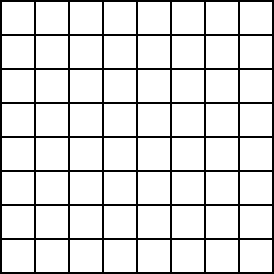}
        \subcaption{$\alpha=1e^{5}$, $\mathrm{FID}=286.96$}
    \end{subfigure}
    \begin{subfigure}{0.32\textwidth}
        \centering
        \includegraphics[width=0.99\linewidth]{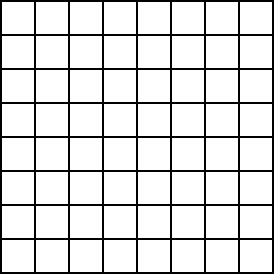}
        \subcaption{$\alpha=1e^{7}$, $\mathrm{FID}=286.96$}
    \end{subfigure}
    \begin{subfigure}{0.32\textwidth}
        \centering
        \includegraphics[width=0.99\linewidth]{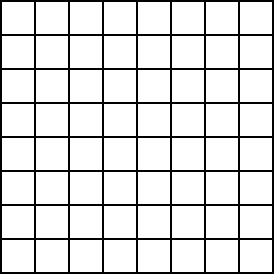}
        \subcaption{$\alpha=1e^{9}$, $\mathrm{FID}=286.96$}
    \end{subfigure}
    \begin{subfigure}{0.32\textwidth}
        \centering
        \includegraphics[width=0.99\linewidth]{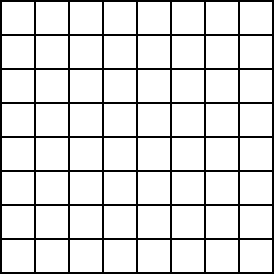}
        \subcaption{$\alpha=1e^{11}$, $\mathrm{FID}=286.96$}
    \end{subfigure}
    \begin{subfigure}{0.6\textwidth}
        \includegraphics[width=.9\linewidth]{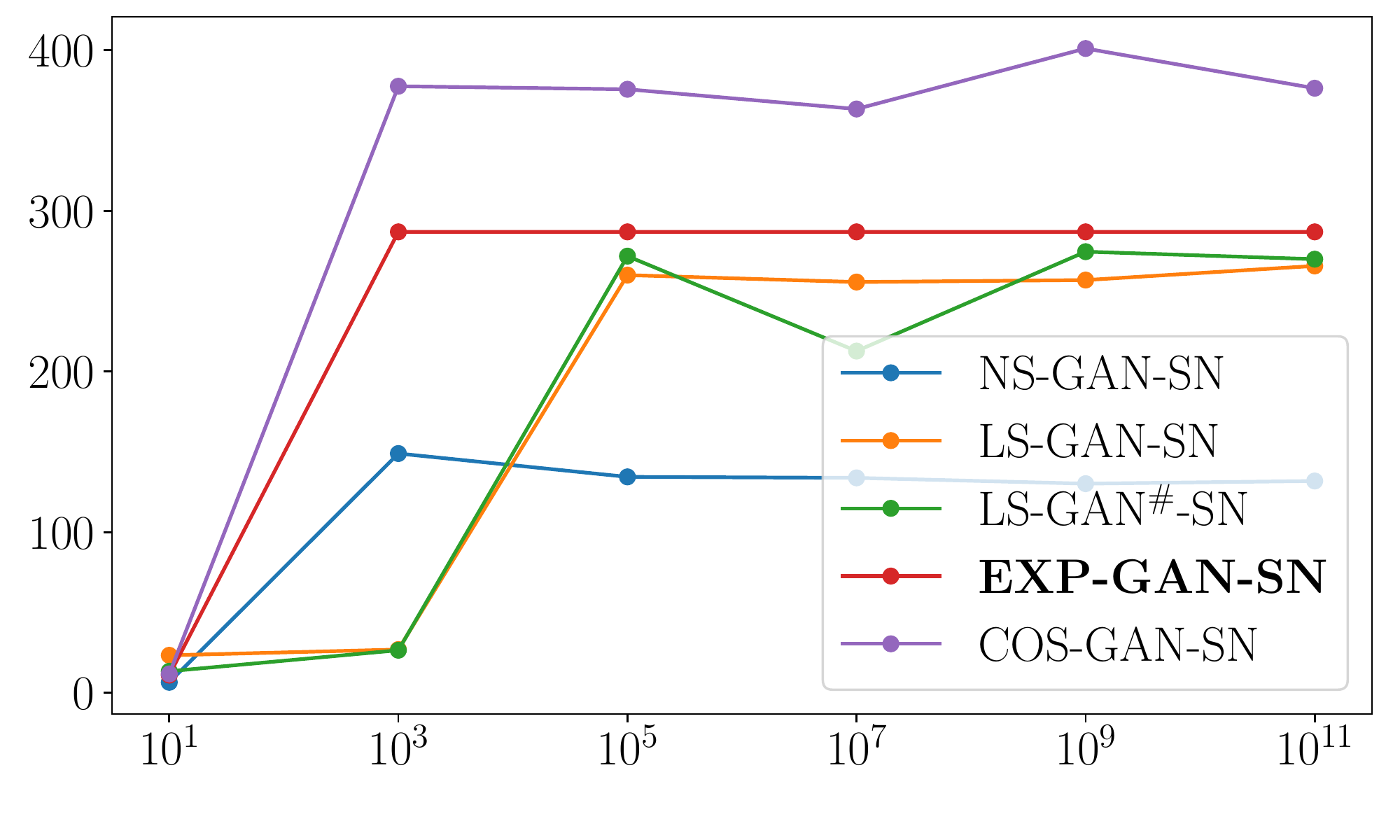}
    \end{subfigure}
\end{center}
  \caption{Samples of randomly generated images with EXP-GAN-SN of varying $\alpha$ ($k_{SN}=1.0$, MNIST). For the line plot, $x$-axis shows $\alpha$ (in log scale) and $y$-axis shows the FID scores.}
\label{fig:Scale_EXPGANSN_MNIST_k_1}
\end{figure*}

\begin{figure*}[h]
\begin{center}
    \begin{subfigure}{0.32\textwidth}
        \centering
        \includegraphics[width=0.99\linewidth]{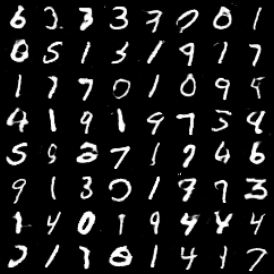}
        \subcaption{$\alpha=1e^{1}$, $\mathrm{FID}=11.79$}
    \end{subfigure}
    \begin{subfigure}{0.32\textwidth}
        \centering
        \includegraphics[width=0.99\linewidth]{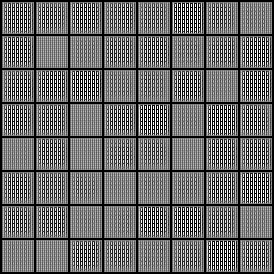}
        \subcaption{$\alpha=1e^{3}$, $\mathrm{FID}=377.62$}
    \end{subfigure}
    \begin{subfigure}{0.32\textwidth}
        \centering
        \includegraphics[width=0.99\linewidth]{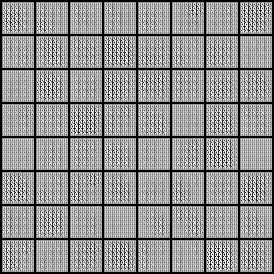}
        \subcaption{$\alpha=1e^{5}$, $\mathrm{FID}=375.72$}
    \end{subfigure}
    \begin{subfigure}{0.32\textwidth}
        \centering
        \includegraphics[width=0.99\linewidth]{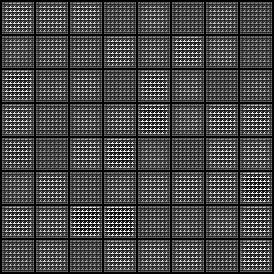}
        \subcaption{$\alpha=1e^{7}$, $\mathrm{FID}=363.45$}
    \end{subfigure}
    \begin{subfigure}{0.32\textwidth}
        \centering
        \includegraphics[width=0.99\linewidth]{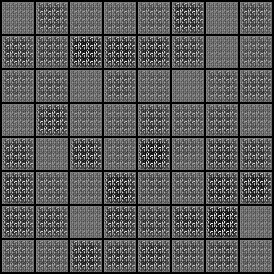}
        \subcaption{$\alpha=1e^{9}$, $\mathrm{FID}=401.12$}
    \end{subfigure}
    \begin{subfigure}{0.32\textwidth}
        \centering
        \includegraphics[width=0.99\linewidth]{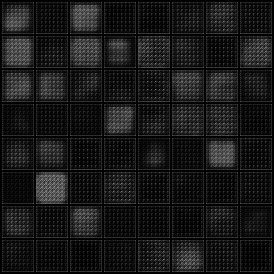}
        \subcaption{$\alpha=1e^{11}$, $\mathrm{FID}=376.39$}
    \end{subfigure}
    \begin{subfigure}{0.6\textwidth}
        \includegraphics[width=.9\linewidth]{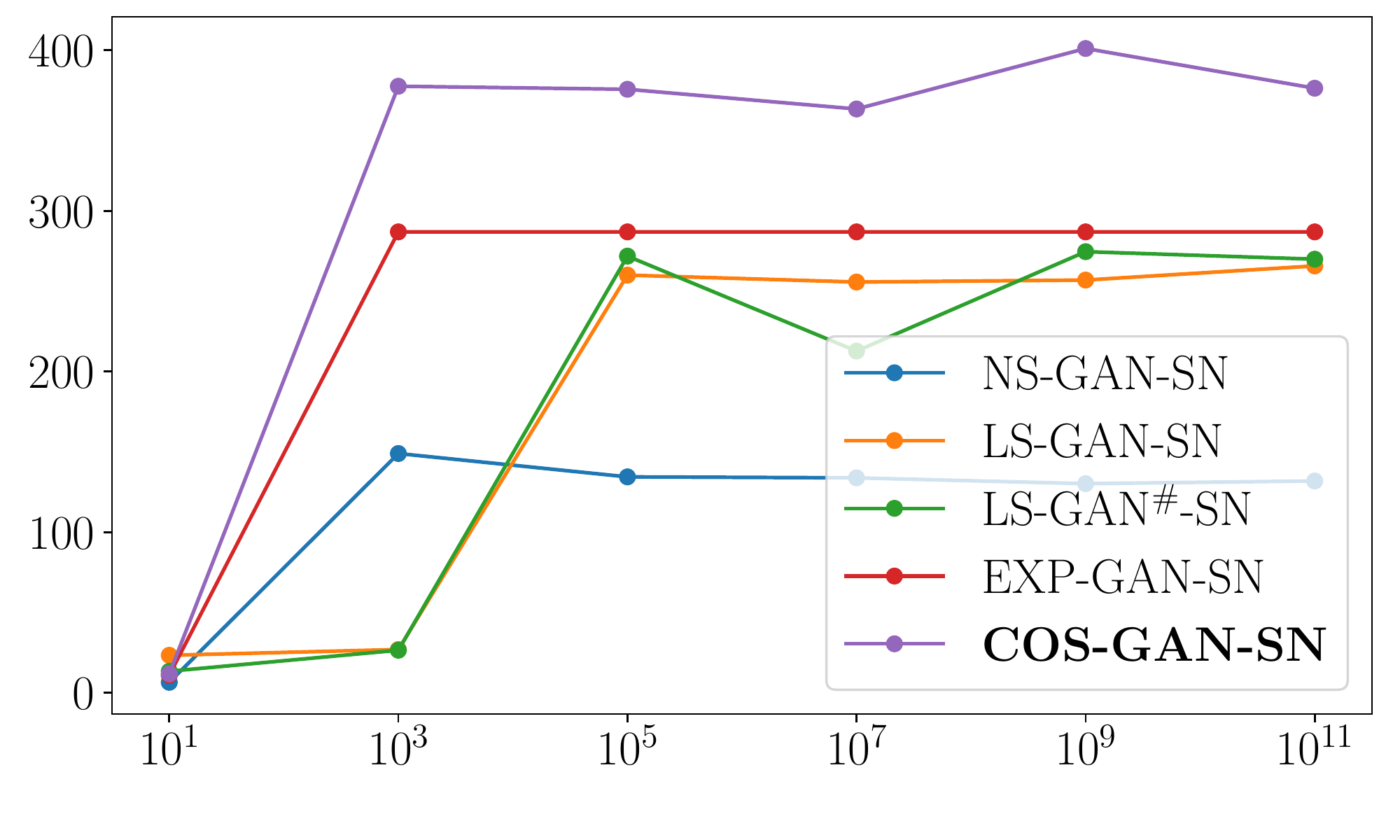}
    \end{subfigure}
\end{center}
  \caption{Samples of randomly generated images with COS-GAN-SN of varying $\alpha$ ($k_{SN}=1.0$, MNIST). For the line plot, $x$-axis shows $\alpha$ (in log scale) and $y$-axis shows the FID scores.}
\label{fig:Scale_COSGANSN_MNIST_k_1}
\end{figure*}

\FloatBarrier
\newpage

\vspace*{\fill}
\begin{table*}[h!]
\centering
\captionsetup{justification=centering}
\caption*{{\LARGE FID scores v.s. $\alpha$ ($k_{SN}=1.0$) of Different Loss Functions\\ \vspace*{5mm}
- CIFAR10 -}}
\end{table*}
\vspace*{\fill}

\FloatBarrier
\newpage

\begin{figure*}[h]
\begin{center}
    \begin{subfigure}{0.32\textwidth}
        \centering
        \includegraphics[width=0.99\linewidth]{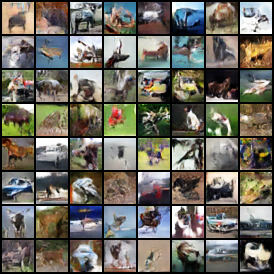}
        \subcaption{$\alpha=1e^{1}$, $\mathrm{FID}=17.63$}
    \end{subfigure}
    \begin{subfigure}{0.32\textwidth}
        \centering
        \includegraphics[width=0.99\linewidth]{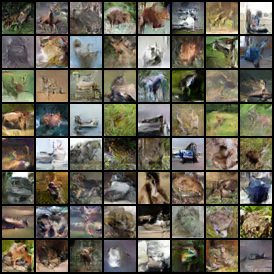}
        \subcaption{$\alpha=1e^{3}$, $\mathrm{FID}=47.31$}
    \end{subfigure}
    \begin{subfigure}{0.32\textwidth}
        \centering
        \includegraphics[width=0.99\linewidth]{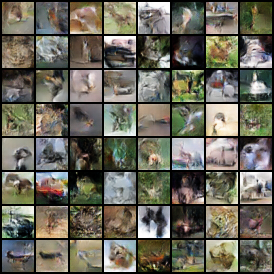}
        \subcaption{$\alpha=1e^{5}$, $\mathrm{FID}=46.85$}
    \end{subfigure}
    \begin{subfigure}{0.32\textwidth}
        \centering
        \includegraphics[width=0.99\linewidth]{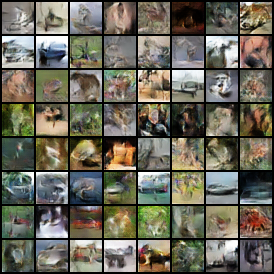}
        \subcaption{$\alpha=1e^{7}$, $\mathrm{FID}=45.44$}
    \end{subfigure}
    \begin{subfigure}{0.32\textwidth}
        \centering
        \includegraphics[width=0.99\linewidth]{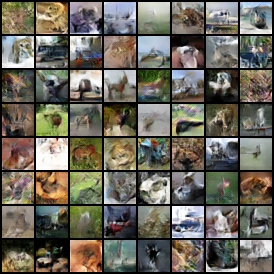}
        \subcaption{$\alpha=1e^{9}$, $\mathrm{FID}=45.67$}
    \end{subfigure}
    \begin{subfigure}{0.32\textwidth}
        \centering
        \includegraphics[width=0.99\linewidth]{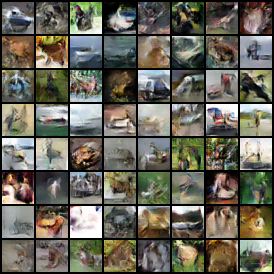}
        \subcaption{$\alpha=1e^{11}$, $\mathrm{FID}=39.90$}
    \end{subfigure}
    \begin{subfigure}{0.6\textwidth}
        \includegraphics[width=.9\linewidth]{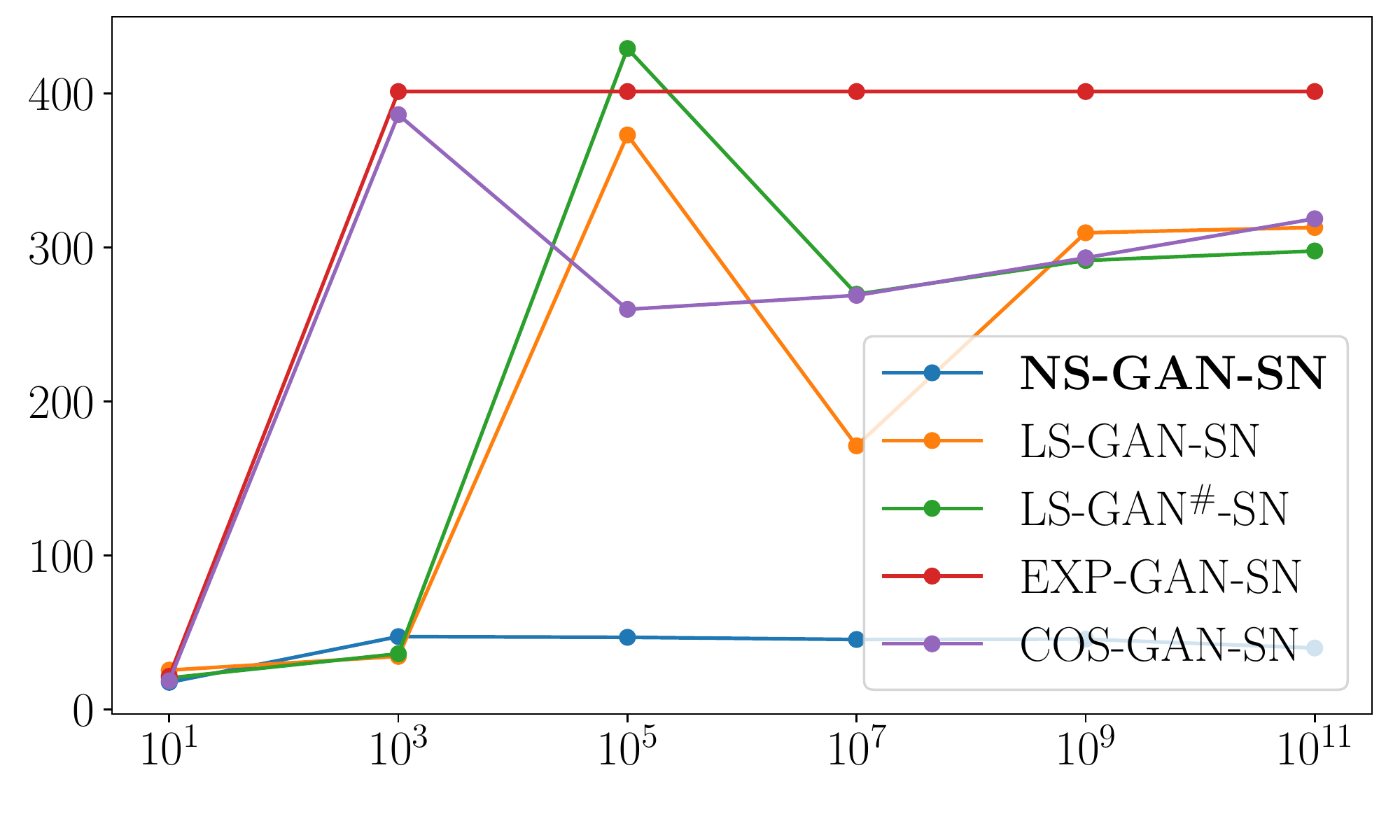}
    \end{subfigure}
\end{center}
  \caption{Samples of randomly generated images with NS-GAN-SN of varying $\alpha$ ($k_{SN}=1.0$, CIFAR10). For the line plot, $x$-axis shows $\alpha$ (in log scale) and $y$-axis shows the FID scores.}
\label{fig:Scale_NSGANSN_CIFAR10_k_1}
\end{figure*}

\begin{figure*}[h]
\begin{center}
    \begin{subfigure}{0.32\textwidth}
        \centering
        \includegraphics[width=0.99\linewidth]{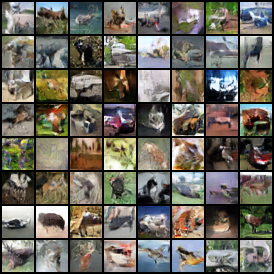}
        \subcaption{$\alpha=1e^{1}$, $\mathrm{FID}=25.55$}
    \end{subfigure}
    \begin{subfigure}{0.32\textwidth}
        \centering
        \includegraphics[width=0.99\linewidth]{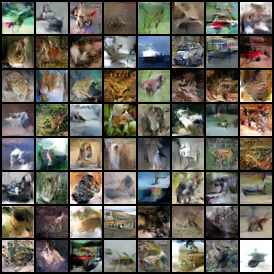}
        \subcaption{$\alpha=1e^{3}$, $\mathrm{FID}=34.44$}
    \end{subfigure}
    \begin{subfigure}{0.32\textwidth}
        \centering
        \includegraphics[width=0.99\linewidth]{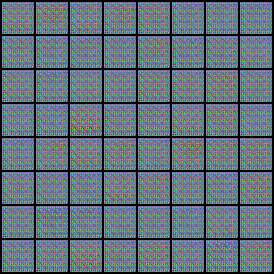}
        \subcaption{$\alpha=1e^{5}$, $\mathrm{FID}=373.07$}
    \end{subfigure}
    \begin{subfigure}{0.32\textwidth}
        \centering
        \includegraphics[width=0.99\linewidth]{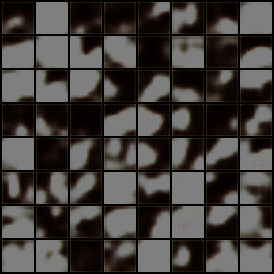}
        \subcaption{$\alpha=1e^{7}$, $\mathrm{FID}=171.18$}
    \end{subfigure}
    \begin{subfigure}{0.32\textwidth}
        \centering
        \includegraphics[width=0.99\linewidth]{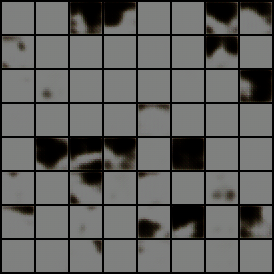}
        \subcaption{$\alpha=1e^{9}$, $\mathrm{FID}=309.55$}
    \end{subfigure}
    \begin{subfigure}{0.32\textwidth}
        \centering
        \includegraphics[width=0.99\linewidth]{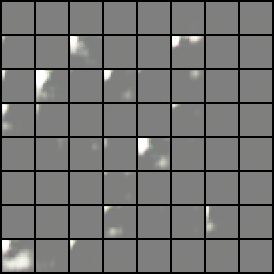}
        \subcaption{$\alpha=1e^{11}$, $\mathrm{FID}=312.96$}
    \end{subfigure}
    \begin{subfigure}{0.6\textwidth}
        \includegraphics[width=.9\linewidth]{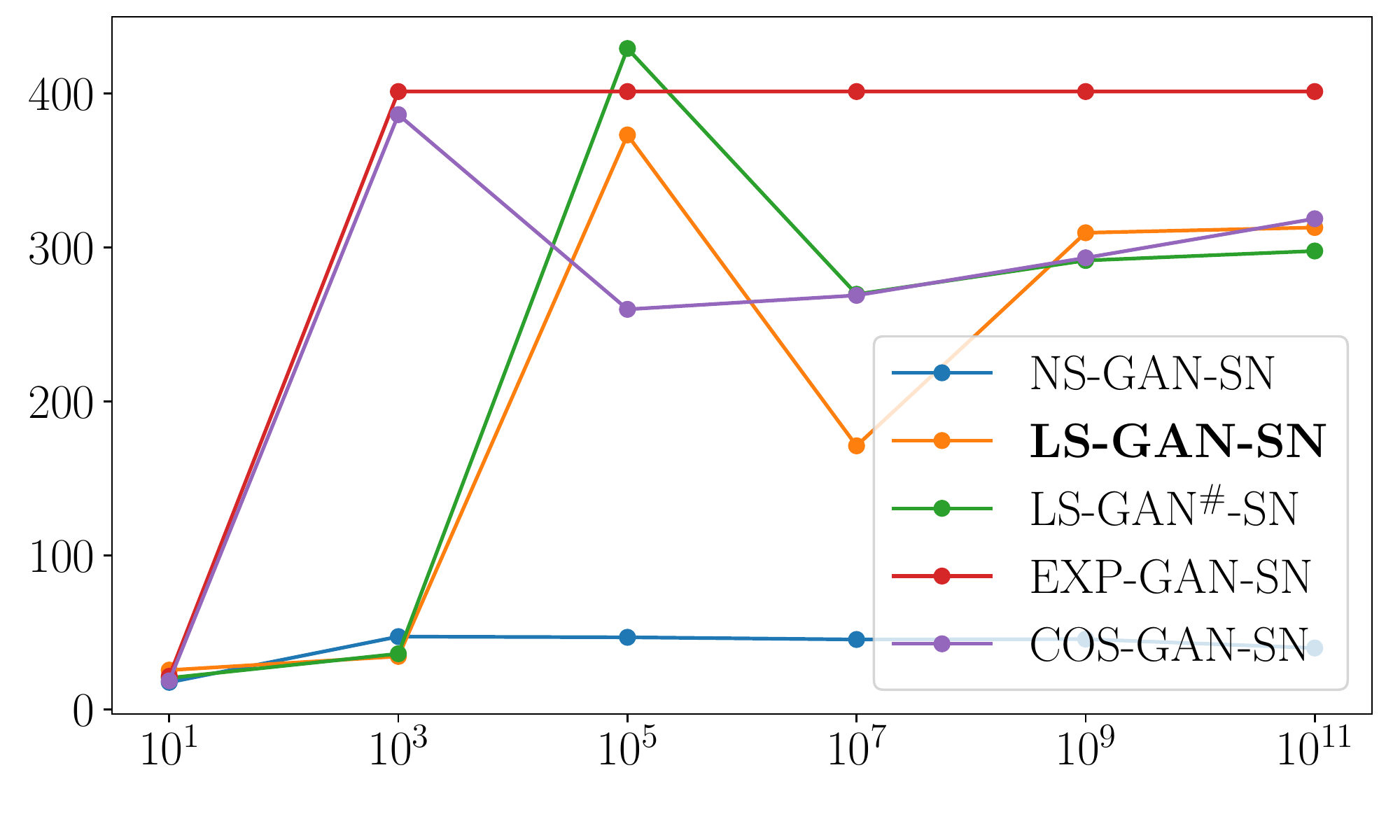}
    \end{subfigure}
\end{center}
  \caption{Samples of randomly generated images with LS-GAN-SN of varying $\alpha$ ($k_{SN}=1.0$, CIFAR10). For the line plot, $x$-axis shows $\alpha$ (in log scale) and $y$-axis shows the FID scores.}
\label{fig:Scale_LSGANSN_CIFAR10_k_1}
\end{figure*}

\begin{figure*}[h]
\begin{center}
    \begin{subfigure}{0.32\textwidth}
        \centering
        \includegraphics[width=0.99\linewidth]{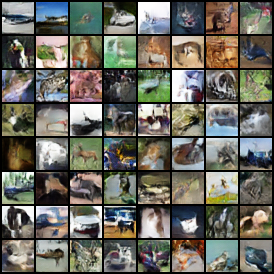}
        \subcaption{$\alpha=1e^{1}$, $\mathrm{FID}=20.45$}
    \end{subfigure}
    \begin{subfigure}{0.32\textwidth}
        \centering
        \includegraphics[width=0.99\linewidth]{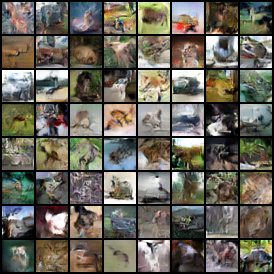}
        \subcaption{$\alpha=1e^{3}$, $\mathrm{FID}=36.18$}
    \end{subfigure}
    \begin{subfigure}{0.32\textwidth}
        \centering
        \includegraphics[width=0.99\linewidth]{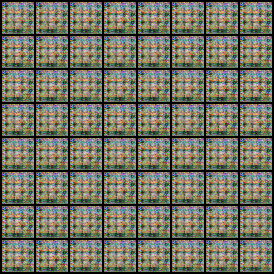}
        \subcaption{$\alpha=1e^{5}$, $\mathrm{FID}=429.21$}
    \end{subfigure}
    \begin{subfigure}{0.32\textwidth}
        \centering
        \includegraphics[width=0.99\linewidth]{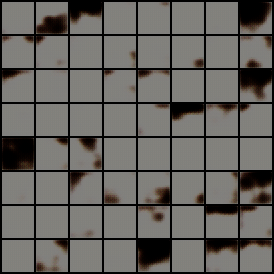}
        \subcaption{$\alpha=1e^{7}$, $\mathrm{FID}=269.63$}
    \end{subfigure}
    \begin{subfigure}{0.32\textwidth}
        \centering
        \includegraphics[width=0.99\linewidth]{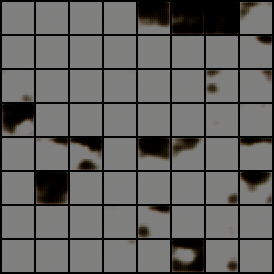}
        \subcaption{$\alpha=1e^{9}$, $\mathrm{FID}=291.55$}
    \end{subfigure}
    \begin{subfigure}{0.32\textwidth}
        \centering
        \includegraphics[width=0.99\linewidth]{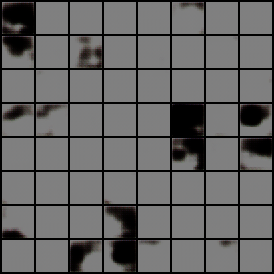}
        \subcaption{$\alpha=1e^{11}$, $\mathrm{FID}=297.71$}
    \end{subfigure}
    \begin{subfigure}{0.6\textwidth}
        \includegraphics[width=.9\linewidth]{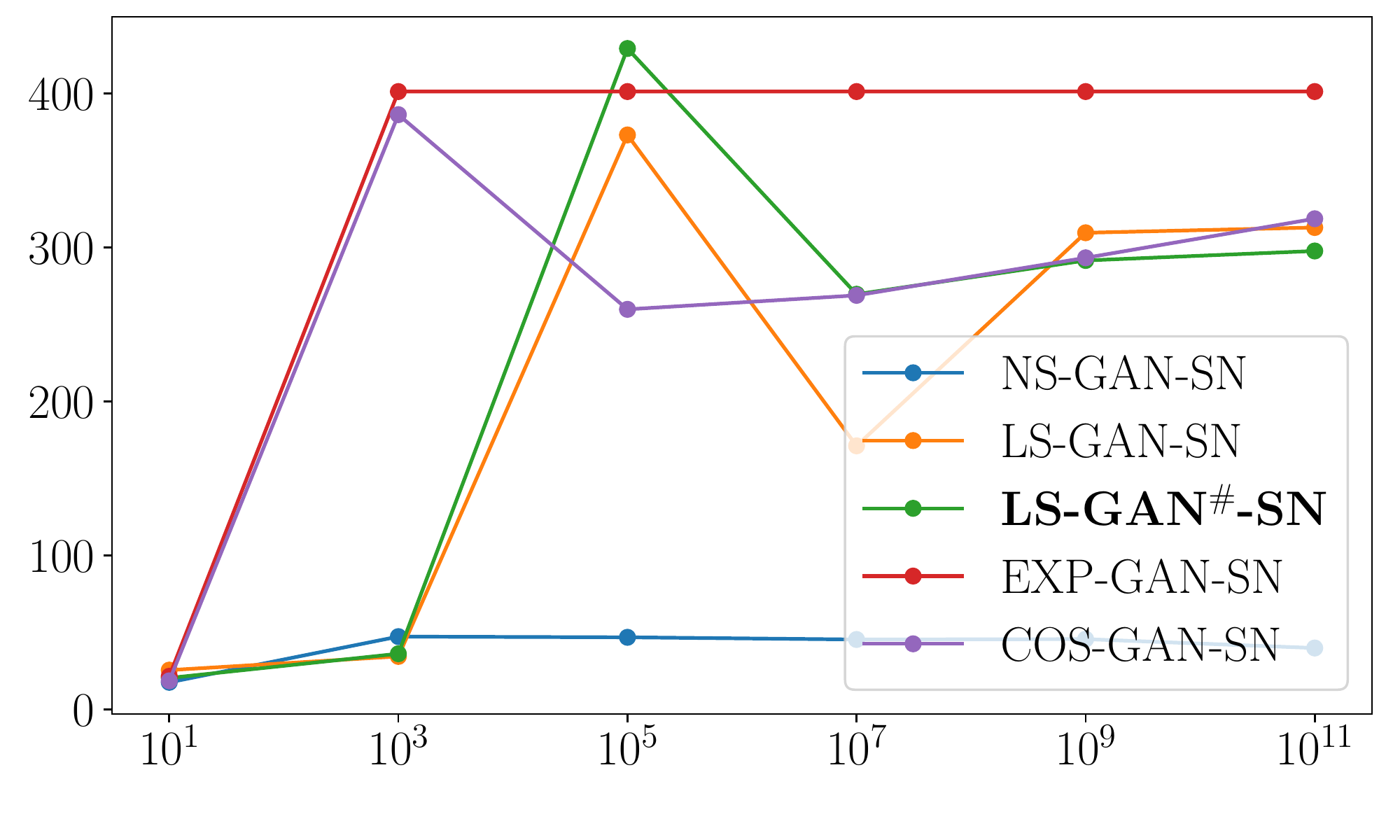}
    \end{subfigure}
\end{center}
  \caption{Samples of randomly generated images with LS-GAN$^\#$-SN of varying $\alpha$ ($k_{SN}=1.0$, CIFAR10). For the line plot, $x$-axis shows $\alpha$ (in log scale) and $y$-axis shows the FID scores.}
\label{fig:Scale_LSGANSN_zero_centered_CIFAR10_k_1}
\end{figure*}

\begin{figure*}[h]
\begin{center}
    \begin{subfigure}{0.32\textwidth}
        \centering
        \includegraphics[width=0.99\linewidth]{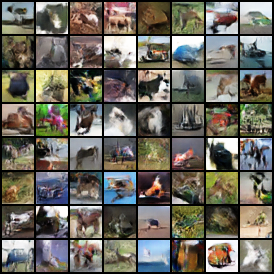}
        \subcaption{$\alpha=1e^{1}$, $\mathrm{FID}=21.56$}
    \end{subfigure}
    \begin{subfigure}{0.32\textwidth}
        \centering
        \includegraphics[width=0.99\linewidth]{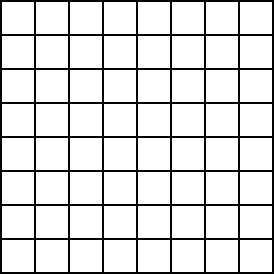}
        \subcaption{$\alpha=1e^{3}$, $\mathrm{FID}=401.24$}
    \end{subfigure}
    \begin{subfigure}{0.32\textwidth}
        \centering
        \includegraphics[width=0.99\linewidth]{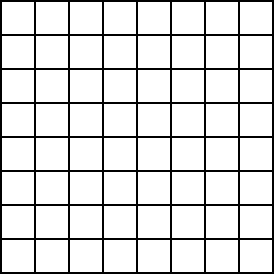}
        \subcaption{$\alpha=1e^{5}$, $\mathrm{FID}=401.24$}
    \end{subfigure}
    \begin{subfigure}{0.32\textwidth}
        \centering
        \includegraphics[width=0.99\linewidth]{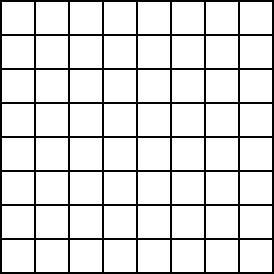}
        \subcaption{$\alpha=1e^{7}$, $\mathrm{FID}=401.24$}
    \end{subfigure}
    \begin{subfigure}{0.32\textwidth}
        \centering
        \includegraphics[width=0.99\linewidth]{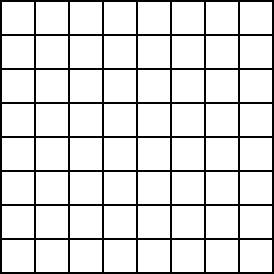}
        \subcaption{$\alpha=1e^{9}$, $\mathrm{FID}=401.24$}
    \end{subfigure}
    \begin{subfigure}{0.32\textwidth}
        \centering
        \includegraphics[width=0.99\linewidth]{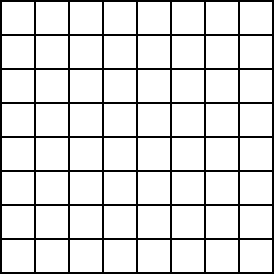}
        \subcaption{$\alpha=1e^{11}$, $\mathrm{FID}=401.24$}
    \end{subfigure}
    \begin{subfigure}{0.6\textwidth}
        \includegraphics[width=.9\linewidth]{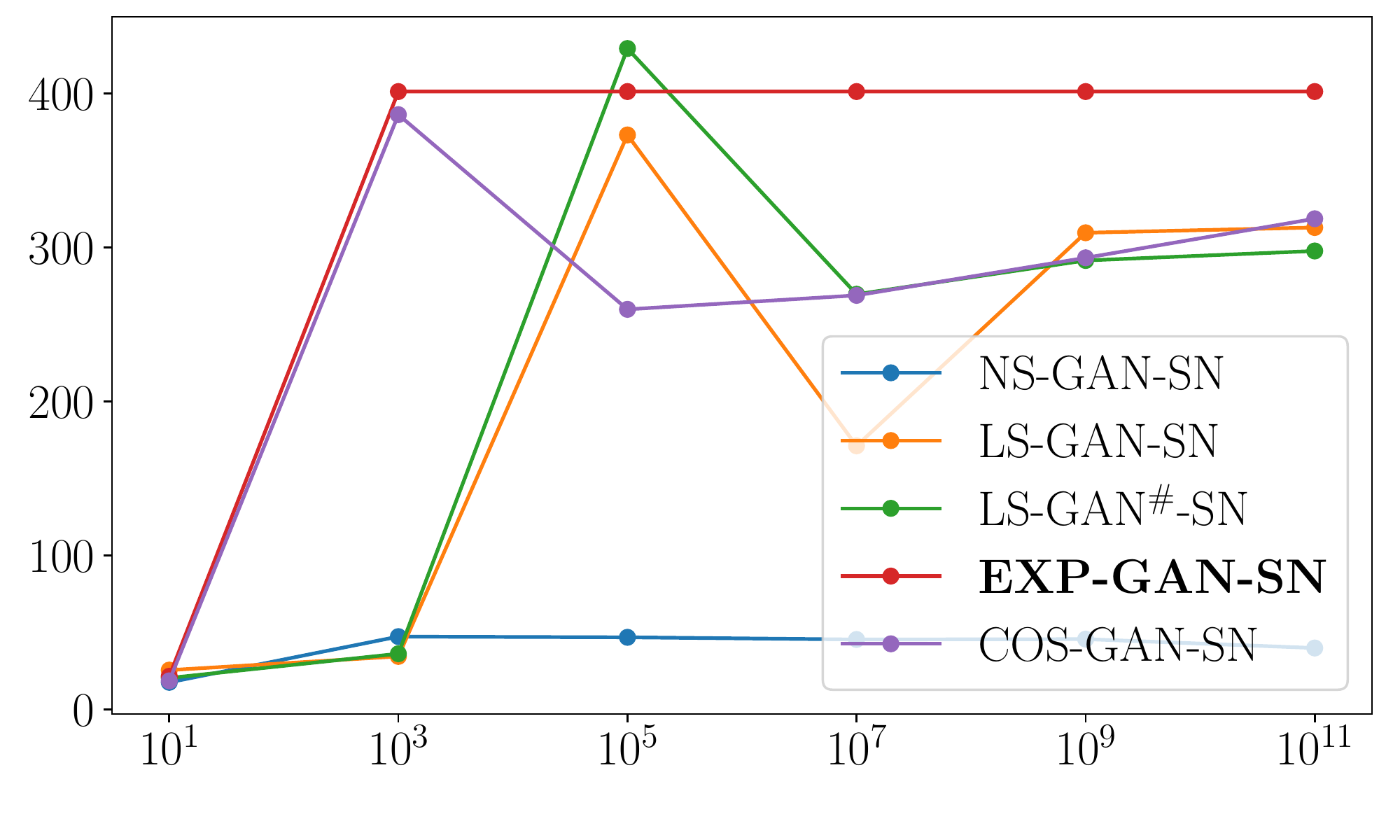}
    \end{subfigure}
\end{center}
  \caption{Samples of randomly generated images with EXP-GAN-SN of varying $\alpha$ ($k_{SN}=1.0$, CIFAR10). For the line plot, $x$-axis shows $\alpha$ (in log scale) and $y$-axis shows the FID scores.}
\label{fig:Scale_EXPGANSN_CIFAR10_k_1}
\end{figure*}

\begin{figure*}[h]
\begin{center}
    \begin{subfigure}{0.32\textwidth}
        \centering
        \includegraphics[width=0.99\linewidth]{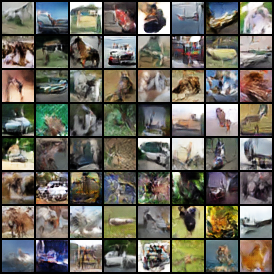}
        \subcaption{$\alpha=1e^{1}$, $\mathrm{FID}=18.59$}
    \end{subfigure}
    \begin{subfigure}{0.32\textwidth}
        \centering
        \includegraphics[width=0.99\linewidth]{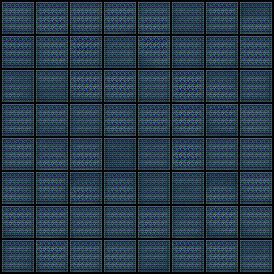}
        \subcaption{$\alpha=1e^{3}$, $\mathrm{FID}=386.24$}
    \end{subfigure}
    \begin{subfigure}{0.32\textwidth}
        \centering
        \includegraphics[width=0.99\linewidth]{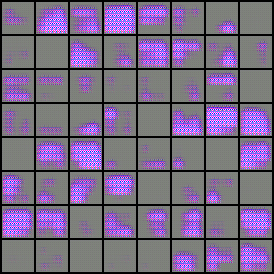}
        \subcaption{$\alpha=1e^{5}$, $\mathrm{FID}=259.83$}
    \end{subfigure}
    \begin{subfigure}{0.32\textwidth}
        \centering
        \includegraphics[width=0.99\linewidth]{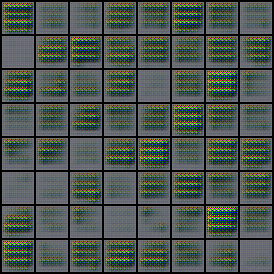}
        \subcaption{$\alpha=1e^{7}$, $\mathrm{FID}=268.89$}
    \end{subfigure}
    \begin{subfigure}{0.32\textwidth}
        \centering
        \includegraphics[width=0.99\linewidth]{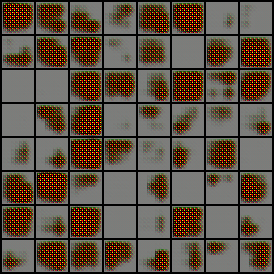}
        \subcaption{$\alpha=1e^{9}$, $\mathrm{FID}=293.29$}
    \end{subfigure}
    \begin{subfigure}{0.32\textwidth}
        \centering
        \includegraphics[width=0.99\linewidth]{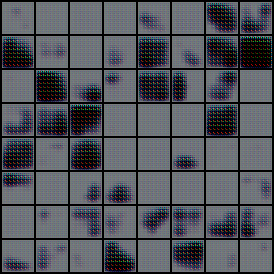}
        \subcaption{$\alpha=1e^{11}$, $\mathrm{FID}=318.65$}
    \end{subfigure}
    \begin{subfigure}{0.6\textwidth}
        \includegraphics[width=.9\linewidth]{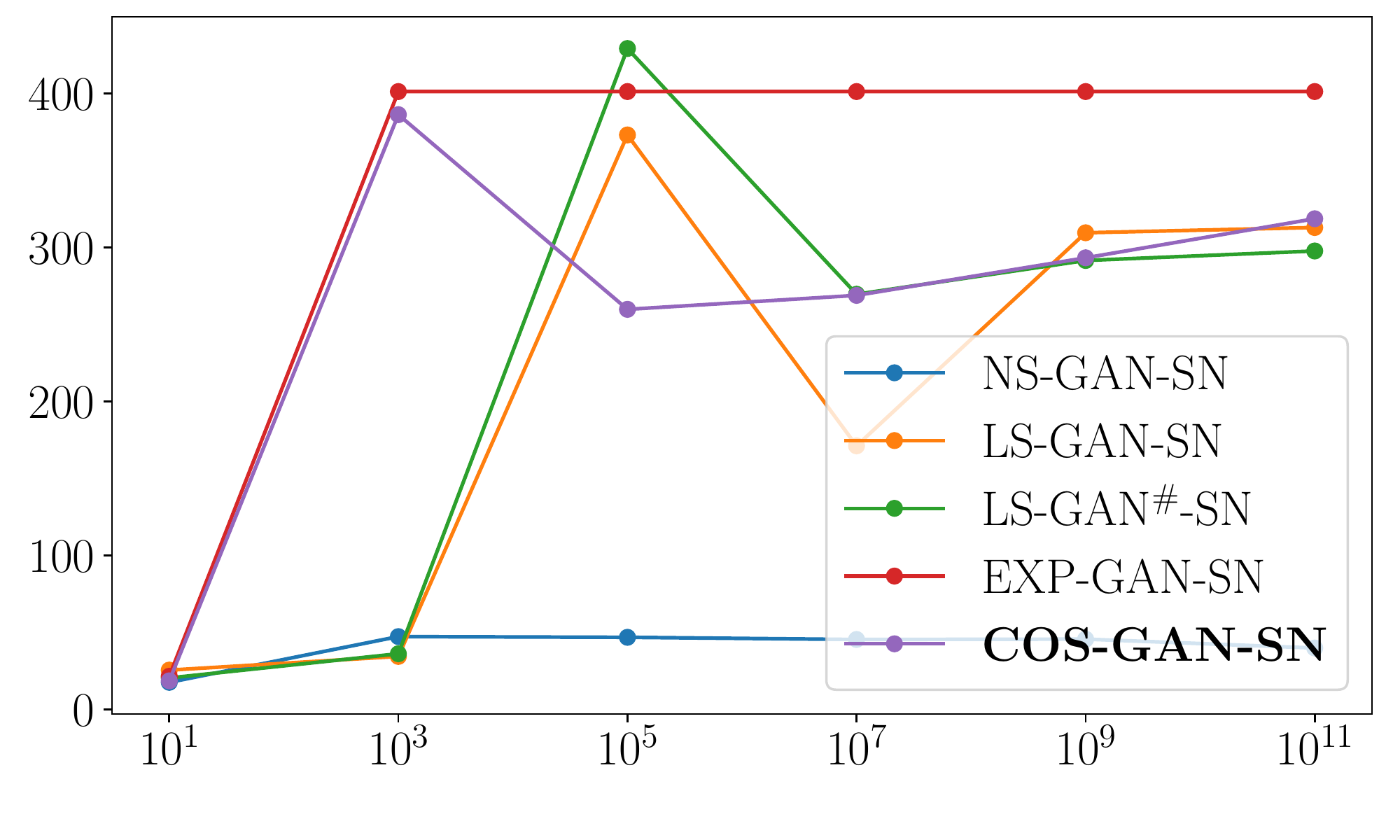}
    \end{subfigure}
\end{center}
  \caption{Samples of randomly generated images with COS-GAN-SN of varying $\alpha$ ($k_{SN}=1.0$, CIFAR10). For the line plot, $x$-axis shows $\alpha$ (in log scale) and $y$-axis shows the FID scores.}
\label{fig:Scale_COSGANSN_CIFAR10_k_1}
\end{figure*}

\FloatBarrier
\newpage

\vspace*{\fill}
\begin{table*}[h!]
\centering
\captionsetup{justification=centering}
\caption*{{\LARGE FID scores v.s. $\alpha$ ($k_{SN}=1.0$) of Different Loss Functions\\ \vspace*{5mm}
- CelebA -}}
\end{table*}
\vspace*{\fill}

\FloatBarrier
\newpage

\begin{figure*}[h]
\begin{center}
    \begin{subfigure}{0.32\textwidth}
        \centering
        \includegraphics[width=0.99\linewidth]{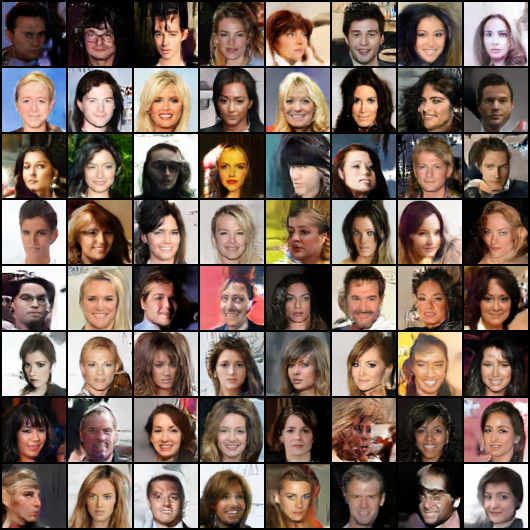}
        \subcaption{$\alpha=1e^{1}$, $\mathrm{FID}=5.88$}
    \end{subfigure}
    \begin{subfigure}{0.32\textwidth}
        \centering
        \includegraphics[width=0.99\linewidth]{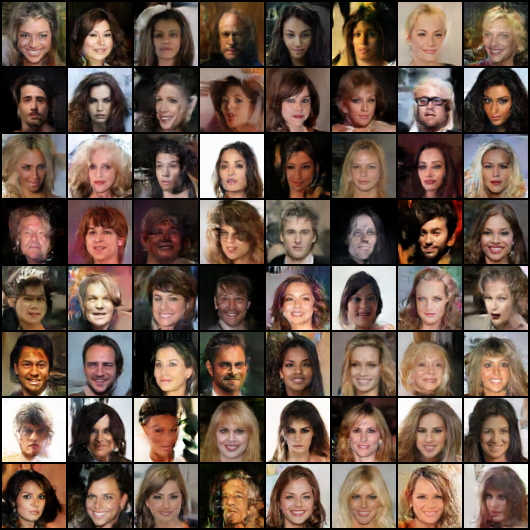}
        \subcaption{$\alpha=1e^{3}$, $\mathrm{FID}=16.14$}
    \end{subfigure}
    \begin{subfigure}{0.32\textwidth}
        \centering
        \includegraphics[width=0.99\linewidth]{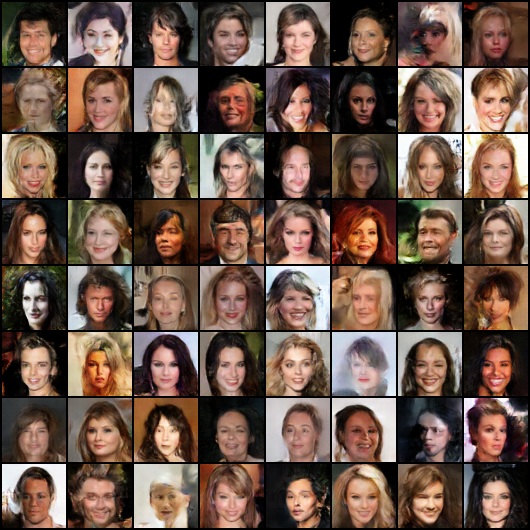}
        \subcaption{$\alpha=1e^{5}$, $\mathrm{FID}=17.75$}
    \end{subfigure}
    \begin{subfigure}{0.32\textwidth}
        \centering
        \includegraphics[width=0.99\linewidth]{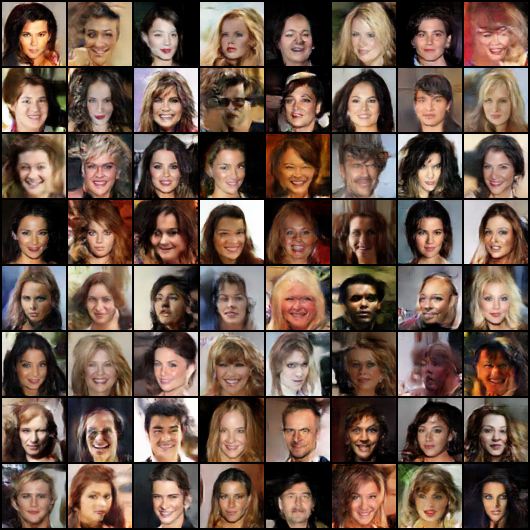}
        \subcaption{$\alpha=1e^{7}$, $\mathrm{FID}=17.67$}
    \end{subfigure}
    \begin{subfigure}{0.32\textwidth}
        \centering
        \includegraphics[width=0.99\linewidth]{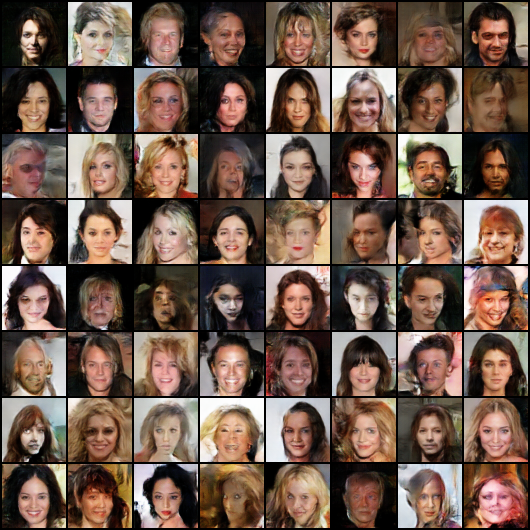}
        \subcaption{$\alpha=1e^{9}$, $\mathrm{FID}=16.87$}
    \end{subfigure}
    \begin{subfigure}{0.32\textwidth}
        \centering
        \includegraphics[width=0.99\linewidth]{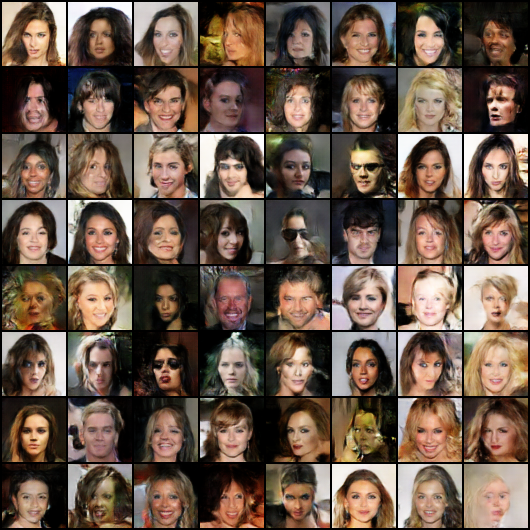}
        \subcaption{$\alpha=1e^{11}$, $\mathrm{FID}=18.81$}
    \end{subfigure}
    \begin{subfigure}{0.6\textwidth}
        \includegraphics[width=.9\linewidth]{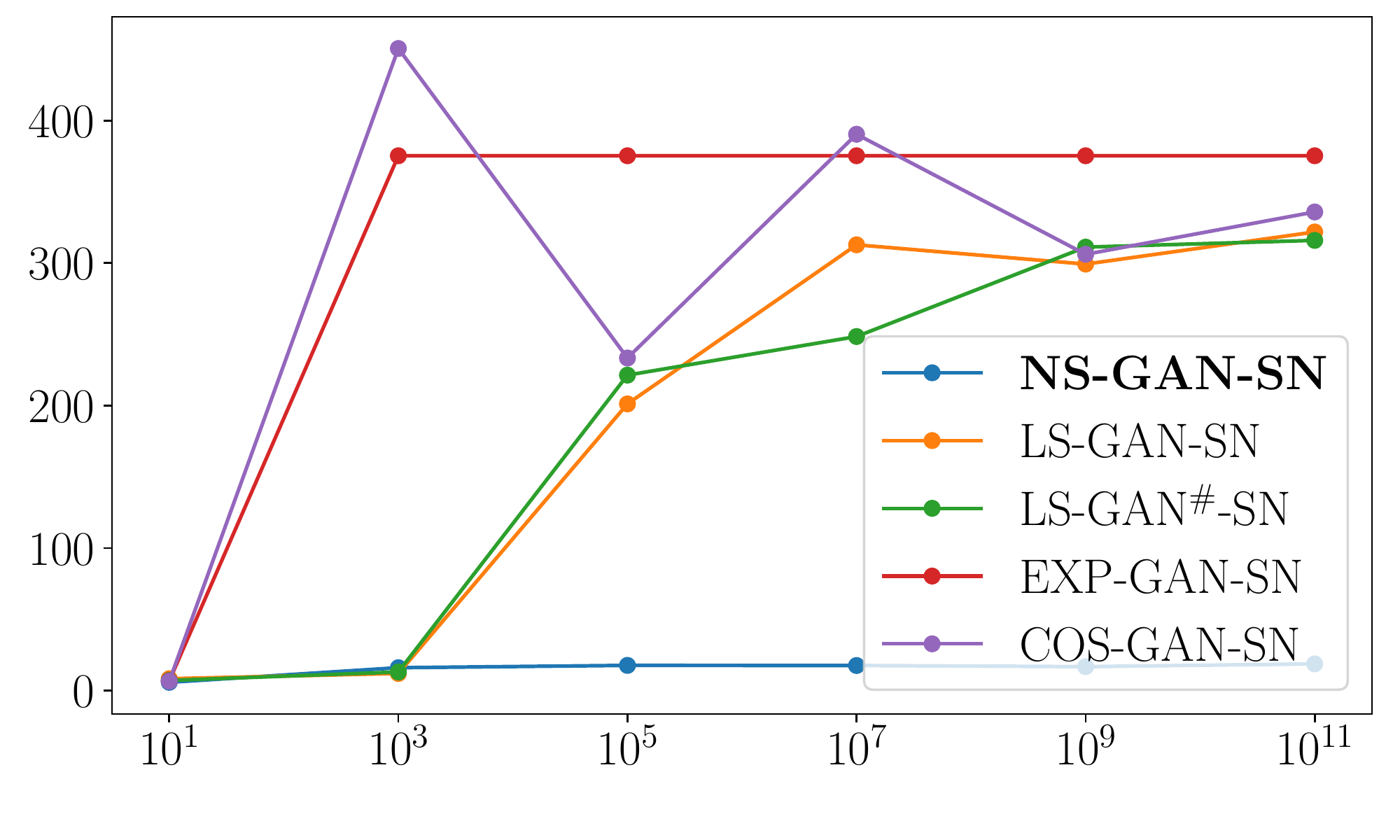}
    \end{subfigure}
\end{center}
  \caption{Samples of randomly generated images with NS-GAN-SN of varying $\alpha$ ($k_{SN}=1.0$, CelebA). For the line plot, $x$-axis shows $\alpha$ (in log scale) and $y$-axis shows the FID scores.}
\label{fig:Scale_NSGANSN_CelebA_k_1}
\end{figure*}

\begin{figure*}[h]
\begin{center}
    \begin{subfigure}{0.32\textwidth}
        \centering
        \includegraphics[width=0.99\linewidth]{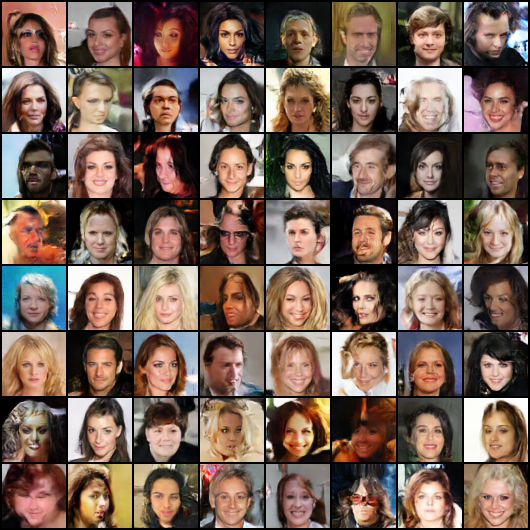}
        \subcaption{$\alpha=1e^{1}$, $\mathrm{FID}=8.41$}
    \end{subfigure}
    \begin{subfigure}{0.32\textwidth}
        \centering
        \includegraphics[width=0.99\linewidth]{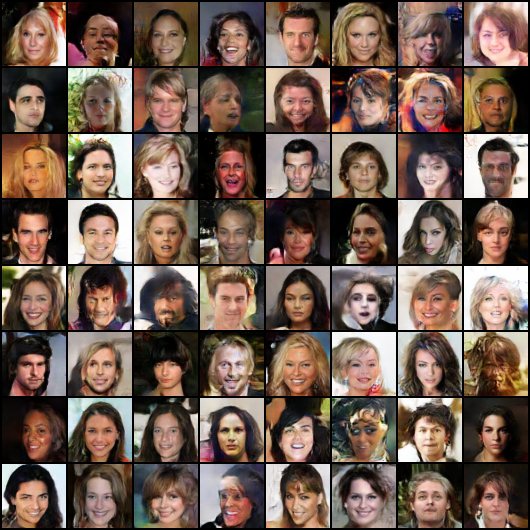}
        \subcaption{$\alpha=1e^{3}$, $\mathrm{FID}=12.09$}
    \end{subfigure}
    \begin{subfigure}{0.32\textwidth}
        \centering
        \includegraphics[width=0.99\linewidth]{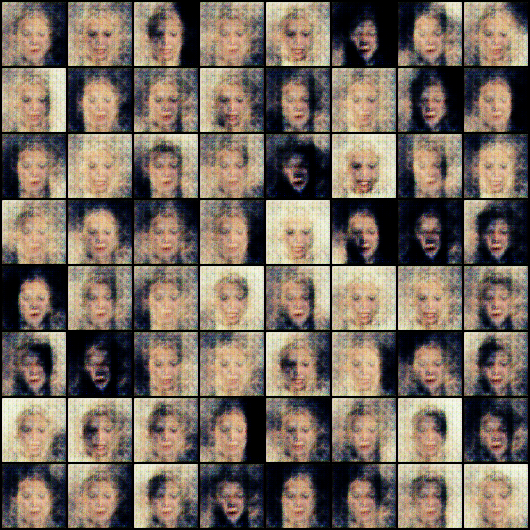}
        \subcaption{$\alpha=1e^{5}$, $\mathrm{FID}=201.22$}
    \end{subfigure}
    \begin{subfigure}{0.32\textwidth}
        \centering
        \includegraphics[width=0.99\linewidth]{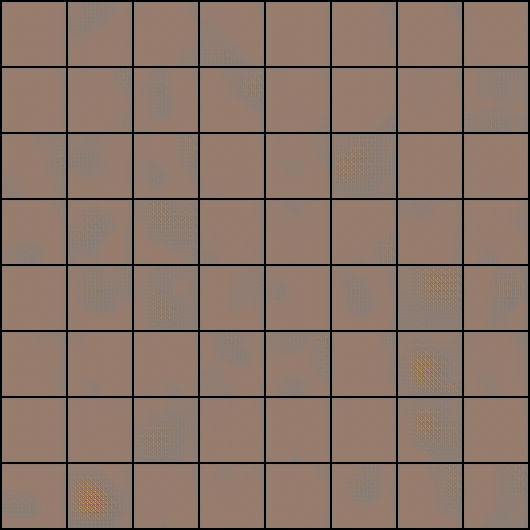}
        \subcaption{$\alpha=1e^{7}$, $\mathrm{FID}=312.83$}
    \end{subfigure}
    \begin{subfigure}{0.32\textwidth}
        \centering
        \includegraphics[width=0.99\linewidth]{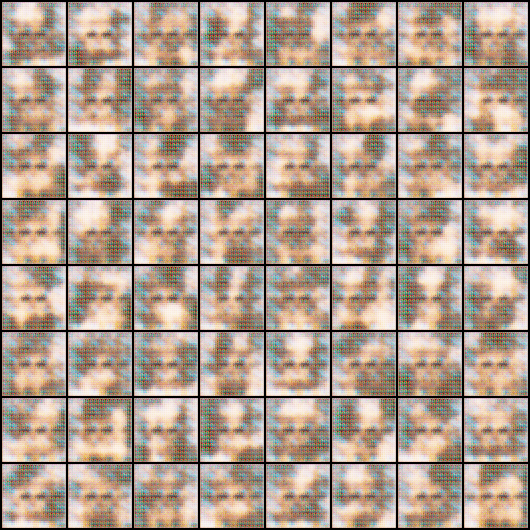}
        \subcaption{$\alpha=1e^{9}$, $\mathrm{FID}=299.30$}
    \end{subfigure}
    \begin{subfigure}{0.32\textwidth}
        \centering
        \includegraphics[width=0.99\linewidth]{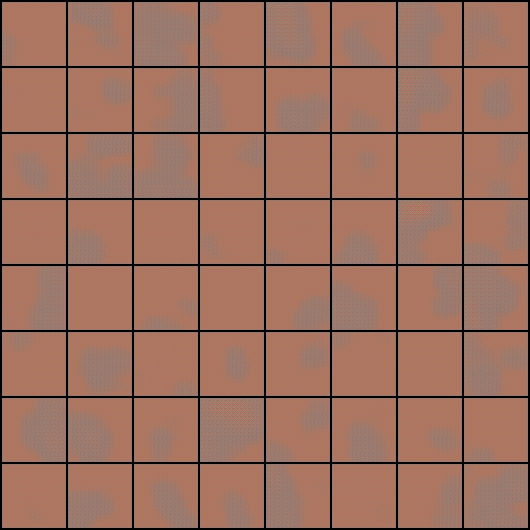}
        \subcaption{$\alpha=1e^{11}$, $\mathrm{FID}=321.84$}
    \end{subfigure}
    \begin{subfigure}{0.6\textwidth}
        \includegraphics[width=.9\linewidth]{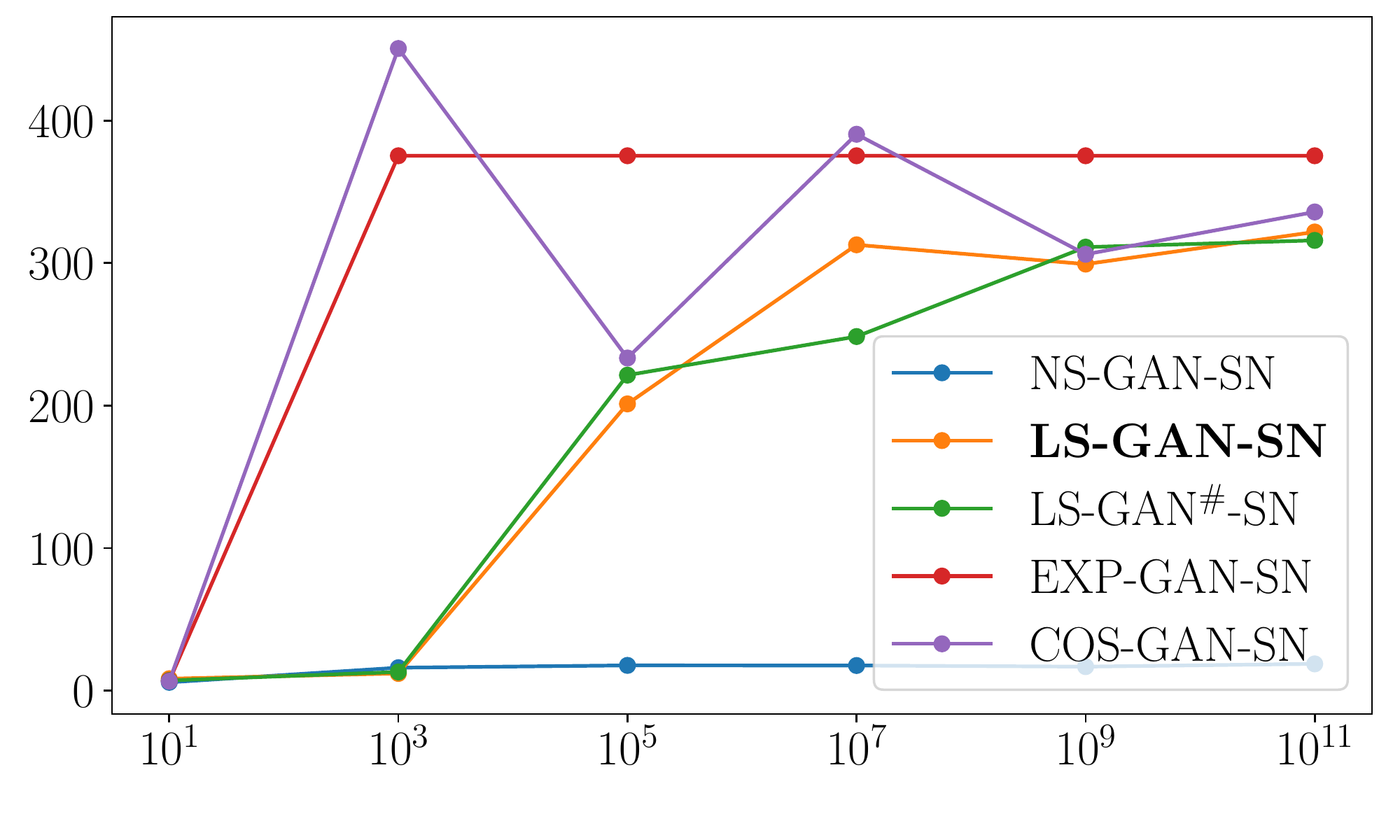}
    \end{subfigure}
\end{center}
  \caption{Samples of randomly generated images with LS-GAN-SN of varying $\alpha$ ($k_{SN}=1.0$, CelebA). For the line plot, $x$-axis shows $\alpha$ (in log scale) and $y$-axis shows the FID scores.}
\label{fig:Scale_LSGANSN_CelebA_k_1}
\end{figure*}

\begin{figure*}[h]
\begin{center}
    \begin{subfigure}{0.32\textwidth}
        \centering
        \includegraphics[width=0.99\linewidth]{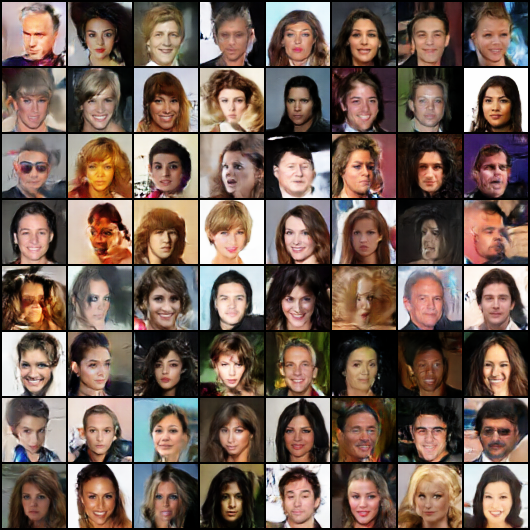}
        \subcaption{$\alpha=1e^{1}$, $\mathrm{FID}=7.21$}
    \end{subfigure}
    \begin{subfigure}{0.32\textwidth}
        \centering
        \includegraphics[width=0.99\linewidth]{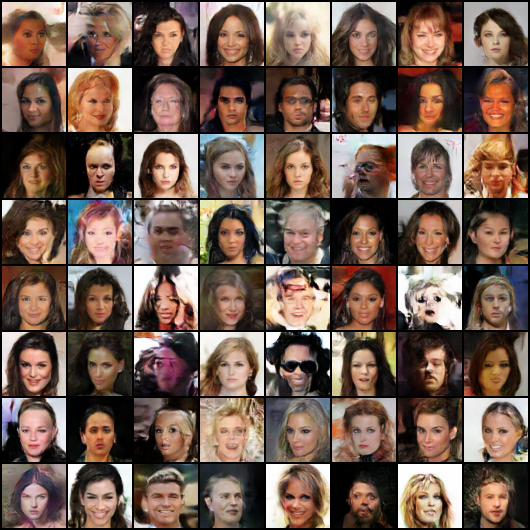}
        \subcaption{$\alpha=1e^{3}$, $\mathrm{FID}=13.13$}
    \end{subfigure}
    \begin{subfigure}{0.32\textwidth}
        \centering
        \includegraphics[width=0.99\linewidth]{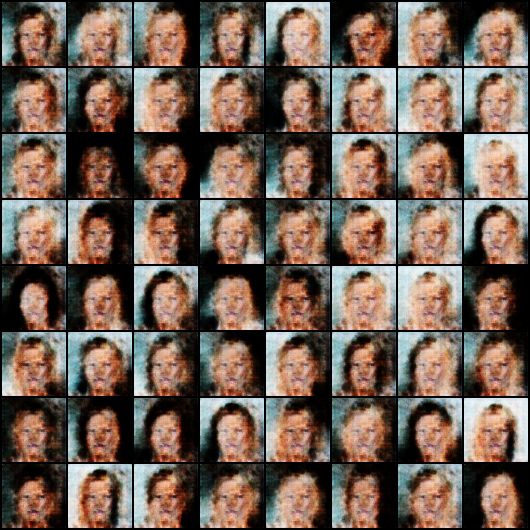}
        \subcaption{$\alpha=1e^{5}$, $\mathrm{FID}=221.41$}
    \end{subfigure}
    \begin{subfigure}{0.32\textwidth}
        \centering
        \includegraphics[width=0.99\linewidth]{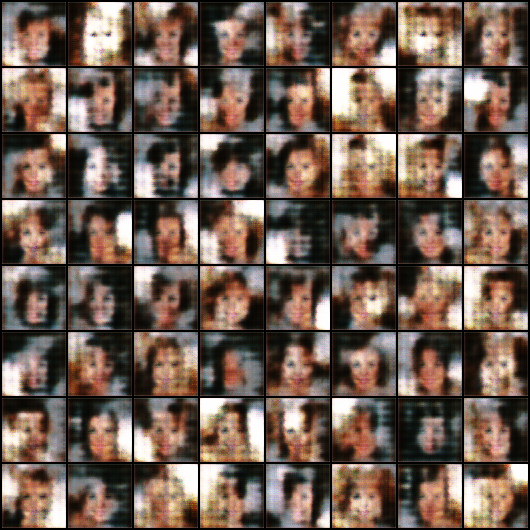}
        \subcaption{$\alpha=1e^{7}$, $\mathrm{FID}=248.48$}
    \end{subfigure}
    \begin{subfigure}{0.32\textwidth}
        \centering
        \includegraphics[width=0.99\linewidth]{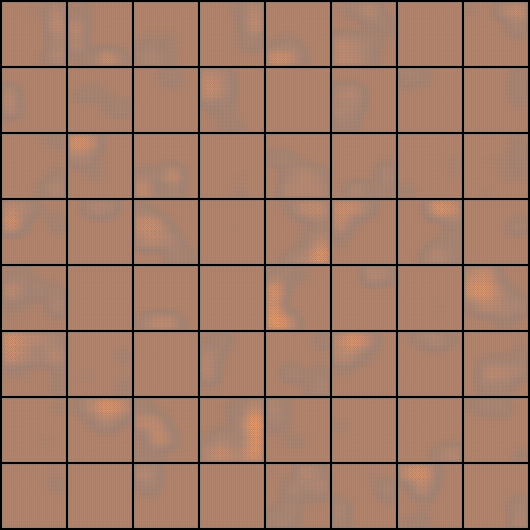}
        \subcaption{$\alpha=1e^{9}$, $\mathrm{FID}=311.21$}
    \end{subfigure}
    \begin{subfigure}{0.32\textwidth}
        \centering
        \includegraphics[width=0.99\linewidth]{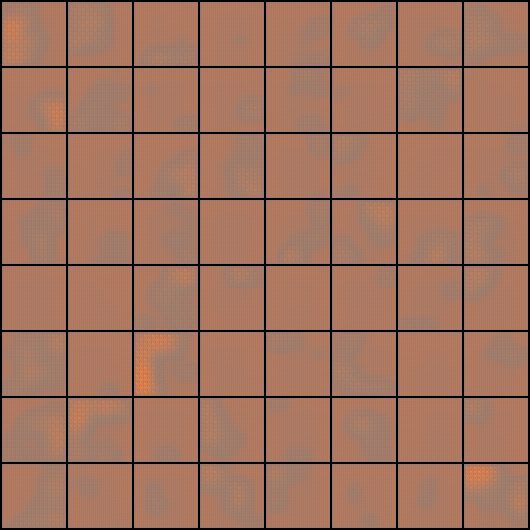}
        \subcaption{$\alpha=1e^{11}$, $\mathrm{FID}=315.94$}
    \end{subfigure}
    \begin{subfigure}{0.6\textwidth}
        \includegraphics[width=.9\linewidth]{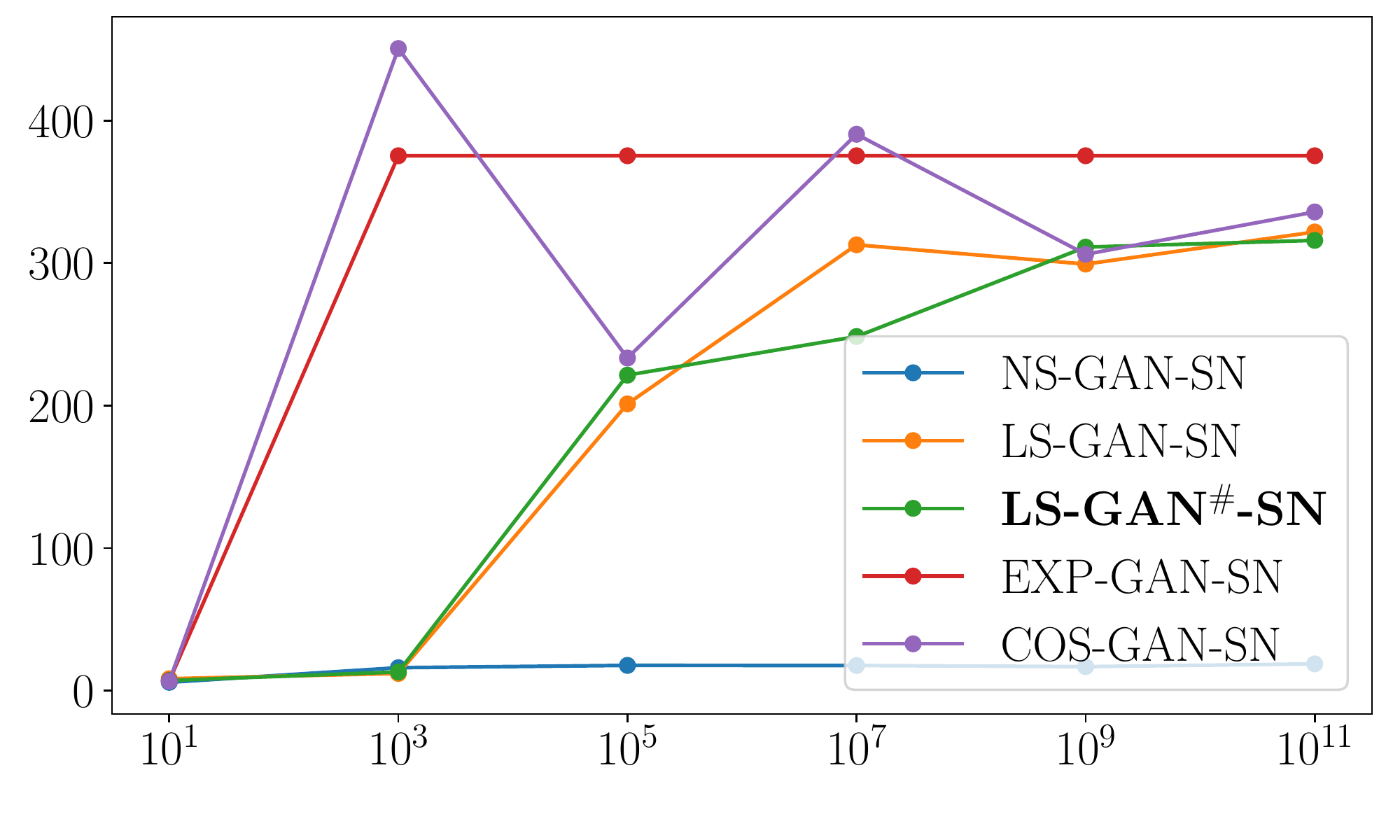}
    \end{subfigure}
\end{center}
  \caption{Samples of randomly generated images with LS-GAN$^\#$-SN of varying $\alpha$ ($k_{SN}=1.0$, CelebA). For the line plot, $x$-axis shows $\alpha$ (in log scale) and $y$-axis shows the FID scores.}
\label{fig:Scale_LSGANSN_zero_centered_CelebA_k_1}
\end{figure*}

\begin{figure*}[h]
\begin{center}
    \begin{subfigure}{0.32\textwidth}
        \centering
        \includegraphics[width=0.99\linewidth]{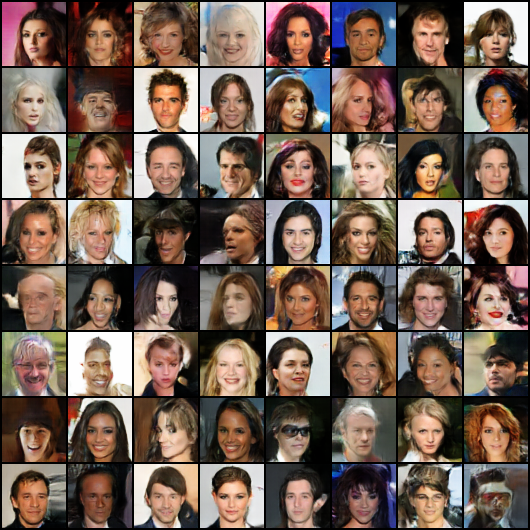}
        \subcaption{$\alpha=1e^{1}$, $\mathrm{FID}=6.91$}
    \end{subfigure}
    \begin{subfigure}{0.32\textwidth}
        \centering
        \includegraphics[width=0.99\linewidth]{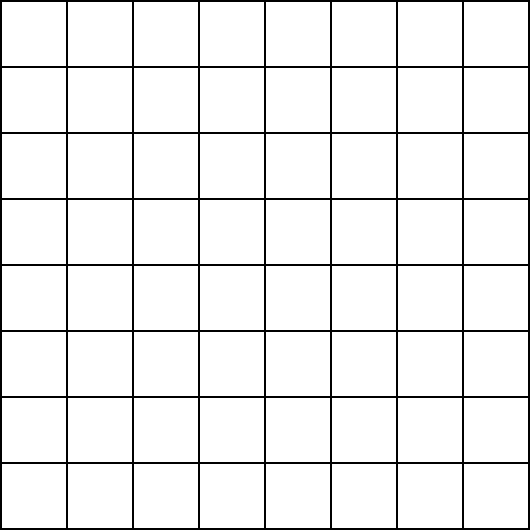}
        \subcaption{$\alpha=1e^{3}$, $\mathrm{FID}=375.32$}
    \end{subfigure}
    \begin{subfigure}{0.32\textwidth}
        \centering
        \includegraphics[width=0.99\linewidth]{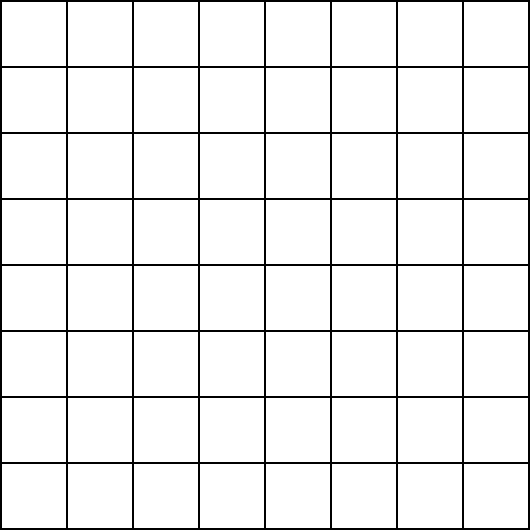}
        \subcaption{$\alpha=1e^{5}$, $\mathrm{FID}=375.32$}
    \end{subfigure}
    \begin{subfigure}{0.32\textwidth}
        \centering
        \includegraphics[width=0.99\linewidth]{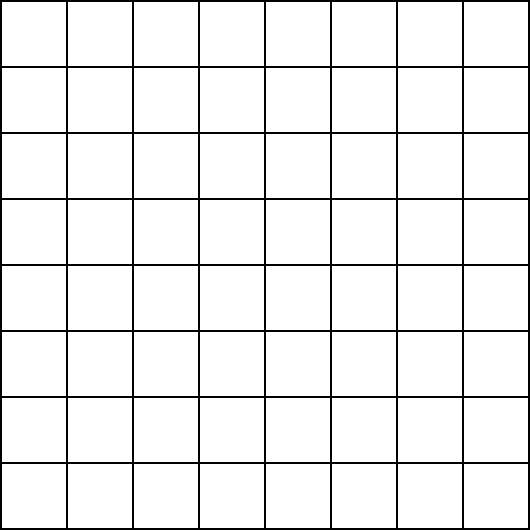}
        \subcaption{$\alpha=1e^{7}$, $\mathrm{FID}=375.32$}
    \end{subfigure}
    \begin{subfigure}{0.32\textwidth}
        \centering
        \includegraphics[width=0.99\linewidth]{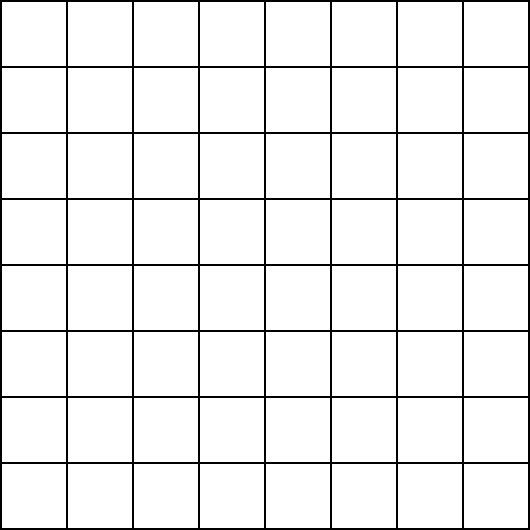}
        \subcaption{$\alpha=1e^{9}$, $\mathrm{FID}=375.32$}
    \end{subfigure}
    \begin{subfigure}{0.32\textwidth}
        \centering
        \includegraphics[width=0.99\linewidth]{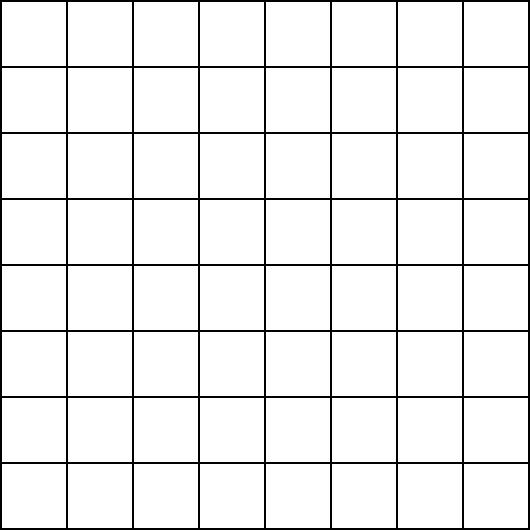}
        \subcaption{$\alpha=1e^{11}$, $\mathrm{FID}=375.32$}
    \end{subfigure}
    \begin{subfigure}{0.6\textwidth}
        \includegraphics[width=.9\linewidth]{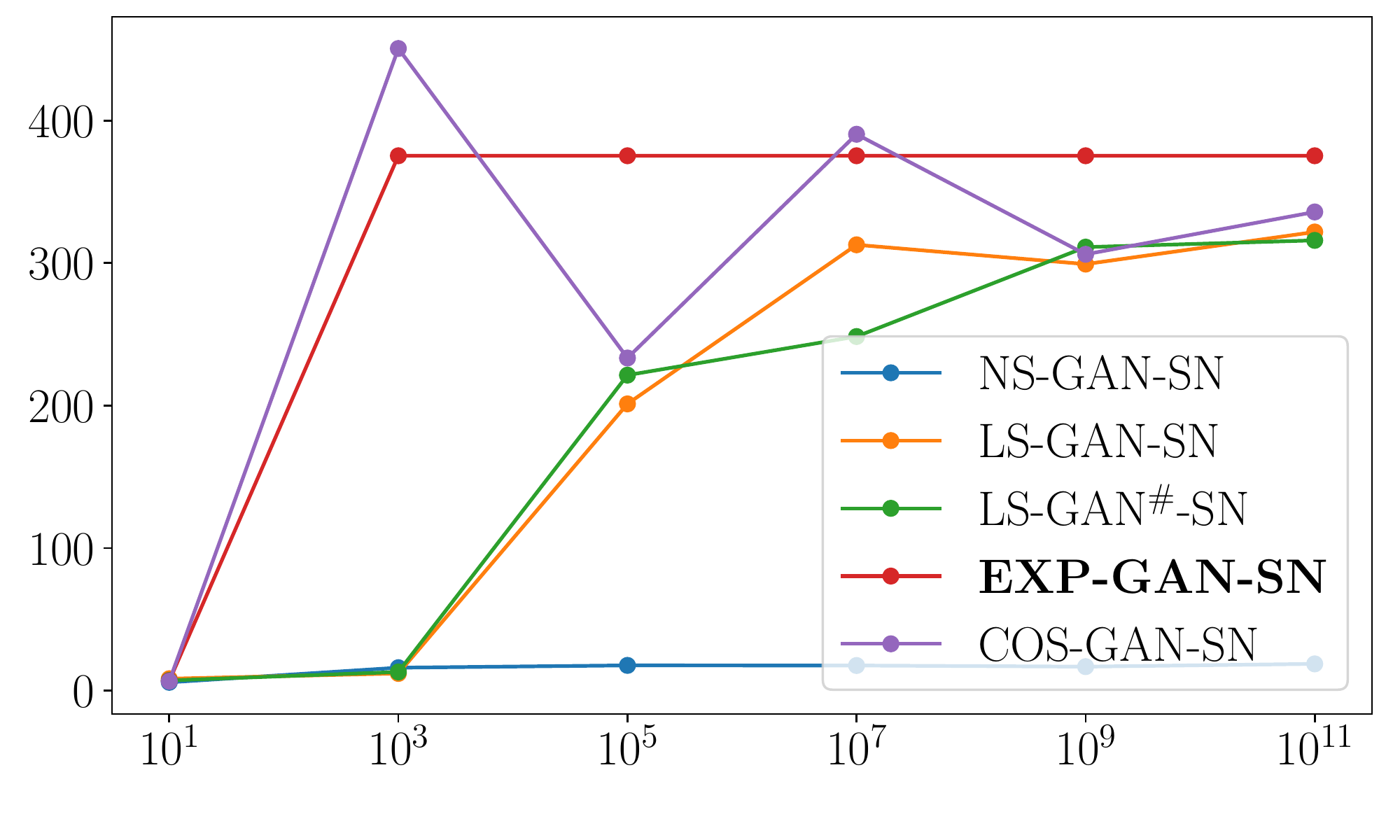}
    \end{subfigure}
\end{center}
  \caption{Samples of randomly generated images with EXP-GAN-SN of varying $\alpha$ ($k_{SN}=1.0$, CelebA). For the line plot, $x$-axis shows $\alpha$ (in log scale) and $y$-axis shows the FID scores.}
\label{fig:Scale_EXPGANSN_CelebA_k_1}
\end{figure*}

\begin{figure*}[h]
\begin{center}
    \begin{subfigure}{0.32\textwidth}
        \centering
        \includegraphics[width=0.99\linewidth]{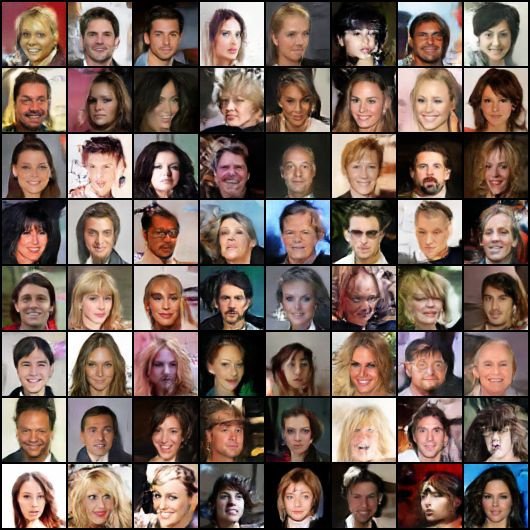}
        \subcaption{$\alpha=1e^{1}$, $\mathrm{FID}=6.62$}
    \end{subfigure}
    \begin{subfigure}{0.32\textwidth}
        \centering
        \includegraphics[width=0.99\linewidth]{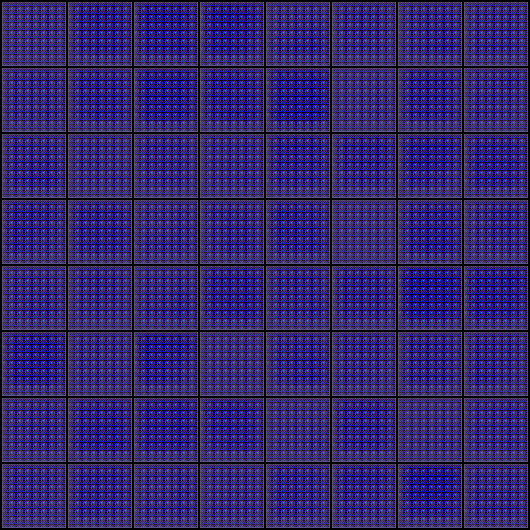}
        \subcaption{$\alpha=1e^{3}$, $\mathrm{FID}=450.57$}
    \end{subfigure}
    \begin{subfigure}{0.32\textwidth}
        \centering
        \includegraphics[width=0.99\linewidth]{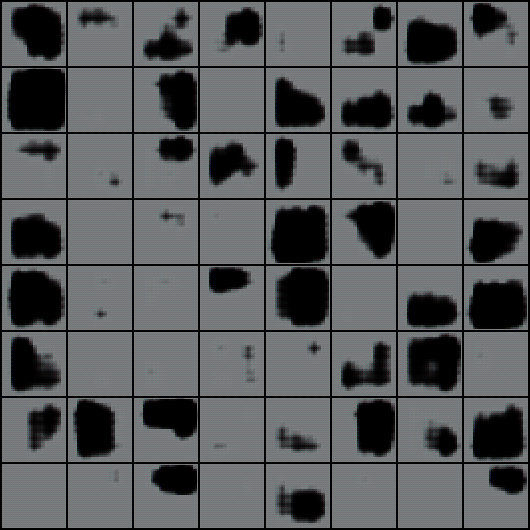}
        \subcaption{$\alpha=1e^{5}$, $\mathrm{FID}=233.42$}
    \end{subfigure}
    \begin{subfigure}{0.32\textwidth}
        \centering
        \includegraphics[width=0.99\linewidth]{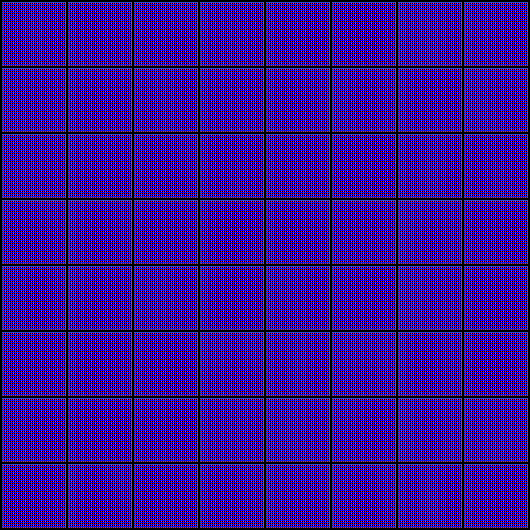}
        \subcaption{$\alpha=1e^{7}$, $\mathrm{FID}=390.40$}
    \end{subfigure}
    \begin{subfigure}{0.32\textwidth}
        \centering
        \includegraphics[width=0.99\linewidth]{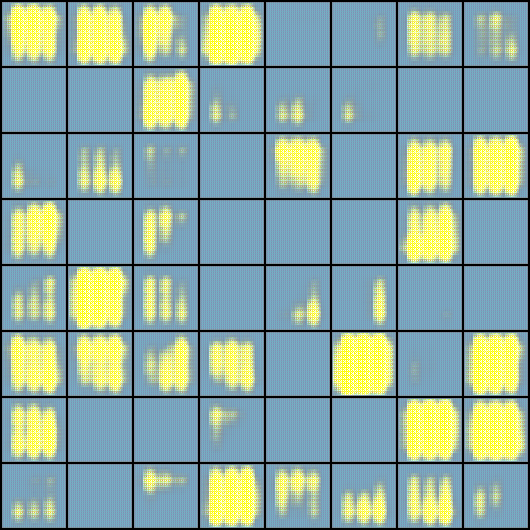}
        \subcaption{$\alpha=1e^{9}$, $\mathrm{FID}=306.17$}
    \end{subfigure}
    \begin{subfigure}{0.32\textwidth}
        \centering
        \includegraphics[width=0.99\linewidth]{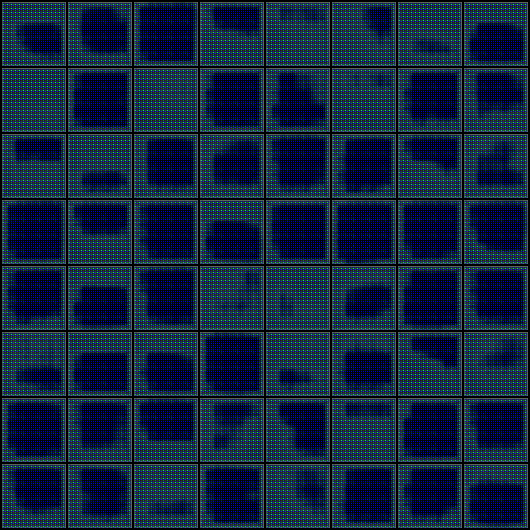}
        \subcaption{$\alpha=1e^{11}$, $\mathrm{FID}=335.87$}
    \end{subfigure}
    \begin{subfigure}{0.6\textwidth}
        \includegraphics[width=.9\linewidth]{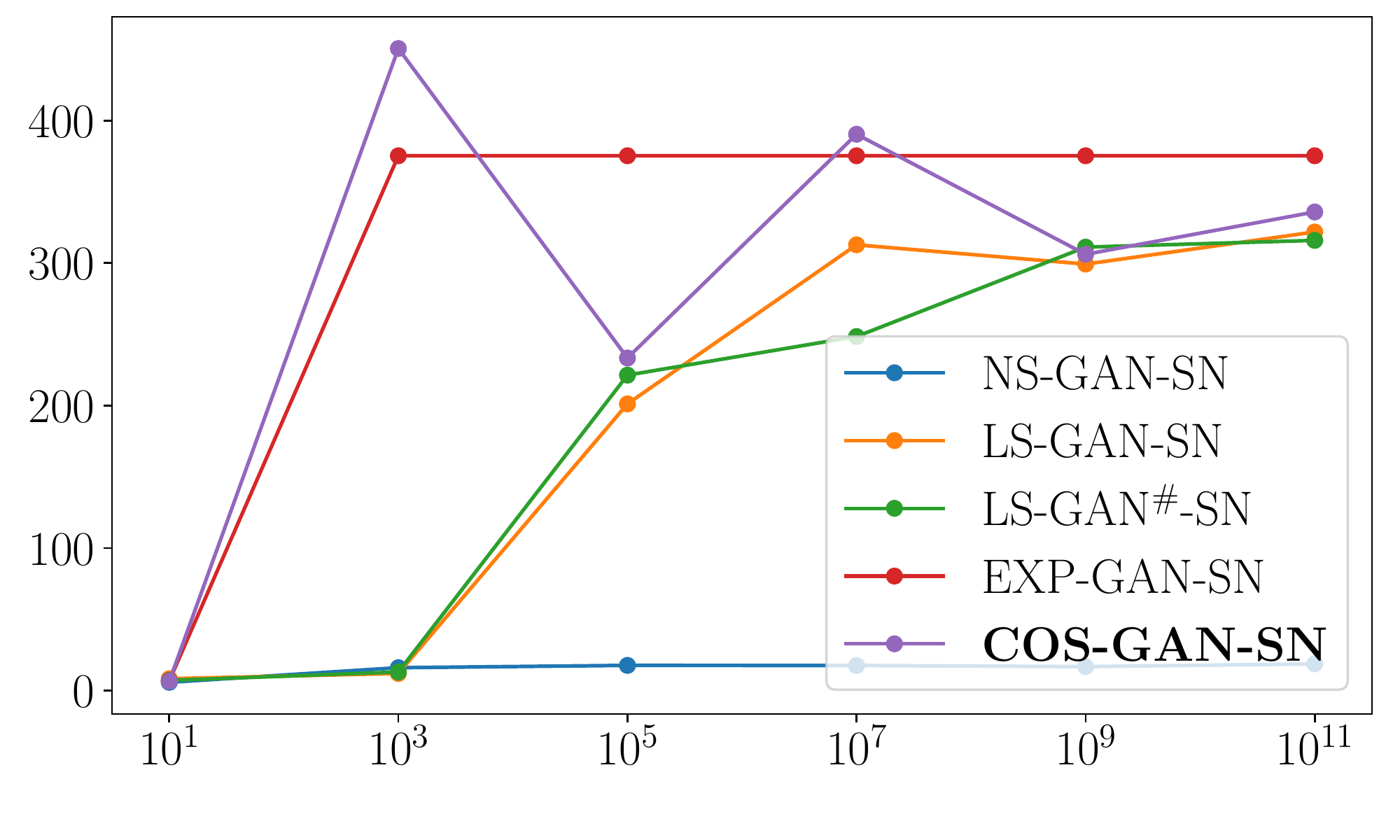}
    \end{subfigure}
\end{center}
  \caption{Samples of randomly generated images with COS-GAN-SN of varying $\alpha$ ($k_{SN}=1.0$, CelebA). For the line plot, $x$-axis shows $\alpha$ (in log scale) and $y$-axis shows the FID scores.}
\label{fig:Scale_COSGANSN_CelebA_k_1}
\end{figure*}

\FloatBarrier
\newpage

\vspace*{\fill}
\begin{table*}[h!]
\centering
\captionsetup{justification=centering}
\caption*{{\LARGE FID scores of WGAN-SN and some extremely degenerate loss functions ($\alpha=1e^{-25}$) \\ on different datasets}}
\end{table*}
\vspace*{\fill}

\FloatBarrier
\newpage

\begin{figure*}[h]
\begin{center}
    \begin{subfigure}{0.32\textwidth}
        \centering
        \includegraphics[width=0.6\linewidth]{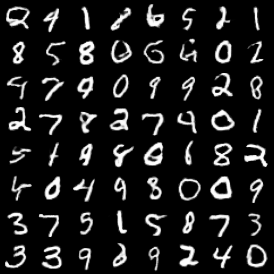}
        \subcaption{WGAN, $\mathrm{FID}=3.71$}
    \end{subfigure}
    \begin{subfigure}{0.32\textwidth}
        \centering
        \includegraphics[width=0.6\linewidth]{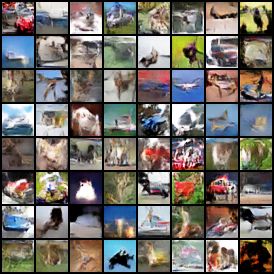}
        \subcaption{WGAN, $\mathrm{FID}=23.36$}
    \end{subfigure}
    \begin{subfigure}{0.32\textwidth}
        \centering
        \includegraphics[width=0.6\linewidth]{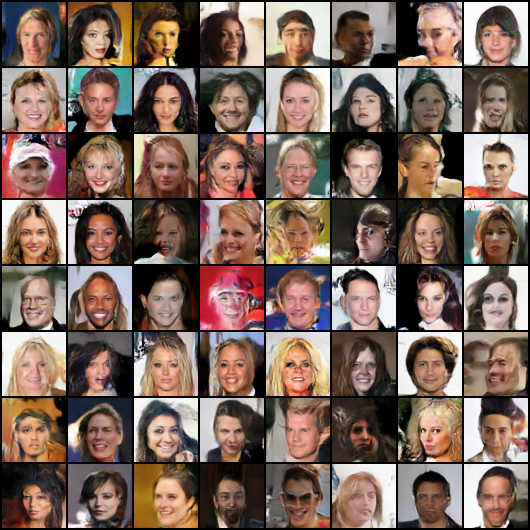}
        \subcaption{WGAN, $\mathrm{FID}=7.82$}
    \end{subfigure}
    
    \begin{subfigure}{0.32\textwidth}
        \centering
        \includegraphics[width=0.6\linewidth]{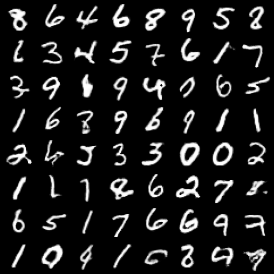}
        \subcaption{NSGAN, $\mathrm{FID}=3.74$}
    \end{subfigure}
    \begin{subfigure}{0.32\textwidth}
        \centering
        \includegraphics[width=0.6\linewidth]{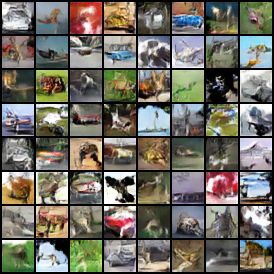}
        \subcaption{NSGAN, $\mathrm{FID}=21.92$}
    \end{subfigure}
    \begin{subfigure}{0.32\textwidth}
        \centering
        \includegraphics[width=0.6\linewidth]{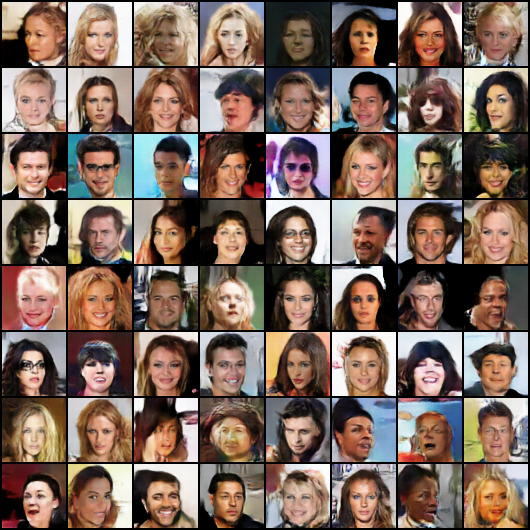}
        \subcaption{NSGAN, $\mathrm{FID}=8.10$}
    \end{subfigure}
    
    \begin{subfigure}{0.32\textwidth}
        \centering
        \includegraphics[width=0.6\linewidth]{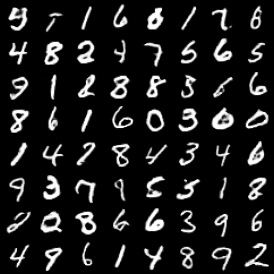}
        \subcaption{LSGAN$^\#$, $\mathrm{FID}=3.81$}
    \end{subfigure}
    \begin{subfigure}{0.32\textwidth}
        \centering
        \includegraphics[width=0.6\linewidth]{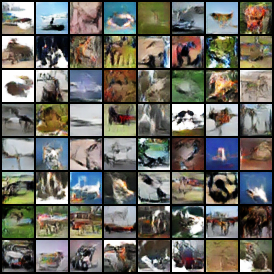}
        \subcaption{LSGAN$^\#$, $\mathrm{FID}=21.47$}
    \end{subfigure}
    \begin{subfigure}{0.32\textwidth}
        \centering
        \includegraphics[width=0.6\linewidth]{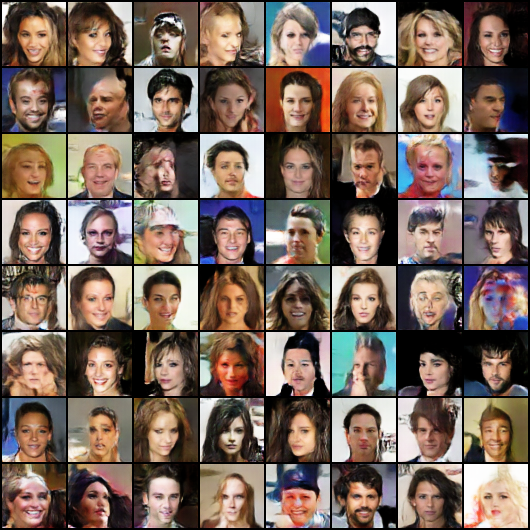}
        \subcaption{LSGAN$^\#$, $\mathrm{FID}=8.51$}
    \end{subfigure}

    \begin{subfigure}{0.32\textwidth}
        \centering
        \includegraphics[width=0.6\linewidth]{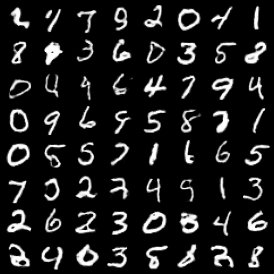}
        \subcaption{COSGAN, $\mathrm{FID}=3.96$}
    \end{subfigure}
    \begin{subfigure}{0.32\textwidth}
        \centering
        \includegraphics[width=0.6\linewidth]{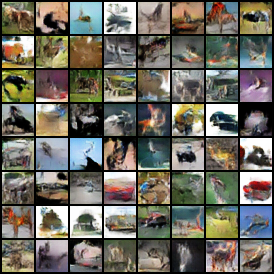}
        \subcaption{COSGAN, $\mathrm{FID}=23.65$}
    \end{subfigure}
    \begin{subfigure}{0.32\textwidth}
        \centering
        \includegraphics[width=0.6\linewidth]{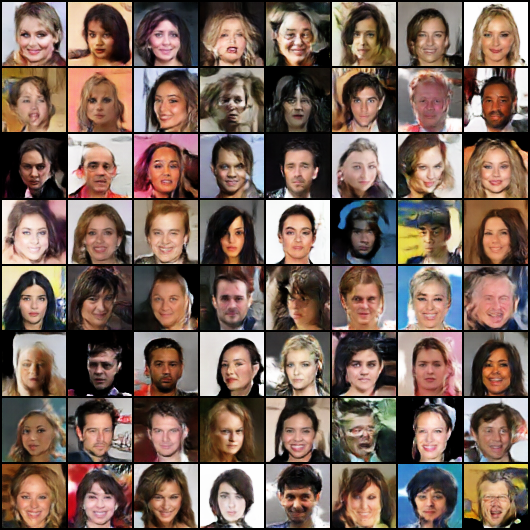}
        \subcaption{COSGAN, $\mathrm{FID}=8.30$}
    \end{subfigure}

    \begin{subfigure}{0.32\textwidth}
        \centering
        \includegraphics[width=0.6\linewidth]{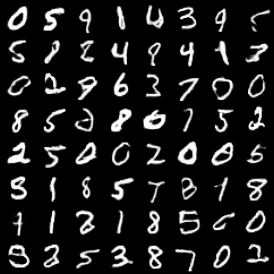}
        \subcaption{EXPGAN, $\mathrm{FID}=3.86$}
    \end{subfigure}
    \begin{subfigure}{0.32\textwidth}
        \centering
        \includegraphics[width=0.6\linewidth]{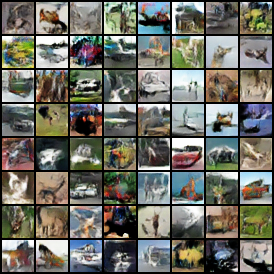}
        \subcaption{EXPGAN, $\mathrm{FID}=21.91$}
    \end{subfigure}
    \begin{subfigure}{0.32\textwidth}
        \centering
        \includegraphics[width=0.6\linewidth]{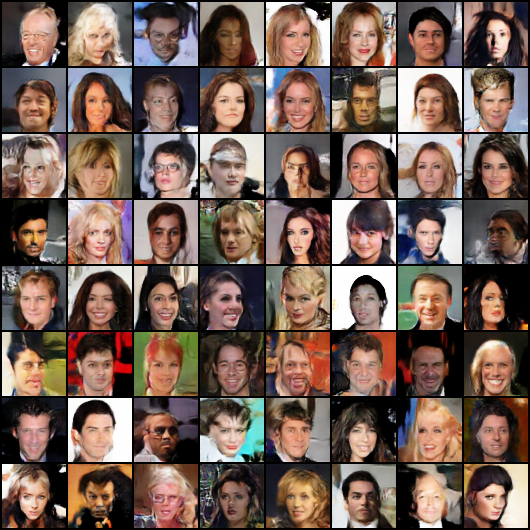}
        \subcaption{EXPGAN, $\mathrm{FID}=8.22$}
    \end{subfigure}
\end{center}
  \caption{Samples of randomly generated images with WGAN-SN and some extremely degenerate loss functions ($\alpha=1e^{-25}$) on different datasets. We use $k_{SN} = 50$ for all our experiments.}
\label{fig:degenerate_loss_functions}
\end{figure*}
\end{document}